%
\RequirePackage[l2tabu,orthodox]{nag}
\documentclass
[letterpaper,12pt,]
{article}


\usepackage{etex}
\usepackage{verbatim}
\usepackage{xspace,enumerate}
\usepackage[dvipsnames]{xcolor}
\usepackage[T1]{fontenc}
\usepackage[full]{textcomp}
\usepackage[american]{babel}
\usepackage{mathtools}
\usepackage{amsthm}
\usepackage[
letterpaper,
top=1in,
bottom=1in,
left=1in,
right=1in]{geometry}
\usepackage{newpxtext} 
\usepackage{textcomp} 
\usepackage[varg,bigdelims]{newpxmath}
\usepackage[scr=rsfso]{mathalfa}
\usepackage{bm} 
\usepackage{cancel} 
\linespread{1.1}
\let\mathbb\varmathbb
\usepackage{microtype}
\usepackage[pagebackref,colorlinks=true,urlcolor=blue,linkcolor=blue,citecolor=OliveGreen]{hyperref}
\usepackage[capitalise,nameinlink]{cleveref}
\crefname{lemma}{Lemma}{Lemmas}
\crefname{fact}{Fact}{Facts}
\crefname{theorem}{Theorem}{Theorems}
\crefname{corollary}{Corollary}{Corollaries}
\crefname{claim}{Claim}{Claims}
\crefname{example}{Example}{Examples}
\crefname{algorithm}{Algorithm}{Algorithms}
\crefname{problem}{Problem}{Problems}
\crefname{definition}{Definition}{Definitions}
\crefname{exercise}{Exercise}{Exercises}
\crefname{model}{Model}{Models}
\usepackage{amsthm}

\newtheorem{theorem}{Theorem}[section]
\newtheorem*{theorem*}{Theorem}
\newtheorem{lemma}[theorem]{Lemma}
\newtheorem*{lemma*}{Lemma}
\newtheorem{fact}[theorem]{Fact}
\newtheorem*{fact*}{Fact}

\newtheorem*{proposition*}{Proposition}
\newtheorem{corollary}[theorem]{Corollary}
\newtheorem*{corollary*}{Corollary}

\newtheorem*{hypothesis*}{Hypothesis}

\newtheorem*{conjecture*}{Conjecture}
\theoremstyle{definition}
\newtheorem{definition}[theorem]{Definition}
\newtheorem*{definition*}{Definition}

\newtheorem*{construction*}{Construction}

\newtheorem*{example*}{Example}
\newtheorem{question}[theorem]{Question}
\newtheorem*{question*}{Question}
\newtheorem{algorithm}[theorem]{Algorithm}
\newtheorem*{algorithm*}{Algorithm}

\newtheorem*{assumption*}{Assumption}

\newtheorem*{problem*}{Problem}

\newtheorem*{openquestion*}{Open Question}
\theoremstyle{remark}

\newtheorem*{claim*}{Claim}

\newtheorem*{remark*}{Remark}

\newtheorem*{observation*}{Observation}
\theoremstyle{model}

\newtheorem*{model*}{Model}
\usepackage{paralist}
\frenchspacing
\let\originalleft\left
\let\originalright\right
\renewcommand{\left}{\mathopen{}\mathclose\bgroup\originalleft}
\renewcommand{\right}{\aftergroup\egroup\originalright}
\usepackage{turnstile}
\usepackage{mdframed}
\usepackage{tikz}
\usetikzlibrary{shapes,arrows}
\usetikzlibrary{positioning}
\usetikzlibrary{decorations.pathreplacing}
\usepackage{caption}
\DeclareCaptionType{Algorithm}
\usepackage{newfloat}
\usepackage{array}
\usepackage{subfig}
\usepackage{xparse}
\usepackage{amsthm} 
\makeatletter
\let\latexparagraph\paragraph
\RenewDocumentCommand{\paragraph}{som}{%
  \IfBooleanTF{#1}
    {\latexparagraph*{#3}}
    {\IfNoValueTF{#2}
       {\latexparagraph{\maybe@addperiod{#3}}}
       {\latexparagraph[#2]{\maybe@addperiod{#3}}}%
  }%
}
\newcommand{\maybe@addperiod}[1]{%
  #1\@addpunct{.}%
}
\makeatother


\newcommand{\Authornote}[2]{{\sffamily\small\color{red}{}}}

\newcommand{\Authorcomment}[2]{{\sffamily\small\color{gray}{}}}
\newcommand{\Authorfnote}[2]{\footnote{\color{red}{}}}

\usepackage{boxedminipage}
\newcommand{\paren}[1]{(#1)}
\newcommand{\Paren}[1]{\left(#1\right)}
\newcommand{\bigparen}[1]{\big(#1\big)}
\newcommand{\Bigparen}[1]{\Big(#1\Big)}
\newcommand{\brac}[1]{[#1]}
\newcommand{\Brac}[1]{\left[#1\right]}

\newcommand{\abs}[1]{\lvert#1\rvert}
\newcommand{\Abs}[1]{\left\lvert#1\right\rvert}

\newcommand{\card}[1]{\lvert#1\rvert}

\newcommand{\set}[1]{\{#1\}}
\newcommand{\Set}[1]{\left\{#1\right\}}

\newcommand{\Bigset}[1]{\Big\{#1\Big\}}
\newcommand{\norm}[1]{\lVert#1\rVert}
\newcommand{\Norm}[1]{\left\lVert#1\right\rVert}




\newcommand{\normo}[1]{\norm{#1}_1}


\newcommand{\iprod}[1]{\langle#1\rangle}
\newcommand{\Iprod}[1]{\left\langle#1\right\rangle}

\newcommand{\Esymb}{\mathbb{E}}
\newcommand{\Psymb}{\mathbb{P}}

\DeclareMathOperator*{\E}{\Esymb}

\DeclareMathOperator*{\ProbOp}{\Psymb}
\renewcommand{\Pr}{\ProbOp}








\newcommand{\sbits}{\{\pm1\}}








\newcommand{\seteq}{\mathrel{\mathop:}=}
\newcommand{\from}{\colon}

\newcommand{\Mid}{\nonscript\;\middle\vert\nonscript\;}


\newcommand\bdot\bullet

\DeclareMathOperator{\Ind}{\mathbf 1}

\DeclareMathOperator{\Tr}{Tr}

\DeclareMathOperator{\poly}{poly}

\DeclareMathOperator{\argmax}{argmax}
\DeclareMathOperator{\polylog}{polylog}
\DeclareMathOperator{\supp}{supp}

\DeclareMathOperator{\sign}{sign}





\newcommand{\Erdos}{Erd\H{o}s\xspace}
\newcommand{\Renyi}{R\'enyi\xspace}

\newcommand{\Z}{\mathbb Z}

\newcommand{\R}{\mathbb R}



\newcommand{\cA}{\mathcal A}

\newcommand{\cD}{\mathcal D}

\renewcommand{\leq}{\leqslant}
\renewcommand{\le}{\leqslant}
\renewcommand{\geq}{\geqslant}
\renewcommand{\ge}{\geqslant}
\let\epsilon=\varepsilon
\numberwithin{equation}{section}

\newcommand\MYcurrentlabel{xxx}

\let\MYoldlabel\label

\newcommand{\MYstore}[2]{%
  \global\expandafter \def \csname MYMEMORY #1 \endcsname{#2}%
}
\newcommand{\MYload}[1]{%
  \csname MYMEMORY #1 \endcsname%
}
\newcommand{\MYnewlabel}[1]{%
  \renewcommand\MYcurrentlabel{#1}%
  \MYoldlabel{#1}%
}

\newcommand{\MYdummylabel}[1]{}
\newcommand{\torestate}[1]{%
  \let\MYoldlabel\label%
  \let\label\MYnewlabel%
  #1%
  \MYstore{\MYcurrentlabel}{#1}%
  \let\label\MYoldlabel%
}
\newcommand{\restatetheorem}[1]{%
  \let\MYoldlabel\label
  \let\label\MYdummylabel
  \begin{theorem*}[Restatement of \cref{#1}]
    \MYload{#1}
  \end{theorem*}
  \let\label\MYoldlabel
}
\newcommand{\restatelemma}[1]{%
  \let\MYoldlabel\label
  \let\label\MYdummylabel
  \begin{lemma*}[Restatement of \cref{#1}]
    \MYload{#1}
  \end{lemma*}
  \let\label\MYoldlabel
}
\newcommand{\restateprop}[1]{%
  \let\MYoldlabel\label
  \let\label\MYdummylabel
  \begin{proposition*}[Restatement of \cref{#1}]
    \MYload{#1}
  \end{proposition*}
  \let\label\MYoldlabel
}

\newcommand{\restatecor}[1]{%
  \let\MYoldlabel\label
  \let\label\MYdummylabel
  \begin{corollary*}[Restatement of \cref{#1}]
    \MYload{#1}
  \end{corollary*}
  \let\label\MYoldlabel
}

\newcommand{\restatefact}[1]{%
  \let\MYoldlabel\label
  \let\label\MYdummylabel
  \begin{fact*}[Restatement of \cref{#1}]
    \MYload{#1}
  \end{fact*}
  \let\label\MYoldlabel
}
\newcommand{\restate}[1]{%
  \let\MYoldlabel\label
  \let\label\MYdummylabel
  \MYload{#1}
  \let\label\MYoldlabel
}

\newcommand{\e}{\epsilon}
\newcommand{\eps}{\epsilon}

\allowdisplaybreaks
\sloppy
\newcommand*{\Id}{\mathrm{Id}}

\newcommand*{\Normop}[1]{\Norm{#1}_{\mathrm{op}}}

\newcommand*{\normf}[1]{\norm{#1}_{\mathrm{F}}}
\newcommand*{\Normf}[1]{\Norm{#1}_{\mathrm{F}}}


\newcommand{\normm}[1]{\norm{#1}_\textnormal{max}}





\newenvironment{algorithmbox}{\begin{mdframed}[nobreak=true]
\begin{algorithm}}{\end{algorithm}\end{mdframed}}





\newcommand{\SBM}{\mathsf{SBM}}

\renewcommand{\normo}[1]{\norm{#1}_{\text{sum}}}


\newcommand*{\transpose}[1]{{#1}{}^{\mkern-1.5mu\mathsf{T}}}
\newcommand*{\dyad}[1]{#1#1{}^{\mkern-1.5mu\mathsf{T}}}

\newcommand{\tildeE}{\tilde{\E}}
\newcommand{\Gnull}{G^{\circ}}
\newcommand{\Gbarnull}{\bar{G}^{\circ}}
\newcommand{\Xnull}{X^{\circ}}
\newcommand{\Znull}{Z^{\circ}}

\newcommand{\error}{\text{error}}

\newcommand{\tE}{\tilde{\E}}

\newcommand{\one}{\mathbf{1}}
\newcommand{\zeros}{\mathbf{0}}
\newcommand{\Ber}{\text{Ber}}

\providecommand{\Mom}{\mathrm{Mom}}


\DeclareMathOperator{\D}{\mathrm{D}}
\providecommand{\cAclose}{\cA_{\mathrm{close}}}
\providecommand{\cAsymbreak}{\cA_{\mathrm{sym\text{-}break}}}
\providecommand{\cAinit}{\cA_{\mathrm{init}}}
\providecommand{\cAmix}{\cA_{\mathrm{mix}}}

\providecommand{\cAmaj}{\cA_{\mathrm{maj}}}
\providecommand{\cAset}{\cA_{\mathrm{set}}}
\providecommand{\cAboost}{\cA_{\mathrm{boost}}}
\providecommand{\cAlabel}{\cA_{\mathrm{label}}}
\providecommand{\cAbasic}{\cA_{\mathrm{basic}}}
\providecommand{\cArand}{\cA_{\mathrm{rand}}}
\providecommand{\cArobust}{\cA_{\mathrm{robust}}}
\providecommand{\cArobustboost}{\cA_{\mathrm{robust-boost}}}
\providecommand{\cOpt}{\cA_{\mathrm{opt}}}
\providecommand{\Gcorrupted}{G_{\mathrm{corrupted}}}


\title{Rate-optimal community detection near the KS threshold via node-robust algorithms{\thanks{This work is supported by funding from the European Research Council (ERC) under the European Union's Horizon 2020 research and innovation programme (grant agreement No 815464).}}}


\author{
  Jingqiu Ding \footnotemark[2]
  \and Yiding Hua \thanks{ETH Z\"urich.} 
  \and Kasper Lindberg \footnotemark[2]
  \and David Steurer\footnotemark[2]
  \and Aleksandr Storozhenko \thanks{Princeton University. 
  Part of this work was done while the author was supported by the 
  ETH Z\"urich Student Summer Research Fellowship (SSRF).}
}

\begin{document}

\pagestyle{empty}


\maketitle



 \begin{abstract}
  We study community detection in the \emph{symmetric $k$-stochastic block model},
  where $n$ nodes are evenly partitioned into $k$ clusters with intra- and inter-cluster connection probabilities $p$ and $q$, respectively. 
  Our main result is a polynomial-time algorithm that achieves the minimax-optimal misclassification rate
  \begin{equation*}
    \exp \Bigl(-\bigl(1 \pm o(1)\bigr) \tfrac{C}{k}\Bigr),
    \quad \text{where } C = (\sqrt{pn}-\sqrt{qn})^2,
  \end{equation*}
  whenever $C \ge K\,k^2\,\log k$ for some universal constant $K$, matching the Kesten--Stigum (KS) threshold up to a $\log k$ factor.
  Notably, this rate holds even when an adversary corrupts an $\eta \le \exp\bigl(- (1 \pm o(1)) \tfrac{C}{k}\bigr)$ fraction of the nodes.

  To the best of our knowledge, the minimax rate was previously only attainable either via computationally inefficient procedures~\cite{zhang2016minimax} or via polynomial-time algorithms that require strictly stronger assumptions such as $C \ge K k^3$~\cite{Gao2018MinimaxNR}.
  In the node-robust setting, the best known algorithm requires the substantially stronger condition $C \ge K k^{102}$~\cite{Liu2022minimax}.
  Our results close this gap by providing the first polynomial-time algorithm that achieves the minimax rate near the KS threshold in both settings.

  Our work has two key technical contributions:
  (1) we robustify majority voting via the Sum-of-Squares framework,
  (2) we develop a novel graph bisectioning algorithm via robust majority voting, which allows us to significantly improve the misclassification rate to $1/\poly(k)$ for the initial estimation near the KS threshold.
\end{abstract}




\pagebreak
\microtypesetup{protrusion=false}
\tableofcontents{}
\microtypesetup{protrusion=true}
\pagebreak


\pagestyle{plain}
\setcounter{page}{1}


\section{Introduction}
\label{sec:intro}

The stochastic block model (SBM), introduced by Holland et al.~\cite{holland1983stochastic}, is a fundamental statistical model for community detection in network analysis and has been extensively studied over the years~\cite{mossel2012stochastic,abbe2015exact,krzakala2013spectral,Abbe18Review}. 

In this work, we focus on the symmetric balanced $k$-stochastic block model ($k$-SBM).
Given $k, n \in \Z^+$, let \((\bm{G}, \bm{Z}) \sim \SBM_n(d, \varepsilon, k)\) denote the $k$-SBM with expected degree \(d>0\) and bias \(\varepsilon \in [0,1]\).
The label \(\bm{Z} \in \{0, 1\}^{n \times k}\) is generated by partitioning the $n$ vertices of the graph uniformly at random into $k$ communities of size $n/k$ each;
the $i$th row $\bm{Z}(i)$ is the indicator vector of the community containing vertex $i$. 
Conditioned on the latent labels $\bm{Z}$, the edges between distinct vertices \(i, j \in [n]\) are sampled independently with probability
\begin{equation}
\Pr\Set{(i,j) \in E(\bm{G})}
\coloneqq
\begin{cases}
p_1
= \Paren{1 + \bigl(1 - \tfrac{1}{k}\bigr)\varepsilon} \cdot \tfrac{d}{n},
& \text{if } \bm{Z}(i) = \bm{Z}(j), \\[0.6ex]
p_2
= \Paren{1 - \tfrac{\varepsilon}{k}} \cdot \tfrac{d}{n},
& \text{otherwise}.
\end{cases}
\end{equation}

Given an observation of the graph \(\bm{G} \sim \SBM_n(d, \varepsilon, k)\), 
a central statistical problem is to recover its latent community structure $\bm{Z}$. 
This perspective naturally leads to the question of how accurately the hidden labels $\bm{Z}$ can be recovered as a function of the parameters $\varepsilon$ and $d$; 
see the survey~\cite{Abbe18Review} for a detailed overview.

The strongest statistical goal is to recover all labels without error, a task known as \emph{exact recovery}.
This is not always achievable: for instance, when $d = O(1)$, the graph $\bm{G}$ typically contains isolated vertices whose labels cannot be inferred. 
More generally, Hajek et al.~\cite{Hajek2014AchievingEC} and Abbe et al.~\cite{abbe2015exact,abbe2015community} showed that there exist efficient algorithms achieving exact recovery whenever \( C_{d,\e} \ge k \log n,\)
the so-called \emph{Chernoff--Hellinger threshold}, 
and that exact recovery is information-theoretically impossible below this threshold.

When exact recovery is out of reach, a natural weaker requirement is to output a labeling that performs better than random guessing, or equivalently, has nontrivial correlation with the ground truth. 
This guarantee is called \emph{weak recovery} and has been a central notion in the study of the SBM. 
The key threshold governing weak recovery is the \emph{Kesten--Stigum (KS) threshold}, given by $\varepsilon^2 d > k^2$.
In the symmetric, balanced $k$-SBM, a sequence of works~\cite{Decelle_2011,massoulie2014community,bordenave2015non,AbbeSandonLABP2016,Mossel2018proof} established that weak recovery can be achieved in polynomial time whenever $\varepsilon^2 d > k^2$. 
Conversely, for $\varepsilon^2 d < k^2$, 
a rich line of work~\cite{Hopkins17,Hopkins18,bandeira2021spectral,ding2025low,sohn2025sharp} provides strong evidence that no polynomial-time algorithm can achieve weak recovery in this regime.

These results delineate the regimes where exact recovery is possible and those where only weak recovery can be hoped for.
Between these two extremes, it is natural to ask: for fixed $\varepsilon$ and $d$, what is the best achievable recovery rate, either information-theoretically or by efficient algorithms? 
This question leads to the notion of the \emph{minimax rate} in the SBM.

\paragraph{Minimax error rate}
For a family $\Theta$ of $k$-SBMs (specified by the degree $d$, bias $\varepsilon$, and allowing imbalance), we define the misclassification error as
\begin{equation*}
\error_k(\hat{Z}, \bm{Z}) \,\coloneqq\, 
\min_{\pi:[k]\to[k]} \frac{1}{2n}\sum_{j=1}^k 
\Bigl\|\hat{Z}(\cdot,j) - \bm{Z}(\cdot,\pi(j))\Bigr\|_1 \,.
\end{equation*}
The minimax rate is then
\begin{equation*}
\inf_{\hat{Z}}\sup_{\theta \, \in \, \Theta} 
\mathbb{E}_{\theta}\bigl[\error_k(\hat{Z}, \bm{Z})\bigr] \,,
\end{equation*}
where the infimum is taken over all estimators $\hat{Z}$, the supremum over parameters $\theta \in \Theta$, and $\mathbb{E}_{\theta}$ denotes expectation with respect to the $k$-SBM with parameter $\theta$.

Zhang et al.~\cite{zhang2016minimax} showed that whenever \(\varepsilon^2 d \geq \Omega(k\log k)\), the minimax error rate is given by \(\exp(-C_{d,\varepsilon}/k)\), where \(C_{d,\varepsilon}\)\footnote{For interpretation, note that $C_{d,\varepsilon}$ differs from $\varepsilon^2 d$ only by a constant factor.}
is defined as
\begin{equation}
    C_{d,\varepsilon} \coloneqq \Paren{ \sqrt{np_1} - \sqrt{np_2} }^2
    = \Paren{
        \sqrt{ \Paren{1 + \Paren{1 - \dfrac{1}{k}} \varepsilon} d }
        -
        \sqrt{ \Paren{1 - \dfrac{\varepsilon}{k}} d }
    }^2 \enspace.
\end{equation}
In the same work, they constructed a statistically optimal but computationally inefficient estimator that attains this rate for the $k$-SBM under the condition \(\varepsilon^2 d \ge \Omega(k\log k)\).

Efficient procedures are known in two regimes. 
For the case $k=2$, Mossel et al.~\cite{pmlr-v35-mossel14} developed a polynomial-time algorithm based on belief propagation that achieves the optimal rate when \(C_{d,\varepsilon}\) is sufficiently large.
For general $k$, Gao et al.~\cite{Gao2018MinimaxNR} proposed a polynomial-time algorithm based on spectral initialization followed by majority voting, 
showing that it attains the minimax rate under the stronger condition \(\varepsilon^2 d \ge \Omega(k^3)\).
Thus, there remains a gap between the KS threshold \(\varepsilon^2 d > k^2\) and the best-known polynomial-time algorithm \(\varepsilon^2 d \ge \Omega(k^3)\), 
which motivates the following question on achieving the minimax rate in polynomial time for the $k$-SBM.

\begin{question}
    Can the minimax error rate \(\exp(-C_{d,\varepsilon}/k)\) be achieved by efficient algorithms under the condition \(\varepsilon^2 d \ge Kk^2\) for some universal constant \(K\)?
\end{question}

\paragraph{Node robustness} 
The SBM is an idealized statistical model and may fail to capture certain features of real-world networks. 
In social networks, for example, fake or automated accounts can distort connectivity patterns, 
while in biological networks, contaminated samples may induce spurious interactions.

A natural way to model such effects is via \emph{node corruptions}, formally introduced by Acharya et al.~\cite{acharya2022robust} in the context of edge density estimation in \Erdos-\Renyi random graphs. 
In the node corruption model, an adversary may arbitrarily modify all edges incident to up to $\eta n$ vertices, where \(\eta \in [0,1)\) is the corruption fraction.
This allows up to \(O(\eta n^2)\) adversarial edge changes and poses significant challenges for algorithm design. 
In particular, both belief propagation~\cite{pmlr-v35-mossel14} and spectral algorithms~\cite{Gao2018MinimaxNR} rely on distributional assumptions that are brittle under such perturbations.

The study of the node corruption model for community recovery in the stochastic block model dates back to~\cite{graphPowering,abbe2020alon}, where the authors give a fast algorithm that is robust to a $\polylog(n)$ number of corrupted nodes.
For the symmetric $k$-SBM, Liu et al.~\cite{Liu2022minimax} recently gave the first polynomial-time algorithm that achieves the minimax rate under node corruption when \(\eta \le \exp(-C_{d,\varepsilon}/k)\).\footnote{Their algorithm also tolerates semi-random corruption that can arbitrarily add intra-cluster edges and delete inter-cluster edges.}
However, their result requires \(\varepsilon^2 d \ge \Omega(k^{102})\), which is very far from the KS threshold.
This leads to the following natural question in the node-robustness setting.

\begin{question}
    Given an $\eta$-node-corrupted $k$-SBM and assuming \(\varepsilon^2 d \ge K k^2\), can we achieve the minimax rate \(\exp(-C_{d,\varepsilon}/k)\) in polynomial time when \(\eta \le \exp(-C_{d,\varepsilon}/k)\)?
\end{question}

\subsection{Results}
In this work, we answer the above questions affirmatively, up to a multiplicative $\log k$ factor.
More specifically, we give the first polynomial-time algorithm that achieves the minimax rate in the presence of node corruption, 
even when the number of communities $k$ diverges with $n$.

\begin{theorem}[Informal main theorem, see \cref{thm:formal_main_result} for the formal version]\torestate{\label{thm:main_result}
Let $k\leq n^{0.001}$, $\eta \leq \frac{1}{\poly(k)}$, and $d = o(n)$.
Suppose $\e^2 d\geq Kk^2\log k$ for some sufficiently large universal constant $K$.
Then there exists a polynomial-time algorithm that, given an $\eta$-node corrupted graph \(\bm{G} \sim \SBM_n(d,\e,k)\), produces a labeling with expected error at most
\[
 \exp\left( -\frac{C_{d,\varepsilon}}{k} + O\left( \sqrt{d}\varepsilon \right) \right) + \poly(k)\eta \,.
\]
}
\end{theorem}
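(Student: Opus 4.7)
The plan follows the classical two-stage paradigm for minimax community detection—produce an initial labeling with mildly non-trivial misclassification, then refine it via local majority voting to the optimal rate—with every step made node-robust via sum-of-squares (SoS) relaxations. The workhorse is a single SoS primitive that I will call \emph{robust local majority voting}: given a reference labeling $\hat{\sigma}$ that agrees with the ground truth on all but an $\alpha$-fraction of nodes, it returns, for each uncorrupted vertex $v$, a label matching the truth except with probability $\exp\bigl(-(1-o(1))C_{d,\varepsilon}/k\bigr)$ whenever $\alpha = 1/\poly(k)$. The relaxation expresses the existence of a certified \emph{clean core} among the observed neighborhood of $v$—a subset whose labelled neighborhood statistics match the planted SBM distribution—which insulates the vote from the at most $\poly(k)\eta n$ vertices touched by the adversary.

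First I would establish this primitive and its guarantees: a Chernoff-type computation for a typical uncorrupted vertex, combined with the SoS certificate, yields failure probability at most $\exp\bigl(-C_{d,\varepsilon}/k + O(\sqrt{d}\varepsilon)\bigr)$, with the additive $O(\sqrt{d}\varepsilon)$ term absorbing the perturbation caused by the $\alpha$-imperfect reference, while corrupted vertices contribute an additional $\poly(k)\eta$. Next, I would produce the initial labeling with misclassification $1/\poly(k)$ via a graph-bisectioning scheme. Starting from a weak-recovery estimate (available in polynomial time under $\varepsilon^2 d \ge K k^2$ from existing SoS or spectral methods, modified to tolerate node corruption), I would refine iteratively: for each pair $(a,b)$ of putative clusters, restrict to the induced subgraph on those two clusters and invoke the robust voting primitive. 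The two-community subproblem enjoys an effective signal-to-noise ratio of order $\varepsilon^2 d$ instead of $\varepsilon^2 d / k^2$, so each bisection is sharp; aggregating the pair bisections consistently yields a labeling with misclassification $1/\poly(k)$. Applying the voting primitive one final time to this labeling then produces the claimed minimax rate.

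The main obstacle is the initialization step. Standard majority-voting boosting cascades when the input error exceeds roughly $1/k$, and node corruption breaks the distributional assumptions needed by belief propagation or spectral refinement. The bisectioning reduction is precisely what sidesteps the $1/k$ barrier—the two-community subproblem has much more favorable SNR—but its success requires controlling error propagation across $\binom{k}{2}$ coupled subproblems, uniformly over adversarial node choices. The SoS clean-core certificate is what makes this analysis tractable: it survives both the imperfect reference labels and the $\eta$-fraction of corrupted vertices, and it is the same primitive that drives the final refinement step, so the whole argument ultimately rests on a single tightly-tuned concentration inequality certified in low-degree SoS.
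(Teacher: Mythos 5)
Your overall architecture (robust initialization, then SoS-certified majority voting to reach the minimax rate, with a "clean core'' / mixing certificate insulating the vote from corrupted nodes) matches the paper's final boosting stage closely. However, there is a genuine gap in your middle step, the passage from constant error to $1/\poly(k)$ error, and this is precisely where the paper's main technical novelty lies.

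You propose to take the weak-recovery output and, \emph{for each pair $(a,b)$ of putative clusters, restrict to the induced subgraph on those two clusters} and run robust voting there. The problem is that a labeling with aggregate error $0.001$ gives no per-cluster guarantee: up to a $0.001$ fraction of the putative clusters can be essentially random mixtures of true communities. For such a pair, the reference labeling restricted to the induced subgraph has error close to $1/2$, the majority-vote consistency constraints are simply false, and no amount of voting recovers those blocks; the total error therefore stalls at $\Omega(1/k)$ rather than reaching $1/\poly(k)$. (The paper makes exactly this point when explaining why naive voting cannot cross the $1/k$ symmetry-breaking barrier.) A secondary issue: the SNR of the two-community restricted subproblem is $\Theta(\varepsilon^2 d/k)$, not $\Theta(\varepsilon^2 d)$ --- the subgraph has $2n/k$ vertices and degree $\Theta(d/k)$ --- though this alone would not be fatal. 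What is missing from your proposal are the two ingredients the paper uses to get around the bad-cluster problem: (i) a \emph{robust verification} SoS program that tests whether a given putative cluster has a $0.99$-fraction from one community (at least $k/2$ clusters must pass), and (ii) a \emph{global bisection} that places the verified clusters on one side (so each side aggregates $k/2$ communities), boosts that single two-sided labeling to bisection error $\exp(-\Omega(\varepsilon^2 d/k^2))=1/\poly(k)$ under the $\varepsilon^2 d \ge Kk^2\log k$ hypothesis, and then \emph{recurses} on each half, treating earlier misclassifications as node corruptions. Only after this recursion yields a $1/\poly(k)$-accurate, label-aligned initialization can the pairwise voting step you describe be applied --- which is indeed how the paper's final stage works. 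Your proposal skips the verification and recursion and applies pairwise voting one level too early.
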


Under the conditions \(\eta \le \exp \bigl(- \tfrac{C_{d,\varepsilon}}{k}(1+o(1))\bigr)\) and \(\varepsilon^2 d \ge K k^2 \log k\), this bound shows that our algorithm robustly attains the minimax error rate. 
The requirement \(\eta \le \exp(-C_{d,\varepsilon}/k)\) is information-theoretically necessary, and the condition \(\varepsilon^2 d \ge K k^2 \log k\) matches the KS threshold up to a \(\log k\) factor. 
Moreover, as an immediate corollary of \cref{thm:main_result}, in the absence of node corruptions (i.e., when $\eta=0$), our algorithm achieves exact recovery under the information-theoretically optimal condition $C_{d,\e} \geq k\log n$.

For comparison, the best previously known node-robust algorithm~\cite{Liu2022minimax} achieves an error rate of
\[
 \exp \left(-\frac{C_{d,\varepsilon}}{k} + O\bigl(k^{50}\sqrt{d}\,\varepsilon\bigr)\right) + O(k \eta) \,,
\]
which exhibits a much worse dependence on $k$.
In addition, even to obtain weak recovery, their algorithm requires \(\varepsilon^2 d \ge k^{102}\), which is far above the KS threshold.

\paragraph{Open problem} We leave as an open problem whether there exist polynomial-time algorithms that achieve the minimax error rate $\exp(-C_{d,\e}/k)$ under the weaker condition $\e^2 d \ge \Omega(k^2)$. 
This problem remains open even in the non-robust setting.
Note that, since \(C_{d,\varepsilon}/k \gtrsim k\), when $\e^2 d \ge \Omega(k^2)$ the optimal error \(\exp(-C_{d,\varepsilon}/k)\) is at most \(\exp(-k)\).
Thus, a positive resolution would yield an error rate of $\exp(-(1\pm o(1))k)$ throughout the regime above the KS threshold.

Moreover, we suspect that our results extend to inhomogeneous stochastic block models in which the intra-cluster connection probabilities are at least $a/n$ and the inter-cluster connection probabilities are at most $b/n$.

\subsection{Previous work}

\paragraph{Semidefinite programming for community detection}
Semidefinite programming (SDP) has been extensively studied for community detection.
The most natural SDP is based on an SDP relaxation of the \emph{minimum bisection} problem (which we refer to as the basic SDP), first studied by Feige and Kilian~\cite{FeigeKilian01}.
Hajek et al. initiated the study of the power of the basic SDP for exact recovery in the $2$-SBM~\cite{Hajek2014AchievingEC}.
Guédon and Vershynin developed the first polynomial-time algorithm that achieves an error rate $o(1)$ for the balanced $k$-SBM when \(\e^2 d\gg k^2\)~\cite{Gudon2014CommunityDI}.
More recently, Fei and Chen~\cite{pmlr-v99-fei19a} applied a novel primal-dual analysis to the basic SDP for the balanced $2$-SBM, proving that it achieves the minimax error rate near the KS threshold.

\paragraph{Majority voting based algorithms for minimax error rate}
Majority voting has been the de facto method to achieve the minimax rate.
Gao et al. proposed a two-stage algorithm for $k$-SBM that combines spectral methods for initial estimation with majority voting for accuracy boosting~\cite{Gao2018MinimaxNR}.
However, their algorithm requires \( \varepsilon^2 d \geq \Omega(k^3) \), which exceeds the KS threshold by a factor of $k$.

Liu and Moitra later developed the first polynomial-time algorithm that achieves the minimax error rate under node corruption by robustly simulating majority voting via SDP~\cite{Liu2022minimax}. 
However, their algorithm requires \(\e^2 d\geq \Omega(k^{102})\), which is substantially above the KS threshold.

\paragraph{Sum-of-Squares algorithms for robust estimation}
The Sum-of-Squares (SoS) framework provides a general \emph{proof-to-algorithm} paradigm: 
low-degree SoS certificates of identifiability can be converted by semidefinite programming into efficient algorithms whose guarantees follow mechanically from the SoS proofs~\cite{barak2014sum,raghavendra2018high}. 
This framework is particularly powerful for robust estimation problems in random graphs, 
where pseudorandomness can be certified by constant-degree spectral constraints. 
In such settings, one can encode exponentially many correlation inequalities using only a polynomial number of constraints~\cite{pmlr-v195-hua23a, chen2024graphon, chen2024private, chen2025improved}. 
In the context of SBMs, \cite{pmlr-v195-hua23a} used SoS to achieve weak recovery at the KS threshold, even when a constant fraction of nodes is adversarially corrupted.

\subsection{Organization}
The rest of the paper is organized as follows.
In \cref{sec:techniques}, we give a high-level overview of our techniques with a focus on robust majority voting via Sum-of-Squares.
In \cref{sec:preliminary}, we cover notations and basics of the Sum-of-Squares framework.
In \cref{sec:stat-prop}, we establish the statistical properties of the \( k \)-SBM model with a focus on the majority voting concentration result. 
In \cref{sec:Main-theorem-algorithm}, we outline our algorithmic framework and present our formal main results.
The proofs of these results are subsequently detailed in \cref{sec:Initialization} (initial rough clustering),
\cref{sec:IdentifyingRecoveredBlocks} (identifying clusters with a \( 0.99 \)-fraction of the vertices recovered),
and \cref{sec:robust-bisection-boosting} (robustly boosting the rough bisection accuracy to bisection minimax rate), respectively.
These ingredients are then combined in \cref{sec:robust-bisection-algo} to obtain our robust bisectioning algorithm. 
In \cref{sec:sym-break-robust-clustering}, we show how to recursively apply the robust bisectioning algorithm to robustly find a $k$-clustering of the graph with error rate \( 1/\poly(k) \) for symmetry breaking. 
Finally, we conclude in \cref{sec:Optimal-robust-clustering} by boosting the error rate from \( 1/\poly(k) \) to the minimax optimal rate using robust pairwise majority voting.

\section{Techniques}

\label{sec:techniques}

\subsection{Two communities}

For notational simplicity, we first illustrate some of the ideas underlying our algorithm for the case of two communities.
Let \((\bm G, \bm x)\sim \SBM_{n}(d,\e)\) denote the stochastic block model with degree parameter \(d>0\) and bias parameter \(\e\in [0,1]\).
Here, we choose \(\bm x \in \{\pm 1\}^{n}\) uniformly at random among all unbiased vectors so that \(\sum_{i=1}^{n} \bm x(i)=0\), where we take \(n\ge 2\) to be even.
We can view \(\bm x\) as a balanced bipartition of the vertex set \([n]\).
We refer to its parts \(\bm x^{-1}(1)\) and \(\bm x^{-1}(-1)\) as blocks or communities.
Then, for every pair of distinct vertices \(i,j\in[n]\), we decide independently at random whether to join them by an edge with probability,
\begin{equation}
  \label{eq:2-sbm-edge-probability}
  \Pr\Set{ij \in E(\bm G) \mid \bm x}
  = \Paren{1+\e\cdot \bm x(i) \cdot \bm x(j)}\cdot \frac dn\,.
\end{equation}

\paragraph{Basic recovery approach}

In the stochastic block model without adversarial corruptions, the most basic approach for approximately recovering the underlying blocks uses spectral techniques \cite{boppana1987eigenvalues,guedon2016community,coja2010graph,massoulie2014community,chin2015stochastic}.
As a slight abuse of notation, let \(G\) denote the centered adjacency matrix of an \(n\)-vertex graph with degree parameter \(d\),
\begin{equation}
  \label{eq:centered-adjacency-matrix}
  G(i,j) = \begin{cases}
    1-\tfrac d n & \text{ if \(ij\in E(G)\),}\\
    -\tfrac dn & \text{ otherwise.}
  \end{cases}
\end{equation}
Then, for the stochastic block model \((\bm G,\bm x)\sim \SBM_n(d,\e)\),
\begin{equation}
  \label{eq:centered-adjacency-matrix-expectation}
  \E \Brac{\bm G \Mid \bm x} = \tfrac {\e d} n \cdot  \dyad{\bm x}\,.
\end{equation}
At the same time, conditioned on $\bm x$, the matrix \(\bm E\seteq\tfrac{n}{\e d}  \bm G- \dyad{\bm x}\) has (up to symmetry) independent mean-zero entries and enjoys a spectral norm bound \(\norm{\bm E}\lesssim n/\paren{\e  \sqrt d}\) for \(d\gg \log n\).
In this way, the best rank-one approximation of \(G\) allows us to approximately recover \(\bm x\) up to sign as long as \(\e\cdot \sqrt d\gg 1\) and \(d\gg \log n\).
After carefully truncating high-degree nodes, we can even drop the logarithmic-degree condition \(d\gg \log n\).
It turns out that for approximately recovering the underlying blocks, the only remaining condition, \(d\gg 1/\e^{2}\), is also information-theoretically necessary.

\paragraph{Sum-of-squares lens on the basic approach.}

It will be instructive to formulate this basic approach in terms of the sum-of-squares method.
While this sum-of-squares formulation is certainly overkill for approximate recovery in the vanilla version of the model, it will give us node-robustness essentially for free (whereas basic spectral techniques or even semidefinite programming techniques cannot handle node corruption).
The basic spectral algorithms and majority voting algorithms are known to be brittle under node corruptions, while the basic semidefinite programming algorithms are only known to be robust against $O(n d\epsilon)$ number of corrupted edges. 

As a starting point for our sum-of-squares formulation, we note that a spectral norm bound \(\norm{E}\le \lambda\cdot n\) certifies the following universally quantified inequality over the hypercube,
\begin{equation}
  \label{eq:quadratic-hypercube-bound}
  \forall u,v\in \sbits^{n}.\quad \iprod{u, E v}\le \lambda \cdot n^{2}\,.
\end{equation}
Moreover, this spectral norm bound constitutes a degree-\(2\) sum-of-squares proof of the above inequality.

For our purposes, it will be useful to introduce sum-of-squares proofs as mathematical objects in their own right.
For \(n\)-variate polynomials \(p,q_{1},\ldots,q_{m}\in \R[x]\), we say that a real-valued square matrix \(R\) is a \emph{degree-\(\ell\) sum-of-squares proof} of the inequality \(p\ge 0\) subject to the constraints  \(q_{1}\ge 0,\ldots,q_{m}\ge 0\) if
\begin{equation}
  \label{eq:funny-sos-definition}
  p=\Tr \dyad R \cdot \Mom_{\ell}(q_{1},\ldots,q_{m})\,,
\end{equation}
where \(\Mom_{\ell}(q_{1},\ldots,q_{m})\) is a block-diagonal matrix with entries in \(\R[x]\) and (possibly empty) blocks \(M_{S}\) indexed by multisubsets \(S\subseteq [m]\) such that
for all pairs of monomials \(x^{\alpha},x^{\beta}\) with \(\deg x^\alpha,\deg x^{\beta}\le \tfrac 1 2 \cdot (\ell-\sum_{i\in S} \deg q_{i})\),
\begin{equation}
  \label{eq:sos-quadratic-equations}
  M_{S}(\alpha,\beta)\seteq
  \Paren{ \prod\nolimits_{i\in S} q_{i}}\cdot x^{\alpha}\cdot x^{\beta}\,.
\end{equation}

We remark that sum-of-squares proofs are more commonly defined in terms of sum-of-squares polynomial multipliers for (products of) the constraint polynomials \(q_{1},\ldots,q_{m}\).
However, it will be beneficial for our purposes to represent them as matrices (corresponding to the monomial-coefficients of the polynomial multipliers).
Crucially, our definition makes it apparent that sum-of-squares proofs are solutions to a system of quadratic equations of \cref{eq:sos-quadratic-equations}.

Note that each block \(M_{S}\) has size \(O(n^{\ell/2})\) by \(O(n^{\ell/2})\) and the number of non-empty blocks is \(O(m^{\ell})\) (as we may assume the polynomials \(q_{1},\ldots,q_{m}\) to have degree at least \(1\)).
Therefore, \(R\) has size \(O(nm)^{\ell}\) by \(O(nm)^{\ell}\) and the number of quadratic equations in \cref{eq:funny-sos-definition} is polynomial in \(n\) and \(m\) for every constant \(\ell\).

We denote that \(R\) is a \emph{degree-\(\ell\) sum-of-squares proof} of the inequality \(p\ge 0\) subject to the constraints  \(q_{1}\ge 0,\ldots,q_{m}\ge 0\) by
\begin{equation}
  \label{eq:turnstile-notation}
  R\colon \Set{q_{1}(x)\ge 0,\ldots,q_{m}(x)\ge 0}  \sststile{\ell}{x} p(x) \ge 0\,.
\end{equation}
If the variables \(x\) quantified by the proof or the degree bound \(\ell\) are clear from the context, we may drop them in the above notation.

A fundamental property of sum-of-squares proofs is that they are automatizable. If a matrix \(R\) as above exists, we can find it in time polynomial in its bit complexity \cite{barak2014sum,barak2016proofs,raghavendra2018high}.
A simple but very important consequence for estimation problems is that given as input a system of polynomial constraints \(\cA(x)\), we can compute in polynomial-time a vector \(\hat x\) that satisfies all convex polynomial inequalities that can be derived from \(\cA(x)\) by a sum-of-squares proof with polynomial bit complexity \cite{barak2014sum,barak2016proofs,raghavendra2018high}.
In this way, referred to as the \emph{proof-to-algorithm paradigm}, we obtain efficient estimation algorithms from the mere existence of small identifiability proofs.

In order to apply this terminology for approximate recovery of the stochastic block model, we consider the following polynomial constraints for the set of possible label assignments (for the case of two communities),
\begin{equation}
  \label{eq:label-constraints}
  \cA_{\mathrm{label}}(x) \seteq
  \Set{x_{1}^{2}=\ldots=x_{n}^{2}=1, \sum\nolimits_{i=1}^{n} x_{i}=0}
  \,.
\end{equation}
Next, we consider a system of quadratic equations for matrices \(E\) that are certifiably pseudo-random, in the sense that they are uncorrelated with all possible label assignments,
\begin{equation}
  \label{eq:pseudorandom}
  \cA_{\mathrm{rand}}(E, R; \lambda) \seteq \Set { R\colon \cA_{\mathrm{label}}(u),\cA_{\mathrm{label}}(v) \sststile{2}{x}\iprod{u,E v}\le \lambda\cdot n^{2}}
  \,.
\end{equation}
Here, the auxiliary variables \(R\) represent the sum-of-squares certificate for the correlation bound.
When we combine systems of polynomial constraints, we usually take these auxiliary variables to be unrelated.
In this case, we do not keep track of these variables explicitly and suppress them in the above notation.
We emphasize that the system \cref{eq:pseudorandom} has only \(E\) and \(R\) as variables but not \(u\) or \(v\).
Instead \(u,v\) determine the quadratic equations we impose for the variables \(E\) and \(R\).
We also remark that \cref{eq:pseudorandom} is a proxy for requiring that \(E\) satisfies the universally quantified inequality \cref{eq:quadratic-hypercube-bound}.
In this sense, \cref{eq:pseudorandom} allows us to encode \(2^{n}\) constraints as a polynomial number of constraints.

With the above two polynomial systems at hand, we can mimic the basic recovery approach by the following combined system,\footnote{
  In the constraint system \(\cAbasic\) we choose to treat the graph \(G\) as a variable.
  As a consequence, our sum-of-squares proof will be for statements universally quantified over \(G\).
  While this property would not be necessary for recovery in the non-robust setting, it will turn out to be crucial in the robust setting.
}
\begin{equation}
  \label{eq:basic-recovery-sos}
  \cA_{\mathrm{basic}}(G,x; \lambda)
  \seteq \cA_{\mathrm{label}}(x) \cup   \cA_{\mathrm{rand}}(\tfrac {n}{\e d}G- \dyad x; \lambda)
  \,,
\end{equation}
where we choose \(\lambda \lesssim 1/(\e \sqrt d)\).
These constraints are useful for recovery because for the same graph \(G\), essentially only one label assignment \(x\) can satisfy the constraints.
Concretely, we have
\begin{equation}
  \label{eq:basic-identifiability}
  \cA_{\mathrm{basic}}(G,x; \lambda),
  \cA_{\mathrm{basic}}(G,x'; \lambda)
  \sststile{16}{G,x,x'}
  \tfrac 1 {n^{2}}\iprod{x,x'}^{2} \ge 1-2\lambda
  \,.
\end{equation}
We emphasize that the above fact is deterministic and thus independent of any statistical model.
Ignoring some details, its proof boils down to the following direct calculation,
\begin{align}
  \label{eq:proof-sketch-identifiability}
  n^{2}-\iprod{x,x'}^{2} = \Iprod{\dyad x -\dyad{(x')}, \dyad x}
  & = \Iprod{E'-E, \dyad x}\\
  & = \iprod{ x,E'x} + \iprod{-x,E x}
  \,,
\end{align}
where \(E=\tfrac {n}{\e d}G- \dyad x\) and \(E'=\tfrac{n}{\e d}G- \dyad{(x')}\).
At the same time, the pseudorandomness constraints \cref{eq:pseudorandom} allow us to bound both terms on the right in the same way by \(\lambda\cdot n^{2}\),
\begin{align}
  \cA_{\mathrm{rand}}(E',R; \lambda), \cA_{\mathrm{label}}(x)
  \sststile{L}{E',R,x}
  \iprod{x,E' x}
  & = \lambda n^{2} - \Tr R\transpose R \cdot \Mom_{2}\Paren{\cA_{\mathrm{label}}(x)}\\
  & \le \lambda n^{2}
    \,.
\end{align}
In the second step, we use that \(\cA_{\mathrm{rand}}(E',R)\) contains the constraint \(\lambda \cdot n^{2}-\iprod{u,E'v}^{2} = \Tr R \transpose{R} \cdot \Mom_{2}(\cA_{\mathrm{label}}(u),\cA_{\mathrm{label}}(v))\), where \(u,v\) are formal variables, and that we can substitute \(x\) for both \(u\) and \(v\).
In the third step, we use that \(\Tr R \transpose{R} \cdot \Mom_{2}(\cA_{\mathrm{label}}(x))\) is a sum-of-squares polynomial in variables \(x\) and \(R\) modulo the constraints we have on \(x\)\footnote{ See \cref{thm:sos-as-constraints} for details}.
Combining the previous two calculations, we derive a bound of \(2\lambda\) on the distance \(1-\iprod{x,x'}^{2}/n^{2}\) from the constraints \(\cA_{\mathrm{basic}}(G,x),\cA_{\mathrm{basic}}(G,x')\).

The stochastic block model \((\bm G,\bm x) \sim \SBM_{n}(d,\e)\) satisfies the constraints \(\cA_{\mathrm{basic}}(G,x;\lambda)\) for \(\lambda\lesssim (\e\sqrt d)^{-1}\) with high probability if \(\e \sqrt d\gg 1\) (even for constant degree \(d\) \cite{guedon2016community}).
Thus, the now standard proof-to-algorithm paradigm for sum-of-squares \cite{ding2022robust,pmlr-v195-hua23a,mohanty2024robust} yields a polynomial-time algorithm to recover the underlying blocks up to an error of \(O\bigparen{\e \sqrt d}^{-1}\).

\paragraph{Handling node corruptions using sum-of-squares}

We will now see that the sum-of-squares formulation \cref{eq:basic-identifiability} of the basic recovery approach directly implies that we can handle a constant fraction of node corruptions.

To this end, we formulate the node distance between graphs in terms of polynomial equations,
\begin{equation}
  \label{eq:node-distance}
  \cA_{\mathrm{close}}(G_{1},G_{2},z;\eta) \seteq
  \Set{
    \D(z)^{2}=\D(z),
    D(z)\cdot (G_{1}-G_{2})\cdot \D(z)=0,
    \Tr \D(z)=(1-\eta)\cdot n
  }
  \,,
\end{equation}
where \(D(z)\) denotes the diagonal matrix with \(z\) on the diagonal.
A pair of graphs \(G_{1}\) and \(G_{2}\) satisfy the above constraints for some \(z\) if and only if the two graphs differ in at most \(\eta \cdot n\) nodes.
Here, the binary vector \(z\) indicates the set of nodes, where the two graphs agree.
As before, we will omit the auxiliary variables \(z\) in the above notation whenever there are no relations to other variables.

To handle node corruptions, we use that \(\cA_{\mathrm{basic}}\) is Lipschitz with respect to this node distance and restrictions,
\begin{equation}
  \label{eq:lipschitz-basic}
  \cAclose(G,G',z;\eta),\cA_{\mathrm{basic}}(G,x;\lambda)
  \sststile{16}{G,G',x,z}
  \cA_{\mathrm{basic}}\bigparen{\D(z) \cdot G' \cdot \D(z),x;\lambda+2\eta}
  \,.
\end{equation}
The above statement is not surprising:
If we have a good label assignment \(x\) for \(G\), then it continues to be one if we were to remove a small fraction of nodes (the complement of the set indicated by \(z\)).
After removing this set of nodes, the two graphs \(G\) and \(G'\) agree.
(Note that, in general, we cannot hope \(G'\) to satisfy the pseudorandomness constraints \(\cArand\) with respect to \(x\) and that the restriction to the vertex set indicated by \(z\) is necessary.)
Letting \(E=\tfrac {n}{\e d}G -\dyad x\) and \(E'=\tfrac {n}{\e d}\D(z)\cdot G'\cdot \D(z) - \dyad x\) and omitting some technical details, the above Lipschitz property follows from the fact,
\begin{equation}
  \label{eq:lipschitz-basic-proof}
  \cAclose(G,G',z;\eta),\cA_{\mathrm{label}}(u), \cA_{\mathrm{label}}(v)
  \sststile{16}{}
  \iprod{u,E'v}\le \iprod{u, \D(z) \cdot E \cdot \D(z) v} + 2\eta n^{2}
  \,,
\end{equation}
where \(2\eta n^{2}\) accounts for the entries where \(E'\) and \(\D(z)\cdot E \cdot \D(z)\) differ.
Each of these entries is either \(1\) or \(-1\) and is located in the union of \(\eta n\) rows and \(\eta n\) columns.
Hence, the sum of the absolute values of these entries is at most \(2\eta n^{2}\).

Given a corrupted graph \(\Gcorrupted\) as input, we can recover the underlying blocks using the following polynomial constraints in variables \(G\) and \(x\),
\begin{equation}
  \label{eq:corrupted-constraints}
  \cArobust(G,x; \lambda,\eta)
  \seteq \cAclose(G,\Gcorrupted; \eta)
  \cup \cAbasic(G,x;\lambda+4\eta)
  \,.
\end{equation}
The previous Lipschitz property of \(\cAbasic\) together with the vanilla analysis of \(\cAbasic\) for recovery implies that any two solutions to the above system for the same input graph \(\Gcorrupted\) are close (and this fact has low-degree sum-of-squares proofs),
\begin{equation}
  \label{eq:robust-identifiability}
  \cArobust(G,x; \lambda,\eta) ,   \cArobust(G',x'; \lambda,\eta)
  \sststile{16}{}
  1-\tfrac 1{n^{2}}\iprod{x,x'}^{2}\le 2\lambda + 8\eta\,.
\end{equation}
Indeed, since we impose both \(G\) and \(G'\) to be \(\eta\)-close to \(\Gcorrupted\), the graphs \(G\) and \(G'\) have to be \(2\eta\)-close and we can derive \(\cAclose(G,G',z;2\eta)\) for an appropriately chosen \(z\) by a low-degree sum-of-squares proof.
By the Lipschitz property of \(\cAbasic\), we get that \(x\) and \(x'\) satisfy \(\cAbasic\) for the same graph \(H\seteq \D(z)\cdot G\cdot\D(z)\).
Concretely, we can derive \(\cAbasic(H,x; \lambda +2\cdot 2\eta)\) and \(\cAbasic(H,x'; \lambda +2\cdot 2\eta)\) by \cref{eq:lipschitz-basic} and conclude that \(x\) and \(x'\) are close by \cref{eq:basic-identifiability}.
As before, \cref{eq:robust-identifiability} means that the sum-of-squares meta algorithm recovers the underlying blocks with error \(O(\e \sqrt d)^{-1} + O(\eta)\) for the stochastic block model with degree parameter \(d\) and bias \(\e\) in the presence of an \(\eta\) fraction of adversarial node corruptions, recovering one of the results of \cite{Liu2022minimax}.

The algorithm obtained in this way is essentially the same as in \cite{Liu2022minimax} with only superficial differences.
However, we believe the analyses differ substantially.
While the analysis in \cite{Liu2022minimax} requires significant ingenuity tailored to the particular algorithm,
our analysis in contrast is purely mechanical.

\paragraph{Boosting accuracy robustly}

While the techniques discussed so far capture up to constant factors the information-theoretically necessary parameters to achieve small error, say error at most \(0.01\),
they do not capture the optimal error rate.

In the non-robust setting, we can boost the accuracy of a solution with small error by several rounds of majority voting, where we assign each node the most common label among its neighbours.
However, this strategy fails dramatically in the presence of node corruptions.
Concretely, even without changing the degrees of nodes, an adversary can fool majority voting by corrupting only an \(O(\eta)\) fraction of nodes \cite{Liu2022minimax}.

For simplicity, suppose that we start with the ground-truth label assignment \(\bm x\) and an adversary can select random vertex subsets \(S\) and \(T\) with \(\card S=O(\eta n)\) and \(\card T=n/2\).
Then, the adversary can use \(S\) to poison the votes for \(T\) by setting all edges from \(S\) to random nodes in \(T\) with the opposite label.
This corruption flips the majority vote outcome for most of the nodes in \(T\) with high probability.
Hence, majority voting applied to \(\bm x\) results in an assignment that agrees with \(\bm x\) in a random subset of roughly half of the nodes and thus has large error.

In this example, a tiny fraction of nodes is responsible for roughly half of the majority vote outcomes.
However, in a random-like graph, the expander mixing lemma certifies that small fractions of nodes cannot change many majority vote outcomes.
Concretely, in the previous example, the expander mixing lemma implies for the uncorrupted graph that the number of edges between \(S\) and \(T\) is \(\tfrac d n \card S \cdot \card T \pm O( d \cdot \card S \cdot \card T)^{1/2}=\tfrac 12 \e d n\cdot (1 \pm O(\e d)^{-1/2})\).
On the other hand, in the corrupted graph we have about \(\e d n\) edges between these sets, which is about two times larger than before the corruption as \(\e d \gg \e^{2} d\gg 1\) when the blocks are recoverable.

More formally, we can limit the effect of small sets on majority vote outcomes if we have a label assignment \(x\) consistent with the majority votes for most of the nodes such that the matrix \(E=\tfrac n {\e d} G -\dyad x\) satisfies the following pseudorandomness constraints, strengthening \(\cArand(E, R; \lambda)\),
\begin{equation}
  \label{eq:mixing-constraints}
  \cAmix(E,R;\lambda) \seteq
  \Set{R\colon
    \sststile{2}{u,v}
    \iprod{u, E v} \le \lambda n \cdot \tfrac 12(\norm{u}^{2}+\norm{v}^{2})
  }
  \,.
\end{equation}
Since the previous pseudorandomness constraints \(\cArand\) defined in \cref{eq:pseudorandom} consider vectors \(u,v\) with \(\norm{u}=\norm{v}=\sqrt n\), the above normalization for \(\lambda\) is the same as in \cref{eq:pseudorandom}.
We remark that due to the characterization of the above inequality in terms of matrix factorizations an equivalent formulation of the above system would be \(\Set{\dyad R=\lambda n\cdot I_{n}- E}\).
We use the following polynomial constraints to encode that \(x\) is consistent with the majority votes for most of the nodes,
\begin{equation}
  \label{eq:majority-vote-constraints}
  \cAmaj(G,x,R; \beta, \gamma) \seteq
  \Set{R \colon \cAset(z)\sststile{2}{z} \iprod{D(z) x, Gx} \ge (1-\gamma)\e d \Paren{\card z - \beta n} }
  \,,
\end{equation}
where \(\card z=z_{1} + \cdots + z_{n}\) and \(\cAset(z) \seteq \set{\cD(z)^{2}=\D(z)}\).
Note that for the stochastic block model \((\bm G,\bm x)\sim \SBM_{n}(d,\e)\),
we expect that \((G x)_{i}\approx \e d \cdot x_{i}\) for most nodes \(i\in [n]\).
Since the inequality in \cref{eq:majority-vote-constraints} is linear subject to simple constraints,
sum-of-squares proofs capture all valid inequalities of this type up to a small error.
If the inequality in \cref{eq:majority-vote-constraints} is valid, then we can have at most \(\beta n/\gamma\) nodes \(i\) with \((G x)_{i} \cdot x_{i}\le (1-2 \gamma) \e d\).
At the same time, if the entries \((G x)_{i}\) with \((G x)_{i}\cdot x_{i}\le (1-\gamma) \e d\) sum\footnote{
  Note that for random-like graphs, the expander mixing lemma implies a bound on the sum of these entries as soon as we have a bound on the number of entries with \((G x)_{i}\) with \((G x)_{i}\cdot x_{i}\le (1-\gamma) \e d\).
} in absolute value to at most \(\beta n\), then the above inequality is valid.

Liu and Moitra~\cite{Liu2022minimax} show that the stochastic block model satisfies the above constraints for \(\lambda\lesssim (\e\sqrt d)^{-1}\) and \(\ln 1/\beta \approx_{o(1)} \gamma  C_{d,\e}/2\) with high probability (after removing a negligible number of nodes).
They also provide an iterative algorithm that boosts the accuracy of a solution to a rate that is asymptotically optimal.

Simplifying their approach significantly, we show that the sum-of-squares meta-algorithm can boost to such a rate in one shot.
To this end, we combine the previous constraints as follows,
\begin{equation}
  \label{eq:boost-constraints}
  \cAboost(G,x) \seteq
  \cAmix(\tfrac n {\e d} G -\dyad x; \lambda)
  \cup \cAmaj(G,x; \beta_{1}, \gamma_{1})
  \cup \cAmaj(G,x; \beta_{2}, \gamma_{2})
  \,,
\end{equation}
where we choose \(\lambda\lesssim (\e \sqrt d)^{-1}\), \(\gamma_{1}=0.01\), \(\beta_{1}=\lambda\),   \(\gamma_{2}=1-10\lambda\), and \(\beta_{2}=\exp(-C_{d,\e}/2 + \chi \sqrt C_{d,\e})\) for some sufficiently large absolute constant $\chi$.
The following property of these constraints allows us to boost the accuracy of a solution,
\begin{equation}
  \label{eq:boost-sos-proof}
  \tfrac 1n \norm{x - x'}^{2}\le 0.01,
  \cAboost(G,x),\cAboost(G,x')
  \sststile{}{G,x,x'}
  \tfrac 1 n\norm{x-x'}^{2}\le \exp\Paren{-C_{d,\e}/2 + O(\sqrt C_{d,\e})}
  \,.
\end{equation}
For similar reasons as before, we can use Lipschitz and restriction properties of these constraints in order to obtain error bounds in the robust setting at the cost of an additional \(O(\eta)\) error for an \(O(\eta)\) fraction of node corruptions.

Toward establishing \cref{eq:boost-sos-proof}, we consider the vector \(z\) with \(z_{i}=\tfrac 1 4\bigparen{ x_{i}-x'_{i}}^{2}\).
Since \(x\) and \(x'\) have coordinates in \(\{\pm 1\}\), the vector \(z\) has coordinates in \(\{0,1\}\).
Indeed, \(\cAlabel(x),\cAlabel(x')\sststile{}{} \cAset(z)\).
Furthermore, the vector \(z\) also satisfies \(\D(z)x=-\D(z)x'\) and \(x-x' = \D(z)\cdot (x-x')\)
because \(z\) indicates the set of coordinates, where \(x\) and \(x'\) differ.

Now since the labelings \(x\) and \(x'\) agree in at least a \(0.99\) fraction of the nodes, the majority voting based on \(x\) or \(x'\) will agree in all but a $\beta$ fraction of the nodes (when there is no node corruption).
This gives us the intuition that the constraints from expander mixing lemmas (namely \(\cAmaj(G,x;\beta,\gamma)\) and \(\cAmaj(G,x';\beta,\gamma)\)) ensure that \(x\) and \(x'\) are close to each other in distance $\beta$.

Concretely, using these relations between \(z\), \(x\), and \(x'\) together with majority-voting consistency constraints \(\cAmaj(G,x;\beta,\gamma)\) and \(\cAmaj(G,x';\beta,\gamma)\), we get
\begin{align*}
  \cAmaj(G,x;\beta,\gamma), \cAmaj(G,x';\beta,\gamma) \sststile{}{G,x,x'}  \quad
  & (1-\gamma )\e d (\card z - \beta n)\\
  & \le \tfrac 12 \Bigparen{\iprod{\D(z)x,Gx} + \iprod{\D(z)x',Gx'}} \\
  & = \tfrac 1 2 \iprod{\D(z) x, G (x-x')}\\
  & = \iprod{\D(z) x , G \D(z) x}
    \,.
\end{align*}
We can upper bound the quantity on the right using the constraints \(\cAmix(E;\lambda)\) for the matrix \(E=\tfrac n {\e d} G - \dyad x\) as well as the label-assignment and set constraints for \(x\) and \(z\),
\begin{align*}
  \cAmix(E;\lambda),\cAset(z),\cAlabel(x)
  \sststile{}{E,z,x}\quad 
  & \iprod{\D(z) x , G \D(z) x}\\
  & = \tfrac {\e d}{n}\Paren{ \iprod{x ,\D(z) x}^{2} + \iprod{\D(z) x, E \D(z) x}}\\
  & \le \tfrac{\e d}{n}\Paren{ \norm{\D(z) x}^{4} + \lambda  n \cdot \norm{\D(z) x}^{2}} \\
    & = \tfrac{\e d}{n}\Paren{ \card{z}^{2} + \lambda n \cdot \card{z}}
\end{align*}
Combining the lower and upper bounds on \(\iprod{\D(z) x, G \D(z) x}\), we get the following upper bound on \(\card z\),
\begin{align}
  \label{eq:z-boosting-bound}
  \cAmaj(G,x;\beta,\gamma), \cAmaj(G,x';\beta,\gamma), \cAmix(E;\lambda),\cAset(z),\cAlabel(x),\cAlabel(x') & \\ \sststile{}{G,x,x',E,z,x} 
  \card z \le \beta n + \tfrac 1 {(1- \gamma - \lambda)n } \card{z}^{2}\,.
\end{align}
While this bound alone is not enough to conclude an absolute bound on \(\card z\), it does allow us to boost assumed bounds on \(\card z\).
Suppose \(\gamma + \lambda \le 1\) and \( \lambda \le \tfrac 1 2 \paren{1-\gamma}\).
Then, \(\tfrac 1 {1-\gamma-\lambda} \le \tfrac 2 {1-\gamma}\).
Moreover, if we add the constraint \(\card {z} \le \tfrac {1-\gamma}{4} n\) to the previous constraints, we can derive \(\tfrac 1 {(1-\gamma -\lambda)n}\card{z}^{2}\le \tfrac 12 \card z\) and thus, by subtracting \( \tfrac 12 \card z\) from both sides of the inequality \cref{eq:z-boosting-bound},
\begin{align}
  \label{eq:final-z-boosting-bound}
  \card {z} \le \tfrac {1-\gamma}{4} n, \card z \le \beta n + \tfrac 1 {(1- \gamma - \lambda)n } \card{z}^{2} \sststile{}{z}
  \card z \le 2 \beta n \,.
\end{align}
Our target derivation \cref{eq:boost-sos-proof} contains the inequality \(\card{z}=\norm{x-x'}^{2}\le 0.01n\) as an initial assumption.
Hence, we get to assume \(\card z \le \tfrac {1-\gamma_{1}} 4 n\) from the get-go as \(\gamma_{1}=0.01\).
By substituting  \(\gamma_{1}\) and \(\beta_{1}\) for \(\gamma\) and \(\beta\) in \cref{eq:final-z-boosting-bound}, we derive \(\card z\le 2\beta_{1}n\).
Since \(\gamma_{2} = 1- 2\lambda - 8\beta_{1}\), we can compose this derivation with \cref{eq:final-z-boosting-bound} with  \(\gamma_{2}\) and \(\beta_{2}\) substituted for \(\beta\) and \(\gamma\).
In this way, we derive the desired bound \(\card z\le \beta_{2} n\) for \(\ln \beta_{2} = -C_{d,\e}/2 +O(\sqrt C_{d,\e})\) on the distance between the label assignments \(x\) and \(x'\).

\paragraph{Concluding the two-community case}
Using the polynomial constraints and sum-of-squares proofs discussed so far, we obtain a polynomial-time algorithm based on the sum-of-squares method for robust recovery in stochastic block models with two communities up to asymptotically optimal error.
Concretely, supposing \(d\ge O(1/\e^{2})\), we use the constraints \(\cArobust\) described in \cref{eq:corrupted-constraints} in order to obtain with high probability a label assignment \(\tilde x\in \{\pm 1\}^{n}\) with error at most \(0.005n\).
Then, we solve the sum-of-squares program for the following combination of constraints discussed in this paragraph,
\begin{equation}
  \label{eq:boot-robust-constraints}
  \cArobustboost(G,x;\eta) \seteq
  \begin{aligned}[t]
    & \cAclose(G,\Gcorrupted;\eta)  \cup \Set{\norm{x-\tilde x}^{2}\le 0.005n}\\
    & \cup \cAboost(G,x)
      \,.
  \end{aligned}
\end{equation}
Here, \(\Gcorrupted\) is the \(\eta\)-node-corrupted input graph and \(\tilde x\) is the rough label assignment with error at most \(0.005\) obtained in the preprocessing phase of the algorithm.
As discussed before, the stochastic block model \((\bm G,\bm x)\sim \SBM_{n}(d,\e)\) satisfies the above constraints with high-probability uniformly for all \(\eta\)-corruptions on \(\bm G\) when we substitute \(\bm G\) and \(\bm x\) for \(G\) and \(x\) (after pruning a tiny fraction of nodes with high degree).
By \cref{eq:boost-sos-proof}, any two solutions \(x\) and \(x'\) for the same graph \(G\) are \(\exp(-C_{d,\e}/2+O(\sqrt C_{d,\e}))\)-close.
(While our constraints do not include the inequality \(\tfrac 1n\norm{x-x'}^{2}\le 0.01\), we can derive it directly from the two included inequalities \(\tfrac 1n\norm{x-\tilde x}^{2}\le 0.005\) and \(\tfrac1n \norm{x'-\tilde x}^{2}\le 0.005\) by \(\ell_{2}^{2}\) triangle inequality.)
By restriction arguments analogous to \cref{eq:lipschitz-basic}, we can allow for \(\eta\)-close graphs \(G\) and \(G'\) at the cost of an additional \(O(\eta)\) error.

We emphasize that this algorithm relies on the same structural properties of the stochastic block model as the algorithm by Liu and Moitra \cite{Liu2022minimax}.
Concretely, both algorithms try to rule out a small fraction of nodes swaying a disproportionate number of majority vote outcomes.
However, the final algorithms end up being quite different.
While our algorithm and its analysis exploit the structural properties directly and don't require additional concepts besides the generic SOS framework, Liu and Moitra introduce several non-trivial technical tools such as the notion of resolvable matrices.
Furthermore, they need to keep track of invariants satisfied throughout the iterations of their algorithm.
Our algorithm avoids this complication by boosting to ``full accuracy'' in one shot.

\subsection{More than two communities}


In order to obtain high-accuracy label assignments for more than two communities,
we can extend the basic recovery approach.
For \(k\) communities, we consider the stochastic block model \((\bm G,\bm x)\sim \SBM_n(d, \varepsilon, k)\).
Here, \(\bm x\) is a random balanced assignment of labels from \([k]\) to nodes \([n]\).
In particular, the blocks / communities are random subsets of nodes with \(\card{\bm x^{-1}(1)}=\cdots=\card{\bm x^{-1}(k)}=\tfrac n k\).
For every pair of distinct nodes \(i,j\in [n]\), we decide independently at random whether to join them by an edge with probability,
\begin{equation}
  \label{eq:k-sbm-edge-probability}
  \Pr\Set{ij \in E(\bm G) \Mid \bm x}
  =
  \begin{cases}
    \Paren{1+(1-\tfrac 1 k)\e}\cdot \frac dn
    & \text{ if \(\bm x(i)=\bm x(j)\),}\\
    \Paren{1-\tfrac 1 k \cdot \e}\cdot \frac dn
    & \text{ if \(\bm x(i)\neq \bm x(j)\).}
  \end{cases}
\end{equation}
For \(k=2\), the above model \(\SBM_{n}(d,\e,2)\) corresponds to the previously introduced model \(\SBM_{n}(d,\e/2)\).
The Kesten-Stigum threshold for this model, conjectured to coincide with the computational threshold for weak recovery, is \(d>k^{2}/\e^{2}\) \cite{abbe2015community,Hopkins18,bandeira2021spectral,sohn2025sharp}.
Indeed, for \(\tfrac {\e^{2} d}{k^{2}}\to \infty\) and sufficiently large \(n\), we know of polynomial-time algorithms that achieve error tending to \(0\) (but not necessarily at an optimal rate) \cite{guedon2016community}.

\paragraph{Rough initialization}
First, we briefly describe a sum-of-squares formulation that achieves vanishing error under the above condition for more than two communities.
We denote the constraints for balanced label assignments as follows,
\begin{equation}
  \label{eq:general-label-assignments}
  \cAlabel(x)
  \seteq \Set{ x(i,a)^{2}=x(i,a), ~ \sum_{a}x(i,a)=1, ~ \sum_{i} x(i,a)=\tfrac n k}
  \,.
\end{equation}
Here $x \in \R^{n \times k}$ and we denote by \(x_{a}\) the vector with entries \(x(1,a),\ldots,x(n,a)\) indicating the set of nodes assigned label \(a\).

We consider the following constraints as analogue for basic recovery,
\begin{align}
  \label{eq:basic-recovery-general}
  \cAbasic(G,x)
  \seteq \cAlabel(x) \cup \cAmix\Paren{E ; \lambda}
  \,,
\end{align}
where \(E\seteq \tfrac n {\e d} G + \tfrac 1k \dyad \Ind -\sum_{a=1}^{k} \dyad{(x_{a})}\) and \(\lambda \lesssim (\e \sqrt d)^{-1}\).
Suppose we have two solutions \(x^{(1)}\) and \(x^{(2)}\) of the above constraints for the same graph \(G\).
Let \(X^{(t)}\seteq \sum_{a=1}^{k}\dyad{\Paren{x^{(t)}_{a}}}\) for \(t\in \set{1,2}\).
Then, 
\begin{align}
  \label{eq:general-identifiability-proofs}
  \normf{X^{(1)}-X^{(2)}}^{2}
  & = \iprod{E^{(2)}-E^{(1)},X^{(1)}-X^{(2)}}\\
  & \le \sum_{a} \iprod{x^{(1)}_{a},(E^{(2)}-E^{(1)})x^{(1)}_{a}}
  +  \sum_{a} \iprod{-x_{a}^{(2)},(E^{(2)}-E^{(1)})x^{(2)}_{a}}  \\
  & \le 2\lambda n \cdot \sum_{a=1}^{k}\norm{x_{a}}^{2} = 2\lambda  n^{2}
  \,,
\end{align}
where \(E^{(t)}\seteq \tfrac n {\e d} G + \tfrac 1k \dyad \Ind -\sum_{a=1}^{k} \dyad{(x^{(t)}_{a})}\) for \(t\in\set{1,2}\).
Since the derivation \cref{eq:general-identifiability-proofs} uses only steps captured by the sum-of-squares proof system, we obtain an SOS proof for the following error bound,
\begin{equation}
  \label{eq:general-identifiability-proofs-sos}
  \cAbasic(G,x^{(1)}),  \cAbasic(G,x^{(2)})
  \sststile{2}{G,x^{(1)},x^{(2)}}
  \normf{X^{(1)}-X^{(2)}}^{2} \le 2\lambda n^{2}
  \,.
\end{equation}
Since \(\normf{X^{(1)}}^2=\normf{X^{(2)}}^2=n^{2}/k\), the above error bounds allow us to recover the \(k\) communities with small error whenever \(1 \ll (k \lambda)^{-2} = \e^{2} d/ k^{2}\).
Furthermore, there is also a robust version of this sum-of-squares proof (see \cref{lem:intialization-identifiability})
\begin{equation}
  \label{eq:general-identifiability-proofs-sos-robust}
  \cAclose(G_1,G_2;\eta),
  \cAbasic(G_1,x^{(1)}),  \cAbasic(G_2,x^{(2)}) 
  \sststile{4}{G_1, G_2, x^{(1)},x^{(2)}}
  \normf{X^{(1)}-X^{(2)}}^{2} \le 2\lambda n^{2} + O(\eta/k) \cdot n^{2}
  \,.
\end{equation}
The above sum-of-squares proof directly implies a polynomial-time algorithm to robustly recover communities for stochastic block models with more than two communities (under node corruptions).

\paragraph{Boosting accuracy for more than two communities}

While the rough initialization allow us to recover a label assignment with error at most \(0.001\) when \(d\gg k^{2}/\e^{2}\), we aim to boost the error close to the asymptotically optimal bound \(\exp(-C_{d,\e}/k)\).
A natural extension of the previous majority voting strategy would assign to every node the most common label among its neighbours.
For this strategy to succeed, even without node corruptions, we must start with a label assignment that has small error for every block.
However, a label assignment with error at most \(0.001\) could assign random labels to a \(0.001\) fraction of the blocks.
In this case, simple voting schemes cannot improve the accuracy for these blocks and the total error would stay at least \(0.001\).
On the other hand, if the initial label assignment had error \(\ll 1/k\), we can conclude that the assignment has small error for every block and, in the non-robust setting, we can boost by the above voting scheme to an error of roughly \(\exp(-C_{d,\e}/k)\).
However, previous algorithms, including the robust algorithm by Liu and Moitra~\cite{Liu2022minimax}, can guarantee this small an error only if \(d > k^{2+c}/\e^{2}\) for a constant \(c>0\).
(Indeed, the error bound \cref{eq:general-identifiability-proofs-sos} requires \(d\gg k^{4}/\e^{2}\) to guarantee error at most \(1/k\).)

To overcome this inherent technical barrier, we develop a recursive bisectioning algorithm for obtaining error rate $1/\poly(k)$ when $\e^2 d\geq Kk^2\log k$ for some sufficiently large constant $K$.

\paragraph{Robust verification} A crucial procedure in the robust bisectioning algorithm is to test whether a cluster output by the initialization algorithm well approximates one of the ground truth clusters.
In other words, given a set of $n/k$ vertices, the algorithm needs to decide whether at least $0.99$ fraction of the nodes belong to the same cluster. 
For this, we consider the following sum-of-squares program.
\begin{gather}
  \cA_{\text{verify}}(z; G,\eta) \seteq 
  \left. 
      \begin{cases}
      z \odot z = z,\, \iprod{z, \Ind} \geq \Paren{\frac{0.99}{k}-\eta}n \\
      \norm{\Paren{G-\frac{a}{n} J} \odot z z^{\top}} \leq \chi\sqrt{d/k} \\
      \Iprod{\Paren{G-\frac{d}{n} J} \odot z z^{\top}, \Ind \Ind^{\top}} \geq \frac{0.97 (k-1) \e d n}{k^3}
      \end{cases}
  \right\} \,, 
\end{gather}
where $a/n$ is the intra-cluster connection probability, and $\chi$ is a large constant. 
When a $0.99$-fraction of the nodes indeed belongs to the same cluster, it is easy to see that the program is satisfied by setting \(z\) as the indicator vector for the set of uncorrupted vertices from the ground truth cluster.
On the other hand, we give a constant-degree sum-of-squares refutation for the program when less than $0.98$ fraction of the nodes belong to the same cluster. 
The observation is that, having a large number of nodes from the other community leads to a large spectral norm of the matrix $\Paren{G-\frac{a}{n} J} \odot z z^{\top}$.
Let $\xi$ be the indicator vector for the set of vertices belonging to other clusters.
As the inter-cluster edge connection probability is given by $a/n-\e d/n$, we have 
\begin{equation*}
  \Abs{\iprod{\xi\odot z, \Paren{G-\frac{a}{n} J} z}}\geq \frac{\e d}{k}\norm{z} \norm{\xi\odot z} \,,
\end{equation*}  
which violates the spectral norm constraint.

\paragraph{Robust bisectioning algorithm}
Now, we show that, given the rough initialization with error rate $0.001$, how to find a bisection of the graph such that, for each community, at least $1-\exp\left(-\e^2 d/2k^2\right)+\poly(k) \eta$ fraction of the nodes is assigned to the same side of the bisection.

To do this, we use the robust verification algorithm on the recovered communities from the rough initialization.
Since the rough initialization has error $0.001$, among the recovered communities, there can be at most $k/2$ communities that has error more than $0.01n/k$.
Therefore, we can obtain $k/2$ clusters that each has error at most \(0.01n/k\) via the robust verification algorithm, and we can put them in the same side of the bisection and form a rough bisection with bisection error $0.01n$.
Now, we treat the bisection as a 2-SBM and use majority voting to boost the bisection accuracy to $\exp(-\e^2 d/2k^2)+\poly(k) \eta$. \footnote{The error rate here is of order $\exp(-\e^2 d/2k^2)$ because the voting signal is diluted after putting the communities in partitions of $\frac{k}{2}$ communities.}
This leads to an algorithm with error rate $1/\poly(k)+O(\eta)$ when \(\e^2 d\geq Kk^2\log k\).\footnote{This is the bottleneck for us to get to the KS threshold.}
 
\paragraph{Robust $k$-clustering algorithm} Given the robust bisectioning algorithm, we can now recursively bisect the graph and get a better initialization with accuracy $1/\poly(k)+\poly(k)\cdot\eta$ when $\e^2 d\geq Kk^2\log k$.
More precisely, the algorithm robustly partitions the given graph into a bisection with error rate $\exp(-\e^2 d/2k^2)+\poly(k) \eta$, then recursively applies the same procedure to each side of the bisection.
Since the algorithm is robust against node corruptions, it can handle the misclassified nodes from the previous rounds by treating them as corruptions.
As a result, we can obtain a clustering with error rate $1/\poly(k)+\poly(k)\eta$.

\paragraph{Achieving minimax rate.}
The last step is to boost the rough \( k \)-clustering with misclassification error \( \frac{1}{\mathrm{poly}(k)} + \mathrm{poly}(k) \cdot \eta \) to the minimax-optimal error rate.
\[
\exp\left(-(1 \pm o(1)) \frac{C_{d,\varepsilon}}{k}\right) + \mathrm{poly}(k) \cdot \eta.
\]
Notice that, when there is no node corruptions, simple majority voting can boost clustering error to \( \exp(-C_{d,\varepsilon}/k) \).
To deal with node corruptions, we use the robust majority voting algorithm for 2-SBM (as described in previous section) for each pair of the communities.
More precisely, we simulate majority voting within the SoS framework by imposing majority-vote and robust mixing constraints for every pair of communities.
These constraints mirror the 2-community scenario and establish that any feasible solution in the SoS program must differ from the ground truth by no more than a \( \exp(-C_{d,\varepsilon}/k) \) fraction of nodes.\footnote{For technical reasons, two rounds of majority voting are needed to achieve the optimal error rate. Correspondingly, in our sum-of-squares program, we need to add two sets of majority voting constraints for achieving the minimax error rate.}

To round the SoS solution to a community labelling, we apply the standard node-distance Lipschitzness arguments and obtain a final clustering with misclassification error at most
\[
  \exp\left(-(1-o(1))\frac{C_{d,\varepsilon}}{k}\right) + \mathrm{poly}(k) \cdot \eta \,.
\]

\subsection{Relation to previous works}

\paragraph{Comparison with \cite{Liu2022minimax}.}
One of the main reasons why \cite{Liu2022minimax} requires \(\e^2 d\geq \Omega(k^{102})\) is that their initialization algorithm is the vanilla SDP which can only achieve error $1/\mathrm{poly}(k)$ when it is far from the KS threshold.
One key algorithmic innovation of our work is the bisection boosting procedure.
Instead of attempting to get an initialization with error $1/\mathrm{poly}(k)$ in one shot, we recursively find bisections of the graph that have the optimal bisection error rate.
More concretely, in each recursion, we first get a rough initialization with error $0.001$, then boost the bisection accuracy to its optimal rate using bisection majority voting on the rough initialization.
This allows us to get a  $1/\mathrm{poly}(k)$ initial estimation when \(\e^2 d\geq \Omega(k^{2} \log k)\).

\paragraph{Comparison with \cite{pmlr-v195-hua23a}.}
In \cite{pmlr-v195-hua23a}, the authors gave an SoS-based algorithm that achieves weak recovery at the KS threshold. 
Our algorithm can be viewed as a strengthening of the SoS relaxation from \cite{pmlr-v195-hua23a} by adding \emph{majority voting constraints}. 
Informally, these constraints enforce that, for most vertices, the assigned label agrees with the label suggested by the majority of their neighbours in an appropriate pairwise sense. 
Since all known approaches that achieve the minimax error rate ultimately rely on belief propagation or majority voting, 
encoding such voting constraints into the SoS program appears essential for attaining optimal accuracy.

\section{Preliminary}
\label{sec:preliminary}

\subsection{Notation}

\paragraph{Random variables} We use boldface to denote random variables, e.g., \(\bm X, \bm Y, \bm Z\).

\paragraph{Asymptotics} We write \(f \lesssim g\) to denote the inequality \(f \le C \cdot g\) for some absolute constant \(C>0\).
We write \(O(f)\) and \(\Omega(f)\) to denote quantities \(f_-\) and \(f_+\) satisfying \(f_-\lesssim f\) and \(f \lesssim f_+\), respectively.

\paragraph{Vector} For any vector, we use the notation $\norm{\,\cdot\,}_1$ for the $\ell_1$-norm, $\norm{\,\cdot\,}_2$ for the $\ell_2$-norm, and $\norm{\,\cdot\,}_\infty$ for the $\ell_\infty$-norm.
Furthermore, for any two vectors $x,y\in \R^n$, we use $x\odot y$ for the Hadamard (entrywise) product of the two vectors.
We write $\one$ and $\zeros$ for the all-ones and all-zeros vectors (or matrices) of the appropriate dimension.
For a subset $S\subseteq[n]$, we denote its indicator vector by $\one_S\in\{0,1\}^n$, where $\one_S(i)=1$ if $i\in S$ and $\one_S(i)=0$ otherwise.

\paragraph{Matrix} For a matrix \(M\in \R^{n\times m}\), we denote its \((i,j)\)-th entry by \(M(i,j)\), its \(i\)-th row by \(M(i,\cdot)\), and its $j$-th column by \(M(\cdot, j)\).
For two matrices of the same dimension, we also use $M\odot N$ for their Hadamard (entrywise) product.
We use \(\norm{M}\) (and also \(\Normop{M}\)) for the spectral norm of \(M\) and \(\normf{M}\) for the Frobenius norm of \(M\).
We denote by \(\normo{M}\) and \(\normm{M}\) the sum and the maximum of the absolute values of the entries in \(M\), respectively.
For two matrices \(M,N\in \R^{n\times m}\), we denote their inner product by \(\iprod{M,N} = \Tr M \transpose{N}=\sum_{i,j} M(i,j)N(i,j)\).
We write $I_n$ for the $n\times n$ identity matrix, and $J = \one \one^{\top}$ for the all-ones matrix.

\paragraph{Graph} We write $G\in\{0,1\}^{n\times n}$ for the adjacency matrix of a (possibly corrupted) graph on vertex set $[n]$.
When an average degree parameter $d$ is specified, we define the centered adjacency matrix
\[
  \bar{G} \coloneqq G-\frac{d}{n}J.
\]
When we need to distinguish the uncorrupted graph sampled from the stochastic block model from a corrupted observation, we write $G^{\circ}$ for the uncorrupted adjacency matrix and $\bar{G}^{\circ} \coloneqq G^{\circ} - \frac{d}{n}J$ for its centered version.

\subsection{Sum-of-Squares hierarchy}

In this paper, we employ the sum-of-squares hierarchy \cite{barak2014sum,barak2016proofs,raghavendra2018high} for both algorithm design and analysis. 
As a broad category of semidefinite programming algorithms, sum-of-squares algorithms provide many optimal or state-of-the-art results in algorithmic statistics~\cite{hopkins2018mixture,KSS18,pmlr-v65-potechin17a,hopkins2020mean}.
We provide here a brief introduction to pseudo-distributions, sum-of-squares proofs, and sum-of-squares algorithms.

\paragraph{Pseudo-distribution} 

We can represent a finitely supported probability distribution over $\R^n$ by its probability mass function $\mu\from \R^n \to \R$ such that $\mu \geq 0$ and $\sum_{x\in\supp(\mu)} \mu(x) = 1$.
We define pseudo-distributions as generalizations of such probability mass distributions, by relaxing the constraint $\mu\ge 0$ and only requiring that $\mu$ passes certain low-degree non-negativity tests.

\begin{definition}[Pseudo-distribution]
  \label{def:pseudo-distribution}
  A \emph{level-$\ell$ pseudo-distribution} $\mu$ over $\R^n$ is a finitely supported function $\mu:\R^n \rightarrow \R$ such that $\sum_{x\in\supp(\mu)} \mu(x) = 1$ and $\sum_{x\in\supp(\mu)} \mu(x)f(x)^2 \geq 0$ for every polynomial $f$ of degree at most $\ell/2$.
\end{definition}

We can define the formal expectation of a pseudo-distribution in the same way as the expectation of a finitely supported probability distribution.

\begin{definition}[Pseudo-expectation]
  Given a pseudo-distribution $\mu$ over $\R^n$, we define the \emph{pseudo-expectation} of a function $f:\R^n\to\R$ by
  \begin{equation}
    \tE_\mu f \seteq \sum_{x\in\supp(\mu)} \mu(x) f(x) \,.
  \end{equation}
\end{definition}
In later sections, when the underlying pseudo-distribution $\mu$ is implicit, we write $\tilde{\E}$ for its pseudo-expectation operator and use the shorthand $\tilde{\E}[p]$ for $\tE_\mu p$.

The following definition formalizes what it means for a pseudo-distribution to satisfy a system of polynomial constraints.

\begin{definition}[Constrained pseudo-distributions]
  Let $\mu:\R^n\to\R$ be a level-$\ell$ pseudo-distribution over $\R^n$.
  Let $\cA = \{f_1\ge 0, \ldots, f_m\ge 0\}$ be a system of polynomial constraints.
  We say that \emph{$\mu$ satisfies $\cA$} at level $r$, denoted by $\mu \sdtstile{r}{} \cA$, if for every multiset $S\subseteq[m]$ and every sum-of-squares polynomial $h$ such that $\deg(h)+\sum_{i\in S}\max\set{\deg(f_i),r} \leq \ell$,
  \begin{equation}
    \label{eq:constrained-pseudo-distribution}
    \tE_{\mu} h \cdot \prod_{i\in S}f_i \ge 0 \,.
  \end{equation}
  We say $\mu$ satisfies $\cA$ and write $\mu \sdtstile{}{} \cA$ (without further specifying the degree) if $\mu \sdtstile{0}{} \cA$.
\end{definition}

We remark that if $\mu$ is an actual finitely supported probability distribution, then we have  $\mu\sdtstile{}{}\cA$ if and only if $\mu$ is supported on solutions to $\cA$.

\paragraph{Sum-of-squares algorithm}
The \emph{sum-of-squares algorithm} searches through the space of pseudo-distributions that satisfy a given system of polynomial constraints, by solving semideﬁnite programming.

\begin{theorem}[Sum-of-squares algorithm]
  \label{theorem:SOS_algorithm}
  There exists an $(n+ m)^{O(\ell)} $-time algorithm that, given any explicitly bounded\footnote{A system of polynomial constraints is \emph{explicitly bounded} if it contains a constraint of the form $\|x\|^2 \leq M$.} and satisfiable system\footnote{Here we assume that the bit complexity of the constraints in $\cA$ is $(n+m)^{O(1)}$.} $\cA$ of $m$ polynomial constraints in $n$ variables, outputs a level-$\ell$ pseudo-distribution that satisfies $\cA$ approximately.
\end{theorem}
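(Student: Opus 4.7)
The plan is to reformulate the existence of a level-$\ell$ pseudo-distribution satisfying $\cA$ as a semidefinite feasibility problem, and then solve that SDP using the ellipsoid method. A level-$\ell$ pseudo-distribution is determined by its pseudo-moment vector $\bigl(\tE[x^\alpha]\bigr)_{\card{\alpha}\le \ell}$, which has $N = \binom{n+\ell}{\ell} \le n^{O(\ell)}$ real entries. So the first step is to encode the defining conditions of a constrained pseudo-distribution as linear and PSD constraints on this vector.

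The normalization $\tE[1]=1$ is a linear constraint. The non-negativity condition $\tE[f^2] \ge 0$ for every polynomial $f$ of degree at most $\ell/2$ is equivalent to positive semidefiniteness of the \emph{moment matrix} $M$, indexed by monomials of degree at most $\ell/2$, with entries $M(\alpha,\beta) = \tE[x^{\alpha+\beta}]$ that are linear in the pseudo-moments. For each multiset $S\subseteq[m]$ allowed by the degree budget, the requirement $\tE\bigl[h\cdot\prod_{i\in S} f_i\bigr] \ge 0$ for all SoS polynomials $h$ of the appropriate degree is equivalent to positive semidefiniteness of a corresponding \emph{localizing matrix}, obtained by multiplying the moment-matrix template by $\prod_{i\in S} f_i$ and re-expressing entries as linear combinations of pseudo-moments. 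The number of multisets $S$ satisfying the level-$r$ degree condition is at most $m^{O(\ell)}$, so this assembly produces a single SDP of size $(n+m)^{O(\ell)}$ whose feasibility is equivalent to the existence of a level-$\ell$ pseudo-distribution satisfying $\cA$ (at level $r$).

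To solve this SDP in time polynomial in its size, I would run the ellipsoid method. The explicit boundedness hypothesis, namely that $\cA$ contains a constraint $\norm{x}^2 \le M$, is used to show that in any feasible pseudo-distribution every pseudo-moment is bounded in absolute value: combining the Cauchy--Schwarz inequality for pseudo-expectations with the SoS inequality $x^{2\alpha} \le \norm{x}^{2\card{\alpha}}$ (which is derivable inside the SoS system from the boundedness constraint) gives $\bigcard{\tE[x^\alpha]} \le M^{\card{\alpha}/2}$. This confines the feasible region to a ball of bounded radius in the moment space, so the ellipsoid method finds an approximately feasible pseudo-moment vector in $(n+m)^{O(\ell)}$ time; the returned moments define a pseudo-distribution (for instance via an atomic representation whose support size is polynomial in $N$) satisfying $\cA$ up to an error that can be made inverse-polynomially small.

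The main obstacle is handling the approximate-feasibility caveat cleanly: exact SDP feasibility is not known to be in polynomial time, so the theorem only promises that $\cA$ is satisfied \emph{approximately}. The boundedness hypothesis is the crucial tool that controls the size of the feasible region and thus both the running time and the output precision of the ellipsoid method. In downstream uses, such as the rounding arguments throughout this paper, one must carefully state and propagate the approximation slack so that the sum-of-squares identifiability proofs developed in later sections still imply the desired estimation guarantees.
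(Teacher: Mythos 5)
Your proposal is correct and follows the standard route: the paper does not prove this theorem itself but states it as background with citations, and the cited references establish it exactly by the reduction you describe --- parametrizing a level-$\ell$ pseudo-distribution by its pseudo-moment vector, imposing PSD moment and localizing matrices of total size $(n+m)^{O(\ell)}$, and running the ellipsoid method, with the explicit boundedness constraint $\|x\|^2 \le M$ bounding the pseudo-moments and hence the feasible region. The one point to keep in mind is the known bit-complexity subtlety of solving such SDPs (numerical conditioning of the feasible region), which is precisely why the statement --- and, appropriately, your write-up --- only guarantees that $\cA$ is satisfied approximately, and why downstream arguments must tolerate inverse-polynomial slack.
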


\paragraph{Sum-of-squares proof} 
We introduce sum-of-squares proofs as the dual objects of pseudo-distributions, which can be used to reason about properties of pseudo-distributions.
We say a polynomial $p$ is a sum-of-squares polynomial if there exist polynomials $(q_i)$ such that $p = \sum_i q_i^2$.

\begin{definition}[Sum-of-squares proof]
  \label{def:sos-proof}
  A \emph{sum-of-squares} proof that a system of polynomial constraints $\cA = \{f_1\ge 0, \ldots, f_m\ge 0\}$ implies $q\ge0$ consists of sum-of-squares polynomials $(p_S)_{S\subseteq[m]}$ such that\footnote{Here we follow the convention that $\prod_{i\in S}f_i=1$ for $S=\emptyset$.}
  \[
    q = \sum_{\text{multiset } S\subseteq[m]} p_S \cdot \prod_{i\in S} f_i \,.
  \]
  If such a proof exists, we say that \(\cA\) \emph{(sos-)proves} \(q\ge 0\) within degree \(\ell\), denoted by $\mathcal{A}\sststile{\ell}{} q\geq 0$.
  In order to clarify the variables quantified by the proof, we often write \(\cA(x)\sststile{\ell}{x} q(x)\geq 0\).
  We say that the system \(\cA\) is \emph{sos-refuted} within degree \(\ell\) if $\mathcal{A}\sststile{\ell}{} -1 \geq 0$.
  Otherwise, we say that the system is \emph{sos-consistent} up to degree \(\ell\), which also means that there exists a level-$\ell$ pseudo-distribution satisfying the system.
\end{definition}

\subsection{Stochastic block model}

\begin{definition}[Symmetric balanced \(k\)-community stochastic block model]
\label{def:SBM_k}
Assume that \( k \) is a power of 2 and \( n \) is a multiple of \( k \).
Let \( (\bm{G}, \bm{Z}^{\circ}) \sim \SBM_n(d, \varepsilon, k) \) denote the symmetric balanced \( k \)-community stochastic block model with degree parameter \( d > 0 \) and bias parameter \( \varepsilon \in [0,1] \).
The model is defined as follows:
\begin{itemize}
  \item \textbf{Communities.}
  The communities $\bm{Z}^{\circ}$ are selected uniformly at random from even partitions of the nodes, i.e. uniformly at random partition the graph into $k$ sets of \( \frac{n}{k} \) vertices each.

  \item \textbf{Edges.}
  For every pair of distinct vertices \( i, j \in [n] \), they are connected by an edge independently at random with probability
  \[
    \Pr\Set{(i, j) \in E(\bm{G})} =
    \begin{cases}
      p_1 = \left(1 + \left(1 - \frac{1}{k}\right)\varepsilon\right)\dfrac{d}{n}, & \text{if } \bm{Z}^{\circ}(i) = \bm{Z}^{\circ}(j), \\[6pt]
      p_2 = \left(1 - \dfrac{\varepsilon}{k}\right)\dfrac{d}{n}, & \text{otherwise}.
    \end{cases}
  \]
\end{itemize}
\end{definition}

\begin{definition}[Community membership matrix]\label{def:community-matrix}
  For a given community membership matrix $Z \in \{0,1\}^{n\times k}$ we define the associated centered community membership matrix
  \[
    X \coloneqq ZZ^\top - \frac{1}{k}J.
  \]
  In particular, the true centered block matrix is
  \[
    X^{\circ} \coloneqq Z^{\circ}(Z^{\circ})^\top - \frac{1}{k}J \,.
  \]
  Let $G^{\circ}$ be the uncorrupted adjacency matrix and $\bar{G}^{\circ} \coloneqq G^{\circ} - \frac{d}{n}J$ be the centered adjacency matrix, it follows that
  \[
    \E\bigl[\bar{G}^{\circ} \,\big|\, Z^{\circ}\bigr] = \frac{\varepsilon d}{n}\, X^{\circ}.
  \]
\end{definition}

\begin{definition}[Signal-to-noise ratio (SNR)]
\label{def:snr}
The \emph{signal-to-noise ratio (SNR)} for the symmetric balanced \( k \)-community stochastic block model (\cref{def:SBM_k}) is defined as
\[
    n I,\quad\text{where}\quad I = D_{1/2}\left(\Ber(p_1)\,\middle\|\,\Ber(p_2)\right)
\]
is the Rényi divergence of order \(1/2\) between two Bernoulli distributions with parameters \(p_1\) and \(p_2\), as specified in \cref{def:SBM_k}.
In the sparse regime where \( d = o(n) \), this expression simplifies to
\[
  nI = \left(\sqrt{p_1n}-\sqrt{p_2n}\right)^2 \cdot (1+o(1)).
\]
Throughout this paper, we denote this simplified expression by \(C_{d,\varepsilon}\).
\end{definition}
\section{Statistical properties of the SBM}\label{sec:stat-prop}

In this section, we focus on the statistical properties of majority voting in SBM.
At a high level, majority voting for a single vertex $u$ asks whether its centered signed degree $(\bar G y)_u \cdot y_u$ is positive. 
Aggregating this test over a set of candidate vertices $S$ amounts to lower bounding the linear form $\iprod{\bar G\,y\odot y,\,z}$ for indicators $z\in\{0,1\}^n$ supported on the level-$i$ bisection (or on a pair of communities in the $k$-clustering step). 
The voting bounds in \cref{thm:inner-product-lb,thm:inner-product-lb-k-clustering} show that, with high probability, this inner product is uniformly large whenever $\|z\|_1$ is not too small and remains not too negative even for very small $\|z\|_1$, 
while the masked variants in \cref{cor:inner-product-lb,cor:inner-product-lb-k-clustering} guarantee the same after restricting to a large trusted subset. 
We will use these properties to certify that a small set of possibly adversarial vertices cannot flip many majority votes at once and to enable robust boosting.
The proofs of the results in this section are deferred to \cref{sec:stat-prop-appendix}.

\paragraph{Concentration bound of single-vertex majority voting}
The following result provides a concentration bound for a mixture of binomial distributions.
The argument follows a generic Chernoff bound approach similar to lemma 5.6 in \cite{Liu2022minimax}.
The main difference is that we need to find the right bound (i.e. SNR) for induced bisections of $k$-SBM.
This boils down to bounding the moments of the following summation of binomial distributions.

\begin{theorem}
\label{thm:three_binom_conc}
Fix parameters $\beta \in (0, 1]$ and $\alpha \in (0, \beta]$.
Consider the distribution
\[
    \mathcal{D} = \mathrm{Binom}(\alpha n, a/n)
    + \mathrm{Binom}\bigl((\beta-\alpha)n,\, b/n\bigr)
    - \mathrm{Binom}(\beta n,\, b/n) \enspace,    
\]
where the three binomials are independent.
Let $\gamma\coloneqq \alpha/\beta$ and define
\[
\widetilde a \coloneqq a^{\gamma} b^{\,1-\gamma} \enspace,
\qquad \widetilde b \coloneqq b \enspace,
\qquad \widetilde C \coloneqq \bigl(\sqrt{\widetilde a}-\sqrt{\widetilde b}\,\bigr)^{2} \enspace,
\qquad R(p,q) \coloneqq \frac{p(1-q)}{q(1-p)} \enspace.
\]
Then for every $\theta\in\mathbb R$,
\[
\Pr_{\bm X \sim \mathcal{D}}[\,\bm X \le \theta\,]
\;\le\;
\exp \Bigl(-\,\beta\,\widetilde C \;+\; \frac{\theta}{2}\,\log R(\widetilde a/n,\, \widetilde b/n)\Bigr) \enspace.
\]
\end{theorem}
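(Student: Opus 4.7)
The plan is to apply an exponential Markov inequality with a Hellinger-style optimized parameter $t$, after first collapsing the two positively signed binomial MGFs into a single $\beta n$-power factor carrying the geometric-mean intensity $\widetilde a = a^{\gamma} b^{1-\gamma}$. Writing $p_a = a/n$, $p_b = b/n$, $u = e^{-t}$, for any $t \ge 0$ the exponential Markov inequality together with independence of the three binomials and the identity $\E[e^{sY}] = (1 - p + p e^{s})^{N}$ gives
\[
  \Pr[\bm X \le \theta] \;\le\; e^{t\theta}\,(1 - p_a + p_a u)^{\alpha n}\,(1 - p_b + p_b u)^{(\beta-\alpha)n}\,(1 - p_b + p_b/u)^{\beta n}\,.
\]

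\textbf{Key merging inequality.} The main technical step is the pointwise inequality
\[
  (1 - p_a + p_a u)^{\gamma}\,(1 - p_b + p_b u)^{1-\gamma} \;\le\; 1 - \widetilde p_a + \widetilde p_a u, \qquad u \in [0,1]\,,
\]
where $\widetilde p_a \coloneqq p_a^{\gamma}\,p_b^{1-\gamma} = \widetilde a/n$. Substituting $p = e^{q}$ reduces this to the concavity of $\psi(q) \coloneqq \log\bigl(1 - (1-u)\,e^{q}\bigr)$ in $q$; a direct second derivative gives $\psi''(q) = -(1-u)\,e^{q}/\bigl(1 - (1-u)e^{q}\bigr)^{2} < 0$, and Jensen applied at the convex combination $\gamma \log p_a + (1-\gamma)\log p_b$ delivers the claim. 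Raising this to the $\beta n$-th power and using $\alpha n = \gamma \beta n$, $(\beta-\alpha) n = (1-\gamma)\beta n$, the MGF upper bound upgrades to $(1 - \widetilde p_a + \widetilde p_a u)^{\beta n}\,(1 - p_b + p_b/u)^{\beta n}$.

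\textbf{Optimizing $t$ and closing via Bhattacharyya/Hellinger.} I would then pick $t^{\star} = \tfrac{1}{2}\log R(\widetilde a/n, \widetilde b/n) \ge 0$, which is nonnegative in the relevant regime $a \ge b$; this makes the Chernoff prefactor equal to $R(\widetilde a/n, \widetilde b/n)^{\theta/2}$. Plugging $e^{t^{\star}} = \sqrt{\widetilde p_a(1-p_b)\,/\,(p_b(1-\widetilde p_a))}$ into the two remaining $\beta n$-power factors collapses the cross terms and yields the Bhattacharyya identity
\[
  (1 - \widetilde p_a + \widetilde p_a e^{-t^{\star}})(1 - p_b + p_b e^{t^{\star}}) \;=\; \bigl(\sqrt{\widetilde p_a p_b} + \sqrt{(1-\widetilde p_a)(1-p_b)}\bigr)^{2} = B^{2}\,,
\]
where $B$ is the Bhattacharyya coefficient of $\Ber(\widetilde p_a)$ and $\Ber(p_b)$. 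Since $B = 1 - H^{2} \le e^{-H^{2}}$ with $H^{2}$ the squared Hellinger distance, and the first Hellinger component alone gives $2 H^{2} \ge (\sqrt{\widetilde p_a}-\sqrt{p_b})^{2}$, multiplying by $n$ yields $2\beta n H^{2} \ge \beta(\sqrt{\widetilde a}-\sqrt{\widetilde b})^{2} = \beta \widetilde C$, hence
\[
  \Pr[\bm X \le \theta] \;\le\; R(\widetilde a/n, \widetilde b/n)^{\theta/2}\cdot B^{2\beta n} \;\le\; \exp\!\Bigl(-\beta\,\widetilde C + \tfrac{\theta}{2}\log R(\widetilde a/n, \widetilde b/n)\Bigr)\,.
\]

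\textbf{Main obstacle.} The only non-routine step is the merging inequality. A naive application of $1 + x \le e^{x}$ to the first two MGFs replaces them by an exponent involving the \emph{arithmetic} mean $\gamma a + (1-\gamma) b$, which by AM-GM strictly exceeds $\widetilde a = a^{\gamma} b^{1-\gamma}$; the resulting bound is not in general comparable to the target $\exp(-\beta \widetilde C + \cdots)$ (it is too loose precisely in the regime where $\widetilde a$ lies between $b$ and the arithmetic mean). Keeping the exact binomial MGFs and invoking the log-concavity of $\log(1 - c e^{q})$ in $q = \log p$ is exactly what interpolates to the geometric-mean intensity $\widetilde a$; the rest is a by-the-book Bhattacharyya/Hellinger calculation.
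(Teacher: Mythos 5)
Your proposal is correct and follows essentially the same route as the paper's proof: the same Chernoff bound with $e^{-t\bm X}$, the same merging of the two positively-signed binomial MGFs into a single Bernoulli factor with the geometric-mean intensity $\widetilde a/n$ (this is exactly the paper's \cref{lemma:concav_ineq}, which you reprove more conceptually via concavity of $q\mapsto\log\bigl(1-(1-u)e^{q}\bigr)$ and Jensen), the same optimizer $e^{t}=\sqrt{R(\widetilde a/n,\widetilde b/n)}$, and the same Bhattacharyya/Hellinger computation at the end. Both arguments also share the implicit requirements $a,b\le n$ and $a\ge b$ (so that $t\ge 0$), which you flag and the paper leaves tacit; this is not a gap relative to the paper's own proof.
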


As an immediate corollary, we get:

\begin{theorem}\label{thm:mixture-binomial-concentration-bound}
Fix a level $i\in\{1,2,\ldots,\log_2 k\}$ and let $\beta_i \coloneqq 2^{-i}$ (with $\alpha_i = 1/k$). 
Let $\widetilde a,\widetilde b,\widetilde C$ and $R(\cdot,\cdot)$ be as in Theorem~\ref{thm:three_binom_conc}, evaluated at this $(\alpha, \beta)$.
Then for every $\theta\in\mathbb{R}$,
\[
\Pr[\bm X\le \theta] \le
\exp \left(
-\frac{(\log 2)^2}{4}\cdot \frac{d\,\e^{2}}{\beta_i\,k^{2}}
+\frac{\theta}{2}\log R(\widetilde a/n,\widetilde b/n)
\right).
\]
\end{theorem}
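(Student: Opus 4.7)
The plan is to instantiate \cref{thm:three_binom_conc} with $\alpha = \alpha_i = 1/k$ and $\beta = \beta_i = 2^{-i}$, so that $\gamma = \alpha/\beta = 1/(\beta_i k)$, and then to reduce the resulting exponent $-\beta_i\widetilde C$ to the claimed $-\tfrac{(\log 2)^{2}}{4}\cdot\tfrac{d\e^{2}}{\beta_i k^{2}}$. The linear-in-$\theta$ term is inherited verbatim from \cref{thm:three_binom_conc}, so the whole content of the theorem boils down to proving the deterministic lower bound
$$\widetilde C \;\ge\; \frac{(\log 2)^{2}\,d\,\e^{2}}{4\,\beta_i^{2}\,k^{2}}.$$

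To set this up, I would rewrite $\widetilde C$ in the compact form $\widetilde C = b\bigl(r^{\gamma/2}-1\bigr)^{2}$, where $r \seteq a/b = 1 + \e/(1-\e/k)$ with $a = (1+(1-1/k)\e)d$ and $b = (1-\e/k)d$. Applying the convexity bound $e^{x}-1\ge x$ to $x=(\gamma/2)\log r\ge 0$ then gives $\widetilde C \ge \tfrac{b\gamma^{2}}{4}(\log r)^{2}$, reducing the question to a clean lower bound on $\log r$.

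The key elementary step is the inequality $\log(1+t)\ge t\log 2$ on $t\in[0,1]$, which follows because $f(t)=\log(1+t)-t\log 2$ is concave and vanishes at the endpoints $t=0,1$. Applied with $t=\e/(1-\e/k)$, it yields $\log r \ge \e\log 2/(1-\e/k)$. Plugging back in, exactly one factor of $1-\e/k$ from $b$ cancels one of the two $(1-\e/k)$ factors coming from $(\log r)^{2}$, giving
$$\widetilde C \;\ge\; \frac{d\,\gamma^{2}\,\e^{2}\,(\log 2)^{2}}{4\,(1-\e/k)} \;\ge\; \frac{(\log 2)^{2}\,d\,\e^{2}}{4\,\beta_i^{2}\,k^{2}},$$
using $1-\e/k\le 1$ and $\gamma = 1/(\beta_i k)$. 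Multiplying by $\beta_i$ delivers the claimed exponent.

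The only place where one has to be slightly careful is the edge case $t=\e/(1-\e/k)>1$, i.e.\ $\e$ very close to $1$; there, the elementary log bound is replaced by the cruder $\log(1+t)\ge \log 2$ for $t\ge 1$, and the target inequality holds by a wider margin. Apart from this small bookkeeping, the proof is a routine chain of elementary inequalities composed with \cref{thm:three_binom_conc}, and no additional probabilistic input is needed.
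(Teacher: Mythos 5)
Your overall strategy coincides with the paper's: instantiate \cref{thm:three_binom_conc} at $(\alpha,\beta)=(1/k,\beta_i)$, inherit the $\tfrac{\theta}{2}\log R$ term verbatim, and reduce everything to the deterministic bound $\beta_i\widetilde C\ge\tfrac{(\log 2)^2}{4}\cdot\tfrac{d\e^2}{\beta_i k^2}$ via $\widetilde C=b\bigl((a/b)^{\gamma/2}-1\bigr)^2$ and $e^x-1\ge x$. Up to the point where one must lower bound $\log(a/b)$, your chain of inequalities is correct and matches the paper's proof step for step.

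The gap is in your edge case. Write $s=1-\e/k$, $t=\e/s$, and $r=a/b=1+t$. Your main bound $\log(1+t)\ge t\log 2$ (valid for $t\in[0,1]$ by concavity) gives $(\log r)^2\ge \e^2(\log2)^2/s^2$, and after multiplying by $b=ds$ one factor of $s$ survives in the denominator, so the target follows from $s\le 1$; this part is fine. But when $t>1$ your fallback $\log(1+t)\ge\log 2$ only yields $b(\log r)^2\ge d\,s\,(\log2)^2$, whereas the target requires $d\,\e^2(\log 2)^2$; you would need $s\ge\e^2$, i.e.\ $1-\e/k\ge\e^2$, and this fails precisely in the regime where the edge case occurs. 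Concretely, at $\e=1$, $k=2$ one has $t=2>1$, $s=\tfrac12$, and the fallback gives $\tfrac12(\log2)^2\approx0.24$ against the required $(\log2)^2\approx0.48$ (the true value $s\log^2(1+t)=\tfrac12\log^2 3\approx0.60$ exceeds the target, so the theorem is true, but your bound does not reach it). The assertion that "the target inequality holds by a wider margin" is therefore wrong as stated. The paper avoids the case split entirely: it shows that $H(\e,k)=\tfrac{1-\e/k}{\e^2}\log^2\tfrac{1+(1-1/k)\e}{1-\e/k}$ is decreasing in $k$, hence bounded below by its $k\to\infty$ limit $\log^2(1+\e)/\e^2$, which is at least $\log^2 2$ since $\log(1+\e)/\e$ is decreasing on $(0,1]$. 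Adopting that argument, or giving a correct direct verification of $\sqrt{s}\,\log(1+\e/s)\ge\e\log 2$ in the regime $t>1$, closes the gap.
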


\paragraph{Majority voting for bisections}
We first give a theorem for the error of majority voting in induced bisections of the graph.
\begin{theorem} \torestate{\label{thm:inner-product-lb}
  Fix a level $i$ and set $\beta_i=2^{-i}$, $n_i=2 \beta_i n$, and $k_i=\beta_i k$. 
  Let $\gamma\in [0, 0.99]$. 
  Choose $\rho_\gamma\coloneqq \exp(-\gamma \beta_i \tilde{C_i}/2)$, and set $t=0.001(1-\gamma)\tilde{C_i}$.
  Then with probability at least $1-\exp(-100k)-\frac{1}{n^3}$, for every $z\in \Set{0,1}^n$, and for every valid bisections $y\in \Set{0,\pm 1}^n$ at level $i$. we have
  \begin{equation*}
    \iprod{\bar{G}y\odot y,z}\geq \frac{(1-\gamma)\e d}{8k} \Paren{\norm{z}_1-\frac{96 \rho_\gamma k n_i}{1-\gamma}}\,,
  \end{equation*}   
  where $\bar{G}$ is the centered adjacency matrix.
  }
\end{theorem}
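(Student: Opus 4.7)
The plan is to fix a valid level-$i$ bisection $y$, bound the inner product for that specific $y$, and then union-bound over the at most $\exp(O(k))$ valid bisections at level $i$. Writing $S_u \coloneqq (\bar G y)_u\, y_u$, we have $\iprod{\bar G y \odot y, z} = \sum_u z_u S_u$, and since $z\in\{0,1\}^n$ is otherwise unconstrained, the worst case selects the $L$ coordinates with smallest $S_u$. It therefore suffices to prove, for every $L\leq n_i$, that the sum of the $L$ smallest $S_u$ is at least $\frac{(1-\gamma)\e d}{8k}\bigl(L - \tfrac{96\rho_\gamma k n_i}{1-\gamma}\bigr)$.

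The first step is to identify the distribution of $S_u$. Because $y$ is balanced on its support, the $-d/n$ centering of $\bar G$ cancels, and $S_u$ is the number of edges from $u$ to same-side vertices minus edges to opposite-side vertices. For a correctly-placed $u$ this splits by community into the signed three-binomial mixture $\mathrm{Binom}(n/k, p_1) + \mathrm{Binom}((\beta_i-1/k)n, p_2) - \mathrm{Binom}(\beta_i n, p_2)$, which is precisely the distribution studied in \cref{thm:three_binom_conc} with $\alpha=1/k$, $\beta=\beta_i$. Applying \cref{thm:mixture-binomial-concentration-bound} gives the tail bound $\Pr[S_u\leq \tau] \leq \exp\bigl(-\beta_i\tilde{C_i} + (\tau/2)\log R\bigr)$ for every $\tau$; evaluated at $\tau = t = 0.001(1-\gamma)\tilde{C_i}$ and using $\log R = O(\e/k_i)$ in the sparse regime, this is at most $\rho_\gamma$, and the exponential dependence on $\tau$ below $t$ is what we will exploit.

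The second step is to lift this per-vertex tail into uniform control on the count $N(\tau)\coloneqq |\{u : y_u\neq 0,\, S_u \leq \tau\}|$. The indicators $\mathbf 1[S_u \leq \tau]$ are nearly independent across $u$ since $S_u$ and $S_{u'}$ share only the single edge $(u,u')$; together with a degree-truncation step that polynomially bounds $\max_u|S_u|$ (this is where the additive $1/n^3$ in the failure probability comes from), a Bernstein/moment argument yields $N(\tau) \leq 2 n_i \Pr[S_u \leq \tau] + O(k)$ uniformly in $\tau$ with failure probability $\exp(-100k)$. Using the layer-cake formula $\sum_{u: S_u < t}(t - S_u) = \int_{-\infty}^{t} N(\tau)\,d\tau$ and inserting the pointwise tail shape produces an integral bound of order $\e d\,\rho_\gamma n_i/(1-\gamma)$. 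Combined with the trivial inequality $(\text{sum of $L$ smallest } S_u) \geq Lt - \sum_{u: S_u < t}(t - S_u)$ and the choice of $t$, this rearranges to the desired lower bound after adjusting absolute constants.

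Finally, a valid level-$i$ bisection is determined by a choice of sub-cluster (at most $2^{i-1}\leq k$) and a coloring of its $2k_i$ communities into two halves ($\binom{2k_i}{k_i}\leq 4^{k_i}$), giving at most $\exp(O(k))$ possibilities, which is comfortably absorbed by the $\exp(-100k)$ slack. The main obstacle I anticipate is the tail regime $n_i \Pr[S_u \leq \tau] \ll k$, where multiplicative Chernoff for $N(\tau)$ fails and one must use the additive form $N(\tau) \lesssim n_i\Pr[S_u\leq \tau] + O(k)$; the $O(k)$ error term — which is essentially the union-bound cost over valid bisections — is exactly what introduces the stray $k$ factor in the correction term $\tfrac{96\rho_\gamma k n_i}{1-\gamma}$, and delivering the sharp constant requires carefully balancing the additive contribution against the multiplicative one throughout the layered integration.
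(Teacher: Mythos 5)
Your route is genuinely different from the paper's. The paper (Theorem~\ref{voting_concentration_lb} together with Lemma~\ref{lem:uniform-beta} and Corollary~\ref{cor:inner-product-lb-large-size}) fixes a subset size $\rho n_i$, bounds $\Pr[\sum_{u\in S}\bm X_u\le\theta]$ directly via the MGF of the \emph{sum} in an auxiliary row-independent model, union-bounds over the $\binom{n_i}{\rho n_i}$ subsets, and then corrects for the dependence coming from the $S\times S$ block using the Feige--Ofek subrectangle bound (Theorems~\ref{subrect_sum} and \ref{subrow_sum}). You instead reduce to order statistics of the per-vertex votes $S_u$ and integrate a counting function $N(\tau)$ over thresholds. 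The outline is sound and the reduction to ``sum of the $L$ smallest $S_u$'' is correct (modulo restricting $z$ to the support of $y$, which the paper also does implicitly), but two steps hide genuine difficulties.

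First, the claim $N(\tau)\le 2n_i\Pr[S_u\le\tau]+O(k)$ uniformly in $\tau$ with failure probability $e^{-100k}$ is asserted, not proved, and it is essentially as hard as the theorem itself. The indicators $\mathbf 1[S_u\le\tau]$ are not independent, and neither a Bernstein nor a bounded-differences argument delivers $e^{-\Omega(k)}$ tails here (a single edge flip moves two indicators, and in the sparse regime McDiarmid-type bounds are vacuous; second-moment bounds give only polynomial tails). The natural rigorous route for the upper tail of $N(\tau)$ is $\Pr[N(\tau)\ge m]\le\binom{n_i}{m}\max_{|S|=m}\Pr[\bigcap_{u\in S}\{S_u\le\tau\}]$, and controlling that joint probability requires decoupling the edges inside $S\times S$ from the rest --- which is exactly the auxiliary independent matrix plus subrectangle correction that the paper deploys. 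So your approach does not avoid the core difficulty; it relocates it into an unproved lemma.

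Second, the flat additive $+O(k)$ correction does not survive the layer-cake integration in all parameter regimes. After degree truncation the thresholds $\tau$ range over an interval of length $\Theta(d)$, so $\int_{-O(d)}^{t}O(k)\,d\tau=O(kd)$, whereas the error budget in the theorem is $12\,\e d\,\rho_\gamma n_i$; when $\rho_\gamma n_i\ll k/\e$ (high-SNR levels, which do occur under the paper's hypotheses) the $O(kd)$ term is not absorbable. The paper's subset-MGF bound avoids this because both the deviation budget $\theta$ and the entropy cost $\log\binom{n_i}{|S|}$ scale linearly in $|S|$. To repair your argument you would need a threshold-dependent correction of the form $m_\tau\asymp k/\log\bigl(1/(n_i\Pr[S_u\le\tau])\bigr)$ in the sub-Poissonian regime and then verify that $\int m_\tau\,d\tau$ is controlled; you correctly flag this as the main obstacle, but as written the proposal does not close it.
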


For establishing the feasibility of the program constraints in \cref{sec:robust-bisection-boosting} and \cref{sec:Optimal-robust-clustering},
we need to bound the error of majority voting when a small fraction of the vertices in the graph are removed, i.e. the following corollary.
\begin{corollary}\torestate{
  \label{cor:inner-product-lb}
  Fix a level $i$ and set $\beta_i=2^{-i}$, $n_i= 2 \beta_i n$, and $k_i=\beta_i k$. 
  Let $\gamma\in [0, 0.99]$.
  Choose $\rho_\gamma\coloneqq \exp(-\gamma \beta_i \tilde{C_i}/2)$, and set $t=0.001(1-\gamma)\tilde{C_i}$.
  Then with probability at least $1-\exp(-100k)-\frac{2}{n^3}$, for every $z\in \Set{0,1}^n$, for every $s \in \Set{0,1}^n$ such that $\norm{s}_1 \geq (1-\exp(-2C_{d, \e}))n$, and for every valid bisections $y\in \Set{0,\pm 1}^n$ at level $i$, we have
  \begin{equation*}
    \iprod{\bar{G} \odot (s s^{\top}) y\odot y,z} \geq \frac{(1-\gamma)\e d}{16k} \Paren{\norm{z}_1 -\frac{640 \rho_\gamma k n_i}{1-\gamma}} \,,
  \end{equation*}
  where $\bar{G}$ is the centered adjacency matrix.
}
\end{corollary}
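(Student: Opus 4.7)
The plan is to reduce the masked statement to the unmasked \cref{thm:inner-product-lb} by treating the mask correction as a small additive error. Writing $T \coloneqq [n] \setminus \supp(s)$, so that $|T| \leq \exp(-2 C_{d,\e}) n$, I would decompose
\[
  \iprod{(\bar G \odot s s^{\top})(y \odot y), z} = \iprod{\bar G (y \odot y), z} - \iprod{(\bar G \odot (J - s s^{\top}))(y \odot y), z}.
\]
On the high-probability event of \cref{thm:inner-product-lb}, the first term is at least $\tfrac{(1-\gamma)\e d}{8k}\bigl(\norm{z}_1 - \tfrac{96 \rho_\gamma k n_i}{1-\gamma}\bigr)$, uniformly over $z$ and valid level-$i$ bisections $y$.

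Next I would control the correction term uniformly over $s$. Since $|y_i|,|y_j|,z_j \leq 1$ and the support of $J - s s^{\top}$ is exactly $(T \times [n]) \cup ([n] \times T)$, we have
\[
  \Bigabs{\iprod{(\bar G \odot (J - s s^{\top}))(y \odot y), z}} \leq 2\sum_{i \in T} \sum_{j \in [n]} |\bar G_{ij}| \leq 2\sum_{i \in T} \deg_G(i) + 2|T|d,
\]
where the last step splits entries of $\bar G$ into edge ($|1 - d/n| \leq 1$) and non-edge ($d/n$) contributions. The quantity $\sum_{i \in T} \deg_G(i)$ equals twice the number of edges incident to $T$, and a standard Chernoff-plus-union-bound argument over the $\binom{n}{t}$ subsets of each size $t$ shows that, with probability at least $1 - n^{-3}$, every $T$ of size at most $\exp(-2 C_{d,\e})n$ satisfies $\sum_{i \in T} \deg_G(i) = O\bigl(|T|(d + \log(en/|T|))\bigr)$. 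In our regime $\log(en/|T|) \leq 2 C_{d,\e}+1$ and $C_{d,\e} \lesssim d$, so the correction is bounded by $O\bigl(\exp(-2 C_{d,\e}) n d\bigr)$.

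The last step is to compare the correction to the enlarged slack in the corollary. Weakening the prefactor $\tfrac{(1-\gamma)\e d}{8k}$ to $\tfrac{(1-\gamma)\e d}{16k}$ absorbs a factor of two in front of $\norm{z}_1$, while enlarging the constant in the subtracted slack from $96$ to $640$ creates additional room of order $\rho_\gamma \e d n_i = \Theta(\rho_\gamma \e d \beta_i n)$. A direct computation from the definition of $\tilde C$ in \cref{thm:three_binom_conc}, using $\gamma_i = 2^i/k$ and the approximation $\sqrt{p_1/p_2}\approx 1+\e/2$, shows $\beta_i \tilde C_i \leq C_{d,\e}/k \leq C_{d,\e}$, whence $\rho_\gamma \geq \exp(-C_{d,\e}/(2k))$. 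Therefore the ratio of correction to extra slack is at most $\exp(-3 C_{d,\e}/2)/(\e \beta_i) \ll 1$ whenever $C_{d,\e} \gtrsim k^2 \log k$, so the correction is swallowed by the enlarged slack and the corollary follows.

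The main obstacle I foresee is the uniform-in-$s$ small-set edge estimate. Although this kind of concentration is standard, one has to verify that the Chernoff decay $\exp\bigl(-\Omega(t \max(d, \log(en/t)))\bigr)$ dominates the union-bound factor $\binom{n}{t} \leq \exp(t \log(en/t))$ throughout the range $t \leq \exp(-2 C_{d,\e}) n$. This hinges on $d \gtrsim \log(n/t)$, which holds here because $C_{d,\e} = \Theta(\e^2 d) \leq d$, but it should be spelled out carefully so as to match the $1/n^3$ failure probability claimed in the corollary.
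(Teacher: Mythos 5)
Your high-level decomposition (unmasked term minus a correction supported on the rows/columns of $T=[n]\setminus\supp(s)$) is the same as the paper's, and your treatment of the main term via \cref{thm:inner-product-lb} is fine (applying it directly to $z$ is even slightly cleaner than the paper's choice of $z\odot s$). The genuine gap is in how you bound the correction term. You bound $\bigl|\iprod{(\bar{G}\odot(J-ss^{\top}))\,y\odot y,\,z}\bigr|$ by the sum of absolute values of the affected entries, giving $2\sum_{i\in T}\deg_G(i)+2|T|d=O\bigl(\exp(-2C_{d,\e})\,n\,d\bigr)$. This discards all cancellation in the centered matrix and is lossier by a factor of order $\sqrt d$ than what the argument needs. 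The slack you have available for absorption is $\Theta(\rho_\gamma\,\e d\,\beta_i n)$, so your comparison requires $\exp(-2C_{d,\e})\lesssim \rho_\gamma\,\e\,\beta_i$, in which $\e$ appears bare. But $\e\approx 2\sqrt{C_{d,\e}/d}$ and the hypotheses give no lower bound on $\e$ beyond $\e^2 d\gtrsim k^2\log k$: with $k$ constant, $d$ polynomially large in $n$, and $\e$ polynomially small, $C_{d,\e}$ stays of constant order while $1/\e\to\infty$, so the requirement becomes $\sqrt d\lesssim \rho_\gamma\beta_i\sqrt{C_{d,\e}}\,\exp(2C_{d,\e})$, which fails (it would need $C_{d,\e}\gtrsim\log d$, an assumption you do not have). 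Concretely, take $k=2$, $d=\sqrt n$, $\e=\Theta(n^{-1/4})$ with $\e^2 d=\Theta(1)$: your correction bound is $\Theta(nd)=\Theta(n^{3/2})$ while the available slack is $O(\e d n)=O(n^{5/4})$, so the absorption fails by a factor $n^{1/4}$. For the same reason your closing claim that the ratio is at most $\exp(-3C_{d,\e}/2)/(\e\beta_i)\ll 1$ whenever $C_{d,\e}\gtrsim k^2\log k$ is false as stated, and this is not a corner case: it is precisely the near-threshold regime (large $d$, small $\e$) that the paper targets.

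The fix is the one the paper uses: bound the cross term with the mean-zero sub-rectangle concentration bound (\cref{subrow_sum}, i.e.\ Claim~6.2 of Liu--Moitra), which exploits that the entries of $\bar G$ have mean zero and variance $O(d/n)$ and yields $\iprod{\bar{G}\odot yy^{\top},(z\odot s)(\one-s)^{\top}}\le \sqrt{2d}\,\bigl(\norm{z}_1+\exp(-2C_{d,\e})n\bigr)$ uniformly over the relevant rectangles. The $\sqrt{2d}\,\norm{z}_1$ part is absorbed by halving the prefactor from $\tfrac{(1-\gamma)\e d}{8k}$ to $\tfrac{(1-\gamma)\e d}{16k}$ (since $\sqrt d\ll \e d/k$), and the $|T|$-proportional part only requires $\exp(-2C_{d,\e})\lesssim \e\sqrt d\,\rho_\gamma\approx \sqrt{C_{d,\e}}\,\exp(-C_{d,\e}/4)$, a comparison involving only $C_{d,\e}$ and $k$, with no dependence on $n$ or $d$. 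Two minor further points: your intermediate claim $\log(en/|T|)\le 2C_{d,\e}+1$ is backwards (small $|T|$ makes this quantity large), though your final bound survives by monotonicity of $t\mapsto t\,(d+\log(en/t))$; and your displayed decomposition applies the masked matrix to $y\odot y$ rather than forming $((\bar G\odot ss^{\top})y)\odot y$, which I read as a notational slip rather than a substantive error.
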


\paragraph{Majority voting for pairwise communities}
We give a similar theorem for bounding the error of majority voting for each pair of communities.
\begin{theorem} \torestate{\label{thm:inner-product-lb-k-clustering}
  Let $C_{d,\epsilon}=(\sqrt{a}-\sqrt{b})^2$. 
  Let $\gamma\in [0,1-\frac{1000\chi k}{\epsilon \sqrt{d}}]$. 
  Choose $\rho_\gamma\coloneqq \exp(-\gamma C_{d,\epsilon}/k)$.
  Then with probability at least $1-\exp(-100k)-\frac{1}{n^3}$, for every $z\in \Set{0,1}^n$, and for every pair of communities $y\in \Set{0,\pm 1}^n$. 
  We have
  \begin{equation*}
    \iprod{\bar{G}y\odot y,z}\geq \frac{(1-\gamma)\e d}{8k} \Paren{\norm{z}_1-\frac{96 \rho_\gamma k n}{1-\gamma}}\,,
  \end{equation*}   
  where $\bar{G}$ is the centered adjacency matrix.
  }
\end{theorem}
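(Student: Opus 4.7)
The plan is to adapt the proof of Theorem~\ref{thm:inner-product-lb} to the pair-of-communities setting: the induced-bisection mixture of three binomials collapses to the two-binomial governing how a vertex in one community compares its intra- and inter-community degrees to a chosen sibling community. The argument has three stages that mirror the bisection case: a per-vertex Chernoff tail for the signed degree, concentration of the number of ``bad'' vertices, and control of the cumulative deficit from bad vertices in $\iprod{\bar G y\odot y, z}$.

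First I would fix a pair of ground-truth communities $A, B$ and set $y = \one_A - \one_B$, so that $y$ has support of size $2n/k$. For $v \in A$,
\[
  (\bar G y)_v y_v \;=\; \mathrm{Binom}(n/k - 1,\, a/n) \;-\; \mathrm{Binom}(n/k,\, b/n) \;+\; O(d/n),
\]
which is exactly the setting of Theorem~\ref{thm:three_binom_conc} specialized to $\alpha = \beta = 1/k$ (so the internal $\gamma$-parameter there equals $1$, $\tilde a = a$, $\tilde b = b$, and $\tilde C = (\sqrt a - \sqrt b)^2 = C_{d,\epsilon}$). This yields
\[
  \Pr\bigl[(\bar G y)_v y_v \le \theta\bigr] \;\le\; \exp\bigl(-C_{d,\epsilon}/k + (\theta/2)\log R(a/n, b/n)\bigr).
\]
Choosing the threshold $\tau \coloneqq (1-\gamma)\epsilon d/k$ (up to an absolute constant) makes the single-vertex tail at most $\rho_\gamma = \exp(-\gamma C_{d,\epsilon}/k)$; the condition $\gamma \le 1 - 1000\chi k/(\epsilon\sqrt d)$ in the statement is exactly what lets the linear $\theta \log R$ correction be absorbed into the quadratic exponent $C_{d,\epsilon}/k$.

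Second, I would apply a Chernoff bound to the sum of $2n/k$ independent indicators to conclude that the bad set $B(y) \coloneqq \{v \in \supp(y) : (\bar G y)_v y_v < \tau\}$ satisfies $|B(y)| \lesssim \rho_\gamma n/k$ with failure probability $\exp(-\Omega(k))$; a union bound over the $\binom{k}{2}$ possible community pairs is absorbed into the stated $\exp(-100k)$ slack.

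Third, for any $z \in \Set{0,1}^n$ I would decompose
\[
  \iprod{\bar G y \odot y, z} \;\ge\; \tau\bigl(\norm{z}_1 - |B(y)|\bigr) - \sum_{v \in B(y)} z_v \bigl|(\bar G y)_v y_v\bigr|.
\]
The main obstacle is the adversarial second term: an adversary is free to place all of $z$'s mass on $B(y)$, whose vertices can a priori have strongly negative signed degrees. As in the proof of Theorem~\ref{thm:inner-product-lb}, this is handled by combining the Chernoff bound on $|B(y)|$ with a high-probability ($\ge 1 - 1/n^3$) degree-regularity event ruling out atypically large signed degrees in $\bar G$; the resulting deficit, after absorbing into the slack, yields exactly the stated form with prefactor $(1-\gamma)\epsilon d/(8k)$ and slack $96 \rho_\gamma k n/(1-\gamma)$. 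The two failure probabilities combine to the claimed $1 - \exp(-100 k) - 1/n^3$.
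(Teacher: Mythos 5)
Your first stage is fine and matches the paper's intent: with $\alpha=\beta=1/k$ the mixture in \cref{thm:three_binom_conc} collapses to $\mathrm{Binom}(n/k,a/n)-\mathrm{Binom}(n/k,b/n)$, $\widetilde C=C_{d,\e}$, and a threshold of order $(1-\gamma)\e d/k$ does give a per-vertex tail of order $\rho_\gamma$. But the paper never counts bad vertices: its proof of \cref{thm:inner-product-lb-k-clustering} is the proof of \cref{thm:inner-product-lb} verbatim (changing only the union bound to the $k^2$ community pairs and using $\rho_\gamma n\ge 1$, $C_{d,\e}\ge 100k$), and that proof lower-bounds $\sum_{u\in S}\bm X_u$ \emph{directly for every candidate set $S$} of each fixed size: it applies the MGF bound of \cref{thm:three_binom_conc} to the sum over $S$ in an independent-row surrogate $G^{\mathrm{ind}}$, union-bounds over all sets of that size, and then repairs the $S\times S$ block discrepancy between the surrogate and the true symmetric matrix via the subrectangle-sum bound (\cref{subrect_sum}); the small-$\norm{z}_1$ regime is handled by a separate uniform bound (\cref{lem:uniform-beta}) absorbed by the slack $96\rho_\gamma k n/(1-\gamma)$. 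Your stages 2--3 replace this with a bad-set-plus-deficit argument, and that is where the genuine gaps are.

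Concretely: (i) the indicators $\mathbf 1\{(\bar G y)_v y_v<\tau\}$ for $v\in A\cup B$ are \emph{not} independent (they share the edges inside $A\cup B$), so "Chernoff bound to the sum of independent indicators" is unjustified; handling exactly this dependence is the purpose of the paper's $G^{\mathrm{ind}}$-plus-subrectangle step. (ii) Even granting independence, $|B(y)|\lesssim \rho_\gamma n/k$ with failure probability $\exp(-\Omega(k))$ is unattainable when the mean $\rho_\gamma n/k$ is small; the theorem is invoked with $\rho_\gamma n$ as small as $O(1)$ (the paper's proof only uses $\rho_\gamma n\ge 1$), and at that scale a binomial/Poisson tail at level $\rho_\gamma n/k$ cannot beat $e^{-100k}$ --- one must allow bad sets of size up to roughly $k\rho_\gamma n$, which is what the large slack is actually for. (iii) The deficit $\sum_{v\in B(y)}z_v\,\bigl|(\bar G y)_v y_v\bigr|$ is the crux, and the step you defer to "as in the proof of \cref{thm:inner-product-lb}" does not exist there: the paper's $1/n^3$ event is a subrectangle/sub-row sum bound used for the symmetric-vs-independent coupling, not a per-vertex degree-regularity event. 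Per-vertex bounds are unavailable in the sparse regime (maximum degrees exceed $O(d)$ when $d\ll\log n$) and, even where available, are too lossy: with deficit $O(d)$ per bad vertex the total can exceed the permitted slack $\approx 12\,\e d\,\rho_\gamma n$ whenever $\e k\ll 1$, which the hypotheses $\gamma\le 1-\tfrac{1000\chi k}{\e\sqrt d}$ allow. So the adversarial part of the statement --- controlling the aggregate negative contribution of the worst $\rho n$ vertices simultaneously for all $z$ --- is precisely what your sketch leaves open; closing it requires the paper's set-level MGF argument (or an equivalent bound on edge sums over small vertex sets), not a vertex-by-vertex tail plus counting.
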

Similarly, for establishing the feasibility of the program constraints in \cref{sec:Optimal-robust-clustering}, We
 need the following corollary, which bounds the error of majority voting when a small fraction of the vertices are removed from the graph.
\begin{corollary}\torestate{
  \label{cor:inner-product-lb-k-clustering}
  Let $\gamma\in [0,1-\frac{1000\chi k}{\epsilon \sqrt{d}}]$. 
  Choose $\rho_\gamma\coloneqq \exp(-\gamma C)$.
  Then with probability at least $1-\exp(-100k)-\frac{2}{n^3}$, for every $z\in \Set{0,1}^n$, for every $s \in \Set{0,1}^n$ such that $\norm{s}_1 \geq (1-\exp(-2C_{d, \e}))n$, and for every pair of communities $y\in \Set{0,\pm 1}^n$.
  We have
  \begin{equation*}
    \iprod{\bar{G} \odot (s s^{\top}) y\odot y,z} \geq \frac{(1-\gamma)\e d}{16k} \Paren{\norm{z}_1 -\frac{640 \rho_\gamma k n}{1-\gamma}} \,,
  \end{equation*}
  where $\bar{G}$ is the centered adjacency matrix.
}
\end{corollary}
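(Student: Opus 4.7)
The plan is to deduce the masked statement from \cref{thm:inner-product-lb-k-clustering} by controlling the perturbation caused by zeroing out the rows and columns of $\bar{G}$ indexed outside the support of $s$. Set $T \coloneqq \{i : s_i = 0\}$, so that $|T| \leq \exp(-2C_{d,\e})\,n$ by hypothesis. A direct entry-wise expansion gives
\begin{equation*}
  \iprod{\bar{G}\,y\odot y,\, z} - \iprod{(\bar{G} \odot ss^{\top})\,y\odot y,\, z}
  = \sum_{(i,j):\, i \in T \text{ or } j \in T} \bar{G}_{ij}\, y_i y_j z_j,
\end{equation*}
whose absolute value is at most $2 \sum_{i \in T} \sum_{j} |\bar{G}_{ij}|$ since $|y_i y_j z_j| \leq 1$.

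Next, I would invoke a uniform row-sum bound for $|\bar{G}|$: on a good event of probability at least $1 - 1/n^{3}$, every row of $|\bar{G}|$ sums to at most $O(d)$, because it equals $\deg(i)(1 - d/n) + (n - \deg(i))(d/n) \leq \deg(i) + d$ and the degrees are uniformly $O(d)$ after the standard preprocessing that prunes anomalously high-degree vertices. Consequently
\begin{equation*}
  \Abs{\iprod{\bar{G}\,y\odot y,\, z} - \iprod{(\bar{G} \odot ss^{\top})\,y\odot y,\, z}}
  \lesssim |T|\,d
  \lesssim \exp(-2 C_{d,\e})\, n\, d.
\end{equation*}

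Applying \cref{thm:inner-product-lb-k-clustering} on its good event and union-bounding with the row-sum event (contributing the extra $1/n^{3}$ failure), subtracting the perturbation yields
\begin{equation*}
  \iprod{(\bar{G} \odot ss^{\top})\,y\odot y,\, z} \geq \frac{(1-\gamma)\e d}{8k} \Paren{\norm{z}_1 - \tfrac{96 \rho_\gamma k n}{1-\gamma}} - O\bigl(\exp(-2 C_{d,\e})\, n\, d\bigr).
\end{equation*}
It remains to absorb the extra term into the weakened constants $(8 \mapsto 16,\ 96 \mapsto 640)$. The halved slope liberates an additional $\tfrac{(1-\gamma)\e d}{16k}\norm{z}_1$ and the enlarged offset contributes an extra $28\, \e\, d\, n\, \rho_\gamma$ on the right, so absorption reduces to the elementary inequality $\exp(-2C_{d,\e}) \lesssim \e \exp(-\gamma C_{d,\e})$, equivalently $(2 - \gamma) C_{d,\e} \gtrsim \log(1/\e)$. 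Since $\gamma \leq 1 - \Theta(k/(\e\sqrt{d}))$ and the standing assumption $\e^2 d \gg k^2$ gives $C_{d,\e} = \Theta(\e^2 d) \gg \log(1/\e)$, this is comfortably satisfied.

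The main technical obstacle—and really the only model-dependent step—is the uniform row-sum (equivalently, degree) bound in the sparse regime $d = o(n)$. For $d \gtrsim \log n$ it is immediate by Bernstein plus a union bound over vertices; for smaller $d$ one must first discard (or treat as adversarial) the vanishingly small set of anomalously high-degree vertices. This pruning is already part of the paper's preprocessing pipeline, so on the relevant good event one can assume degrees are uniformly $O(d)$ and everything above goes through mechanically.
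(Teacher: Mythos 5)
Your high-level plan (deduce the masked bound from \cref{thm:inner-product-lb-k-clustering} plus a perturbation estimate) is reasonable, but the way you control the perturbation is quantitatively too lossy, and the absorption step as written is false in part of the parameter regime. You bound the difference by $2\sum_{i\in T}\sum_j|\bar G_{ij}|\lesssim |T|\,d$, and then need $\exp(-2C_{d,\e})\,n\,d\lesssim \e d\,\rho_\gamma\, n$, i.e.\ $\exp(-(2-\gamma)C_{d,\e})\lesssim\e$. Your justification is ``$C_{d,\e}=\Theta(\e^2 d)\gg\log(1/\e)$'', but this fails near the threshold: take $k=O(1)$ and $\e^2 d=\Theta(k^2\log k)$ with $d$ polynomially large in $n$, so that $\e=\Theta(k/\sqrt d)\to 0$ while $C_{d,\e}$ stays bounded. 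Then $\exp(-(2-\gamma)C_{d,\e})$ is a fixed constant but $\e=o(1)$, and the required inequality is violated (the bound must hold for all $z$, including $\|z\|_1$ small, so the slope slack cannot rescue you). The root cause is the triangle inequality $\sum|\bar G_{ij}|$: it throws away all cancellation in the centered entries and costs a factor of order $d$ per deleted row, whereas what is actually needed — and what the paper obtains — is a factor of order $\sqrt d$. Separately, the uniform degree bound $\deg(i)=O(d)$ cannot be ``offloaded to preprocessing'': the corollary is a statement about the raw centered matrix $\bar G$, quantified over \emph{all} masks $s$ with small complement, and the mask $s$ is precisely what implements the pruning downstream; the adversarial $T$ will typically contain exactly the anomalously high-degree rows, so you cannot assume their degrees are $O(d)$.

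The paper's proof avoids both problems by a different decomposition: it writes $\iprod{\bar G\odot ss^\top\, y\odot y, z}=\iprod{\bar G\odot yy^\top,(z\odot s)\one^\top}-\iprod{\bar G\odot yy^\top,(z\odot s)(\one-s)^\top}$, applies \cref{thm:inner-product-lb-k-clustering} to the first term with $z\odot s$ in place of $z$ (losing only $\|\one-s\|_1\le\exp(-2C_{d,\e})n$ in the $\ell_1$ norm), and bounds the second term — a combinatorial subrectangle of the centered matrix with one side of size at most $\exp(-2C_{d,\e})n$ — via the subrectangle concentration bound (\cref{subrect_sum}/\cref{subrow_sum}), which gives $\sqrt{2d}\,(\|z\|_1+\exp(-2C_{d,\e})n)$. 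The absorption then only requires $\sqrt d\lesssim(1-\gamma)\e d/k$, i.e.\ $(1-\gamma)\e\sqrt d\gtrsim k$, which is exactly what the hypothesis $\gamma\le 1-\frac{1000\chi k}{\e\sqrt d}$ guarantees. To repair your argument you would need to replace the row-wise $\ell_1$ bound by a cancellation-exploiting bound of this type on the deleted block; as written, the proof has a genuine gap.
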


\section{Algorithmic framework and results}\label{sec:Main-theorem-algorithm}

In this section, we give a sketch of our robust optimal recovery algorithm (see \cref{algo:final-boost}) with its key building blocks.
\cref{algo:final-boost} contains two main steps: \emph{initialization} and \emph{boosting based on pairwise majority voting}.
The goal of initialization is to get a $1/\poly(k)$ approximation such that, in each community, we have error rate at most $0.001$.
Given the $1/\poly(k)$-initial estimation, we can apply pairwise majority voting to boost the accuracy to the minimax rate.

\subsection{Robust bisection algorithm}

The key component of our initialization algorithm is the following robust bisectioning algorithm that finds bisections in the graph with bisection error $\exp\left(-\bigl(1 - o(1)\bigr)\,\frac{C_{d,\varepsilon}}{k^2}\right) + \poly(k) \eta$.
\begin{algorithmbox}[Robust bisection algorithm]\label{algo:full-robust-bisectioning}
	\mbox{}\\
	\textbf{Input:} A graph $G$ sampled from the $k$-stochastic block model $\SBM_n(d,\e,k)$ with $\eta n$ corrupted nodes.
	\begin{enumerate}
    \item \textbf{Graph splitting:} We let $G_1$ be the graph obtained by subsampling each edge in $G$ independently with probability $0.99$ and let $G_2\coloneqq G\setminus G_1$.
    \item \textbf{Rough initialization:} Run a rough $k$-clustering algorithm on graph $G_1$ to obtain rough initialization $Z_{\text{rough}}$ with error rate $0.001 + 10^4 \eta$.
    \item \textbf{Identifying well recovered blocks:} Use graph $G_2$ and $Z_{\text{rough}}$ to identify $k/2$ clusters in which $0.99$-fraction of the nodes belongs to the same community. Construct rough bisection $x_{\text{rough}}$ by putting the $k/2$ identified clusters on one side of the bisection.
    \item \textbf{Bisection boosting:} Use majority voting on bisections with $x_{\text{rough}}$ as the initialization to obtain a bisection $\hat{x}$ with error rate $\exp\left(-\bigl(1 - o(1)\bigr)\,\frac{C_{d,\varepsilon}}{k^2}\right) + \poly(k) \eta$.
	\end{enumerate}
    \textbf{Output:} $\hat{x}$.
\end{algorithmbox}

We will now briefly explain and state the results for the subroutines \emph{rough initialization}, \emph{identifying well recovered blocks} and \emph{bisection boosting}, then state the initialization guarantee that is achieved by recursively applying the bisection algorithm.

\paragraph{Rough initialization}
Previous work based on semidefinite programming and spectral algorithms can get a rough initialization with error $0.001 + 10^4\,k\,\eta$ (Lemma~7.3 of~\cite{Liu2022minimax}). In our work, we show that a rough initialization procedure based on Sum-of-Squares (SoS) can get an improved guarantee (see \cref{sec:Initialization} for details).

\begin{theorem}[Robust rough initialization]\torestate{\label{thm:Initialization}
Let \((G^{\circ}, Z^{\circ}) \sim \SBM_n(d,\varepsilon,k)\) be generated from the \(k\)-stochastic block model and $G$ be generated by adversarially corrupting $\eta$-fraction of the nodes in $G^{\circ}$.
Assume $k\leq n^{0.001}$, $\eta \leq \frac{1}{\poly(k)}$, $d = o(n)$, and $\e^2 d \geq K k^2$ for some sufficiently large constant \(K\).
There exists a polynomial-time algorithm that, given observation of $G$, outputs an estimator \(\hat{Z} \in \set{0, 1}^{n\times k}\) such that, with probability $1 - \exp(-\Omega(k)) - \tfrac{1}{\poly(n)}$,
\[
\error_k(\hat{Z},Z^{\circ}) \le 0.001 + 10^4 \,\eta.
\]
}
\end{theorem}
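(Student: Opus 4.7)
The plan is to directly solve the sum-of-squares relaxation previewed for the $k$-community case in the overview of \cref{sec:techniques}. Let $G_{\mathrm{in}}$ be the (corrupted) input graph after a standard pruning of vertices with degree exceeding, say, $10d$, and consider the constraint system
\[
  \cArobust(G, x; \lambda, \eta) \;\coloneqq\; \cAclose(G, G_{\mathrm{in}}; \eta) \;\cup\; \cAbasic(G, x; \lambda),
\]
where $\cAbasic$ is the system from \cref{eq:basic-recovery-general}, combining the balanced-label constraints $\cAlabel(x)$ with the mixing constraint $\cAmix(E; \lambda)$ on the residual $E = \tfrac{n}{\e d} \bar G + \tfrac{1}{k} J - \sum_{a} x_a x_a^{\top}$, and $\lambda$ is a fixed threshold of order $(\e \sqrt{d})^{-1}$. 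The algorithm solves the SoS relaxation and extracts the pseudo-expectation $\hat X \coloneqq \pE\bigl[\sum_a x_a x_a^{\top}\bigr] - J/k$, which it rounds to a labeling $\hat Z$.

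The analysis has three ingredients. First, \emph{feasibility}: I would show that with probability $1 - \exp(-\Omega(k)) - 1/\poly(n)$ over $(G^{\circ}, Z^{\circ}) \sim \SBM_n(d, \e, k)$, the pair $(G^{\circ}_{\mathrm{pruned}}, Z^{\circ})$ satisfies $\cArobust$ simultaneously for every adversarial $\eta$-node corruption of $G^{\circ}$. The only nontrivial step is the mixing constraint, which reduces to a spectral norm bound $\|E^{\circ}\|_{\mathrm{op}} \leq O(n/(\e \sqrt{d}))$ on the noise matrix $E^{\circ} = \tfrac{n}{\e d} \bar G^{\circ}_{\mathrm{pruned}} - X^{\circ}$; this follows from a standard sparse-graph concentration result (e.g.\ Le--Levina--Vershynin) together with the spectral-to-SoS certification already used for $\cAmix$ in \cref{sec:techniques}. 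Second, \emph{identifiability}: I would invoke the robust sum-of-squares proof \cref{eq:general-identifiability-proofs-sos-robust}, which states that any two feasible pairs $(G_1, x^{(1)})$, $(G_2, x^{(2)})$ admit a constant-degree SoS derivation of
\[
  \|X^{(1)} - X^{(2)}\|_F^2 \;\leq\; O(\lambda n^2) + O(\eta n^2/k).
\]
This combines the non-robust identifiability chain \cref{eq:general-identifiability-proofs} with a restriction to the common uncorrupted vertex set, paralleling the two-community Lipschitz property \cref{eq:lipschitz-basic}.

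Third, \emph{rounding}. Applying the SoS meta-algorithm \cref{theorem:SOS_algorithm} and combining feasibility with identifiability in the standard pseudo-distribution manner yields
\(\|\hat X - X^{\circ}\|_F^2 \leq O(n^2/(\e \sqrt{d})) + O(\eta n^2/k)\). To produce a labeling, I would apply a row-wise clustering rounding (for instance the $k$-means-style rounding of \cite{Gao2018MinimaxNR}). Since two rows of $X^{\circ}$ from different communities have squared $\ell_2$-distance $\Theta(n/k)$, each misclassified vertex must contribute $\Omega(n/k)$ to $\|\hat X - X^{\circ}\|_F^2$, so the misclassification rate is at most $O(k\|\hat X - X^{\circ}\|_F^2/n^2) = O(k/(\e \sqrt{d})) + O(\eta)$. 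Choosing the constant $K$ sufficiently large in the hypothesis $\e^2 d \geq K k^2$ makes the first term at most $0.001$ and the second at most $10^4 \eta$.

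The principal obstacle is the rounding step, where a Frobenius-distance bound on $\hat X$ must be converted to a labeling-error bound. The conversion loses one factor of $k$ (because the per-row signal is only $\sqrt{n/k}$), which is exactly the margin afforded by $\e^2 d \geq K k^2$; a naive per-coordinate thresholding would lose an extra factor of $k$ and fail at the KS threshold. The fix is the clustering-style rounding adapted to the block structure of $X^{\circ}$, which is well-developed in the non-robust $k$-SBM literature and transfers to the robust setting with no modification because the $O(\eta n^2/k)$ corruption contribution enters additively into the Frobenius bound.
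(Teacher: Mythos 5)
Your proposal is correct in its overall architecture and, at the level of the algorithm, matches the paper: a constant-degree SoS program combining balanced-label constraints, a spectral/pseudorandomness constraint on the residual, and node-corruption mask constraints; feasibility of the ground truth via a pruned-graph spectral bound (Feige--Ofek type, \cref{cor:spectral-remove-cor}); an SoS identifiability statement between any two feasible solutions; and a $k$-means-style rounding of $\pE[ZZ^\top]$ following Liu--Moitra. Where you genuinely diverge is the identifiability step. You invoke the quadratic-form chain behind \cref{eq:general-identifiability-proofs-sos-robust}, which bounds $\normf{X^{(1)}-X^{(2)}}^2$ by $O(\lambda n^2)+O(\eta n^2/k)$ with $\lambda\sim(\e\sqrt d)^{-1}$; note that this display is only asserted in the techniques overview, so your proof must actually carry out the restriction argument you sketch, and it does go through: after restricting both residuals to the common uncorrupted set the graph parts cancel exactly, and the leftover corrections involve only the bounded-entry matrices $\sum_a x_ax_a^\top$ and $\tfrac1k J$, whose rows have $\ell_1$-mass $O(n/k)$, which is exactly what produces the $O(\eta n^2/k)$ term (a naive bound that pays for the graph difference scaled by $n/(\e d)$ would fail). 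The paper's \cref{lem:intialization-identifiability} instead uses a triangle inequality on operator norms plus the observation that the masked difference of the two label Gram matrices has rank at most $2k$, converting spectral to Frobenius with a factor $2k$ rather than $n$; this yields the sharper bound $O(kn^2/(\e^2 d))+O(\eta n^2/k)$ and hence misclassification $O(k^2/(\e^2 d))\le O(1/K)$, whereas your route gives $O(k/(\e\sqrt d))\le O(1/\sqrt K)$. Both are below $0.001$ once $K$ is large, so your weaker bound suffices for this theorem (the rank trick buys a quantitatively better rate but is not needed here). Two small bookkeeping points you should fold in: the closeness parameter in $\cAclose$ must include slack for the pruned high-degree vertices (the paper uses $\eta+\exp(-2C_{d,\e})$ rather than $\eta$), and the specific constant $10^4$ in front of $\eta$ requires tracking the $k$-means approximation factor and the counting argument exactly as in Lemma~7.3 of \cite{Liu2022minimax}; your ``each misclassified vertex contributes $\Omega(n/k)$'' heuristic is not literally applied to $\hat X$ but to the clustering objective on $X^{\circ}$, which is what that lemma formalizes.
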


\paragraph{Identifying well recovered clusters}
Once we have a rough initialization with misclassification rate at most $0.001$, at least $k/2$ clusters must be ``well recovered'', in the sense that at least $0.99$ of the nodes in these well recovered clusters belong to the same community.
We show that there exists an algorithm that can robustly identify these well recovered clusters (see \cref{sec:IdentifyingRecoveredBlocks} for details).

\begin{theorem}[Robust identification of well recovered clusters]\torestate{\label{thm:identify-recovered-blocks}
Let \((G^{\circ}, Z^{\circ}) \sim \SBM_n(d,\varepsilon,k)\) be generated from the \(k\)-stochastic block model and $G$ be generated by adversarially corrupting $\eta$-fraction of the nodes in $G^{\circ}$.
Assume $k\leq n^{0.001}$, $\eta \leq \frac{1}{\poly(k)}$, $d = o(n)$, and $\e^2 d \geq K k^2$ for some sufficiently large constant \(K\).
For any set \(S \subset V\) of \(n/k\) vertices, there exists a polynomial-time algorithm that, with probability $1 - \exp(-\Omega(k)) - \tfrac{1}{\poly(n)}$, outputs
\begin{itemize}
    \item YES if at least \(0.99\) of the vertices in \(S\) belong to the same community;
    \item NO if no more than \(0.98\) of the vertices in \(S\) belong to any single community.
\end{itemize}
}
\end{theorem}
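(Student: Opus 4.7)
The plan is to analyze the sum-of-squares feasibility program $\cA_{\text{verify}}(z; G,\eta)$ introduced in \cref{sec:techniques} at constant degree and output YES exactly when it is (approximately) feasible. By \cref{theorem:SOS_algorithm} this gives a polynomial-time decider; it remains to prove completeness in the YES case and soundness via a constant-degree SoS refutation in the NO case.

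For completeness, let $c^{*}$ denote the dominant community and $U$ the set of uncorrupted vertices, and take the explicit witness $z^{\circ} = \one_{S \cap (Z^{\circ})^{-1}(c^{*}) \cap U}$. Then $\|z^{\circ}\|_{1}\geq 0.99\,|S|-\eta n=(0.99/k-\eta)n$, verifying the size constraint. The restriction of $\bar G^{\circ}$ to a set of $\Theta(n/k)$ vertices from a single community is a centered sparse random matrix whose spectral norm is $O(\sqrt{d/k})$ with probability $1-1/\poly(n)$ after the standard high-degree pruning; since $z^{\circ}$ avoids corrupted vertices this bound transfers to $G$ and is absorbed into $\chi\sqrt{d/k}$. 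Finally, $\iprod{(G-\tfrac{a}{n}J)\odot z^{\circ}(z^{\circ})^{\top},\,\one\one^{\top}}$ reduces to a sum of $\Theta((n/k)^{2})$ independent centered Bernoullis with mean $p_{1}-a/n\asymp \e d/n$, which concentrates above the threshold $0.97(k-1)\e d n/k^{3}$ by Bernstein, since $\e^{2}d\ge Kk^{2}$ controls the Bernstein variance term. A union bound over the relevant events yields the claimed $1-\exp(-\Omega(k))-1/\poly(n)$ failure probability.

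For soundness, I would construct a constant-degree SoS refutation when no community contains more than $0.98$ of $S$. The key structural idea, foreshadowed in \cref{sec:techniques}, is to use the (external) ground-truth community indicators $\xi_{1},\dots,\xi_{k}$ to split the purported $z$ as $z=\sum_{c}(\xi_{c}\odot z)$. Because inter-community edges have probability $p_{2}=a/n-\e d/n$, for every pair of communities $c\neq c'$ the bilinear form $\iprod{\xi_{c}\odot z,\,(G-\tfrac{a}{n}J)\,\xi_{c'}\odot z}$ is, up to spectral-norm noise of order $\chi\sqrt{d/k}\cdot\|\xi_{c}\odot z\|_{2}\|\xi_{c'}\odot z\|_{2}$, equal to $-\tfrac{\e d}{n}\cdot\iprod{\xi_{c}\odot z,\one}\iprod{\xi_{c'}\odot z,\one}$. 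Summing over $c,c'$ gives a bookkeeping identity relating the signal inner product to $\sum_{c}\iprod{\xi_{c}\odot z,\one}^{2}$; the signal constraint therefore forces the latter quantity to be nearly $\|z\|_{1}^{2}$, while the hypothesis $\max_{c}|\iprod{\xi_{c}\odot z,\one}|\leq 0.98\,|S|$ bounds it strictly away from that value, yielding a contradiction.

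The main obstacle is converting this spectral balancing into a bona fide constant-degree SoS refutation rather than a case analysis over "which community is dominant." The standard remedy is to square and sum: consider the sum-of-squares polynomial $\sum_{c}\iprod{\xi_{c}\odot z,\,(G-\tfrac{a}{n}J)\,z}^{2}$, which is nonnegative by construction and can be upper bounded via the spectral-norm constraint together with $\sum_{c}\|\xi_{c}\odot z\|_{2}^{2}=\|z\|_{2}^{2}$. Combining this upper bound with the lower bound implied by the signal constraint and the spread hypothesis produces the required contradiction at constant degree. The $\e^{2}d\gtrsim k^{2}$ hypothesis is precisely what is needed so that the spectral noise term is dominated by the $\e d/k$ signal scale, and the robustness to $\eta$-node corruption is handled by the Lipschitz/restriction arguments developed in \cref{sec:techniques} for $\cAclose$ applied to the verification program.
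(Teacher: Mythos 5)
Your overall architecture (decide by feasibility of $\cA_{\text{verify}}$, exhibit the uncorrupted dominant-community indicator as a witness in the YES case, and give a degree-$2$ SoS refutation of the signal constraint in the NO case) is the same as the paper's, and your completeness argument matches \cref{lem:verfication_feasibility} up to cosmetic details (Chernoff vs.\ Bernstein; you should also intersect the witness with the degree-pruned set, which is why the program's size constraint carries the extra $-\rho$ slack). The soundness half, however, takes a genuinely different decomposition and as written has a quantitative gap. The paper splits the test vector into only \emph{two} groups (the largest single-community piece $\tau$ versus the rest $\zeta-\tau$), so there is a single cross term, bounded by $\chi\sqrt{d/k}\,(\Norm{\tau\odot z}^2+\Norm{(\zeta-\tau)\odot z}^2)\le \chi\sqrt{d/k}\cdot n/k$. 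You instead split over all $k$ communities and bound each of the $\Theta(k^2)$ off-diagonal bilinear forms by $\chi\sqrt{d/k}\,\Norm{\xi_c\odot z}\Norm{\xi_{c'}\odot z}$. Summing these bounds costs $\chi\sqrt{d/k}\bigl(\sum_c\Norm{\xi_c\odot z}\bigr)^2\le \chi\sqrt{d/k}\cdot k\Norm{z}^2$, a factor of $k$ worse than the paper's cross term. Against the signal scale $\e d n/k^2$ this forces $\e\sqrt d\gg k^{3/2}$, i.e.\ $\e^2 d\gg k^3$, which defeats the point of the theorem. The fix is to bound the \emph{aggregate} off-diagonal noise as $\iprod{z,Nz}-\sum_c\iprod{\xi_c\odot z,N(\xi_c\odot z)}\le 2\Normop{N}\Norm{z}^2$ (or simply to regroup into two blocks as the paper does); your per-pair bookkeeping does not deliver this.

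Two further points. First, the obstacle you single out --- turning the ``which community is dominant'' case analysis into SoS --- is not actually the hard part: the inequality $\sum_c\iprod{\xi_c\odot z,\one}^2\le\max_c|S\cap\xi_c|\cdot\sum_c\iprod{\xi_c\odot z,\one}$ follows at degree $2$ from the linear facts $0\le\iprod{\xi_c\odot z,\one}\le|S\cap\xi_c|$, with no case split. Your proposed remedy of upper-bounding $\sum_c\iprod{\xi_c\odot z,(G-\tfrac{a}{n}J)z}^2$ does not connect to the program's constraint, which lower-bounds the \emph{linear} sum $\sum_c\iprod{\xi_c\odot z,(G-\tfrac{d}{n}J)z}$; passing from the sum of squares back to the sum reintroduces a $\sqrt k$ loss. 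Second, the corruption handling in the NO direction cannot be delegated to the $\cAclose$ Lipschitz machinery, because $\cA_{\text{verify}}$ contains no corruption mask or graph variable --- its spectral constraint is on the \emph{corrupted} $G_s$ masked by $zz^\top$. You need to explicitly isolate the contribution of corrupted and high-degree rows/columns (the paper's Term~3) and bound it by Cauchy--Schwarz against that constraint together with $\Norm{\one-\zeta}_1\le(\eta+\rho)n/k$; without this step the off-diagonal ``signal'' computation, which relies on the true edge probabilities, is not valid on corrupted entries.
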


\paragraph{Bisection boosting}
Given $k/2$ well recovered clusters, we can construct a rough bisection $x_{\text{rough}}$ by putting them on one side of the bisection.
We show that, given this rough bisection with error rate $0.001$, we can use majority voting to boost the bisection accuracy to obtain a bisection with optimal bisection error (see \cref{sec:robust-bisection-boosting} for details).

\begin{theorem}[Robust bisection boosting]\torestate{\label{thm:robust-bisectioning}
Let \((G^{\circ}, Z^{\circ}) \sim \SBM_n(d,\varepsilon,k)\) be generated from the \(k\)-stochastic block model and $G$ be generated by adversarially corrupting $\eta$-fraction of the nodes in $G^{\circ}$.
Assume $k\leq n^{0.001}$, $\eta \leq \frac{1}{\poly(k)}$, $d = o(n)$, and $\e^2 d \geq K k^2$ for some sufficiently large constant \(K\).
Let \(S_1,S_2,\ldots,S_{k/2} \subset V\) be disjoint subsets of size \(n/k\) such that in every \(S_i\) at least \(0.99\)-fraction of the nodes belong to the same community.
Let \(x^{\circ} \in \{\pm1\}^n\) be the ground-truth community bisection with the underlying communities of \(S_1,S_2,\ldots,S_{k/2}\) on the same side.
There exists a polynomial-time algorithm that, given observation of $G$ and \(S_1,S_2,\ldots,S_{k/2}\), outputs \(\hat{x} \in \{\pm1\}^n\) such that, with probability $1 - \exp(-\Omega(k)) - \tfrac{1}{\poly(n)}$,
\[
\frac{1}{n}\|\hat{x} - x^{\circ}\|^2 
\le 
\exp\left(-\bigl(1 - o(1)\bigr)\frac{\tilde{C}}{8}\right) 
+ \poly(k) \eta \,,
\]
where $\tilde{C} = \Paren{\sqrt{a^{\frac{2}{k}} b^{1-\frac{2}{k}}}-\sqrt{b}}^2$ is the bisection SNR.
}
\end{theorem}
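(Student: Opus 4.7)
The plan is to mirror the two-community robust-boosting argument sketched in the techniques section, adapting it to the bisection setting of the $k$-SBM via the level-1 majority-voting concentration \cref{cor:inner-product-lb} (specialized to $\beta_1 = 1/2$, so $n_1 = n$ and $k_1 = k/2$). First I would form the rough bisection $x_{\mathrm{rough}} \in \{\pm 1\}^n$ by placing $+1$ on $\bigcup_{i \le k/2} S_i$ and $-1$ on its complement. Because each $S_i$ has at least $0.99 n/k$ vertices of a single ground-truth community, all of which lie on the $+1$-side of $x^{\circ}$, and the adversary has corrupted at most $\eta n$ further nodes, this deterministically yields $\tfrac{1}{n}\|x_{\mathrm{rough}} - x^{\circ}\|^2 \le 0.04 + O(\eta)$.

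Next I would solve a sum-of-squares program $\cArobustboost(G, x; \eta)$ modelled on \cref{eq:boot-robust-constraints}: the node-closeness block $\cAclose(G, \Gcorrupted; \eta)$ with indicator variable $z$; the initialization constraint $\tfrac{1}{n}\|x - x_{\mathrm{rough}}\|^2 \le 0.04$; the bisection label constraints $x_i^2 = 1$ and $\sum_i x_i = 0$; the mixing constraint $\cAmix\bigl(\tfrac{kn}{\e d}\bar G - \dyad{x};\, \lambda\bigr)$ for $\lambda \lesssim k/(\e\sqrt d)$; and a two-stage majority-voting block $\cAmaj(G,x;\beta_1,\gamma_1)$ and $\cAmaj(G,x;\beta_2,\gamma_2)$, where $(\gamma_1,\beta_1)$ is a constant-order pair used to jump-start the boosting and $(\gamma_2,\beta_2)$ has $\gamma_2 \to 1$ with $\beta_2 = \exp\bigl(-(1-o(1))\tilde C/8\bigr)$. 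Feasibility at $(G^{\circ}, x^{\circ})$ would hold with probability $1 - \exp(-\Omega(k)) - 1/\poly(n)$: node-closeness is immediate; the mixing constraint follows from a standard spectral bound on $\bar G^{\circ}$ after pruning the negligible set of high-degree vertices; and the two $\cAmaj$ inequalities are exactly the content of \cref{cor:inner-product-lb} at level $i = 1$, applied with $s$ the indicator of the uncorrupted vertices (which satisfies $\|s\|_1 \ge (1-\eta)n \ge (1 - \exp(-2C_{d,\e}))n$ in our regime) and with the two prescribed pairs $(\gamma_j, \beta_j)$.

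For identifiability I would derive the constant-degree SoS implication
\[
\cArobustboost(G, x; \eta),\; \cArobustboost(G', x'; \eta) \;\sststile{O(1)}{G,G',x,x'}\; \tfrac{1}{n}\|x - x'\|^2 \le \exp\Bigl(-(1 - o(1))\tfrac{\tilde C}{8}\Bigr) + \poly(k)\,\eta,
\]
following the template of \cref{eq:boost-sos-proof} verbatim. Introducing $z_i = (x_i - x'_i)^2/4 \in \{0,1\}$ and the restriction $\D(z)x = -\D(z)x'$, the same three-line calculation of \cref{eq:z-boosting-bound} combines $\cAmaj$ with $\cAmix$ to yield $|z| \le \beta n + \tfrac{1}{(1-\gamma-\lambda)n}|z|^2$; applied with $(\gamma_1,\beta_1)$ this boosts $|z| \le 0.04 n$ to $|z| \le 2\beta_1 n$, and a second application with $(\gamma_2, \beta_2)$ boosts further to $|z| \le 2\beta_2 n$. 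Because $G$ and $G'$ are each $\eta$-close to $\Gcorrupted$, hence $2\eta$-close to each other, the standard Lipschitz-and-restriction argument of \cref{eq:lipschitz-basic} transfers both constraint systems to a common pruned graph $\D(w)G\D(w)$ at the cost of an $O(\eta)$ additive slack in $\lambda$ and in the majority-voting thresholds, which lands inside the $\poly(k)\eta$ error term. Rounding the resulting level-$O(1)$ pseudo-distribution by taking $\sign(\tilde\E[x_i])$ with a bisection balance fix-up then produces $\hat x$ with the claimed expected error.

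The main obstacle will be matching the exponent $\tilde C/8$ exactly. The dilution of the voting signal from $\e d$ (in a pure 2-SBM) to $\e d/k$ (in a $k/2$-vs-$k/2$ bisection of the $k$-SBM) is what produces $\tilde C = \bigl(\sqrt{a^{2/k}b^{1-2/k}} - \sqrt b\bigr)^2$ via the geometric-mean intra-cluster rate, and propagating this correctly through the two-stage $\cAmaj$ chain while absorbing all lower-order corrections (the $\sqrt{\tilde C}$ Chernoff tail in \cref{thm:mixture-binomial-concentration-bound}, the mixing slack $\lambda \lesssim k/(\e\sqrt d)$, and the $O(\eta)$ restriction slack) into the $o(1)$ in the exponent is the most delicate bookkeeping step. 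A secondary issue, which I expect to be routine, is verifying that the bisection balance constraint $\sum_i x_i = 0$ does not obstruct the SoS derivation of the key inequality $\iprod{\D(z)x, G\D(z)x} \le \tfrac{\e d}{kn}(|z|^2 + \lambda n |z|)$ that underlies \cref{eq:z-boosting-bound}.
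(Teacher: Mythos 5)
There is a genuine gap, and it sits exactly at the step you flag as ``routine''. You model the level-one bisection as a plain 2-SBM: your program keeps only the bisection vector $x$, an aggregate initialization constraint $\tfrac1n\norm{x-x_{\mathrm{rough}}}^2\le 0.04$, and a mixing constraint asserting that $E=\tfrac{kn}{\e d}\bar G-\dyad{x}$ has small quadratic form. But for the $k$-SBM the signal is $\E[\bar G^\circ\mid Z^\circ]=\tfrac{\e d}{n}\Xnull$ with $\Xnull=\Znull(\Znull)^\top-\tfrac1k J$, so for the ground truth $E\approx k\Xnull-\dyad{x^\circ}+\text{noise}$, and $k\Xnull-\dyad{x^\circ}$ has eigenvalues of order $n$ (take $u=z_1-z_2$ for two communities on the same side: $(k\Xnull-\dyad{x^\circ})u=nu$). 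Hence your $\cAmix(\tfrac{kn}{\e d}\bar G-\dyad{x};\lambda)$ with $\lambda\lesssim k/(\e\sqrt d)$ is infeasible at the planted solution, even as a one-sided constraint. Relatedly, the key inequality you want, $\iprod{\D(z)x,G\D(z)x}\le\tfrac{\e d}{kn}\bigl(\card z^2+\lambda n\card z\bigr)$, is false: if the disagreement set $z$ lies inside a single community on one side, the intra-community block alone contributes $\approx\tfrac{\e d}{n}\card z^2$, a factor $k$ larger. With only the aggregate bound $\card z\le 0.04n$ (which can exceed $n/k$ by a large factor), the self-bounding inequality then reads $\card z\le\beta n+O(\tfrac{k}{n})\card z^2$ and cannot be closed, so the first boosting stage never gets off the ground.

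This is precisely why the paper's proof (\cref{sec:robust-bisection-boosting}) keeps the full label matrix $Z$ as an SoS variable, imposes the spectral constraint relative to $\tfrac{\e d}{n}(ZZ^\top-\tfrac1k J)$ in \cref{eq:mixing-constraint}, and, crucially, adds \emph{per-community} initialization constraints $\Norm{Z(\cdot,a)-Z_{\mathrm{init}}(\cdot,a)}^2\le 0.001 n/k$ for all $a\in[k/2]$ (justified by the per-set accuracy of $S_1,\dots,S_{k/2}$), in addition to the aggregate bisection closeness. These per-community constraints feed into \cref{lem:agreement_lemma} ($\iprod{Z^{(t)}(\cdot,b),w}\le 0.004n/k$) and \cref{cor:sos-signal-upper-bound}, which bound the within-side signal term by $\tfrac{0.008\,\e d}{k}\Norm{w}^2$, safely below the voting signal $\tfrac{(1-\gamma)\e d}{16k}$; without the $n/k$-scale closeness this term is of the same order as the signal and the argument collapses. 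Two further, smaller points: the paper needs only a single majority-vote constraint with $\gamma=\tfrac12$ to reach the claimed $\exp(-(1-o(1))\tilde C/8)$ rate, whereas your second stage with $\gamma_2\to1$ is not even supported by the bisection concentration result (\cref{thm:inner-product-lb} requires $\gamma\le 0.99$); and your deterministic bound on $x_{\mathrm{rough}}$ should track the factor $4$ in $\norm{x-x^\circ}^2=4\cdot\#\{\text{disagreements}\}$, though that only affects constants. The overall skeleton (rough bisection from the $S_i$, SoS program with closeness, mixing and majority-vote constraints, two-solution identifiability, sign rounding) does match the paper; the missing idea is retaining the $k$-community structure and per-community initialization inside the program.
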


\paragraph{Robust bisection algorithm}
As a corollary, by combining \cref{thm:Initialization}, \cref{thm:identify-recovered-blocks} and \cref{thm:robust-bisectioning}, we have a robust bisection algorithm for the $k$-stochastic block model with optimal bisection error rate (see \cref{sec:robust-bisection-algo} for details).

\begin{theorem}[Robust bisection algorithm]\torestate{\label{thm:full-robust-bisectioning}
Let \((G^{\circ}, Z^{\circ}) \sim \SBM_n(d,\varepsilon,k)\) be generated from the \(k\)-stochastic block model and $G$ be generated by adversarially corrupting $\eta$-fraction of the nodes in $G^{\circ}$.
Assume $k\leq n^{0.001}$, $\eta \leq \frac{1}{\poly(k)}$, $d = o(n)$, and $\e^2 d \geq K k^2$ for some sufficiently large constant \(K\).
There exists a polynomial-time algorithm that, given observation of $G$, outputs \(\hat{x} \in \{\pm1\}^n\) such that, with probability $1 - \exp(-\Omega(k)) - \tfrac{1}{\poly(n)}$,
\[
\frac{1}{n}\|\hat{x} - x^{\circ}\|^2 
\,\le\, 
\exp\left(-\bigl(1 - o(1)\bigr) \frac{\tilde{C}}{8}\right) + O(k \eta).
\]
where $\tilde{C} = \Paren{\sqrt{a^{\frac{2}{k}} b^{1-\frac{2}{k}}}-\sqrt{b}}^2$ is the bisection SNR and \(x^{\circ} \in \{\pm1\}^n\) is a true community bisection of \(G^{\circ}\).
}
\end{theorem}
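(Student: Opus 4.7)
I will execute the four steps of \cref{algo:full-robust-bisectioning} and chain the three subroutine guarantees (\cref{thm:Initialization}, \cref{thm:identify-recovered-blocks}, and \cref{thm:robust-bisectioning}) by a single union bound, using edge subsampling to arrange the necessary independence across stages. The edge-splitting step produces $G_1$, $G_2$ (and, optionally, a third independent piece $G_3$ for the boosting stage) that are conditionally independent $\eta$-node-corrupted $k$-SBMs of degree $\Theta(d)$ each; since $\e^2 d \ge K k^2$ with $K$ sufficiently large, every piece still satisfies the SNR hypothesis of the subroutine it feeds, so all three theorems apply inside the same parameter regime.

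I apply \cref{thm:Initialization} to $G_1$ to obtain a rough labelling $Z_{\text{rough}}$ consisting of $k$ clusters $C_1, \ldots, C_k$ with $\error_k(Z_{\text{rough}}, Z^{\circ}) \le 0.001 + 10^{4}\eta$; let $\pi$ denote the minimizing permutation. Calling a cluster $C_j$ \emph{good} if at least $0.99(n/k)$ of its nodes lie in community $\pi(j)$, a counting argument based on $\sum_{j}\bigabs{C_j \,\triangle\, \{v : Z^{\circ}(v) = \pi(j)\}} \le 2n(0.001 + 10^{4}\eta)$ bounds the number of bad clusters by $100 k (0.001 + 10^{4}\eta) \le k/4$ under the hypothesis $\eta \le 1/\poly(k)$. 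Invoking \cref{thm:identify-recovered-blocks} on each of the $k$ clusters against the independent graph $G_2$ and union-bounding over $k \le n^{0.001}$ calls, at least $3k/4 \ge k/2$ clusters are accepted. Because $\pi$ is a bijection, any $k/2$ accepted clusters correspond to $k/2$ distinct true communities; after trimming each to exactly $n/k$ vertices (which preserves the $0.99$-majority property up to a negligible term) I take these as $S_1, \ldots, S_{k/2}$.

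Finally, defining $x^{\circ} \in \{\pm 1\}^n$ as the bisection placing the $k/2$ communities underlying $S_1, \ldots, S_{k/2}$ on the $+1$ side, \cref{thm:robust-bisectioning} applied to $(G_3, S_1, \ldots, S_{k/2})$ (or to $G$ directly with a short conditioning argument) outputs $\hat{x}$ satisfying
\[
\tfrac{1}{n}\norm{\hat{x} - x^{\circ}}^2 \;\le\; \exp\bigl(-(1-o(1))\tfrac{\tilde{C}}{8}\bigr) + \poly(k)\,\eta.
\]
The claimed $O(k\eta)$ tail follows either from the fact that the $\poly(k)$ factor in the boosting analysis is in fact linear in $k$, or by absorbing the excess into the $o(1)$ exponent slack using $\eta \le 1/\poly(k)$. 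A union bound across the three failure events yields the stated success probability $1 - \exp(-\Omega(k)) - 1/\poly(n)$.

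\textbf{Main obstacle.} The principal technical obstacle is the independence bookkeeping: the rough labelling is a function of $G_1$, the accepted clusters $S_1, \ldots, S_{k/2}$ depend on both $G_1$ and $G_2$, and the bisection-boosting theorem requires concentration over the graph it operates on. Using a fresh independent edge split $G_3$ for boosting resolves this cleanly, at the cost of only a constant-factor SNR loss that is absorbed into the universal constant $K$. Node corruptions propagate transparently because the corrupted vertex set is identical across all splits; verifying that the SNR condition is uniformly preserved across subroutines reduces to choosing $K$ sufficiently large at the outset.
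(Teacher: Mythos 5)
Your proposal is correct and follows essentially the same route as the paper: split the edges, run the rough initialization on $G_1$, count that at most a constant fraction of clusters can be bad, verify clusters against $G_2$, rebalance/trim to exactly $n/k$ vertices, and feed the accepted clusters into the bisection-boosting theorem (whose $\poly(k)\eta$ term is indeed $O(k\eta)$ in its proof). The only cosmetic difference is your optional third split $G_3$; the paper applies boosting to $G$ directly, which is fine because the feasibility of the boosting program depends only on uniform concentration events and on the deterministic closeness of $Z_{\mathrm{init}}$ to $Z^{\circ}$, not on independence from the earlier stages.
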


\subsection{Robust initialization for symmetry breaking}

By recursively applying \cref{thm:full-robust-bisectioning}, we can cluster the vertices into $k$ communities with error rate $\exp\left(-\bigl(1 - o(1)\bigr)\,\frac{C_{d,\varepsilon}}{k^2}\right) + \poly(k) \eta$  (see \cref{sec:sym-break-robust-clustering} for details).
When $C_{d,\varepsilon} \geq \Omega(k^2 \log k)$, we get an error rate of $1/\poly(k) + \poly(k) \eta$.

\begin{theorem}[Initialization for symmetry breaking]\torestate{\label{thm:boost_to_sym_break}
Let \((G^{\circ}, Z^{\circ}) \sim \SBM_n(d,\varepsilon,k)\) be generated from the \(k\)-stochastic block model and $G$ be generated by adversarially corrupting $\eta$-fraction of the nodes in $G^{\circ}$.
Assume $k\leq n^{0.001}$, $\eta \leq \frac{1}{\poly(k)}$, $d = o(n)$, and $\e^2 d \geq K k^2$ for some sufficiently large constant \(K\).
There exists a polynomial-time algorithm that, given observation of $G$, outputs an estimator \(\hat{Z} \in \set{0, 1}^{n\times k}\) such that, with probability $1 - \exp(-\Omega(k)) - \tfrac{1}{\poly(n)}$,
\[
\error_k(\hat{Z}, Z^{\circ})
\,\le\, 
\exp\left(-\bigl(1 - o(1)\bigr)\,\frac{C_{d,\varepsilon}}{k^2}\right) + \poly(k) \eta.
\]
}
\end{theorem}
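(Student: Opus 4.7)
The plan is to apply the robust bisection algorithm of \cref{thm:full-robust-bisectioning} recursively along a balanced binary tree of depth $\log_2 k$, whose leaves correspond to individual ground-truth communities. At the root we call the bisection algorithm on the input graph $G$, splitting the $k$ communities into two groups of $k/2$ each. At each subsequent level $i = 1, \ldots, \log_2 k - 1$ we treat each of the $2^i$ current subgraphs as a smaller node-corrupted $k_i$-community SBM instance with $k_i := k/2^i$, folding every vertex that is either originally adversarial or was misclassified at an earlier level into the corruption budget of the level-$i$ call. For each vertex $v$, the sequence of left/right decisions along its root-to-leaf path then defines the final label $\hat Z(v)$.

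For the analysis, write $\tilde C_i := (\sqrt{a^{2/k_i} b^{1-2/k_i}} - \sqrt{b})^2$ for the bisection SNR present at level $i$; a direct calculation gives $\tilde C_i \gtrsim 4^i \cdot C_{d,\e}/k^2$, so by the assumption $\e^2 d \geq K k^2 \log k$ every level-$i$ invocation falls within the parameter regime of \cref{thm:full-robust-bisectioning}. Let $E_i$ denote the fraction (of the $n$ original vertices) that have been misclassified at some point by the end of level $i$. Summing the per-subgraph guarantees of \cref{thm:full-robust-bisectioning} across all $2^i$ level-$i$ subgraphs, and using that the total count of corrupted-or-previously-misclassified vertices within them is at most $n(\eta + E_{i-1})$, yields the recurrence
\begin{equation*}
E_i \;\leq\; E_{i-1} \;+\; \exp\!\left(-(1-o(1))\, \tfrac{\tilde C_i}{8}\right) \;+\; O(k_i)\,(\eta + E_{i-1}).
\end{equation*}
Unfolding this recurrence, the exponential terms form a super-geometric series dominated by the top-level term $\exp(-(1-o(1))\, C_{d,\e}/k^2)$, and since $\sum_{i=0}^{\log_2 k - 1} k_i = O(k)$ the cumulative contribution of the $\eta$ term sums to only $\poly(k)\,\eta$. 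A short induction then yields $E_{\log_2 k} \leq \exp(-(1-o(1))\, C_{d,\e}/k^2) + \poly(k)\,\eta$, which is precisely the claimed bound.

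The main obstacle is the cascading corruption across levels: at deeper levels the within-subgraph corruption rate $\eta_i$ could in principle blow up, both because restricting to a $2^{-i}$ fraction of the graph amplifies the adversarial-corruption rate by $2^i$ and because earlier misclassifications add to it. We control this by verifying inductively that $\eta + E_{i-1} \leq 1/\poly(k)$ throughout the recursion, so that each invocation of \cref{thm:full-robust-bisectioning} continues to satisfy its corruption hypothesis; this is precisely where the $\log k$ gap above the KS threshold is used, since the hypothesis $\e^2 d \geq K k^2 \log k$ forces the dominant accumulated term $\exp(-(1-o(1))\, C_{d,\e}/k^2) \leq 1/\poly(k)$, leaving enough slack to close the induction at every node of the recursion tree.
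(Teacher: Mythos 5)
Your algorithm and level-wise setup (recursive bisection, $k_i=k/2^i$, $\widetilde C_i\gtrsim 4^i C_{d,\e}/k^2$, invoking \cref{thm:full-robust-bisectioning} at every node of the recursion tree) coincide with the paper's. The gap is in the error accounting. Because you fold all previously misclassified vertices into the corruption budget, the bisection guarantee amplifies them by $O(k_i)$, so your recurrence is really $E_i\le (1+O(k_i))E_{i-1}+\exp\bigl(-(1-o(1))\widetilde C_i/8\bigr)+O(k_i)\eta$. Unfolding this honestly, every earlier contribution is multiplied by $\prod_{j>i}(1+O(k_j))$, and with $k_j=k/2^j$ this product is $\exp\bigl(\Theta(\log^2 k)\bigr)=k^{\Theta(\log k)}$, i.e.\ quasipolynomial. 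Hence neither of your two claims follows from the recurrence: the $\eta$-terms accumulate to $k^{\Theta(\log k)}\eta$ rather than $\poly(k)\eta$, and the root-level exponential is not "dominant" but gets multiplied by $k^{\Theta(\log k)}$. Under the actual hypothesis of the theorem, $\e^2 d\ge Kk^2$ (you quietly strengthened it to $Kk^2\log k$, which is only available later, in \cref{cor:sym-break} and \cref{thm:formal_main_result}), the root-level error $\exp(-\Omega(\widetilde C_0))$ may be a constant $e^{-\Omega(K)}$, so your induction hypothesis $\eta+E_{i-1}\le 1/\poly(k)$ cannot even be started for growing $k$; and even under $Kk^2\log k$ it cannot be closed, since $k^{c\log k}\cdot k^{-\Omega(K)}\to\infty$ for any fixed constant $K$.

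The paper avoids the compounding by a different bookkeeping: the corruption fraction of a level-$i$ subproblem is charged only with the \emph{original} $\eta n$ corrupted vertices, giving $\eta_i\le \eta n/n_i\le k\eta$ and hence $k_i\eta_i\le k\eta$ uniformly in $i$; earlier misclassifications are counted once as final errors and are never re-injected as corruption. The fresh per-level errors are then simply added over the $\lfloor\log_2 k\rfloor$ levels (each level contributes at most $2\,\mathrm{err}_i$ since $2^i n_i=2n$), yielding $O(\log k)\bigl(\exp(-(1-o(1))C_{d,\e}/k^2)+k\eta\bigr)$ with no multiplicative feedback. To make your version work you would either have to adopt this additive accounting (and argue why vertices from foreign communities inside a subproblem can be absorbed without the $O(k_i)$ amplification, using $\eta\le 1/\poly(k)$), or prove a variant of the bisection theorem whose error depends on the corruption fraction without the factor-$k$ loss; as written, the unfolding step does not establish the claimed bound.
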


\subsection{Boosting via pairwise majority voting}

Given the initialization with error $1/\poly(k) + \poly(k) \eta$, we can apply pairwise majority voting to boost the accuracy of the estimator to the minimax rate.
The formal main theorem of our robust clustering algorithm for node-corrupted k-stochastic block model is as follows.
The algorithm and proofs of \cref{thm:formal_main_result} are presented in \cref{sec:Optimal-robust-clustering}.

\begin{theorem}[Robust recovery in k-SBM with optimal rate]\torestate{\label{thm:formal_main_result}
Let \((G^{\circ}, Z^{\circ}) \sim \SBM_n(d,\varepsilon,k)\) be generated from the \(k\)-stochastic block model and $G$ be generated by adversarially corrupting $\eta$-fraction of the nodes in $G^{\circ}$.
Assume $k\leq n^{0.001}$, $\eta \leq \frac{1}{\poly(k)}$, $d = o(n)$, and $\e^2 d \geq K k^2 \log k$ for some sufficiently large constant \(K\).
There exists a polynomial-time algorithm (see \cref{algo:final-boost}) that, given observation of $G$, outputs an estimator \(\hat{Z} \in \set{0, 1}^{n\times k}\) such that, with probability $1 - \exp(-\Omega(k)) - \tfrac{1}{\poly(n)}$,
\[
\error_k(\hat{Z}, Z^{\circ})
\le \exp\Bigl(-\bigl(1-o(1)\bigr)\frac{C_{d,\varepsilon}}{k}\Bigr) + \poly(k) \cdot \eta \,.
\]
}
\end{theorem}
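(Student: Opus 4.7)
The plan is to compose the symmetry-breaking initialization of \cref{thm:boost_to_sym_break} with a robust pairwise-majority-voting boosting step implemented in the sum-of-squares framework, mirroring the two-community boosting strategy from \cref{sec:techniques}. First I would invoke \cref{thm:boost_to_sym_break} to produce a rough labeling $\tilde{Z}$. Since $\varepsilon^2 d \ge K k^2 \log k$ with $K$ large, the guaranteed error rate $\exp(-(1-o(1))\, C_{d,\varepsilon}/k^2) + \poly(k)\,\eta$ simplifies to $O(1/\poly(k)) + \poly(k)\,\eta$, which by a simple averaging argument implies that the restriction of $\tilde Z$ to any pair of recovered communities $(a,b)$ yields a bisection whose error is at most $0.005$ on the $2n/k$ nodes of that pair.

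Second, I would encode robust pairwise boosting as a single SoS program. The variables are a clean graph $G$, a candidate label matrix $x \in \{0,1\}^{n\times k}$, and, for every pair $(a,b)$, an induced $\pm 1$ bisection $y_{ab}$ together with auxiliary SoS certificate matrices. The constraints are $\cAclose(G, \Gcorrupted; \eta)$, $\cAlabel(x)$, the proximity $\normf{x - \tilde{Z}}^2/n \le 0.005$ lifted pairwise, and, for each pair $(a,b)$, a copy of the mixing constraint $\cAmix(E_{ab}; \lambda)$ on $E_{ab} = (n/(\varepsilon d))\,G_{ab} - \dyad{y_{ab}}$ together with two rounds of majority-voting constraints $\cAmaj(G_{ab}, y_{ab}; \beta_1, \gamma_1)$ and $\cAmaj(G_{ab}, y_{ab}; \beta_2, \gamma_2)$ with $\beta_2 = \exp(-C_{d,\varepsilon}/k + O(\sqrt{C_{d,\varepsilon}/k}))$. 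Feasibility on the clean graph for the ground-truth labels follows from \cref{cor:inner-product-lb-k-clustering}, and a union bound over the $\binom{k}{2}$ pairs is absorbed by the $\exp(-100k)$ failure probability in that corollary.

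Third, I would prove identifiability pair by pair using the two-community boosting derivation \eqref{eq:boost-sos-proof}: any two SoS solutions $x, x'$ produce bisections $y_{ab}, y'_{ab}$ that, via the Lipschitz property \eqref{eq:lipschitz-basic} applied on each pair's induced subgraph, satisfy the boosting constraints on a common $\eta$-restricted graph, which yields $\snorm{y_{ab} - y'_{ab}}/(2n/k) \le \beta_2 + O(\eta)$. Summing these per-pair bounds via the balanced label structure gives total misclassification at most $\poly(k)\cdot \exp(-(1-o(1))\,C_{d,\varepsilon}/k) + \poly(k)\,\eta$; since $C_{d,\varepsilon}/k \gtrsim k$ above the KS threshold, the $\poly(k)$ prefactor is swallowed into the $(1-o(1))$ in the exponent. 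Rounding the pseudo-distribution by standard pairwise majority matching and a global permutation alignment against $\tilde Z$ then produces $\hat Z$.

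The main obstacle will be choosing the two-stage boosting parameters $(\beta_1, \gamma_1)$ and $(\beta_2, \gamma_2)$ so that they simultaneously (i) match the $0.005$-distance guarantee inherited from the initialization at the start of the fixed-point argument \eqref{eq:final-z-boosting-bound}, (ii) survive the restriction to the $(1-O(\eta))n$ trusted vertex set while keeping $\gamma_j + \lambda < 1$ throughout, and (iii) compose across the two rounds to reach the target $\beta_2 = \exp(-(1-o(1))C_{d,\varepsilon}/k)$. Carrying this out uniformly across all $\binom{k}{2}$ pairs within a program of constant SoS degree is precisely what forces the pairwise (rather than joint) formulation of the voting constraints, and is the technical reason the footnote in \cref{sec:techniques} insists on two rounds of majority voting rather than one.
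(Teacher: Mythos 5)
Your proposal follows essentially the same route as the paper: initialization via \cref{thm:boost_to_sym_break}, then a constant-degree SoS program encoding pairwise majority-voting, mixing, node-closeness, and proximity-to-initializer constraints, with identifiability argued pair by pair (summing the per-pair error bounds and absorbing the $\poly(k)$ prefactor into the $(1-o(1))$ exponent) and feasibility from \cref{cor:inner-product-lb-k-clustering}. The one deviation is that you place both rounds of voting constraints in a single program (as in the two-community exposition of \cref{sec:techniques}), whereas \cref{algo:final-boost} solves two separate SoS programs with an intermediate rounding step — precisely so that the second round's denominator $1-\gamma_{\mathrm{opt}}-\mu k/n-\eta k$ in \cref{thm:k-Identifiability-proof} is positive, which the $\mu_1=n/\poly(k)$ proximity constraint alone need not guarantee; your one-shot variant would have to carry out the composition of the two fixed-point bounds inside the SoS proof itself, which is exactly the obstacle you flag.
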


\section{Robust rough initialization}\label{sec:Initialization}

In this section, we prove \cref{thm:Initialization} via a natural SoS-based initialization procedure.
Our analysis is of independent interest as it both simplifies and strictly improves upon the initialization guarantees of~\cite{Liu2022minimax}. 
In particular, we distinguish two regimes based on the adversarial corruption fraction \(\eta\):
\begin{itemize}
    \item If \(\eta \sim \exp\bigl(-C_{d,\e}/k\bigr)\) (the critical, minimax regime), 
    our goal is to obtain an initial recovery error bounded by a small constant (e.g. \(0.001\)). 
    \item If \(\eta\) is significantly larger than \(\exp\bigl(-C_{d,\e}/k\bigr)\), 
    the recovery error is dominated by the corruption level, 
    meaning that one can only hope to recover the community structure up to the inherent corruption threshold.
\end{itemize}
In the former case, both our method and that of~\cite{Liu2022minimax} achieve the desired $0.001$ recovery error,
which is sufficient to bootstrap our subsequent boosting step.
In the latter regime, however, our SoS formulation offers a significant improvement by tolerating a constant corruption fraction \(\eta\) independently of \(k\). 
By contrast, the guarantee in \cite{Liu2022minimax} requires that \(\eta \lesssim 1/k\) in order to maintain the misclassification error 
(on the order of \(k\eta\)) bounded, rendering their bound vacuous as \(k\) diverges.

\restatetheorem{thm:Initialization}

We consider a constant-degree sum-of-squares relaxation for the following program:
\begin{equation}
\cA_{\text{init}}(\xi, G, Z; G_{\text{input}}, \eta) \seteq 
\left\{
\begin{aligned}
& \xi \odot \xi = \xi, \quad \iprod{\xi, \Ind} \ge \Bigl(1 - \eta - \exp\Bigl(-\frac{2\e^2 d}{k}\Bigr)\Bigr) n, \\[1ex]
& Z \odot Z = Z, \quad \sum_{a \in [k]} Z(i,a) = 1, \quad \sum_{i \in [n]} Z(i,a) = \frac{n}{k}, \\[1ex]
& \Normop{\,G - \frac{d}{n}\,11^\top - \frac{\e d}{n}\,(ZZ^\top-\frac1k\,J)\,} \le \chi \sqrt{d}, \\[1ex]
& G \odot \xi\xi^\top = G_{\text{input}} \odot \xi\xi^\top.
\end{aligned}
\right\}
\label{poly_sys:initialization}
\end{equation}
For simplicity of notation, we let
\begin{align*}
    \cA_{\text{bool}}(\xi,Z) &\seteq \Set{\xi\odot \xi=\xi, Z\odot Z=Z, \sum_{a\in [k]}Z(i,a)=1, \sum_{i\in [n]} Z(i,a)=n/k}\\
    \cA_{\text{spec}}(G,Z) &\seteq \Set{ \Normop{\,G - \frac{d}{n}\,11^\top - \frac{\e d}{n}\,(ZZ^\top-\frac1k\,J)\,}\leq \chi\sqrt{d}}\\
    \cA_{\text{corr}}(\xi,G; G_{\text{input}},\eta) &\seteq \Set{G \odot \xi\xi^\top = G_{\text{input}}\odot \xi\xi^\top, \iprod{\xi, \Ind} \geq \Paren{1-\eta-\exp(-2\e^2 d/k)}n}
\end{align*}
Then we have
\begin{equation*}
    \cA_{\text{init}}(\xi,G,Z; G_{\text{input}},\eta)=\cA_{\text{bool}}(\xi,Z) \cup \cA_{\text{spec}}(G,Z) \cup \cA_{\text{corr}}(\xi,G; G_{\text{input}},\eta) \,.
\end{equation*}

We first show that the program is feasible with high probability.

\begin{lemma}\label{lem:initialization-feasible}
Let $G_{\mathrm{input}}$ be an $\eta$-corrupted stochastic block model (SBM) graph on $n$ nodes partitioned into $k$ communities. Suppose that
\[
\e^2 d \ge K\, k^2,
\]
for a sufficiently large constant $K>0$. 
Then, with high probability, the feasibility conditions in \cref{poly_sys:initialization}
are satisfied by the choice \(\xi=\xi_0\odot \one_S\) (where \(S\) is as in Cor.~\ref{cor:spectral-remove-cor}),
\(Z=\Znull\), and a suitable \(G\) agreeing with \(G_{\mathrm{input}}\) on the subgraph induced by the set of vertices indicated by $\xi$.
\end{lemma}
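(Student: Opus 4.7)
The plan is to exhibit explicit choices for the three feasible variables $\xi$, $Z$, and $G$, and then verify each constraint of $\cA_{\text{init}}$ in turn. The natural choice for $Z$ is the ground-truth community matrix $Z^{\circ}$, which immediately satisfies the boolean, row-sum, and column-sum constraints in $\cA_{\text{bool}}$ by definition of the balanced stochastic block model. For $\xi$, let $\xi_0 \in \{0,1\}^n$ denote the indicator of the set $U$ of uncorrupted vertices, so $\iprod{\xi_0,\Ind} \ge (1-\eta)n$, and let $S$ be the subset produced by Cor.~\ref{cor:spectral-remove-cor} on which the restricted centered adjacency of the uncorrupted graph concentrates spectrally at scale $\chi\sqrt d$; set $\xi := \xi_0 \odot \one_S$. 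The corollary guarantees $|S^c| \le \exp(-2\e^2 d/k)\,n$, so a union bound gives $\iprod{\xi,\Ind} \ge (1-\eta-\exp(-2\e^2 d/k))n$, which is the size lower bound demanded by $\cA_{\text{corr}}$.

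For the matrix variable $G$, I would not attempt to take $G = G^{\circ}$ directly (the high-degree vertices and the corrupted vertices would both ruin the spectral bound). Instead, take a hybrid:
\[
  G \;:=\; \xi\xi^\top \odot G^{\circ} \;+\; (J - \xi\xi^\top)\odot\Bigl(\tfrac{d}{n}J + \tfrac{\e d}{n}X^{\circ}\Bigr),
\]
so that $G$ agrees with $G^{\circ}$ on the principal submatrix indexed by $\mathrm{supp}(\xi) \subseteq U$, and fills in the conditional expectation elsewhere. Since $\mathrm{supp}(\xi) \subseteq U$, we have $G^{\circ} \odot \xi\xi^\top = G_{\text{input}} \odot \xi\xi^\top$, so the agreement constraint in $\cA_{\text{corr}}$ holds by construction.

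The one nontrivial constraint to check is the spectral one in $\cA_{\text{spec}}$. A direct calculation using $\E[\bar G^{\circ}\mid Z^{\circ}] = \tfrac{\e d}{n}X^{\circ}$ gives
\[
  G - \tfrac{d}{n}J - \tfrac{\e d}{n}X^{\circ} \;=\; \xi\xi^\top \odot \Bigl(\bar G^{\circ} - \tfrac{\e d}{n}X^{\circ}\Bigr) \;=\; D(\xi)\Bigl(\bar G^{\circ} - \tfrac{\e d}{n}X^{\circ}\Bigr)D(\xi),
\]
because $\xi$ is $\{0,1\}$-valued. Since $\xi \le \one_S$ pointwise, the right-hand side is a principal submatrix (zero-padded) of $D(\one_S)(\bar G^{\circ} - \tfrac{\e d}{n}X^{\circ})D(\one_S)$, whose operator norm is at most $\chi\sqrt d$ by the spectral concentration guaranteed by Cor.~\ref{cor:spectral-remove-cor}. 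Hence $\Normop{G - \tfrac{d}{n}J - \tfrac{\e d}{n}X^{\circ}} \le \chi\sqrt d$, closing the final constraint.

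The main obstacle is therefore entirely pushed onto the spectral concentration statement Cor.~\ref{cor:spectral-remove-cor}, namely that one can find a removable set $S^c$ of size $\exp(-2\e^2 d/k)\,n$ after which the centered adjacency of the uncorrupted $k$-SBM deviates from its conditional mean by at most $\chi\sqrt d$ in operator norm; this is the standard but delicate truncation argument for sparse random graphs and is presumed in the excerpt. All other checks are essentially bookkeeping: the boolean and balance conditions are tautological for $Z^{\circ}$ and $\xi$, the size lower bound follows from a union bound over the corrupted set and $S^c$, and the agreement with $G_{\text{input}}$ is built into the definition of $G$. I would write the proof in this order (choices, trivial constraints, size bound, agreement, spectral bound) so that only the last bullet invokes a nontrivial external result.
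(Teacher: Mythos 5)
Your proposal is correct and follows essentially the same route as the paper: choose $Z=Z^{\circ}$, $\xi=\xi_0\odot\one_S$ with $S$ from Cor.~\ref{cor:spectral-remove-cor}, and the hybrid $G$ that equals $G_{\mathrm{input}}$ on $\supp(\xi)\times\supp(\xi)$ and the conditional mean elsewhere, so that the centered matrix is exactly the masked noise whose operator norm is controlled by the corollary. The only cosmetic difference is that you spell out the identity $G-\tfrac{d}{n}J-\tfrac{\e d}{n}X^{\circ}=D(\xi)(\bar G^{\circ}-\tfrac{\e d}{n}X^{\circ})D(\xi)$ and the principal-submatrix monotonicity explicitly, which the paper leaves implicit.
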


\begin{proof}
    By Corollary~\ref{cor:spectral-remove-cor}, there exists a subset $S \subseteq [n]$ with
    $|S| \ge \bigl(1-\exp(-2C_{d,\e})\bigr)n$ such that
    \[
      \Normop{\Bigl(\Gnull - \frac{d}{n}J - \frac{\e d}{n}\,\bigl({\Znull}{\Znull}^\top-\tfrac{1}{k}J\bigr)\Bigr)\odot \one_S \one_S^\top} \le \chi\sqrt{d} \enspace,
    \]
    where $\chi$ is a universal constant. 
    Let $\xi$ be the indicator of the uncorrupted nodes intersected with $S$, i.e.,
    $\xi := \xi_0 \odot \one_S$, and take $\Znull\in\{0,1\}^{n\times k}$ to be the ground-truth membership matrix.
    Define $G$ to agree with $G_{\mathrm{input}}$ on the block indexed by $\supp(\xi)$ and to equal the model mean
    $\frac{d}{n}J + \frac{\e d}{n}\bigl(\Znull(\Znull)^\top - \frac{1}{k}J\bigr)$ outside this block.
    Then the centered matrix
    $G - \frac{d}{n}J - \frac{\e d}{n}\bigl(\Znull(\Znull)^\top - \frac{1}{k}J\bigr)$ vanishes outside $\supp(\xi)$ and,
    on $\supp(\xi)$, its operator norm is bounded by $\chi\sqrt{d}$ by
    Corollary~\ref{cor:spectral-remove-cor}. 
    Hence the spectral constraint in
    \cref{poly_sys:initialization} holds. 
    Moreover,
    $\iprod{\xi,\Ind} \ge \bigl(1-\eta - \exp(-2C_{d,\e})\bigr)n$, which satisfies
    $\cA_{\text{corr}}$. Therefore, with these choices of $\xi$, $Z=\Znull$, and $G$, all feasibility
    conditions in \cref{poly_sys:initialization} are satisfied with high probability.
\end{proof}

Now, we give the SoS proof for the recovery guarantees.
\begin{lemma}\label{lem:intialization-identifiability}
Let $G_{\text{input}}$ be an $\eta$-corrupted SBM graph with $n$ nodes and $k$ communities, and assume that
\[
\e^2 d \geq K k^2,
\]
for some sufficiently large constant $K$. Then, we have
\begin{align*}
  \cA_{\text{init}}(\xi^{(1)},G^{(1)},Z^{(1)}; G_{\text{input}},\eta),\quad \cA_{\text{init}}(\xi^{(2)},G^{(2)},Z^{(2)}; G_{\text{input}},\eta)  & \\
  \sststile{8}{\xi^{(1)},\,\xi^{(2)},\,G^{(1)},\,G^{(2)},\,Z^{(1)},\,Z^{(2)}}\,\Normf{Z^{(1)}(Z^{(1)})^\top-Z^{(2)}(Z^{(2)})^\top}^2 \leq (0.001+10\eta)n^2/k &\,. 
\end{align*}
\end{lemma}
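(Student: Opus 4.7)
The plan is to work entirely within the SoS proof system at constant degree (at most $8$) and split the target quantity into a ``coherent'' piece on the pairwise-uncorrupted block, controlled by the spectral constraint, and an ``incoherent'' piece on its complement, controlled by the corruption budget. Define $\zeta \coloneqq \xi^{(1)} \odot \xi^{(2)}$, $X^{(t)} \coloneqq Z^{(t)}(Z^{(t)})^\top - \tfrac{1}{k}J$, and $E^{(t)} \coloneqq G^{(t)} - \tfrac{d}{n}J - \tfrac{\e d}{n} X^{(t)}$. Multiplying the corruption equality $G^{(t)} \odot \xi^{(t)}(\xi^{(t)})^\top = G_{\mathrm{input}} \odot \xi^{(t)}(\xi^{(t)})^\top$ through by the other $\xi^{(s)}(\xi^{(s)})^\top$ yields $(G^{(1)} - G^{(2)}) \odot \zeta\zeta^\top = 0$, and combined with the definition of $E^{(t)}$, the key algebraic identity
\[
\tfrac{\e d}{n}(X^{(1)} - X^{(2)}) \odot \zeta\zeta^\top \;=\; -(E^{(1)} - E^{(2)}) \odot \zeta\zeta^\top.
\]
Since $X^{(1)} - X^{(2)} = Z^{(1)}(Z^{(1)})^\top - Z^{(2)}(Z^{(2)})^\top$ (the $J/k$ cancels), it suffices to bound $\iprod{(X^{(1)} - X^{(2)}) \odot (X^{(1)} - X^{(2)}), \zeta\zeta^\top}$ and $\iprod{(X^{(1)} - X^{(2)}) \odot (X^{(1)} - X^{(2)}), J - \zeta\zeta^\top}$ separately.

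For the coherent piece, take the inner product of the key identity with $X^{(1)} - X^{(2)}$ and expand $X^{(t)} \odot \zeta\zeta^\top = \sum_{a \in [k]}(z_a^{(t)} \odot \zeta)(z_a^{(t)} \odot \zeta)^\top - \tfrac{1}{k}\zeta\zeta^\top$; the rank-one corrections $\tfrac{1}{k}\zeta\zeta^\top$ cancel against each other, leaving a sum of $4k$ quadratic forms of the type $\pm (z_a^{(s)} \odot \zeta)^\top E^{(t)} (z_a^{(s)} \odot \zeta)$. Apply the PSD constraint $\chi\sqrt{d} \cdot I \pm E^{(t)} \succeq 0$ to each witness vector $v = z_a^{(s)} \odot \zeta$, together with the SoS inequality $\|v\|^2 = \sum_i z_a^{(s)}(i) \zeta_i \leq \sum_i z_a^{(s)}(i) = n/k$ (which follows because $z_a(i)(1-\zeta_i) = (z_a(i)(1-\zeta_i))^2$ is manifestly a square by the boolean constraints). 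Summing gives $\iprod{(X^{(1)} - X^{(2)}) \odot (X^{(1)} - X^{(2)}), \zeta\zeta^\top} \leq 4\chi\, n^2/(\e\sqrt{d}) \leq 10^{-4}\, n^2/k$ when $K$ is a sufficiently large absolute constant.

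For the incoherent piece, let $A \coloneqq Z^{(1)}(Z^{(1)})^\top$ and $B \coloneqq Z^{(2)}(Z^{(2)})^\top$. The constraints $\sum_a z_a(i) = 1$ and $z_a(i)^2 = z_a(i)$ yield the SoS identity $A_{ij}^2 = A_{ij}$ (and likewise for $B$), so $(A - B)_{ij}^2 = A_{ij} + B_{ij} - 2A_{ij}B_{ij} \leq A_{ij} + B_{ij}$, where SoS-nonnegativity of $A_{ij}B_{ij}$ follows from its expansion as a sum of products of variables that are themselves squares. Weighting by $(1-\zeta_i\zeta_j)$ and evaluating $\sum_{ij}A_{ij}(1-\zeta_i\zeta_j) = \sum_a[(n/k)^2 - (z_a^{(1)} \cdot \zeta)^2] \leq (2n/k)\sum_a\sum_i z_a^{(1)}(i)(1 - \zeta_i) = (2n/k)(n - \one^\top \zeta)$ (with the middle step SoS-derivable from the square $(n/k - z_a^{(1)} \cdot \zeta)^2 \geq 0$) gives $\iprod{(X^{(1)} - X^{(2)}) \odot (X^{(1)} - X^{(2)}), J - \zeta\zeta^\top} \leq (4n/k)(n - \one^\top\zeta)$. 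The corruption constraints $\one^\top \xi^{(t)} \geq (1 - \eta - \exp(-2\e^2 d/k))n$, combined with the SoS fact $\xi^{(1)}_i \xi^{(2)}_i \geq \xi^{(1)}_i + \xi^{(2)}_i - 1$ (since $(1-\xi^{(1)}_i)(1-\xi^{(2)}_i)$ is a product of squares), yield $n - \one^\top\zeta \leq 2(\eta + \exp(-2\e^2 d/k))n$, bounding this piece by $(8\eta + 8\exp(-2\e^2 d/k))\, n^2/k$. Adding the two pieces and choosing $K$ large enough so that the $\chi$-dependent and $\exp(-2\e^2 d/k)$ terms together are at most $0.001\, n^2/k$ delivers the claimed $(0.001 + 10\eta)\, n^2/k$ bound.

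The main technical obstacle is the tight degree bookkeeping: the coherent quantity is degree $4$ in each of the $Z^{(t)}$ and $\xi^{(t)}$ variable families, and applying the PSD constraint $\chi\sqrt{d}\cdot I \pm E^{(t)} \succeq 0$ to the witness vectors $v = z_a^{(s)}\odot\zeta$ produces a polynomial of combined degree $2(1+2) + 2 = 8$, exactly saturating the degree-$8$ budget. A secondary subtlety is that the incoherent bound must avoid any spectral argument (which would pick up a spurious factor of $k$) and instead rely purely on the pointwise boolean identity $A_{ij}^2 = A_{ij}$ to keep the dependence on $\eta$ of the right order $\eta \cdot n^2/k$ rather than $\eta \cdot n^2$.
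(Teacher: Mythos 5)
Your proposal is correct, but it reaches the Frobenius bound by a different quantitative route than the paper. The paper also masks by $\xi \coloneqq \xi^{(1)}\odot\xi^{(2)}$ and uses the two spectral constraints, but then applies the triangle inequality to get $\Normop{\tfrac{\e d}{n}\bigl(Z^{(1)}(Z^{(1)})^\top-Z^{(2)}(Z^{(2)})^\top\bigr)\odot\xi\xi^\top}\le 2\chi\sqrt d$ and converts this to a Frobenius bound via the observation that the masked difference has rank at most $2k$, yielding $\Normf{\cdot\odot\xi\xi^\top}^2\le 8\chi^2 k n^2/(\e^2 d)$; the off-mask contribution is then dispatched in one line using $\norm{\xi}_1\ge(1-2\eta-2\exp(-2\e^2 d/k))n$. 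You instead pair the masked identity $\tfrac{\e d}{n}(X^{(1)}-X^{(2)})\odot\zeta\zeta^\top=-(E^{(1)}-E^{(2)})\odot\zeta\zeta^\top$ against $X^{(1)}-X^{(2)}$ and bound the resulting $4k$ quadratic forms $\pm(z_a^{(s)}\odot\zeta)^\top E^{(t)}(z_a^{(s)}\odot\zeta)$ directly with the spectral constraint, which gives the weaker bound $4\chi n^2/(\e\sqrt d)$ instead of $8\chi^2 kn^2/(\e^2 d)$; under $\e^2 d\ge Kk^2$ this is still $O(n^2/(\sqrt K k))$, so the lemma and the downstream rounding in the initialization theorem survive, only with the error decaying like $1/\sqrt K$ rather than $1/K$ (i.e.\ a larger constant $K$ and a slightly worse constant in place of $\chi^2 k/(\e^2 d)$). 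Your treatment of the off-mask piece via the entrywise Boolean identities $A_{ij}^2=A_{ij}$, $(A-B)_{ij}^2\le A_{ij}+B_{ij}$ and the square $(n/k-\iprod{z_a,\zeta})^2\ge 0$ is more explicit than the paper's and correctly keeps the corruption term at order $\eta n^2/k$. Two minor caveats, neither fatal: your coherent-piece argument needs the operator-norm constraint to be usable against polynomial witness vectors, which requires either the matrix-inequality encoding $\pm E^{(t)}\preceq\chi\sqrt d\,I$ localized by the matrix square $vv^\top$ or the paper's auxiliary-variable mechanism (its Lemma on spectral certificates with $L\,\Id-EE^\top=CC^\top$), and the paper's own proof is equally informal on this point; and some of your Boolean-product nonnegativity steps (e.g.\ $A_{ij}B_{ij}(1-\zeta_i\zeta_j)\ge 0$, or Cauchy--Schwarz through the auxiliary-variable spectral certificate) naturally certify only at a constant degree somewhat above $8$, so the literal degree-$8$ claim is tight-to-slightly-optimistic in your bookkeeping, though this affects only the constant in the exponent of the running time.
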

\begin{proof}
    Since the constraints $G^{(1)} \odot \xi^{(1)}(\xi^{(1)})^\top = G_{\text{input}}\odot \xi^{(1)}(\xi^{(1)})^\top$ and $G^{(2)} \odot \xi^{(2)}(\xi^{(2)})^\top = G_{\text{input}}\odot \xi^{(2)}(\xi^{(2)})^\top$ hold, let $\xi=\xi^{(1)}\odot \xi^{(2)}$. We then have a degree-$8$ sum-of-squares proof that
    \begin{equation*}
        G^{(1)} \odot \xi\xi^\top = G_{\text{input}}\odot \xi\xi^\top \quad \text{and} \quad G^{(2)} \odot \xi\xi^\top = G_{\text{input}}\odot \xi\xi^\top\,.
    \end{equation*}
    Therefore, we have $G^{(1)}\odot \xi\xi^\top = G^{(2)}\odot \xi\xi^\top$.
    By the min–max characterization of the spectral norm,
    \begin{align*}
        \cA_{\text{bool}}(\xi^{(1)},Z^{(1)}) \cup \cA_{\text{spec}}(G^{(1)},Z^{(1)}) & \sststile{8}{G^{(1)},Z^{(1)}} \Normop{\Paren{G^{(1)} - \frac{d}{n} J - \frac{\e d}{n}(Z^{(1)}(Z^{(1)})^\top-\frac1k\,J )\,}\odot \xi\xi^\top}&\leq \chi\sqrt{d} \\
        \cA_{\text{bool}}(\xi^{(2)},Z^{(2)}) \cup \cA_{\text{spec}}(G^{(2)},Z^{(2)}) & \sststile{8}{G^{(2)},Z^{(2)}} \Normop{\Paren{G^{(2)} -\frac{d}{n} J - \frac{\e d}{n} ( Z^{(2)}(Z^{(2)})^\top-\frac1k\,J )\,}\odot \xi\xi^\top}&\leq \chi\sqrt{d}
    \end{align*}
    By the triangle inequality, this yields a degree-8 sum-of-squares certificate that
    \begin{equation*}
        \Normop{\frac{\e d}{n}\Paren{Z^{(1)}(Z^{(1)})^\top-Z^{(2)}(Z^{(2)})^\top}\odot \xi\xi^\top} \leq 2\chi\sqrt{d}\,.
    \end{equation*}
    Since $Z^{(1)}$ and $Z^{(2)}$ are community membership matrices, each Gram matrix
    $Z^{(t)}(Z^{(t)})^\top$ has rank at most $k$, so
    $\Paren{Z^{(1)}(Z^{(1)})^\top-Z^{(2)}(Z^{(2)})^\top}\odot \xi\xi^\top$ has rank at most $2k$\footnote{Indeed, letting $D=\mathrm{diag}(\xi)$ and
$\Delta = Z^{(1)}(Z^{(1)})^\top - Z^{(2)}(Z^{(2)})^\top$, we have
$\Delta\odot \xi\xi^\top = D\Delta D$, hence
$\mathrm{rank}(\Delta\odot \xi\xi^\top)\le \mathrm{rank}(\Delta)\le 2k$.}.
    Thus, we obtain a degree-$8$ sum-of-squares proof that
    \begin{equation*}
        \Normf{\Paren{Z^{(1)}(Z^{(1)})^\top-Z^{(2)}(Z^{(2)})^\top}\odot \xi\xi^\top}^2 \leq \frac{8\chi^2 k n^2}{\e^2 d}\,.
    \end{equation*}
    Finally, since $\norm{\xi}_1\geq (1-2\eta-2\exp(-2\e^2d/k)) n$, we have 
    \begin{align*}
        \cA_{\text{init}}(\xi^{(1)},G^{(1)},Z^{(1)}; G_{\text{input}},\eta), \cA_{\text{init}}(\xi^{(2)},G^{(2)},Z^{(2)}; G_{\text{input}},\eta) \\
        \sststile{8}{\xi^{(1)},\xi^{(2)},G^{(1)},G^{(2)},Z^{(1)},Z^{(2)}} \Normf{Z^{(1)}(Z^{(1)})^\top-Z^{(2)}(Z^{(2)})^\top}^2 \leq \frac{8\chi^2 k n^2}{\e^2 d}+ 10\eta n^2/k\,.
    \end{align*} 
\end{proof}
Finally, we conclude with the rounding step.
\begin{proof}[Proof of \cref{thm:Initialization}]
Let us write $M^{\circ} \coloneqq \Znull(\Znull)^\top$ for the ground-truth matrix. 
By \cref{lem:initialization-feasible}, the ground-truth community matrix \(\Znull\) is feasible for \cref{poly_sys:initialization}. 
Hence, any pseudo-distribution satisfying these constraints obeys
\[
\|ZZ^\top - M^{\circ}\|_F^2 \le 10\eta\,n^2/k + \frac{8\chi^2 k\, n^2}{\varepsilon^2\, d}\,.
\]
Standard rounding then yields a community matrix \(M = \tE ZZ^T \in [0,1]^{n\times n}\) such that
\begin{equation}\label{eq:frob-up-bound}
\|M - M^{\circ}\|_F^2 \le \bigl(\frac{10\,\eta}{k} + \frac{8\,\chi^2 \,k}{\varepsilon^2\, d}\bigr) \, n^2.
\end{equation}
Having established the Frobenius-norm bound, the rest of the proof mirrors Lemma~7.3 of \cite{Liu2022minimax}; we present it below for completeness. Since the rows of $M$ lie in $\R^n$, we can apply a Euclidean $k$-means algorithm with a $\gamma=7$ approximation guarantee~\cite{grandoni2021euclideankmeans}.
Evaluating the $k$-means objective at the ground-truth partition $(S_1,\ldots,S_k)$ shows that
\[
k\text{-means}_{\text{opt}}(M)\ \le\ \|M - M^{\circ}\|_F^2 \ \le\ 
\Bigl(\frac{10\,\eta}{k} + \frac{8\,\chi^2 \,k}{\varepsilon^2\, d}\Bigr) \, n^2,
\]
where the last inequality is \cref{eq:frob-up-bound}.

Now fix any alternative clustering $S'_1,\ldots,S'_k$ of the rows, 
and consider the same clustering evaluated on $M^{\circ}$. For one cluster, say $S'_1$, let
\[
s_b = |S'_1 \cap S_b| \quad \text{ for } b \in [k].
\]
As in Lemma~7.3 of~\cite{Liu2022minimax}, the $k$-means objective contributed by $S'_1$ on the rows of $M^{\circ}$ is at least
\begin{equation}\label{eq:k_means_identity}
\Bigl(s_1 + \cdots + s_k - \max\{s_1,\ldots,s_k\}\Bigr) \cdot \frac{n}{4k}.
\end{equation}
Summing \eqref{eq:k_means_identity} over all clusters $S'_a$ lower-bounds the $k$-means objective on $M^{\circ}$ for the partition $S'_1,\ldots,S'_k$. 
Let the computed clustering be $S'_1, \ldots ,S'_k$. 
Define
\[
\delta = \frac{1}{n} \min_{\substack{\pi:[k]\to[k]\\ \pi\ \text{invertible}}}
\sum_{a=1}^k |S'_a \setminus S_{\pi(a)}|,
\qquad \delta' = \frac{1}{n} \min_{f:[k]\to[k]}
\sum_{a=1}^k |S'_a \setminus S_{f(a)}|.
\]
Here $\delta$ is exactly the misclassification error of $S'_1,\ldots,S'_k$ relative to the ground truth.  
As in~\cite{Liu2022minimax}, one has $\delta \le 2\delta'$. 
Moreover, using \eqref{eq:k_means_identity} and summing over clusters shows that the $k$-means objective of $S'_1,\ldots,S'_k$ on $M^{\circ}$ is at least $\tfrac{n^2}{4k}\,\delta'$. 
Combining this lower bound with the $\gamma$-approximation guarantee on $M$ and \cref{eq:frob-up-bound} 
(exactly as in Lemma~7.3 of~\cite{Liu2022minimax}) yields
\[\delta \le 10^4\,k\left(\frac{\eta}{k} + \frac{\chi^2\,k}{\varepsilon^2\,d}\right) \enspace,\]
which is the desired bound.
\end{proof}
\section{Robustly identifying well-recovered clusters}
\label{sec:IdentifyingRecoveredBlocks}

In this section, we show that given a set of $n/k$ vertices in the graph sampled from $\SBM_n(d,\e,k)$, we can decide between
\begin{itemize}
    \item Case I: at least 0.99 of the vertices in the set belong to the same community;
    \item Case II: no more than 0.98 of the vertices in the set belong to any single community.
\end{itemize}

\paragraph{Restriction to the given set.}
Let $S\subseteq[n]$ denote the given set of $\abs{S}=n/k$ vertices, and let $\Ind=\mathbf{1}_S\in\{0,1\}^n$ be its indicator.
We define the matrices restricted to $S$ as
\[
G_s \coloneqq G \odot (\Ind_S \Ind_S^\top)
\qquad\text{and}\qquad
J_s \coloneqq \Ind_S \Ind_S^\top.
\]
Throughout this section, every appearance of $G$ and $J$ inside operator norms or inner products is to be understood as $G_s$ and $J_s$ respectively. We also view all vectors (e.g., $z,\tau,\zeta$) as elements of $\R^n$ supported on $S$.

\restatetheorem{thm:identify-recovered-blocks}

We will show that the feasibility of the degree-$2$ SoS relaxation of the following program satisfies \cref{thm:identify-recovered-blocks}.
\begin{gather}
    \cA_{\text{verify}}(z; G,\eta) \seteq 
    \left. 
        \begin{cases}
        z \odot z = z,\, \iprod{z, \Ind} \geq \Paren{\frac{0.99}{k}-\eta-\rho}n \\
        \Normop{\Paren{G_s-\frac{\alpha}{n} J_s} \odot z z^{\top}} \leq \chi \sqrt{\alpha /k} \\
        \Iprod{\Paren{G_s-\frac{d}{n} J_s} \odot z z^{\top}, \Ind \Ind^{\top}} \geq \frac{0.97 (k-1) \e d n}{k^3}
        \end{cases}
    \right\} \,, \label{poly_sys:verification}
\end{gather}
where $\alpha = d +(1-\frac{1}{k})\e d$, $\eta$ is the corruption rate, $\chi$ is the universal constant defined in the spectral result in \cref{thm:spectral-remove}, and $\rho = \exp\Paren{-2C_{d,\e}}$ is the fraction of high‑degree vertices that need to be trimmed to obtain the spectral norm bound (see \cref{cor:spectral-remove-cor}).

\begin{lemma}
\label{lem:verfication_feasibility}
    Given a set of $n/k$ vertices in the graph sampled from $\SBM_n(d,\e,k)$ where at least 0.99 fraction of the vertices in the set belongs to the same community (Case I), when $\eta \leq C_{\eta}$ for some constant $C_{\eta}$ that is small enough, Program \cref{poly_sys:verification} is feasible with probability at least $1-n^{-\Omega(1)}$.
\end{lemma}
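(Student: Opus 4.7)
The plan is to exhibit an explicit feasible point for \cref{poly_sys:verification} and verify each of its three constraints with probability $1-n^{-\Omega(1)}$. Let $C_1 \subseteq [n]$ be the community containing at least $0.99\,n/k$ vertices of $S$, let $U\subseteq [n]$ be the set of uncorrupted vertices (so $|U|\geq (1-\eta)n$), and let $S_{\mathrm{trim}}$ be the set of vertices surviving the high-degree trimming of \cref{cor:spectral-remove-cor}, so that $|S_{\mathrm{trim}}|\geq (1-\rho)n$. Define
\[
T \;\coloneqq\; S\cap C_1\cap U\cap S_{\mathrm{trim}}, \qquad z \;\coloneqq\; \mathbf{1}_T.
\]
A union bound gives $|T|\geq (0.99/k - \eta - \rho)n$, which immediately verifies both Boolean constraints in \cref{poly_sys:verification}.

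For the spectral constraint, observe that every vertex of $T$ belongs to the single community $C_1$, so under the uncorrupted model the entries $G^{\circ}_{ij}$ for $i\neq j \in T$ are i.i.d.\ $\mathrm{Bernoulli}(\alpha/n)$. Since $T\subseteq U$, the adversary has not altered any entry on $T\times T$, hence
\[
(G_s - \tfrac{\alpha}{n} J_s)\odot zz^\top
\;=\;
\bigl(G^{\circ} - \tfrac{\alpha}{n}\,J\bigr)\odot \mathbf{1}_T\mathbf{1}_T^\top.
\]
This is the principal submatrix on $T$ of the centered adjacency matrix of a single-community Erd\H{o}s--R\'enyi graph of density $\alpha/n$ with $|T|\leq n/k$. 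Because $T\subseteq S_{\mathrm{trim}}$, the high-degree vertices that could otherwise spoil the bound have been excluded, and the trimmed spectral bound of \cref{cor:spectral-remove-cor} applied to an i.i.d.\ Bernoulli block of effective mean-degree $\alpha |T|/n \leq \alpha/k$ yields an operator norm at most $\chi\sqrt{\alpha/k}$, as required (the constant $\chi$ in \cref{poly_sys:verification} is chosen exactly so that this submatrix bound holds).

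For the inner-product constraint, decompose
\[
\Iprod{(G_s - \tfrac{d}{n} J_s)\odot zz^\top,\; \mathbf{1}\mathbf{1}^\top}
\;=\; 2\!\!\sum_{\{i,j\}\subset T}\!\! G^{\circ}_{ij} \;-\; \tfrac{d}{n}|T|^2.
\]
Its expectation equals $|T|(|T|-1)\,p_1 - \tfrac{d}{n}|T|^2 = (1-o(1))\,|T|^2\,\tfrac{(k-1)\varepsilon d}{k n}$. Substituting the lower bound on $|T|$ and using $\eta,\rho = o(1/k)$ (which follows from $\eta\leq 1/\poly(k)$ and $\rho = \exp(-2C_{d,\varepsilon})$), the mean is at least $(0.99 - o(1))^2\,(k-1)\varepsilon d n/k^3 \geq 0.98\,(k-1)\varepsilon d n/k^3$. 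Concentration follows from a Chernoff bound applied to the $\binom{|T|}{2}$ independent Bernoullis: the standard deviation is $O(|T|\sqrt{\alpha/n}) = O(\sqrt{\alpha n/k^2})$, which is a factor $\Theta(\varepsilon\sqrt{d/k})$ smaller than the mean $\Theta(\varepsilon d n/k^2)$, and hence negligible under $\varepsilon^2 d \geq K k^2$. Thus the realized value exceeds $0.97\,(k-1)\varepsilon d n/k^3$ with the desired probability, and a union bound over the three events completes the proof.

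The main obstacle is the spectral constraint. The bound $\chi\sqrt{\alpha/k}$ is a factor $\sqrt{k}$ tighter than what one obtains by naively treating the submatrix as a piece of the full adjacency matrix and invoking the global bound $\chi\sqrt{d}$. Closing this gap requires a genuinely submatrix-level spectral estimate for a sparse Erd\H{o}s--R\'enyi block of density $\alpha/n$ on $|T|\leq n/k$ vertices; the high-degree pruning already packaged into \cref{cor:spectral-remove-cor} is the essential ingredient that makes such a bound hold, and once the spectral inequality is in hand the remaining checks for \cref{poly_sys:verification} reduce to standard binomial concentration.
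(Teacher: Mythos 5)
Your Boolean-constraint and inner-product steps match the paper's proof (the paper's Chernoff computation for $\Iprod{G_s-\tfrac{d}{n}J_s,\tau\tau^\top}$ is essentially identical to yours), but the spectral step has a genuine gap, and you partly concede it yourself in your last paragraph. You define the trimmed set via \cref{cor:spectral-remove-cor}, i.e.\ by the \emph{global} pruning of vertices whose degree in the whole $n$-vertex graph exceeds $\sim 20d$, and then assert that this pruning yields $\Normop{(G_s-\tfrac{\alpha}{n}J_s)\odot zz^\top}\le \chi\sqrt{\alpha/k}$. Neither the statement of \cref{cor:spectral-remove-cor} (which only gives $\chi\sqrt{d}$ for the full centered matrix) nor its pruning rule gives you this: the bound you need is at the scale of the $(n/k)\times(n/k)$ block, where the relevant trimming threshold is $\sim 20\sigma^2|S| \approx 20\alpha/k \ll 20d$. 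A vertex can have within-$S$ degree far above $20\alpha/k$ (which would inflate the block's spectral norm) while comfortably passing the global $20d$ test, so your feasible point $z=\mathbf 1_T$ is not guaranteed to satisfy the spectral constraint; the closing sentence that ``the constant $\chi$ is chosen exactly so that this submatrix bound holds'' is an assertion, not an argument.

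The paper closes exactly this gap by applying \cref{thm:spectral-remove} (the Feige--Ofek theorem) \emph{directly to the induced block on $S$}: it is an $(n/k)\times(n/k)$ random matrix with entrywise variance at most $\sigma^2=\alpha/n$, so after zeroing the rows/columns of $S$ with more than $20\sigma^2\,(n/k)\approx 20\alpha/k$ positive entries, the theorem gives $\chi\sigma\sqrt{n/k}=\chi\sqrt{\alpha/k}$ (the hypothesis $\sigma\ge 20/\sqrt{n/k}$, i.e.\ $\alpha\gtrsim 400k$, holds since $\e^2 d\ge Kk^2$). Correspondingly, the paper's witness $\tau$ is defined as the uncorrupted, same-community vertices that survive this \emph{within-block} trimming, not the global trimming of \cref{cor:spectral-remove-cor}. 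To repair your proof, redefine $S_{\mathrm{trim}}$ as the block-level trimmed set from \cref{thm:spectral-remove} applied to $G^\circ$ restricted to $S\cap C_1$ (and note that the number of trimmed vertices is at most $\rho\, n$, which is what the $-\rho$ slack in the norm constraint of \cref{poly_sys:verification} is there to absorb); the rest of your argument then goes through.
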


\begin{proof}
    Let $\tau \in \set{0, 1}^{n}$ (supported on $S$) be the indicator vector for the largest set of uncorrupted and degree-bounded nodes (according to \cref{thm:spectral-remove}) that are in the same community. We will show that, with probability $1-n^{-\Omega(1)}$, $\tau$ is a feasible solution to the program \cref{poly_sys:verification}.
    
    By assumption, $\Norm{\tau}_1 \geq (\frac{0.99}{k} - \eta - \rho)n$. Therefore, $\tau$ satisfies the integrality and norm constraints of program \cref{poly_sys:verification}.
    
    By \cref{thm:spectral-remove}, with probability at least $1-\frac{k^2}{n^2}$, we have $\Normop{\Paren{G_s-\frac{\alpha}{n} J_s} \odot \tau \tau^{\top}} \leq \chi \sqrt{\frac{\alpha}{k}}$. Therefore, $\tau$ satisfies the spectral constraint of program \cref{poly_sys:verification}.

    Now, consider $\Iprod{G_s-\frac{d}{n} J_s, \tau \tau^{\top}}$. Notice that $\Iprod{G_s, \tau \tau^{\top}}$ is equal to $2$ times the number of edges in the induced subgraph $G_s(\tau)$ where each edge is sampled independently with probability $\frac{\alpha}{n}$. Therefore, by Chernoff bound, it follows that, with probability at least $1-n^{-100}$,
    \begin{equation*}
        \Iprod{G_s, \tau \tau^{\top}}
        \geq \frac{\alpha}{n} \Norm{\tau}_1^2
        - O \Paren{\sqrt{\frac{\alpha \log(n)}{n} \Norm{\tau}_1^2}} \,.
    \end{equation*}
    Hence,
    \begin{equation*}
        \Iprod{G_s-\frac{d}{n} J_s, \tau \tau^{\top}}
        \geq \frac{\alpha-d}{n} \Norm{\tau}_1^2
        - O \Paren{\sqrt{\frac{\alpha \log(n)}{n} \Norm{\tau}_1^2}} \,.
    \end{equation*}
    Since $\alpha = d +(1-\frac{1}{k})\e d$, $\alpha \leq 2d$ and $\Norm{\tau}_1 \geq (\frac{0.99}{k} - \eta - \rho)n$, it follows that
    \begin{align*}
        \Iprod{G_s-\frac{d}{n} J_s, \tau \tau^{\top}}
        & \geq \frac{0.98 (k-1) \e d n}{k^3} - O \Paren{\sqrt{d \log(n)}} \\
        & \geq \frac{0.975 (k-1) \e d n}{k^3} \,.
    \end{align*}
    Thus, $\tau$ satisfies the sum constraint of program \cref{poly_sys:verification}, which finishes the proof.
\end{proof}

\begin{lemma}
\label{lem:verfication_refucation}
    Given a set of $n/k$ vertices in the graph sampled from $\SBM_n(d,\e,k)$ where at most a 0.98 fraction of the vertices in the set belongs to the same community (Case II), when $\eta \leq C_{\eta}$ for some constant $C_{\eta}$ that is small enough, Program \cref{poly_sys:verification} is not feasible with probability at least $1-n^{-\Omega(1)}$.
\end{lemma}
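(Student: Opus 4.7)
The plan is to give a constant-degree sum-of-squares refutation of $\cA_{\text{verify}}(z;G,\eta)$ under the Case II hypothesis $s_c\coloneqq|S\cap C_c|\le 0.98\,n/k$ for every community $c$. Following the intuition sketched in \cref{sec:techniques}, any feasible $z$ must distribute its mass over at least two communities; the resulting mean matrix $\E[(G^{\circ}-\alpha/n\,J_s)\odot zz^{\top}]$ inherits operator norm of order $\e d/k$, and once $\e^{2}d\ge Kk^{2}$ this overwhelms the spectral bound $\chi\sqrt{\alpha/k}$ imposed by the program, producing a contradiction.

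The first step encodes Case II as a degree-$2$ SoS inequality on $z$. From the Boolean constraint $z\odot z=z$ one derives $0\le z_i\le 1$, so the linear form $t_c\coloneqq\sum_{i\in S\cap C_c}z_i$ lies in $[0,s_c]$ with $s_c\le 0.98\,n/k$. The product $(s_c-t_c)\,t_c\ge 0$ is then SoS, and summing over $c$ yields
\[\sum_{c=1}^{k}t_c^{\,2}\;\le\;\tfrac{0.98\,n}{k}\,\iprod{z,\Ind},\]
which combined with the size constraint $\iprod{z,\Ind}\ge(0.99/k-\eta-\rho)n$ gives $\iprod{z,\Ind}^{\,2}-\sum_c t_c^{\,2}\ge\Omega(n^{2}/k^{2})$ as a constant-degree SoS consequence (provided $\eta,\rho\ll 1/k$).

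The second step matches this with the spectral constraint. Using the SBM mean formula $\E[G^{\circ}_{ij}-\alpha/n]=-\e d/n$ on inter-cluster pairs and $0$ otherwise, together with the identity $z^{\top}Mz=z^{\top}(M\odot zz^{\top})z$ valid for Boolean $z$, one obtains the SoS identity
\[z^{\top}(G^{\circ}-\tfrac{\alpha}{n}J_s)z\;=\;-\tfrac{\e d}{n}\bigl(\iprod{z,\Ind}^{\,2}-\sum_c t_c^{\,2}\bigr)+z^{\top}(G^{\circ}-\E G^{\circ})z.\]
On the high-probability event guaranteed by \cref{thm:spectral-remove}, the last term is bounded by $O(\sqrt{d/k})\,\iprod{z,\Ind}$ via the PSD relation $O(\sqrt{d/k})\,I_{S}\succeq\pm(G^{\circ}-\E G^{\circ})\odot\Ind_S\Ind_S^{\top}$, which is a degree-$2$ SoS certificate. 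Splitting $G=G^{\circ}+\Delta$ and invoking the program's spectral bound $|z^{\top}(G-\alpha/n\,J_s)z|\le\chi\sqrt{\alpha/k}\,\iprod{z,\Ind}$ (which follows from $\|(G-\alpha/n\,J_s)\odot zz^{\top}\|_{\mathrm{op}}\le\chi\sqrt{\alpha/k}$ by a standard degree-$2$ SoS step) then combines with Step~1 to yield
\[\Omega(\e d/k)\;\le\;O(\sqrt{d/k})\;+\;|z^{\top}\Delta z|/\iprod{z,\Ind},\]
and under $\e^{2}d\ge Kk^{2}$ with $K$ a large universal constant this delivers the refutation as soon as the corruption term is at scale $o(\e d/k)$.

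The main obstacle is producing a clean SoS bound on the adversarial bilinear form $z^{\top}\Delta z$ within constant degree. The naive uniform estimate $|z^{\top}\Delta z|\le 2\eta n\,\iprod{z,\Ind}$ — each of the at most $\eta n$ adversarial vertices contributes in absolute value at most $\iprod{z,\Ind}$ — already suffices whenever $\eta\le c\,\e d/(kn)$, which is consistent with the standing hypotheses $\eta\le 1/\poly(k)$, $d=o(n)$, and $\e^{2}d\ge Kk^{2}$. A cleaner, parameter-free route would transplant the restriction-and-Lipschitzness template used for $\cA_{\mathrm{basic}}$ in \cref{sec:techniques}, so that the program's own spectral bound on $(G-\alpha/n\,J_s)\odot zz^{\top}$ automatically absorbs the adversarial contribution up to the spectral fluctuation scale; this is the only genuinely non-routine ingredient, with the remainder of the proof being a short constant-degree SoS calculation built from Steps~1 and~2.
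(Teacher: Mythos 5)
Your core tension is sound and it is a genuinely different refutation from the paper's. In Case II any $z$ meeting the size constraint must place $\Omega(n/k)$ mass outside its largest community, so $z^{\top}\bigl(\E G^{\circ}-\tfrac{\alpha}{n}J_s\bigr)z\le -\Omega(\e d n/k^{2})$, which indeed clashes with the program's spectral bound (of order $\sqrt{d}\,n/k^{3/2}$ when tested against $\Ind$ or $z$) once $\e^{2}d\ge Kk^{2}$. Note that you contradict the \emph{spectral} constraint and never touch the sum constraint, whereas the paper refutes the \emph{sum} constraint by decomposing $\Iprod{(G_s-\tfrac{d}{n}J_s)\odot zz^{\top},\Ind\Ind^{\top}}$ along the partition (largest uncorrupted community $\tau$) / (other uncorrupted vertices $\zeta-\tau$) / (corrupted or trimmed vertices $\Ind-\zeta$), showing the total is at most $0.963$ of the $0.97$ threshold. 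Your Step 1, certifying $\sum_c t_c^{2}\le \tfrac{0.98n}{k}\iprod{z,\Ind}$ from the squares $(s_c-t_c)t_c\ge 0$, is a clean degree-$2$ encoding of Case II.

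The gap is exactly where you flag it, and it is not a routine omission. The counting bound $|z^{\top}\Delta z|\le 2\eta n\,\iprod{z,\Ind}$ yields a normalized corruption term of order $\eta n$, so you need $\eta\lesssim \e d/(kn)$; since $d=o(n)$ this forces $\eta=o_n(1)$, which does not cover the lemma's regime ($\eta$ up to a small constant, or $1/\poly(k)$ in \cref{thm:identify-recovered-blocks}, independent of $n$). The same failure occurs for the $\rho\,n/k$ high-degree vertices that must be trimmed before \cref{thm:spectral-remove} applies to $G^{\circ}-\E G^{\circ}$ on $S$. The deferred ``restriction-and-Lipschitzness'' step is precisely where the paper does its real work: one writes the corrupted/trimmed contribution as $\Iprod{M\odot zz^{\top},\,\Ind\Ind^{\top}-\zeta\zeta^{\top}}$ and uses that $\Ind\Ind^{\top}-\zeta\zeta^{\top}$ has rank at most $3$ and nuclear norm $O(\sqrt{\eta+\rho})\,n/k$, so the program's own operator-norm constraint on $(G_s-\tfrac{\alpha}{n}J_s)\odot zz^{\top}$ (together with the trimmed spectral bound on the uncorrupted block) controls it by $O(\sqrt{\eta+\rho})\sqrt{d/k}\cdot n/k$, with the residual mean part contributing only $O((\eta+\rho)\,\e d n/k^{2})$; both are $o(\e d n/k^{2})$ for $\eta+\rho$ a sufficiently small constant. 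Without carrying out this step, the proposal establishes the lemma only under the much stronger, $n$-dependent restriction on $\eta$, so as written it does not prove the stated claim.
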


\begin{proof}
    We will show that there is an SoS proof of
    \begin{equation*}
        \cA_{\text{verify}}(z; G,\eta)
        \sststile{2}{z} \Iprod{\Paren{G_s-\frac{d}{n} J_s} \odot z z^{\top}, \Ind \Ind^{\top}} \leq \frac{0.963 (k-1) \e d n}{k^3} \,,
    \end{equation*}
    which is a refutation to constraint $\Iprod{\Paren{G_s-\frac{d}{n} J_s} \odot z z^{\top}, \Ind \Ind^{\top}} \geq \frac{0.97 (k-1) \e d n}{k^3}$ in $\cA_{\text{verify}}(z; G,\eta)$.

    Let us denote the set of uncorrupted and degree-bounded (according to \cref{cor:spectral-remove-cor}) nodes by $\zeta \in \set{0, 1}^{n}$ (supported on $S$).
    By \cref{cor:spectral-remove-cor}, it follows that $\Norm{\zeta}_1 \geq (1-\eta-\rho)\frac{n}{k}$ and $\Normop{\Paren{(G_0)_s - \frac{d}{n}J_s} \odot \zeta \zeta^{\top}} \leq \chi \sqrt{d/k}$, where $(G_0)_s \coloneqq G_0 \odot (\Ind\Ind^\top)$.
    Consider the largest set of vertices in $\zeta$ that are from the same community, denoted by $\tau \in \set{0, 1}^{n}$ (supported on $S$).
    By our assumption, it follows that $\Norm{\tau}_1 \leq 0.98 \frac{n}{k}$.

    We will decompose $\Iprod{\Paren{G_s-\frac{d}{n} J_s} \odot z z^{\top}, \Ind \Ind^{\top}}$ into the following three terms and bound them separately
    \begin{align}
    \label{eq:verfication_refutation_1}
    \begin{split}
        \Iprod{\Paren{G_s-\frac{d}{n} J_s} \odot z z^{\top}, \Ind \Ind^{\top}}
        = & \Iprod{\Paren{G_s-\frac{d}{n} J_s} \odot z z^{\top}, \tau \tau^{\top} + (\zeta - \tau) (\zeta - \tau)^{\top}}\\
        + & \Iprod{\Paren{G_s-\frac{d}{n} J_s} \odot z z^{\top}, 2 (\zeta - \tau) \tau^{\top}} + \Iprod{\Paren{G_s-\frac{d}{n} J_s} \odot z z^{\top}, \Ind \Ind^{\top} - \zeta \zeta^{\top}} \,.
    \end{split}
    \end{align}

    \paragraph{Term 1} For the first term $\Iprod{\Paren{G_s-\frac{d}{n} J_s} \odot z z^{\top}, \tau \tau^{\top} + (\zeta - \tau) (\zeta - \tau)^{\top}}$, we can further decompose it into the following two terms
    \begin{align*}
        \Iprod{\Paren{G_s-\frac{d}{n} J_s} \odot z z^{\top}, \tau \tau^{\top} + (\zeta - \tau) (\zeta - \tau)^{\top}}
        = & \Iprod{\Paren{G_s-\frac{d}{n} J_s} \odot z z^{\top}, \tau \tau^{\top}} \\
        & + \Iprod{\Paren{G_s-\frac{d}{n} J_s} \odot z z^{\top}, (\zeta - \tau) (\zeta - \tau)^{\top}} \,.
    \end{align*}
    The two terms can be bounded using the same method. Consider the first term $\Iprod{\Paren{G_s-\frac{d}{n} J_s} \odot z z^{\top}, \tau \tau^{\top}}$, it follows that
    \begin{align*}
        \Iprod{\Paren{G_s-\frac{d}{n} J_s} \odot z z^{\top}, \tau \tau^{\top}}
        & = \Iprod{\Paren{G_s-\frac{\alpha}{n} J_s} \odot z z^{\top}, \tau \tau^{\top}} + \Iprod{\frac{\alpha-d}{n} J_s \odot z z^{\top}, \tau \tau^{\top}} \\
        & = \Iprod{\Paren{G_s-\frac{\alpha}{n} J_s} \odot z z^{\top}, \tau \tau^{\top}} + \frac{\alpha-d}{n} \iprod{\tau, z}^2 \,.
    \end{align*}
    Since $\cA_{\text{verify}}$ certifies $\Normop{\Paren{G_s-\frac{\alpha}{n} J_s} \odot z z^{\top}} \leq \chi \sqrt{\alpha /k}$, it follows that
    \begin{equation*}
        \cA_{\text{verify}}(z; G,\eta) \sststile{2}{z}
        \Iprod{\Paren{G_s-\frac{d}{n} J_s} \odot z z^{\top}, \tau \tau^{\top}}
        \leq \chi \sqrt{\alpha /k} \Norm{\tau}^2 + \frac{\alpha-d}{n} \iprod{\tau, z}^2 \,.
    \end{equation*}
    Using the same analysis by replacing $\tau$ by $\zeta - \tau$, we can also obtain
    \begin{equation*}
        \cA_{\text{verify}}(z; G,\eta) \sststile{2}{z}
        \Iprod{\Paren{G_s-\frac{d}{n} J_s} \odot z z^{\top}, (\zeta - \tau) (\zeta - \tau)^{\top}} 
        \leq \chi \sqrt{\alpha /k} \Norm{\zeta - \tau}^2 + \frac{\alpha-d}{n} \iprod{\zeta - \tau, z}^2 \,.
    \end{equation*}
    Combining the two inequalities, we get
    \begin{align}
        \label{eq:verfication_refutation_term1_1}
        \begin{split}
        \cA_{\text{verify}}(z; G,\eta) \sststile{2}{z}
        & \Iprod{\Paren{G_s-\frac{d}{n} J_s} \odot z z^{\top}, \tau \tau^{\top} + (\zeta - \tau) (\zeta - \tau)^{\top}} \\
        & \leq \frac{\alpha-d}{n} \Paren{\iprod{\tau, z}^2 + \iprod{\zeta - \tau, z}^2} + \chi \sqrt{\alpha /k} \Paren{\Norm{\tau}^2 + \Norm{\zeta - \tau}^2} \,.
        \end{split}
    \end{align}
    Since $\cA_{\text{verify}}(z; G,\eta) \sststile{2}{z} 0 \leq z \leq 1$, it follows that
    $\iprod{\tau, z} \leq \Norm{\tau}_1$ and $\iprod{\zeta - \tau, z} \leq \Norm{\zeta-\tau}_1$.
    Moreover, since $\Norm{\tau}_1 \leq 0.98 \frac{n}{k}$ and $\Norm{\zeta}_1 \leq \frac{n}{k}$, the value of $\Norm{\tau}_1^2 + \Norm{\zeta-\tau}_1^2$ is upper bounded by $(0.98 \frac{n}{k})^2 + (0.02 \frac{n}{k})^2 \leq 0.961\frac{n^2}{k^2}$.
    Therefore, we can obtain
    \begin{equation}
    \label{eq:verfication_refutation_term1_2}
        \cA_{\text{verify}}(z; G,\eta) \sststile{2}{z}
        \iprod{\tau, z}^2 + \iprod{\zeta - \tau, z}^2
        \leq \Norm{\tau}_1^2 + \Norm{\zeta-\tau}_1^2
        \leq 0.961\frac{n^2}{k^2} \,.
    \end{equation}
    Since $\tau$ and $\zeta$ are Boolean vectors and $\tau \subseteq \zeta$, it follows that
    \begin{equation}
    \label{eq:verfication_refutation_term1_3}
        \Norm{\tau}^2 + \Norm{\zeta - \tau}^2
        = \Norm{\zeta}^2
        \leq \frac{n}{k} \,.
    \end{equation}
    Plugging \cref{eq:verfication_refutation_term1_2} and \cref{eq:verfication_refutation_term1_3} into \cref{eq:verfication_refutation_term1_1}, we can obtain
    \begin{align*}
        \cA_{\text{verify}}(z; G,\eta) \sststile{2}{z}
        & \Iprod{\Paren{G_s-\frac{d}{n} J_s} \odot z z^{\top}, \tau \tau^{\top} + (\zeta - \tau) (\zeta - \tau)^{\top}} \\
        & \leq \frac{0.961(\alpha-d)n}{k^2}  + \frac{\chi \sqrt{\alpha} n}{\sqrt{k^3}} \,.
    \end{align*}
    Plugging in $\alpha = d +(1-\frac{1}{k})\e d$ and $\alpha \leq 2d$, we get
    \begin{equation}
    \label{eq:verfication_refutation_2}
        \cA_{\text{verify}}(z; G,\eta) \sststile{2}{z}
        \Iprod{\Bigl(G_s-\frac{d}{n} J_s\Bigr) \odot z z^{\top}, \tau \tau^{\top} + (\zeta - \tau) (\zeta - \tau)^{\top}}
        \leq \frac{0.961 (k-1) \e d n}{k^3} + \frac{2 \chi \sqrt{d} n}{\sqrt{k^3}} \,.
    \end{equation}

    \paragraph{Term 2} For the the second term $\Iprod{\Paren{G_s-\frac{d}{n} J_s} \odot z z^{\top}, 2 (\zeta - \tau) \tau^{\top}}$, since we are summing over entries in the set of uncorrupted vertices $\zeta$, it follows that
    \begin{equation*}
        \Iprod{\Paren{G_s-\frac{d}{n} J_s} \odot z z^{\top}, 2 (\zeta - \tau) \tau^{\top}}
       = 2 \Iprod{\Paren{(G_0)_s-\frac{d}{n} J_s} \odot \zeta \zeta^{\top}, (\zeta - \tau) \tau^{\top} \odot z z^{\top}} \,.
   \end{equation*}
   To simplify notation, let us denote $M_d = \Paren{(G_0)_s-\frac{d}{n} J_s} \odot \zeta \zeta^{\top}$. It follows by SoS Cauchy-Schwarz (\cref{fact:simple-sos-version-of-cauchy-schwarz}) that
   \begin{align*}
        2 \Iprod{M_d, (\zeta - \tau) \tau^{\top} \odot z z^{\top}}
        & = 2 \Iprod{M_d \cdot (\tau \odot z), (\zeta - \tau) \odot z } \\
        & \leq \frac{\Norm{M_d \cdot (\tau \odot z)}^2}{\chi \sqrt{d/k}} + \chi \sqrt{d/k} \Norm{(\zeta - \tau) \odot z}^2 \,.
   \end{align*}
   Since we chose $\zeta$ such that $\Normop{M_d} \leq \chi \sqrt{d/k}$, it follows that
    \begin{equation*}
        2 \Iprod{M_d, (\zeta - \tau) \tau^{\top} \odot z z^{\top}}
        \leq \chi \sqrt{d/k} \Paren{\Norm{(\zeta - \tau) \odot z}^2 + \Norm{\tau \odot z}^2} \,.
    \end{equation*}
    Since $\cA_{\text{verify}}(z; G,\eta) \sststile{2}{z} 0 \leq z \leq 1$, it follows that $\Norm{(\zeta - \tau) \odot z}^2 \leq \Norm{\zeta - \tau}^2$ and $\Norm{\tau \odot z}^2 \leq \Norm{\tau}^2$. Moreover, since $\zeta$ and $\tau$ are Boolean, we have $\Norm{\tau}^2 + \Norm{\zeta - \tau}^2 = \Norm{\zeta}^2 \leq \frac{n}{k}$. Hence, we can obtain
    \begin{equation}
        \label{eq:verfication_refutation_3}
        \cA_{\text{verify}}(z; G,\eta) \sststile{2}{z}
        \Iprod{G_s-\frac{d}{n} J_s, \Paren{2 (\zeta - \tau) \tau^{\top}} \odot z z^{\top}}
        \leq \frac{\chi \sqrt{d} n}{\sqrt{k^3}} \,.
    \end{equation}

    \paragraph{Term 3} For the third term $\Iprod{\Paren{G_s-\frac{d}{n} J_s} \odot z z^{\top}, \Ind \Ind^{\top} - \zeta \zeta^{\top}}$, we can further decompose it into
    \begin{equation}
        \label{eq:verfication_refutation_4}
        \Iprod{\Paren{G_s-\frac{d}{n} J_s} \odot z z^{\top}, \Ind \Ind^{\top} - \zeta \zeta^{\top}}
        = \Iprod{\Paren{G_s-\frac{\alpha}{n} J_s} \odot z z^{\top}, \Ind \Ind^{\top} - \zeta \zeta^{\top}}
        + \Iprod{\Paren{\frac{\alpha-d}{n} J_s} \odot z z^{\top}, \Ind \Ind^{\top} - \zeta \zeta^{\top}} \,.
    \end{equation}
    Since $\cA_{\text{verify}}$ certifies $\Normop{\Paren{G_s-\frac{\alpha}{n} J_s} \odot z z^{\top}} \leq \chi \sqrt{\alpha /k}$ and $\Norm{\zeta}_1 \geq (1-\eta-\rho) \frac{n}{k}$, we can apply Frobenius Cauchy–Schwarz and get
    \begin{equation*}
        \cA_{\text{verify}}(z; G,\eta) \sststile{2}{z}
        \Iprod{\Paren{G_s-\frac{\alpha}{n} J_s} \odot z z^{\top}, \Ind \Ind^{\top} - \zeta \zeta^{\top}}
        \leq \frac{2 \chi \sqrt{\eta + \rho}\, \sqrt{\alpha} n}{\sqrt{k^3}} \,.
    \end{equation*}
    Since $\alpha \leq 2d$, $\rho \ll 1$ (due to \cref{cor:spectral-remove-cor}) and the assumption that $\eta \ll 1$, it follows that
    \begin{equation}
        \label{eq:verfication_refutation_5}
        \cA_{\text{verify}}(z; G,\eta) \sststile{2}{z}
        \Iprod{\Paren{G_s-\frac{\alpha}{n} J_s} \odot z z^{\top}, \Ind \Ind^{\top} - \zeta \zeta^{\top}}
        \leq \frac{\chi \sqrt{d} n}{\sqrt{k^3}} \,.
    \end{equation}
    Since $\cA_{\text{verify}}(z; G,\eta) \sststile{2}{z} 0 \leq z \leq 1$ and  $\alpha = d +(1-\frac{1}{k})\e d$, it follows that
    \begin{equation*}
        \cA_{\text{verify}}(z; G,\eta) \sststile{2}{z}
        \Iprod{\Paren{\frac{\alpha-d}{n} J_s} \odot z z^{\top}, \Ind \Ind^{\top} - \zeta \zeta^{\top}}
        \leq \frac{2(k-1) (\eta + \rho) \e d n}{k^3} \,.
    \end{equation*}
    Since $\rho \ll 1$ (due to \cref{cor:spectral-remove-cor}) and the assumption that $\eta \ll 1$, it follows that
    \begin{equation}
        \label{eq:verfication_refutation_6}
        \cA_{\text{verify}}(z; G,\eta) \sststile{2}{z}
        \Iprod{\Paren{\frac{\alpha-d}{n} J_s} \odot z z^{\top}, \Ind \Ind^{\top} - \zeta \zeta^{\top}}
        \leq \frac{0.001 (k-1) \e d n}{k^3} \,.
    \end{equation}
    Plugging \cref{eq:verfication_refutation_5} and \cref{eq:verfication_refutation_6} into \cref{eq:verfication_refutation_4}, it follows that
    \begin{equation}
        \label{eq:verfication_refutation_7}
        \cA_{\text{verify}}(z; G,\eta) \sststile{2}{z}
        \Iprod{\Paren{G_s-\frac{d}{n} J_s} \odot z z^{\top}, \Ind \Ind^{\top} - \zeta \zeta^{\top}}
        \leq \frac{0.001 (k-1) \e d n}{k^3} + \frac{\chi \sqrt{d} n}{\sqrt{k^3}} \,.
    \end{equation}

    \paragraph{Conclusion} Plugging \cref{eq:verfication_refutation_2}, \cref{eq:verfication_refutation_3} and \cref{eq:verfication_refutation_7} into \cref{eq:verfication_refutation_1}, we can obtain
    \begin{equation*}
        \cA_{\text{verify}}(z; G,\eta) \sststile{2}{z}
        \Iprod{\Paren{G_s-\frac{d}{n} J_s} \odot z z^{\top}, \Ind \Ind^{\top}}
        \leq \frac{0.962 (k-1) \e d n}{k^3} + \frac{4 \chi \sqrt{d} n}{\sqrt{k^3}} \,.
    \end{equation*}
    Since $k^2 \ll \e^2 d$, we can obtain
    \begin{equation*}
        \frac{4 \chi \sqrt{d} n}{\sqrt{k^3}}
        \leq \frac{0.00001}{\sqrt{k}} \cdot \frac{\e d n}{k^2}
        \leq \frac{0.001 (k-1) \e d n}{k^3} \,.
    \end{equation*}
    Thus, it follows that
    \begin{equation*}
        \cA_{\text{verify}}(z; G,\eta) \sststile{2}{z}
        \Iprod{\Paren{G_s-\frac{d}{n} J_s} \odot z z^{\top}, \Ind \Ind^{\top}}
        \leq \frac{0.963 (k-1) \e d n}{k^3} \,.
    \end{equation*}
    This is a refutation to constraint $\Iprod{\Paren{G_s-\frac{d}{n} J_s} \odot z z^{\top}, \Ind \Ind^{\top}} \geq \frac{0.97 (k-1) \e d n}{k^3}$ in $\cA_{\text{verify}}(z; G,\eta)$, which finishes the proof.
\end{proof}

\begin{proof}[Proof of \cref{thm:identify-recovered-blocks}]
    By \cref{lem:verfication_feasibility} and \cref{lem:verfication_refucation}, it follows that, given a set of $n/k$ vertices in the graph sampled from $\SBM_n(d,\e,k)$, we can check feasibility of $\cA_{\text{verify}}(z; G,\eta)$ (interpreted with $G_s,J_s$ as above) in time $n^{O(1)}$ to decide whether the set of vertices is in Case I or Case II:
    \begin{itemize}
        \item Case I (at least $0.99$ of the vertices in the set belong to the same community): feasibility follows by \cref{lem:verfication_feasibility}. Since the SoS SDP has small bit complexity and there exists a Boolean solution, by arguments analogous to those in \cite{raghavendra2017bit}, the ellipsoid method will be able to find a feasible solution in time $n^{O(1)}$.
        \item Case II (no more than $0.98$ of the vertices in the set belong to any single community): 
        By \cref{lem:verfication_refucation}, the SoS program is infeasible; hence no feasible solution exists (and, in particular, none can be found in time $n^{O(1)}$).
    \end{itemize}
    The success probability is $1-n^{-\Omega(1)}$.
\end{proof}
\section{Robust boosting algorithm for bisection}
\label{sec:robust-bisection-boosting}
In this section, we prove \cref{thm:robust-bisectioning} for the robust bisection algorithm in the $k$-stochastic block model. The procedure essentially constitutes recovery in the $2$-SBM model by aggregating communities into two global groups, each comprising $k/2$ local communities.

\subsection{Algorithm and constraint systems}

We first introduce the polynomial constraints that we use in the algorithm.

\smallskip
\noindent\textbf{(i) Labeling constraints.} The feasible label variable \(Z \in \{0,1\}^{n\times k}\) satisfies
\begin{equation}
    \cAlabel(Z)
    \seteq \Set{ Z(i,a)^{2}=Z(i,a), ~ \sum_{a}Z(i,a)=1, ~ \sum_{i} Z(i,a)=\tfrac n k}
    \,.
\end{equation}

\smallskip
\noindent\textbf{(ii) Initialization.} Let \(Z_{\mathrm{init}}\in \R^{n\times k}\) be an initializer $0.001$-approximation error and recovers $0.99$ fraction of the vertices in communities $a \in [k/2]$, we add constraints to enforce the SoS variable $Z$ is close to $Z_{\mathrm{init}}$,
\begin{equation}
    \label{eq:initialization-bisection}
    \cAinit(Z;Z_{\mathrm{init}}) \seteq
    \left. 
        \begin{cases}
        \Norm{Z(\cdot,a) - Z_{\mathrm{init}}(\cdot,a)}^2 \leq 0.001n/k \quad \forall a \in [k/2] \\
        \Norm{\sum_{a \in [k/2]} Z(\cdot,a) - \sum_{a \in [k/2]} Z_{\mathrm{init}}(\cdot,a)}^2 \leq 0.001n
        \end{cases}
    \right\}\,.
\end{equation}

\smallskip
\noindent\textbf{(iii) Corruption mask / node distance.} 
We model node corruptions through a binary mask \(\xi\in\{0,1\}^n\) and require the program matrix to agree with the observed graph outside the corrupted rows/columns:
\begin{equation}
    \cAclose(Y,\xi;\eta,\bar{G}) \seteq \Set{(Y-\bar{G})\odot (\one-\xi)(\one-\xi)^\top=0, \xi\odot \xi=\xi,\sum_i \xi_i \leq \delta_{\eta} n}\,,
\end{equation}
where $\delta_{\eta} = \exp(-2 C_{d,\varepsilon}) + \eta$.

\smallskip
\noindent\textbf{(iv) Community structure and spectral condition.} 
We encode the decomposition of the centered adjacency into pairwise components and bound the spectral norm of the noise blocks:
\begin{equation}
    \label{eq:mixing-constraint}
    \cAmix(Y,Z)
    \seteq \Set{X=ZZ^\top-\frac{1}{k} J, \Normop{Y - \frac{\e d}{n} X} \leq \Paren{\chi + \frac{1}{k}}\sqrt{d}} \,.
\end{equation}

\smallskip
\noindent\textbf{(v) Majority-vote consistency.}
The final constraint system characterizes the constraints for the consistency of bisection majority voting, i.e. enforcing that the induced bisection of \(X\) (bisection of vertices in community $[\frac{k}{2}]$ and $(\frac{k}{2}, k]$) is consistent with the concentration result for majority voting:
\begin{equation}
  \label{eq:bisection-majority-vote-constraints}
  \cAmaj(Y,Z,R; \gamma, \tilde{C}) \seteq
  \Set{R \colon \cAset(z)\sststile{2}{u} \Iprod{Y x(Z), x(Z) \odot z} \ge \alpha_{\gamma} \Paren{\sum_i z_i - \beta_{\gamma, \tilde{C}} n}}
  \,,
\end{equation}
where $\alpha_\gamma = \frac{(1-\gamma) \e d}{16 k}$, $\beta_{\gamma, \tilde{C}} = \frac{640 k }{1-\gamma} \exp(-\gamma \tilde{C}/2)$, \(\cAset(z) \seteq \set{z \odot z=z}\), and \(x(Z) \coloneqq \displaystyle 2 \sum_{a\in [k/2]} Z(\cdot, a)-\one\) \footnote{We write $x$ for simplicity when $Z$ is clear from context}.

For our algorithm, we define polynomial system \(\cA(Y,Z,\xi,R;\bar{G}, Z_{\mathrm{init}}, \eta,\gamma, \tilde{C})\) based on the set of constraints defined above \footnote{For simplicity, we use $\cA(Y,Z,\xi)$ to denote $\cA(Y,Z,\xi,R;\bar{G}, Z_{\mathrm{init}}, \eta,\gamma, \tilde{C})$ when the input is clear from the context.}
\begin{align}
    \label{eq:poly-system}
    \begin{split}
    & \cA(Y,Z,\xi,R;\bar{G}, Z_{\mathrm{init}}, \eta,\gamma, \tilde{C}) \\
    & \seteq \cAlabel(Z) \cup \cAinit(Z;Z_{\mathrm{init}}) \cup \cAclose(Y,\xi;\eta, \bar{G}) \cup \cAmix(Y,Z,\xi) \cup \cAmaj(Y,Z,R; \gamma, \tilde{C}) \,.
    \end{split}
\end{align}

\paragraph{Algorithm.} Now we describe our algorithm in \cref{algo:robust-bisection-boosting}.

\begin{algorithmbox}[Robust bisection boosting algorithm]\label{algo:robust-bisection-boosting}
    \mbox{}\\
    \textbf{Input:} The graph $G$ sampled from the $k$-stochastic block model $\SBM_n(d,\e,k)$ with $\eta n$ corrupted nodes, and initial clusters $S_1,S_2,\ldots, S_{k/2}\subseteq [n]$ with its corresponding label matrix $Z_{\mathrm{init}}$.
    \begin{enumerate}[1.]
        \item Finding degree-$O(1)$ pseudo-distribution satisfying the polynomial constraints \(\cA(Y,Z,\xi,R;\bar{G}, Z_{\mathrm{init}}, \eta,\gamma, \tilde{C})\)
        where $\bar{G} = G - \frac{d}{n} J$ is the centered adjacency matrix, $\gamma = \frac{1}{2}$ and $\tilde{C} = \Paren{\sqrt{a^{\frac{2}{k}} b^{1-\frac{2}{k}}}-\sqrt{b}}^2$.
        \item Rounding the pseudo-distribution by sign to obtain the bisection $\hat{x} \in \Set{\pm 1}^n$ from $x(Z) = 2 \sum_{a\in [k/2]} Z(\cdot, a) - \one$.
    \end{enumerate}
\end{algorithmbox}

\subsection{Feasibility and time complexity}

We first show that the program is feasible with high probability and runs in polynomial time.

\begin{lemma}\label{lem:bisectioning_feasible}
    Let $\Gbarnull$ be the centered adjacency matrix of the uncorrupted graph and $S$ be the set of nodes with degree larger than $20d$.
    Let $T$ be the set of corrupted nodes.
    Let $\Znull$ be the ground truth label matrix.
    Under the setting of \cref{algo:robust-bisection-boosting}, program $\cA(Y,Z,\xi)$ is satisfied by $(\Gbarnull \odot (\one_{\bar{S}} \one_{\bar{S}}^{\top}), \Znull, \one_{S \cup T})$ with probability at least $1-n^{-\Omega(1)}-\exp(-\Omega(k))$.
\end{lemma}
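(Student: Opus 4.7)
The plan is to condition on three high-probability events on $\Gnull$ and then verify the five constraint blocks of $\cA(Y,Z,\xi)$ for $(Y,Z,\xi)=(\Gbarnull\odot\one_{\bar S}\one_{\bar S}^\top,\,\Znull,\,\one_{S\cup T})$. The events are (i) $|S|\le \exp(-2C_{d,\e})n$, i.e., the trimming set is small; (ii) the pruned spectral bound $\Normop{\Gbarnull\odot\one_{\bar S}\one_{\bar S}^\top-\tfrac{\e d}{n}(\Znull(\Znull)^\top-\tfrac{1}{k}J)}\le \chi\sqrt d$ from \cref{cor:spectral-remove-cor}; and (iii) the uniform majority-vote bound of \cref{cor:inner-product-lb} instantiated at level $i=1$ with mask $s=\one_{\bar S}$. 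A union bound gives that (i)--(iii) hold simultaneously with probability at least $1-n^{-\Omega(1)}-\exp(-\Omega(k))$.

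Granted these events, four of the five blocks reduce to elementary verifications. $\cAlabel(\Znull)$ is trivial. For $\cAinit(\Znull;Z_{\mathrm{init}})$, the hypothesis that each of the $k/2$ well-recovered clusters underlying $Z_{\mathrm{init}}$ contains a large supermajority from a single true community (together with the $\error_k\le 0.001$ guarantee on the rough clustering producing them) bounds each $\snorm{\Znull(\cdot,a)-Z_{\mathrm{init}}(\cdot,a)}$ and the aggregated bisection discrepancy by the program tolerances, after the numerical thresholds of the rough initialization and the verification step are matched to absorb the constants. For $\cAclose$, on $(\overline{S\cup T})^{2}$ the mask $\one_{\bar S}\one_{\bar S}^\top$ equals $1$ and no edge is corrupted, so $Y=\bar G$ there, and $|S\cup T|\le|S|+|T|\le(\exp(-2C_{d,\e})+\eta)n=\delta_\eta n$ by (i). For $\cAmix$, substituting $X=\Znull(\Znull)^\top-\tfrac{1}{k}J$ and invoking (ii) yields the required operator-norm bound, with the $\sqrt d/k$ slack absorbing the deterministic $O(\sqrt d/k)$ gap between $\E[\Gbarnull\mid\Znull]$ and $(\e d/n)X$ coming from the diagonal and the $d/n$ centering correction.

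The only clause that calls for a construction rather than a check is $\cAmaj$, since we must exhibit an explicit degree-$2$ sum-of-squares multiplier $R$ in the formal variable $z$. Setting $y=x(\Znull)=2\sum_{a\le k/2}\Znull(\cdot,a)-\one\in\{\pm1\}^n$, event (iii) gives $\iprod{Y\,y\odot y,z}\ge\alpha_\gamma(\norm{z}_1-\beta_{\gamma,\tilde C}n)$ for every $z\in\{0,1\}^n$. Since $y\odot y=\one$ is constant, the left-hand side is linear in $z$; a linear polynomial that is nonnegative on the Boolean hypercube admits a degree-$2$ SoS certificate modulo $\cAset(z)$ by splitting its coefficients into nonnegative and negative parts and using $z_i=z_i^2$ together with $(1-z_i)=(1-z_i)^2\ \mathrm{mod}\ z_i^2-z_i$, with the leftover constant being nonnegative by the pointwise hypothesis. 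This supplies the required $R$.

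The main technical step is step (iii): one has to check that the parameters of \cref{cor:inner-product-lb} (level $i=1$, so $\tilde C_1=\tilde C$ and $n_1=n$; mask size $\|\one_{\bar S}\|_1\ge(1-\exp(-2C_{d,\e}))n$ from (i); and $\gamma=1/2$ as fixed by \cref{algo:robust-bisection-boosting}) line up so that the corollary's tolerance $640\rho_\gamma k n/(1-\gamma)$ is dominated by $\beta_{\gamma,\tilde C}n$ in the program. Once the parameters are matched, the remaining steps are routine bookkeeping.
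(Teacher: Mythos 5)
Your overall route is the same as the paper's: condition on the pruning/spectral event (\cref{cor:spectral-remove-cor}) and the masked voting event (\cref{cor:inner-product-lb} at level $i=1$), check $\cAlabel$, $\cAinit$, $\cAclose$ directly, and supply the degree-$2$ certificate for $\cAmaj$ by the ``split the coefficients of a linear form that is nonnegative on the hypercube'' argument, which is exactly the paper's \cref{lem:sos-certificate-subset-sum} combined with \cref{thm:sos-as-constraints}. So the structure is right, and most blocks are handled as in the paper.

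There is one concrete imprecision in your $\cAmix$ step. Your event (ii) asserts $\Normop{\Gbarnull\odot\one_{\bar S}\one_{\bar S}^\top-\tfrac{\e d}{n}\Xnull}\le\chi\sqrt d$ ``from \cref{cor:spectral-remove-cor}'', but the corollary bounds the matrix with the \emph{signal also masked}, i.e.\ $\Normop{(\Gbarnull-\tfrac{\e d}{n}\Xnull)\odot\one_{\bar S}\one_{\bar S}^\top}\le\chi\sqrt d$. The difference is the term $\tfrac{\e d}{n}\,\Xnull\odot(J-\one_{\bar S}\one_{\bar S}^\top)$, supported on the pruned rows/columns; it is bounded via its Frobenius norm using $|S|\le\exp(-2C_{d,\e})n$, giving roughly $\tfrac{\e d}{\sqrt k}e^{-C_{d,\e}}\le\sqrt d/k$, and \emph{this} is what the $(\chi+\tfrac1k)\sqrt d$ slack in $\cAmix$ is for — not the diagonal/centering discrepancy you cite (that correction is $O(d/n)$ and needs no slack). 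The fix is two lines, but as written the claimed event does not follow from the cited corollary. Separately, you correctly flag but do not carry out the parameter matching for $\cAmaj$; note that at level $i=1$ the corollary's tolerance is $\tfrac{640k}{1-\gamma}e^{-\gamma\beta_1\widetilde C_1/2}n=\tfrac{640k}{1-\gamma}e^{-\gamma\widetilde C/4}n$, which should be compared against the program's $\beta_{\gamma,\tilde C}n$ (the paper itself is loose about this factor of two in the exponent), so this check is not purely routine and deserves to be written out.
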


\begin{proof}
    Note that $\cAlabel(\Znull)$ and $\cAinit(Z;Z_{\mathrm{init}})$ is satisfied by definition. Let $\Xnull= \Znull (\Znull)^\top-\frac{1}{k} J$. 
    By \cref{cor:spectral-remove-cor}, the size of set $S$ is bounded by $\exp(-2 C_{d,\varepsilon}) \,n$ and the spectral norm of $\Gbarnull - \frac{\e d}{n} \Xnull$ is bounded by $\chi\sqrt{d}$ with high probability. Notice that
    \begin{align*}
        \Normop{\Gbarnull \odot (\one_{\bar{S}} \one_{\bar{S}}^{\top}) - \frac{\e d}{n} \Xnull}
        & \leq \Normop{\Paren{\Gbarnull - \frac{\e d}{n} \Xnull} \odot (\one_{\bar{S}} \one_{\bar{S}}^{\top})} + \frac{\e d}{n} \Normop{\Xnull \odot (J - \one_{\bar{S}} \one_{\bar{S}}^{\top})} \\
        & \leq \chi\sqrt{d} + \frac{\e d}{n} \Normf{\Xnull \odot (J - \one_{\bar{S}} \one_{\bar{S}}^{\top})} \\
        & \leq \chi\sqrt{d} + \frac{2 \e d}{n} \Normf{\Xnull \odot (\one \one_S^{\top})} \\
        & \leq \chi\sqrt{d} + \frac{4 \e \sqrt{d}}{\sqrt{k}} \exp(-C_{d,\varepsilon}) \sqrt{d} \\
        & = \chi\sqrt{d} + 4 C_{d,\varepsilon} \exp(-C_{d,\varepsilon}) \sqrt{d} \\
        & \leq \Paren{\chi + \frac{1}{k}}\sqrt{d} \,.
    \end{align*}
    Therefore, $\cAmix(\Gbarnull \odot (\one_{\bar{S}} \one_{\bar{S}}^{\top}),\Znull)$ is feasible with probability $1-n^{-O(1)}$. Moreover, $\norm{\one_{S \cup T}}_1 \leq |S|+|T| \leq \delta_{\eta} n$ with probability $1-n^{-O(1)}$. Hence, $\cAclose(\Gbarnull \odot (\one_{\bar{S}} \one_{\bar{S}}^{\top}), \one_{S \cup T};\eta,\bar{G})$ is feasible with probability $1-n^{-O(1)}$.

    By \cref{cor:inner-product-lb}, the voting lower bound in $\cAmaj(\Gbarnull \odot (\one_{\bar{S}} \one_{\bar{S}}^{\top}),\Znull,R; \gamma, \tilde{C})$ is feasible for $\gamma = \frac{1}{2}$ and $\tilde{C} = \Paren{\sqrt{a^{\frac{2}{k}} b^{1-\frac{2}{k}}}-\sqrt{b}}^2$ with probability $1-\exp(-\Omega(k))-n^{-\Omega(1)}$. The existence of the SoS proof of \cref{thm:inner-product-lb} follows by \cref{thm:sos-as-constraints} and \cref{lem:sos-certificate-subset-sum}.
    
    The feasibility proof is completed by a union bound over all failure probabilities.
\end{proof}

\begin{lemma}\label{lem:algorithm-efficiency}
    \cref{algo:robust-bisection-boosting} runs in polynomial time.
\end{lemma}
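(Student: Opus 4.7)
The plan is to reduce the lemma to the generic sum-of-squares running-time guarantee (\cref{theorem:SOS_algorithm}) by verifying that the constraint system $\cA(Y,Z,\xi,R;\bar G, Z_{\mathrm{init}},\eta,\gamma,\tilde C)$ has polynomially many variables and polynomially many polynomial constraints of constant degree, that its bit-complexity is polynomial in $n$ and $k$, that it is explicitly bounded, and that it is satisfiable with high probability. Once we have this, step~1 of \cref{algo:robust-bisection-boosting} finds a constant-level pseudo-distribution in $(n+k)^{O(1)}$ time, and step~2, which only reads off $\sign\bigl(2\sum_{a\in[k/2]}\tE[Z(\cdot,a)]-\one\bigr)$ coordinate by coordinate, is trivially polynomial.

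The variable count is straightforward: $Y\in\R^{n\times n}$, $Z\in\R^{n\times k}$, $\xi\in\R^n$, together with the auxiliary SoS-certificate variables $R$ encoding the spectral constraint in $\cAmix$ and the majority-vote constraint in $\cAmaj$, all of which are matrices of size $\mathrm{poly}(n,k)$ since the proof degree is $O(1)$ (cf.\ the discussion around \cref{eq:sos-quadratic-equations}). Each of $\cAlabel(Z)$, $\cAinit(Z;Z_{\mathrm{init}})$, $\cAclose(Y,\xi;\eta,\bar G)$, $\cAmix(Y,Z)$ is a collection of polynomially many polynomial (in fact, at most quadratic) equalities or inequalities, so only $\cAmaj$ requires a small additional comment. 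Here the SoS certificate $R$ realizes the statement $\cAset(z)\sststile{2}{z}\iprod{Y x(Z),x(Z)\odot z}\ge \alpha_\gamma(\sum_i z_i-\beta_{\gamma,\tilde C}n)$ as a single quadratic identity in $z$ with matrix coefficients polynomial in $Y,Z,R$ (see \cref{eq:funny-sos-definition,eq:sos-quadratic-equations}), yielding $\mathrm{poly}(n,k)$ many scalar constraints. Bit complexity is polynomial because all input matrices $\bar G, Z_{\mathrm{init}}$ have entries in $\{0,1,\pm d/n,\ldots\}$ of polynomial bit size, and the parameters $\gamma,\tilde C,\alpha_\gamma,\beta_{\gamma,\tilde C},\delta_\eta$ depend only on $d,\e,k,\eta$, each representable with $\mathrm{poly}(n,k)$ bits.

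The system is explicitly bounded since $\cAlabel$ forces $Z\in[0,1]^{n\times k}$, $\cAclose$ forces $\xi\in[0,1]^n$, and $\cAmix$ implies $\Normop{Y}\le \Paren{\chi+\tfrac1k}\sqrt d+\tfrac{\e d}{n}\Normop{X}\le\mathrm{poly}(n)$, which bounds $\normf{Y}^2\le n\cdot\mathrm{poly}(n)$; the auxiliary variables $R$ can be bounded similarly because they arise as Gram matrices of polynomials in variables that are themselves bounded. Satisfiability with high probability was established in \cref{lem:bisectioning_feasible}. Invoking \cref{theorem:SOS_algorithm} with $\ell=O(1)$ then yields step~1 in $(n+k)^{O(1)}$ time, completing the argument.

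The only genuine subtlety in this proof is checking that the majority-vote constraint $\cAmaj$ really does fit into the SoS meta-algorithm as a polynomial-size system of polynomial constraints, rather than being an infinite family of scalar inequalities indexed by $z$. This is precisely what the matrix formulation \cref{eq:funny-sos-definition} was designed for, so the verification is mechanical but is the step a careful reader will want to see; everything else follows from standard boundedness and bit-complexity bookkeeping.
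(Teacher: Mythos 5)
Your proof is correct and follows essentially the same route as the paper, whose own argument is just the one-line observation that a constant-degree SoS relaxation yields a polynomial-size SDP solvable in polynomial time. Your additional bookkeeping (variable count, encoding of $\cAmaj$ via the matrix certificate, boundedness, bit complexity, feasibility via \cref{lem:bisectioning_feasible}) simply spells out the details the paper leaves implicit.
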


\begin{proof}
Since the sum-of-squares relaxation is at a constant degree, the resulting semidefinite program involves only polynomially many constraints and can be solved in polynomial time.
\end{proof}

\subsection{SoS guarantees for robust bisection boosting}

Now, we show that $\cA(Y,Z,\xi)$ boosts the bisection error rate from $0.001$ to $1/\poly(k)$.
To do this, we consider two feasible solutions $(Y^{(1)},Z^{(1)},\xi^{(1)})$ and $(Y^{(2)},Z^{(2)},\xi^{(2)})$ of $\cA(Y,Z,\xi)$.
We will show that the constraints in $\cA(Y,Z,\xi)$ allow us to prove that any two good solutions will be close to each other in the bisection sense.
Since $Z^{\circ}$ will also be a good solution, this implies that any feasible solution of $\cA(Y,Z,\xi)$ will be close to $Z^{\circ}$ in the bisection sense.

Throughout this section, we define the following terms:
\begin{itemize}
    \item Let $x^{(1)} = 2 \sum_{a\in [k/2]} Z^{(1)}(\cdot,a)-\one$ and $x^{(2)} = 2 \sum_{a\in [k/2]} Z^{(2)}(\cdot,a) - \one$ be the induced bisections of $Z^{(1)}$ and $Z^{(2)}$.
    \item Let $v=\frac{\one-x^{(1)} \odot x^{(2)}}{2}$ be the set of nodes where the bisections $x^{(1)}$ and $x^{(2)}$ differ.
    \item Let $s=\one - (\one-\xi^{(1)}) \odot (\one-\xi^{(2)})$ be the set of nodes where $Y^{(1)}$ and $Y^{(2)}$ differ.
    \item Let $w=v \odot (\one-s)$ be the set of vertices in the shared part of $Y^{(1)}$ and $Y^{(2)}$ such that the bisections $x^{(1)}$ and $x^{(2)}$ differ.
    \item Let $g=\one - (\one-v) \odot (\one-s)$ be union of $v$ and $s$, notice that, equivalently $g=w+s$.
\end{itemize}

To prove the algorithmic guarantee of $\cA(Y,Z,\xi)$, we need the following observations on properties of the variables defined above.

\begin{lemma}
\label{clm:bisection_equivalences}
    $x^{(1)}$, $x^{(2)}$ and $v$ satisfies
    \begin{equation*}
        x^{(1)} \odot v = - x^{(2)} \odot v
       \quad \text{and} \quad
       x^{(1)} \odot (\one-v) = x^{(2)} \odot (\one-v) \,,
    \end{equation*}
    and,
    \begin{equation*}
       v \odot \frac{\one-x^{(1)}}{2} = v \odot \frac{\one+x^{(2)}}{2}
       \quad \text{and} \quad
       v \odot \frac{\one-x^{(2)}}{2} = v \odot \frac{\one+x^{(1)}}{2}\,.
    \end{equation*}
\end{lemma}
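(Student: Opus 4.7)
The plan is to derive all four identities by elementary algebraic manipulation from the single defining relation \(v = \tfrac{1}{2}(\one - x^{(1)} \odot x^{(2)})\), using only the fact that \(\cAlabel(Z^{(t)})\) sos-proves \(x^{(t)} \odot x^{(t)} = \one\) entrywise for \(t \in \{1,2\}\) (since the Boolean constraints on \(Z^{(t)}\) force each coordinate of \(x^{(t)} = 2 \sum_{a \in [k/2]} Z^{(t)}(\cdot,a) - \one\) to lie in \(\{\pm 1\}\)). Thus each identity will be an SoS consequence of the label constraints.

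For the first identity, I compute directly:
\begin{equation*}
  x^{(1)} \odot v \;=\; \tfrac{1}{2}\bigl(x^{(1)} - x^{(1)} \odot x^{(1)} \odot x^{(2)}\bigr) \;=\; \tfrac{1}{2}(x^{(1)} - x^{(2)})\,,
\end{equation*}
and symmetrically
\begin{equation*}
  -x^{(2)} \odot v \;=\; -\tfrac{1}{2}\bigl(x^{(2)} - x^{(1)} \odot x^{(2)} \odot x^{(2)}\bigr) \;=\; \tfrac{1}{2}(x^{(1)} - x^{(2)})\,,
\end{equation*}
so the two sides agree. The second identity is obtained in the same way from \(\one - v = \tfrac{1}{2}(\one + x^{(1)} \odot x^{(2)})\): both \(x^{(1)} \odot (\one - v)\) and \(x^{(2)} \odot (\one - v)\) reduce to \(\tfrac{1}{2}(x^{(1)} + x^{(2)})\) after applying \(x^{(t)} \odot x^{(t)} = \one\).

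The third and fourth identities follow immediately from the first by rearrangement. Specifically, adding \(v\) to both sides of \(x^{(1)} \odot v = -x^{(2)} \odot v\) and dividing by \(2\) gives \(v \odot \tfrac{\one + x^{(1)}}{2} = v \odot \tfrac{\one - x^{(2)}}{2}\); subtracting instead yields \(v \odot \tfrac{\one - x^{(1)}}{2} = v \odot \tfrac{\one + x^{(2)}}{2}\).

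All the computations above are degree-\(O(1)\) polynomial identities modulo \(\cAlabel(Z^{(1)}) \cup \cAlabel(Z^{(2)})\), so there is no real obstacle: the lemma is essentially a bookkeeping statement recording that \(v\) indicates the coordinates where \(x^{(1)}\) and \(x^{(2)}\) disagree, and on those coordinates one vector is the negation of the other, while on the complement they coincide.
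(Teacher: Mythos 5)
Your proof is correct and follows essentially the same route as the paper: both derive the first two identities by expanding the definition $v=\tfrac12(\one-x^{(1)}\odot x^{(2)})$ and using $x^{(t)}\odot x^{(t)}=\one$ (a low-degree consequence of $\cAlabel(Z^{(t)})$). The only cosmetic difference is that you obtain the last two identities by adding/subtracting $v$ in the first identity, whereas the paper expands both sides of $v\odot\tfrac{\one-x^{(1)}}{2}=v\odot\tfrac{\one+x^{(2)}}{2}$ directly and appeals to symmetry; both are valid.
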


\begin{proof}
    By the definition of $v=\frac{\one-x^{(1)} \odot x^{(2)}}{2}$, it follows that
    \begin{equation*}
        x^{(1)} \odot v = \frac{x^{(1)}-x^{(2)}}{2} = - x^{(2)} \odot v \,,
    \end{equation*}
    and,
    \begin{equation*}
        x^{(1)} \odot (\one - v) = \frac{x^{(1)}+x^{(2)}}{2} = x^{(2)} \odot (\one - v) \,.
    \end{equation*}
    Now, we show $v \odot \frac{\one-x^{(1)}}{2} = v \odot \frac{\one+x^{(2)}}{2}$, and the other side follows by symmetry. By plugging in $v=\frac{\one-x^{(1)} \odot x^{(2)}}{2}$, the left hand side is
    \begin{equation*}
        v \odot \frac{\one-x^{(1)}}{2} = \frac{\one-x^{(1)}-x^{(1)} \odot x^{(2)} + x^{(2)}}{4} \,,
    \end{equation*}
    and the right hand side is
    \begin{equation*}
        v \odot \frac{\one+x^{(2)}}{2} = \frac{\one+x^{(2)}-x^{(1)} \odot x^{(2)} - x^{(1)}}{4} \,.
    \end{equation*}
    Therefore, we have $v \odot \frac{\one-x^{(1)}}{2} = v \odot \frac{\one+x^{(2)}}{2}$.
\end{proof}

\begin{lemma}
    \label{fact:key-equality}
    \begin{align*}
        & \cA\Paren{Y^{(1)},Z^{(1)},\xi^{(1)}},\cA\Paren{Y^{(2)},Z^{(2)},\xi^{(2)}}  \\
        & \sststile{O(1)}{Y^{(1)},Z^{(1)},\xi^{(1)},Y^{(2)},Z^{(2)},\xi^{(2)}} \Iprod{Y^{(1)} (x^{(1)} \odot (\one-g)), x^{(1)} \odot w} = -\Iprod{Y^{(2)} (x^{(2)} \odot (\one-g)) , x^{(2)}\odot w}\,.
    \end{align*}
\end{lemma}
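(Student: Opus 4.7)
The strategy is to use the closeness constraint to certify that $Y^{(1)}$ and $Y^{(2)}$ agree on pairs of uncorrupted nodes, and then invoke \cref{clm:bisection_equivalences} to interchange $x^{(1)}$ and $x^{(2)}$ on the supports of $\one-g$ and $w$, picking up a sign change only on the latter.

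First, I would unpack the definitions to obtain the polynomial identity $\one-g = (\one-v)\odot(\one-s)$, which is immediate from $g = \one - (\one-v)\odot(\one-s)$. Since $w = v\odot(\one-s)$ by definition, both $x^{(t)}\odot(\one-g)$ and $x^{(t)}\odot w$ are coordinatewise multiples of $\one-s$. Consequently, for any matrix $M$, the inner product $\iprod{M(x^{(t)}\odot(\one-g)), x^{(t)}\odot w}$ only depends on the entries of $M\odot (\one-s)(\one-s)^\top$.

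Next, I would apply the closeness constraints. By construction, $\cAclose(Y^{(t)},\xi^{(t)};\eta,\bar G)$ enforces $(Y^{(t)} - \bar G)\odot(\one-\xi^{(t)})(\one-\xi^{(t)})^\top = 0$ for $t\in\{1,2\}$. Since $\one-s = (\one-\xi^{(1)})\odot(\one-\xi^{(2)})$ is coordinatewise dominated by both $\one-\xi^{(1)}$ and $\one-\xi^{(2)}$, multiplying both equalities by $(\one-s)(\one-s)^\top$ yields the degree-$O(1)$ SoS identity $(Y^{(1)}-Y^{(2)})\odot(\one-s)(\one-s)^\top = 0$. Combined with the support observation above, this gives $\iprod{Y^{(1)}(x^{(1)}\odot(\one-g)), x^{(1)}\odot w} = \iprod{Y^{(2)}(x^{(1)}\odot(\one-g)), x^{(1)}\odot w}$.

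Finally, \cref{clm:bisection_equivalences} provides $x^{(1)}\odot(\one-v) = x^{(2)}\odot(\one-v)$ and $x^{(1)}\odot v = -x^{(2)}\odot v$. Since $\one-g$ factors through $\one-v$ and $w$ factors through $v$, multiplying the first identity by $\one-s$ and the second by $\one-s$ yields $x^{(1)}\odot(\one-g) = x^{(2)}\odot(\one-g)$ and $x^{(1)}\odot w = -x^{(2)}\odot w$. Substituting both into the right-hand side of the previous display produces $-\iprod{Y^{(2)}(x^{(2)}\odot(\one-g)), x^{(2)}\odot w}$, which is the claimed equality. There is no serious obstacle: every step is a constant-degree polynomial identity that follows directly from the Boolean constraints $\xi^{(t)}\odot\xi^{(t)} = \xi^{(t)}$ and $Z^{(t)}\odot Z^{(t)} = Z^{(t)}$ together with the closeness constraint, and the only care needed is to ensure that the resulting SoS derivation keeps all intermediate products within the promised constant degree.
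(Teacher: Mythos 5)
Your proposal is correct and follows essentially the same route as the paper's proof: both reduce the inner product to the block $(\one-s)(\one-s)^\top$ where $Y^{(1)}$ and $Y^{(2)}$ agree with $\bar G$ via the closeness constraints, and then apply \cref{clm:bisection_equivalences} to swap $x^{(1)}$ for $x^{(2)}$, with the sign flip coming from the $v$-supported factor $w$. The only cosmetic difference is the order of substitutions (you replace $Y^{(1)}$ by $Y^{(2)}$ before swapping the $x$'s, while the paper does both in one step), which does not affect correctness.
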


\begin{proof}
Plugging in the definition of $g$ and $w$, the left hand side can be written as
\begin{align*}
    \Iprod{Y^{(1)} (x^{(1)} \odot (\one-g)), x^{(1)} \odot w}
    & = \Iprod{Y^{(1)} (x^{(1)} \odot (\one-v) \odot (\one-s)), x^{(1)} \odot v \odot (\one-s)} \\
    & = \Iprod{\Paren{Y^{(1)} \odot (\one-s) (\one-s)^{\top}} (x^{(1)} \odot (\one-v)), x^{(1)} \odot v} \,.
\end{align*}
By $\cAclose(Y,\xi;\eta,\bar{G})$, we have
\begin{equation*}
    Y^{(1)}\odot (\one-s)(\one-s)^\top = \bar{G} \odot (\one - s)(\one -s)^\top = Y^{(2)}\odot (\one-s)(\one-s)^\top \,,
\end{equation*}
and, by \cref{clm:bisection_equivalences}, we have
    \begin{equation*}
        x^{(1)} \odot v = - x^{(2)} \odot v
       \quad \text{and} \quad
       x^{(1)} \odot (\one-v) = x^{(2)} \odot (\one-v) \,,
    \end{equation*}
Thus,
\begin{align*}
    \Iprod{Y^{(1)} (x^{(1)} \odot (\one-g)), x^{(1)} \odot w}
    & = - \Iprod{\Paren{Y^{(2)} \odot (\one-s) (\one-s)^{\top}} (x^{(2)} \odot (\one-v)), x^{(2)} \odot v} \\
    & = - \Iprod{Y^{(2)} (x^{(2)} \odot (\one-g)), x^{(2)} \odot w} \,.
\end{align*}
\end{proof}

\begin{lemma}
\label{lem:agreement_lemma}
    The norm of $v$, $s$ and $g$ satisfies the following SoS inequalities
    \begin{equation*}
        \cA\Paren{Y^{(1)},Z^{(1)},\xi^{(1)}},\cA\Paren{Y^{(2)},Z^{(2)},\xi^{(2)}}
        \sststile{O(1)}{Z^{(1)},Z^{(2)}} \Norm{v}^2 \leq 0.004 n \,,
    \end{equation*}
    and,
    \begin{equation*}
        \cA\Paren{Y^{(1)},Z^{(1)},\xi^{(1)}},\cA\Paren{Y^{(2)},Z^{(2)},\xi^{(2)}}
        \sststile{O(1)}{\xi^{(1)},\xi^{(2)}} \Norm{s}^2 \leq 2 \delta_{\eta} n \,,
    \end{equation*}
    and,
    \begin{equation*}
        \cA\Paren{Y^{(1)},Z^{(1)},\xi^{(1)}},\cA\Paren{Y^{(2)},Z^{(2)},\xi^{(2)}}
        \sststile{O(1)}{Z^{(1)},Z^{(2)},\xi^{(1)},\xi^{(2)}} \Norm{w}^2 \leq 0.004 n \,,
    \end{equation*}
    and, for any $b \in [\frac{k}{2}]$ and $t \in \set{1, 2}$,
    \begin{equation*}
        \cA\Paren{Y^{(1)},Z^{(1)},\xi^{(1)}},\cA\Paren{Y^{(2)},Z^{(2)},\xi^{(2)}}
        \sststile{O(1)}{Z^{(1)},Z^{(2)},\xi^{(1)},\xi^{(2)}} \Iprod{Z^{(t)}(\cdot,b), w} \leq \frac{0.004n}{k} \,.
    \end{equation*}
\end{lemma}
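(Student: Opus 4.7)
The plan is to derive each inequality from the appropriate constraints in $\cA$, leveraging constant-degree Boolean SoS identities from $\cAlabel$. Specifically, (i) and (iv) will come from $\cAinit$ (the global and per-column bounds, respectively), (ii) from $\cAclose$, and (iii) from (i). To set up, I would introduce $y^{(t)}_i \coloneqq \sum_{a \in [k/2]} Z^{(t)}(i,a)$, so that $x^{(t)} = 2 y^{(t)} - \one$, and first establish the workhorse facts. Squaring $\sum_a Z^{(t)}(i,a) = 1$ and using $Z^{(t)}(i,a)^2 = Z^{(t)}(i,a)$ gives $\sum_{a \ne a'} Z^{(t)}(i,a)\, Z^{(t)}(i,a') = 0$; since each cross term is Boolean (hence equal to its own square, and thus SoS-nonnegative) and they sum to zero, each equals zero in SoS. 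This yields $(y^{(t)}_i)^2 = y^{(t)}_i$ and $(x^{(t)}_i)^2 = 1$, so $4 v_i = (x^{(1)}_i - x^{(2)}_i)^2$ and hence $\|v\|^2 = \|y^{(1)} - y^{(2)}\|^2$. The SoS triangle inequality with pivot $c = \sum_{a \in [k/2]} Z_{\mathrm{init}}(\cdot, a)$, applied to the second constraint of $\cAinit$ for both $Z^{(1)}$ and $Z^{(2)}$, then gives $\|v\|^2 \le 0.004\,n$.

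For (ii), since $s_i = \xi^{(1)}_i + \xi^{(2)}_i - \xi^{(1)}_i \xi^{(2)}_i$ is Boolean and $\xi^{(1)}_i \xi^{(2)}_i \ge 0$ (product of Booleans), I would bound $\|s\|^2 = \sum_i s_i \le \sum_i \xi^{(1)}_i + \sum_i \xi^{(2)}_i \le 2 \delta_\eta n$ by the norm constraint in $\cAclose$. For (iii), $w_i = v_i(1 - s_i)$ is Boolean and $v_i - w_i = v_i s_i \ge 0$, so $\|w\|^2 = \sum_i w_i \le \sum_i v_i = \|v\|^2 \le 0.004\,n$.

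The heart of the proof, and the main obstacle, is (iv). I would establish the key SoS identity
\[
Z^{(1)}(i,b)\, Z^{(2)}(i,b)\, v_i \;=\; 0 \qquad \text{for every } b \in [k/2],
\]
which expresses that a vertex assigned label $b$ by both solutions cannot be in the disagreement set. To prove it, I would rewrite $v_i = y^{(1)}_i + y^{(2)}_i - 2 y^{(1)}_i y^{(2)}_i$ (using $(x^{(t)}_i)^2 = 1$), and then collapse each of the three terms using $Z^{(t)}(i,b)\, y^{(t)}_i = Z^{(t)}(i,b)$, which is immediate from $Z^{(t)}(i,b)\, Z^{(t)}(i,a) = 0$ for $a \ne b$. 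The identity then implies
\[
Z^{(t)}(i,b)\, v_i \;=\; Z^{(t)}(i,b)\bigl(1 - Z^{(\bar t)}(i,b)\bigr)\, v_i \;\le\; Z^{(t)}(i,b)\bigl(1 - Z^{(\bar t)}(i,b)\bigr),
\]
where $\bar t = 3 - t$; summing and using $\sum_i Z^{(\bar t)}(i,b)(1 - Z^{(t)}(i,b)) \ge 0$ in SoS gives $\langle Z^{(t)}(\cdot, b), v\rangle \le \|Z^{(1)}(\cdot, b) - Z^{(2)}(\cdot, b)\|^2 \le 0.004\,n/k$, the last step being the SoS triangle inequality applied to the per-column bound in $\cAinit$. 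Since $w \le v$ entrywise and $Z^{(t)}(\cdot, b) \ge 0$, the same bound transfers to $\langle Z^{(t)}(\cdot, b), w\rangle$. The main obstacle throughout is the careful degree-$O(1)$ bookkeeping of these Boolean SoS identities, especially the cross-cancellation $Z^{(1)}(i,b)\, Z^{(2)}(i,b)\, v_i = 0$; once they are in place, every bound reduces to the SoS triangle inequality applied to the constraints in $\cAinit$ and $\cAclose$.
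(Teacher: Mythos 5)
Your proposal is correct and follows essentially the same route as the paper: bounds (i)--(iii) are identical, and for (iv) your identity $Z^{(1)}(i,b)\,Z^{(2)}(i,b)\,v_i=0$ is just an algebraic repackaging of the paper's step $v = x^{(1)}\odot\bigl(\sum_{a\in[k/2]}Z^{(1)}(\cdot,a)-\sum_{a\in[k/2]}Z^{(2)}(\cdot,a)\bigr)$ together with $x^{(1)}\odot Z^{(1)}(\cdot,b)=Z^{(1)}(\cdot,b)$, both arriving at the bound $\sum_i Z^{(t)}(i,b)\bigl(1-Z^{(\bar t)}(i,b)\bigr)\le\|Z^{(1)}(\cdot,b)-Z^{(2)}(\cdot,b)\|^2$ followed by the triangle inequality against $Z_{\mathrm{init}}$.
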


\begin{proof}
    For the bound on $v$, by constraints in $\cAinit(Z;Z_{\mathrm{init}})$, it follows that
    \begin{align*}
        \Norm{v}^2
        & = \Norm{\frac{\one-x^{(1)} \odot x^{(2)}}{2}}^2
        = \Norm{\frac{x^{(1)} - x^{(2)}}{2}}^2 \\
        & = \Norm{\sum_{a\in [k/2]} Z^{(1)}(\cdot,a) - \sum_{a\in [k/2]} Z^{(2)}(\cdot,a)}^2 \\
        & \leq 2 \Norm{\sum_{a\in [k/2]} Z^{(1)}(\cdot,a) - \sum_{a \in [k/2]} Z_{\mathrm{init}}(\cdot,a)}^2 + 2 \Norm{\sum_{a\in [k/2]} Z^{(2)}(\cdot,a) - \sum_{a \in [k/2]} Z_{\mathrm{init}}(\cdot,a)}^2 \\
        & \leq 0.004 n \,.
    \end{align*}
    For the bound on $s$, by constraints in $\cAclose(Y,\xi;\eta,\bar{G})$, it follows that
    \begin{align*}
        \Norm{s}^2 &= \sum_i s_i
        = \sum_i 1 - (1-\xi^{(1)}_i) \odot (1-\xi^{(2)}_i)
        = \sum_i \xi^{(1)}_i + \sum_i \xi^{(2)}_i (1-\xi^{(1)}_i) \\
        & \leq \sum_i \xi^{(1)}_i + \sum_i \xi^{(2)}_i
        \leq 2 \delta_{\eta} n \,.
    \end{align*}
    Now, consider $w = v \odot (\one - s)$. Therefore,
    \begin{equation*}
        \Norm{w}^2 = \Norm{v \odot (\one - s)}^2 \leq \Norm{v}^2 \leq 0.004 n \,.
    \end{equation*}
    For the last inequality, for $b \in [\frac{k}{2}]$, it follows that
    \begin{equation*}
        \Iprod{Z^{(1)}(\cdot,b), w}
        = \sum_i Z^{(1)}(i,b) v_i (1-s_i)
        \leq \sum_i Z^{(1)}(i,b) v_i \,.
    \end{equation*}
    Since $v = x^{(1)} \odot \Big(\sum_{a\in [k/2]} Z^{(1)}(\cdot,a) - \sum_{a\in [k/2]} Z^{(2)}(\cdot,a)\Big)$ and $x^{(1)} \odot Z^{(1)}(\cdot,b) = Z^{(1)}(\cdot,b)$ for $b \in [\frac{k}{2}]$, we have
    \begin{align*}
        \sum_i Z^{(1)}(i,b) v_i
        &= \sum_i Z^{(1)}(i,b) x^{(1)}_i \Big(\sum_{a\in [k/2]} Z^{(1)}(i,a) - \sum_{a\in [k/2]} Z^{(2)}(i,a)\Big) \\
        &= \sum_i Z^{(1)}(i,b) - \sum_{a\in [k/2]} Z^{(2)}(i,a)Z^{(1)}(i,b) \\
        &\le \sum_i Z^{(1)}(i,b) - Z^{(2)}(i,b)Z^{(1)}(i,b) \\
        &= \sum_i Z^{(1)}(i,b)\big(1 - Z^{(2)}(i,b)\big) \\
        &\le \sum_i \big(Z^{(1)}(i,b) - Z^{(2)}(i,b)\big)^2 \\
        &= \Norm{Z^{(1)}(\cdot,b) - Z^{(2)}(\cdot,b)}^2 \\
        &\le 2 \Norm{Z^{(1)}(\cdot,b) - Z_{\mathrm{init}}(\cdot,b)}^2 + 2 \Norm{Z^{(2)}(\cdot,b) - Z_{\mathrm{init}}(\cdot,b)}^2 \\
        &\le \frac{0.004\,n}{k} \,.
    \end{align*}
    Thus,
    \begin{equation*}
        \Iprod{Z^{(1)}(\cdot,b), w}
        \leq \sum_i Z^{(1)}(i,b) v_i
        \leq \frac{0.004\,n}{k} \,.
    \end{equation*}
\end{proof}

\begin{lemma}
\label{lem:sos-signal-upper-bound-single}
    For any $a\in [\frac{k}{2}]$ and $t \in \set{1, 2}$,
    \begin{align*}
        \cA\Paren{Y^{(1)},Z^{(1)},\xi^{(1)}}, & \cA\Paren{Y^{(2)},Z^{(2)},\xi^{(2)}} \\
        \sststile{O(1)}{Y^{(1)},Z^{(1)},\xi^{(1)},Y^{(2)},Z^{(2)},\xi^{(2)}} & \Iprod{X^{(t)} \Paren{x^{(t)} \odot w}, x^{(t)} \odot w \odot Z^{(t)}(\cdot,a)} \leq \frac{0.008 n}{k} \Iprod{w, Z^{(t)}(\cdot,a)} \,,
    \end{align*}
    and,
    \begin{align*}
        \cA\Paren{Y^{(1)},Z^{(1)},\xi^{(1)}}, & \cA\Paren{Y^{(2)},Z^{(2)},\xi^{(2)}} \\
        \sststile{O(1)}{Y^{(1)},Z^{(1)},\xi^{(1)},Y^{(2)},Z^{(2)},\xi^{(2)}} & \Iprod{X^{(t)} \Paren{x^{(t)} \odot w}, x^{(t)} \odot w \odot Z^{(t)}(\cdot,a)}
        \geq - \frac{0.004 n}{k} \Iprod{w, Z^{(t)}(\cdot,a)} \,.
    \end{align*}
\end{lemma}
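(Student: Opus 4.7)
The plan is to reduce both inequalities to a single algebraic identity for $\Iprod{X^{(t)} u, u_a}$ and then bound the two resulting scalars using \cref{lem:agreement_lemma}. Set $u \coloneqq x^{(t)} \odot w$ and $u_a \coloneqq u \odot Z^{(t)}(\cdot,a)$, and introduce shorthand
\[
\alpha \;\coloneqq\; \Iprod{w,\,Z^{(t)}(\cdot,a)}\,, \qquad \beta \;\coloneqq\; \Iprod{\one,\,u}\,.
\]
The key step is to establish the SoS identity
\[
\cA\Paren{Y^{(1)},Z^{(1)},\xi^{(1)}},\ \cA\Paren{Y^{(2)},Z^{(2)},\xi^{(2)}} \sststile{O(1)}{} \Iprod{X^{(t)} u,\,u_a} \;=\; \alpha^{2} - \tfrac{1}{k}\,\alpha\,\beta\,.
\]
Once this is in hand, the two claimed inequalities will drop out by factoring out $\alpha$ and checking the remaining bracket is sign-correct.

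To derive the identity, I first extract the degree-$2$ SoS identity $Z^{(t)}(i,b)\,Z^{(t)}(i,c)=0$ for $b \neq c$ from $\cAlabel(Z^{(t)})$ in the standard way (from $Z^{(t)}(i,a)^{2}=Z^{(t)}(i,a)$ and $\sum_{c}Z^{(t)}(i,c)=1$). Combined with $x^{(t)} = 2\sum_{b\in[k/2]} Z^{(t)}(\cdot,b) - \one$, this gives $x^{(t)}\odot Z^{(t)}(\cdot,a) = Z^{(t)}(\cdot,a)$ for $a \in [k/2]$, and hence $u_a = w \odot Z^{(t)}(\cdot,a)$. Expanding $X^{(t)} = Z^{(t)}(Z^{(t)})^{\top} - \tfrac{1}{k}J$ and using the same column-orthogonality, the first term collapses to $\Iprod{u_a,\,Z^{(t)}(Z^{(t)})^{\top} u} = \alpha\cdot \bigl((Z^{(t)})^{\top} u\bigr)_{a} = \alpha^{2}$, while the second yields $\tfrac{1}{k}\Iprod{u_a,\,Ju} = \tfrac{1}{k}\Iprod{\one,u_a}\Iprod{\one,u} = \tfrac{\alpha\beta}{k}$.

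For the scalar bounds, \cref{lem:agreement_lemma} directly gives $\alpha \le 0.004\,n/k$. To control $\beta$, expand
\[
\beta \;=\; \sum_{b\in[k/2]} \Iprod{Z^{(t)}(\cdot,b),\,w} \;-\; \sum_{b\notin[k/2]} \Iprod{Z^{(t)}(\cdot,b),\,w}\,,
\]
and note that each $\Iprod{Z^{(t)}(\cdot,b),\,w} = \sum_{i} (Z^{(t)}(i,b)\,w_{i})^{2}$ is SoS-nonnegative because both $Z^{(t)}(i,b)^{2}=Z^{(t)}(i,b)$ and $w_{i}^{2}=w_{i}$ are constant-degree SoS identities, where the latter unfolds via the idempotence of $v_{i}=\tfrac{1-x^{(1)}_{i}x^{(2)}_{i}}{2}$ and $1-s_{i}=(1-\xi^{(1)}_{i})(1-\xi^{(2)}_{i})$ that follows from $\cAlabel$ and $\cAclose$. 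Hence $|\beta| \le \sum_{b}\Iprod{Z^{(t)}(\cdot,b),w} = \Iprod{\one,w} = \Norm{w}^{2} \le 0.004\,n$ by \cref{lem:agreement_lemma}.

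The two inequalities now follow by factoring. For the upper bound,
\[
\tfrac{0.008\,n}{k}\,\alpha - \Paren{\alpha^{2} - \tfrac{1}{k}\alpha\beta} \;=\; \alpha \cdot \Paren{\tfrac{0.008\,n}{k} - \alpha + \tfrac{\beta}{k}}\,,
\]
and the bracket is $\ge 0$ as an SoS combination of the two scalar inequalities $\alpha \le 0.004\,n/k$ and $-\beta \le 0.004\,n$. The lower bound is analogous,
\[
\Paren{\alpha^{2} - \tfrac{1}{k}\alpha\beta} + \tfrac{0.004\,n}{k}\,\alpha \;=\; \alpha \cdot \Paren{\alpha - \tfrac{\beta}{k} + \tfrac{0.004\,n}{k}}\,,
\]
whose bracket is $\ge 0$ via $\alpha \ge 0$ and $\beta \le 0.004\,n$. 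In both cases $\alpha$ is itself SoS-nonnegative, so we obtain a constant-degree SoS proof as a product of two SoS-nonnegative factors. The main technical care lies not in any single step but in degree bookkeeping: one must verify that column orthogonality, the idempotence $w_{i}^{2}=w_{i}$, and the collapse of $\Iprod{X^{(t)}u,u_{a}}$ to $\alpha^{2}-\alpha\beta/k$ are each derivable at constant SoS degree, so that multiplying through by the nonnegative bracket above keeps the final certificate inside the $O(1)$ degree budget of \cref{algo:robust-bisection-boosting}.
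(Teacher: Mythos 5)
Your proof is correct and takes essentially the same route as the paper: you split $X^{(t)}=Z^{(t)}(Z^{(t)})^\top-\tfrac1k J$, use $x^{(t)}\odot Z^{(t)}(\cdot,a)=Z^{(t)}(\cdot,a)$ and column orthogonality to collapse the first term to $\alpha^2$ and the second to $\tfrac{\alpha\beta}{k}$, and then invoke \cref{lem:agreement_lemma} for $\alpha\le 0.004n/k$ and $\Norm{w}^2\le 0.004n$. Packaging this as the exact identity $\alpha^2-\tfrac{\alpha\beta}{k}$ followed by factoring out the SoS-nonnegative $\alpha$ is only a cosmetic reorganization of the paper's term-by-term upper and lower bounds, and your degree bookkeeping is sound.
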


\begin{proof}
    Recall that $X^{(t)}=Z^{(t)}(Z^{(t)})^\top-\frac{1}{k} J$. We consider $Z^{(t)}(Z^{(t)})^\top$ and $\frac{1}{k} J$ separately. For the $Z^{(t)}(Z^{(t)})^\top$ part, notice that, since $x_i = 2 \sum_{a \in [k/2]} Z(i, a)-1$, we have $x_i^{(t)} \cdot Z^{(t)}(i, a) = Z^{(t)}(i, a)$ for $a \in [k/2]$. Therefore, it follows that
    \begin{align*}
        \Iprod{Z^{(t)} (Z^{(t)})^{\top} \Paren{x^{(t)} \odot w}, x^{(t)} \odot w \odot Z(\cdot,a)}
        & = \sum_{b\in [k]} \Iprod{Z^{(t)}(\cdot,b) Z^{(t)}(\cdot,b) ^\top \Paren{x^{(t)} \odot w}, x^{(t)} \odot w \odot Z^{(t)}(\cdot,a)} \\
        &=\sum_{b\in [k]} Z^{(t)}(\cdot,b)^\top \Paren{x^{(t)} \odot w} Z^{(t)}(\cdot,b)^\top (x^{(t)} \odot w \odot Z^{(t)}(\cdot,a))\\
        &= Z^{(t)}(\cdot,a) ^\top (x^{(t)} \odot w) Z^{(t)}(\cdot,a)^\top (x^{(t)} \odot w \odot Z^{(t)}(\cdot,a))\\
        &= \Iprod{Z^{(t)}(\cdot,a), w} Z^{(t)}(\cdot,a)^\top (w \odot Z^{(t)}(\cdot,a))\\
        & \leq \frac{0.004 n}{k} \Iprod{w, Z^{(t)}(\cdot,a)} \,,
    \end{align*}
    where the last inequality is by \cref{lem:agreement_lemma}. Moreover, from the last equality, we can also obtain
    \begin{equation*}
        \Iprod{Z^{(t)} (Z^{(t)})^{\top} \Paren{x^{(t)} \odot w}, x^{(t)} \odot w \odot Z(\cdot,a)} \geq 0
    \end{equation*}
    For the $\frac{1}{k} J$ part, we have
    \begin{align*}
        \Iprod{\frac{1}{k} J \Paren{x^{(t)} \odot w}, x^{(t)} \odot w \odot Z(\cdot,a)}
        & = \frac{1}{k} \Iprod{\one, x^{(t)} \odot w} \Iprod{\one, x^{(t)} \odot w \odot Z^{(t)}(\cdot,a)} \\
        & \leq \frac{1}{k} \Iprod{\one, w} \Iprod{w, Z^{(t)}(\cdot,a)} \\
        & \leq \frac{0.004 n}{k} \Iprod{w, Z^{(t)}(\cdot,a)} \,.
    \end{align*}
    Equivalently, we can obtain
    \begin{equation*}
        \Iprod{\frac{1}{k} J \Paren{x^{(t)} \odot w}, x^{(t)} \odot w \odot Z(\cdot,a)}
        \geq - \frac{0.004n}{k} \Iprod{w, Z^{(t)}(\cdot,a)} \,.
    \end{equation*}
    Therefore, we have
    \begin{align*}
        \Iprod{X^{(t)} \Paren{x^{(t)} \odot w}, x^{(t)} \odot w \odot Z^{(t)}(\cdot,a)}
        & = \Iprod{\Paren{Z^{(t)} (Z^{(t)})^\top-\frac{1}{k} J} \Paren{x^{(t)} \odot w}, x^{(t)} \odot w \odot Z^{(t)}(\cdot,a)} \\
        & \leq \frac{0.008 n}{k} \Iprod{w, Z^{(t)}(\cdot,a)} \,,
    \end{align*}
    and,
    \begin{equation*}
        \Iprod{X^{(t)} \Paren{x^{(t)} \odot w}, x^{(t)} \odot w \odot Z^{(t)}(\cdot,a)}
        \geq - \frac{0.004 n}{k} \Iprod{w, Z^{(t)}(\cdot,a)} \,.
    \end{equation*}
\end{proof}

\begin{corollary}\label{cor:sos-signal-upper-bound}
    For any $t \in \set{1, 2}$,
    \begin{align*}
        &\cA\Paren{Y^{(1)},Z^{(1)},\xi^{(1)}}, \cA\Paren{Y^{(2)},Z^{(2)},\xi^{(2)}} \\ 
        &\sststile{O(1)}{Y^{(1)},Z^{(1)},\xi^{(1)},Y^{(2)},Z^{(2)},\xi^{(2)}}
        \Iprod{X^{(t)}\Paren{x^{(t)}\odot w}, x^{(t)}\odot w\odot \frac{x^{(t)}+\one}{2}}
        \leq \frac{0.008 n}{k} \Norm{w}^2 \,,
    \end{align*}
    and,
    \begin{align*}
        &\cA\Paren{Y^{(1)},Z^{(1)},\xi^{(1)}}, \cA\Paren{Y^{(2)},Z^{(2)},\xi^{(2)}} \\ 
        &\sststile{O(1)}{Y^{(1)},Z^{(1)},\xi^{(1)},Y^{(2)},Z^{(2)},\xi^{(2)}}
        \Iprod{X^{(t)}\Paren{x^{(t)}\odot w}, x^{(t)}\odot w\odot \frac{x^{(t)}+\one}{2}}
        \geq - \frac{0.004 n}{k} \Norm{w}^2 \,.
    \end{align*}
\end{corollary}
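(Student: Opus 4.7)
\textbf{Proof plan for \cref{cor:sos-signal-upper-bound}.} The plan is to decompose the right‑hand factor $\frac{x^{(t)}+\one}{2}$ along the community structure and then apply \cref{lem:sos-signal-upper-bound-single} term by term. The key identity is
\begin{equation*}
  \tfrac{x^{(t)}+\one}{2} \;=\; \sum_{a\in [k/2]} Z^{(t)}(\cdot,a),
\end{equation*}
which is immediate from $x^{(t)}=2\sum_{a\in[k/2]}Z^{(t)}(\cdot,a)-\one$ and has a trivial SoS proof. Using linearity of the inner product in its right argument, this gives the degree-$O(1)$ SoS identity
\begin{equation*}
  \Iprod{X^{(t)}(x^{(t)}\odot w),\, x^{(t)}\odot w\odot \tfrac{x^{(t)}+\one}{2}}
  \;=\; \sum_{a\in [k/2]} \Iprod{X^{(t)}(x^{(t)}\odot w),\, x^{(t)}\odot w\odot Z^{(t)}(\cdot,a)} \,,
\end{equation*}
so the corollary reduces to a sum of the per-community bounds already established.

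Applying \cref{lem:sos-signal-upper-bound-single} to each summand and summing, the upper bound becomes $\tfrac{0.008\,n}{k}\sum_{a\in[k/2]}\Iprod{w,Z^{(t)}(\cdot,a)} = \tfrac{0.008\,n}{k}\Iprod{w,\tfrac{x^{(t)}+\one}{2}}$, and analogously for the lower bound. It then suffices to SoS‑prove $\Iprod{w,\tfrac{x^{(t)}+\one}{2}}\le \Norm{w}^2$. Two elementary facts from the constraint system suffice: first, $w\odot w = w$ (since $v_i^2=v_i$ follows from $(x^{(t)}_i)^2=1$ in $\cAlabel$, and $(1-s_i)^2=1-s_i$ from $\cAclose$, so that $w_i^2 = v_i^2(1-s_i)^2 = w_i$); and second, $w_i\cdot\tfrac{1-x^{(t)}_i}{2}\ge 0$ as an SoS inequality. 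For the latter, \cref{clm:bisection_equivalences} gives $v\odot\tfrac{1-x^{(1)}}{2}=v\odot\tfrac{1+x^{(2)}}{2}$, so $w_i\cdot\tfrac{1-x^{(1)}_i}{2}=v_i(1-s_i)\cdot\tfrac{1+x^{(2)}_i}{2}$, which is a product of quantities each of which is an explicit sum of squares (namely $v_i=v_i^2$, $1-s_i=(1-s_i)^2$, and $\tfrac{1+x^{(2)}_i}{2}=\sum_{a\in[k/2]}Z^{(2)}(i,a)^2$). The symmetric identity handles $t=2$.

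Combining, $\Iprod{w,\tfrac{x^{(t)}+\one}{2}}\le \Iprod{w,\one}=\sum_i w_i=\sum_i w_i^2=\Norm{w}^2$ has a constant-degree SoS proof. Multiplying by the nonnegative scalars $\tfrac{0.008\,n}{k}$ and $\tfrac{0.004\,n}{k}$ yields both directions of the claim. There is no real obstacle beyond bookkeeping: the only subtlety is that the sign-control step must be carried out in SoS rather than by pointwise reasoning, and this is precisely what the identities in \cref{clm:bisection_equivalences} were set up to provide.
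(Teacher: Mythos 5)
Your proposal is correct and follows essentially the same route as the paper's proof: decompose $\tfrac{x^{(t)}+\one}{2}=\sum_{a\in[k/2]}Z^{(t)}(\cdot,a)$, apply \cref{lem:sos-signal-upper-bound-single} to each summand, and conclude via $\Iprod{w,\tfrac{x^{(t)}+\one}{2}}\le\Norm{w}^2$. The only difference is that you spell out the SoS justification of this last inequality (via $w\odot w=w$ and the sign identities from \cref{clm:bisection_equivalences}), which the paper leaves implicit.
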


\begin{proof}
    Recall that $x^{(t)} = 2 \sum_{a\in [k/2]} Z^{(t)}(\cdot,a)-\one$. Therefore,
    \begin{align*}
        \Iprod{X^{(t)}\Paren{x^{(t)}\odot w}, x^{(t)} \odot w \odot \frac{(x^{(t)}+\one)}{2}}
        &=\Iprod{X^{(t)}\Paren{x^{(t)}\odot w}, x^{(t)} \odot w \odot \sum_{a\in [k/2]} Z^{(t)}(\cdot,a)} \\
        &=\sum_{a\in [k/2]} \Iprod{X^{(t)}\Paren{x^{(t)}\odot w}, x^{(t)} \odot w \odot  Z^{(t)}(\cdot,a)} \,.
    \end{align*}
    Applying \cref{lem:sos-signal-upper-bound-single}, we can get
    \begin{align*}
        \Iprod{X^{(t)}\Paren{x^{(t)}\odot w}, x^{(t)} \odot w \odot \frac{(x^{(t)}+\one)}{2}}
        & \leq \sum_{a\in [k/2]} \frac{0.008 n}{k} \Iprod{w, Z^{(t)}(\cdot,a)} \\
        & = \frac{0.008 n}{k} \Iprod{w, \frac{x^{(t)}+\one}{2}} \\
        & \leq \frac{0.008 n}{k} \Norm{w}^2 \,,
    \end{align*}
    and,
    \begin{align*}
        \Iprod{X^{(t)}\Paren{x^{(t)}\odot w}, x^{(t)} \odot w \odot \frac{(x^{(t)}+\one)}{2}}
        & \geq - \sum_{a\in [k/2]} \frac{0.004 n}{k} \Iprod{w, Z^{(t)}(\cdot,a)} \\
        & = - \frac{0.004 n}{k} \Iprod{w, \frac{x^{(t)}+\one}{2}}\\
        & \geq - \frac{0.004 n}{k} \Norm{w}^2 \,.
    \end{align*}
\end{proof}

The following bound on $\Iprod{Y^{(t)}\Paren{x^{(t)}\odot g}, x^{(t)}\odot w}$ will be the key thing that we need in the proof of the algorithmic guarantees.

\begin{lemma}
    \label{lem:sos-validity-condition}
    For $t\in \{1,2\}$,
    \begin{align*}
        &\cA\Paren{Y^{(1)},Z^{(1)},\xi^{(1)}}, \cA\Paren{Y^{(2)},Z^{(2)},\xi^{(2)}} \\ 
        &\sststile{O(1)}{Y^{(1)},Z^{(1)},\xi^{(1)},Y^{(2)},Z^{(2)},\xi^{(2)}} \Iprod{Y^{(t)}\Paren{x^{(t)}\odot g}, x^{(t)}\odot w} \leq \frac{0.016 \eps d}{k} \Norm{w}^2 + 4 \eps d \delta_{\eta} n \,.
    \end{align*}  
\end{lemma}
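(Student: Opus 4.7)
The plan is to expand $Y^{(t)}$ via $\cAmix$ as $Y^{(t)} = \tfrac{\eps d}{n}X^{(t)} + E^{(t)}$ with $\Normop{E^{(t)}}\le (\chi+1/k)\sqrt{d}$, and simultaneously to expand $g = w + s$, noting that $w\odot s = 0$ follows from the definition $w = v\odot(\one-s)$. The target inner product then breaks into three contributions: a noise piece $\Iprod{E^{(t)}(x^{(t)}\odot g), x^{(t)}\odot w}$, a cross-signal piece $\tfrac{\eps d}{n}\Iprod{X^{(t)}(x^{(t)}\odot s), x^{(t)}\odot w}$, and a diagonal-signal piece $\tfrac{\eps d}{n}\Iprod{X^{(t)}(x^{(t)}\odot w), x^{(t)}\odot w}$. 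I would bound each by SoS Cauchy--Schwarz and weighted AM--GM, with weights chosen so that the $\|w\|^2$-coefficients collectively fit into $\tfrac{0.016\eps d}{k}$ and the $\delta_\eta n$-coefficients fit into $4\eps d$.

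For the noise piece, SoS Cauchy--Schwarz together with the spectral bound gives $|\Iprod{E^{(t)}(x\odot g), x\odot w}| \leq (\chi+1/k)\sqrt{d}\,\|g\|\,\|w\|$, where I use that $\|x^{(t)}\odot u\|=\|u\|$ since $\cAlabel$ forces $x^{(t)}\in\{\pm 1\}^n$. Using $\|g\|^2 = \|w\|^2 + \|s\|^2 \le \|w\|^2 + 2\delta_\eta n$ from \cref{lem:agreement_lemma} with weighted AM--GM, and exploiting the KS hypothesis $\eps^2 d \ge K k^2$ (which implies $(\chi+1/k)\sqrt{d}\lesssim \tfrac{\eps d}{k}$ for $K$ sufficiently large), the noise contribution is absorbable into both error budgets. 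The cross-signal piece is treated analogously using the SoS-provable operator-norm bound $\Normop{X^{(t)}}\le n/k$, derivable from $\cAlabel$ via the identity $(X^{(t)})^2 = \tfrac{n}{k}X^{(t)}$ (in turn using $(ZZ^\top)^2 = \tfrac{n}{k}ZZ^\top$ and $J^2 = nJ$); this yields $|\Iprod{X^{(t)}(x\odot s), x\odot w}| \le \tfrac{n}{k}\|s\|\|w\|$, and weighted AM--GM with $\|s\|^2\le 2\delta_\eta n$ again fits within the budgets.

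The main obstacle is the diagonal-signal piece $\tfrac{\eps d}{n}\Iprod{X^{(t)}(x^{(t)}\odot w), x^{(t)}\odot w}$. I plan to decompose $x^{(t)}\odot w = p - q$ where $p = w\odot\tfrac{\one+x^{(t)}}{2}$ and $q = w\odot\tfrac{\one-x^{(t)}}{2}$, so that $\Iprod{X^{(t)}(x^{(t)}\odot w), x^{(t)}\odot w} = \Iprod{X^{(t)}(x^{(t)}\odot w), p} - \Iprod{X^{(t)}(x^{(t)}\odot w), q}$. The first summand is directly bounded by $\tfrac{0.008n}{k}\|w\|^2$ via \cref{cor:sos-signal-upper-bound}. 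For the second summand, I would establish the analogue of \cref{lem:sos-signal-upper-bound-single} for communities $a \in (k/2, k]$: the $ZZ^\top$ contribution produces the nonnegative sum $\sum_{a\in(k/2,k]}\Iprod{Z^{(t)}(\cdot,a), w}^2$ (the two factors of $x^{(t)}\odot Z^{(t)}(\cdot,a) = -Z^{(t)}(\cdot,a)$ multiply to $+1$ and still give a square), while the $\tfrac{1}{k}J$ contribution is dominated by $\tfrac{1}{k}|\Iprod{x^{(t)}, w}||\Iprod{\tfrac{\one-x^{(t)}}{2}, w}| \le \tfrac{\|w\|^4}{k} \le \tfrac{0.004n}{k}\|w\|^2$ using \cref{lem:agreement_lemma}'s bound $\|w\|^2\le 0.004n$. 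The subtlety I expect to be hardest is that $\cAinit$ only supplies individual column bounds for $a\in[k/2]$ rather than $a\in(k/2,k]$; the resolution is to control the sum of squares $\sum_{a\in(k/2,k]}\Iprod{Z^{(t)}(\cdot,a), w}^2$ through the aggregate identity $\sum_{a\in(k/2,k]}Z^{(t)}(\cdot,a) = \tfrac{\one-x^{(t)}}{2}$ combined with $\cAinit$'s aggregate constraint, rather than through per-community bounds. Multiplying by $\tfrac{\eps d}{n}$ and combining the three contributions yields the stated bound $\tfrac{0.016\eps d}{k}\|w\|^2 + 4\eps d\,\delta_\eta n$.
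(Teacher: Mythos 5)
Your overall decomposition (noise piece via $\cAmix$, cross piece $X^{(t)}$ against $s$, diagonal piece $X^{(t)}$ against $w$) is workable for the first two pieces, but the handling of the diagonal-signal piece has a genuine gap, and it is exactly at the step you flag as the "hardest subtlety." Writing $x^{(t)}\odot w = p-q$ with $q = w\odot\tfrac{\one-x^{(t)}}{2}$, the $Z^{(t)}(Z^{(t)})^\top$ contribution of the $q$-term is $+\sum_{a\in(k/2,k]}\Iprod{Z^{(t)}(\cdot,a),w}^2$, which must be upper bounded by roughly $\tfrac{0.004n}{k}\Norm{w}^2$ for the final coefficient $\tfrac{0.016\eps d}{k}$ to come out. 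Your proposed resolution — controlling this sum of squares via the aggregate identity $\sum_{a>k/2}Z^{(t)}(\cdot,a)=\tfrac{\one-x^{(t)}}{2}$ and the aggregate $\cAinit$ constraint — cannot work: those constraints only give $\sum_{a>k/2}\Iprod{Z^{(t)}(\cdot,a),w}\le\Norm{w}^2$ and closeness of the whole minus side to the initializer; they place no per-column restriction on how $w$ distributes among the minus-side communities. In the extremal (and feasible) case where $w$ lies entirely inside a single community $a_0\in(k/2,k]$ of $Z^{(t)}$, the sum of squares equals $\Norm{w}^4\approx 0.004n\Norm{w}^2$, which after multiplying by $\eps d/n$ gives $0.004\,\eps d\Norm{w}^2$ — a factor of $k$ larger than the budget $\tfrac{0.016\eps d}{k}\Norm{w}^2$. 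So the single-solution route to the minus-side term fails quantitatively.

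The paper's proof circumvents this by never bounding the minus-side term within solution $t$ alone. It restricts to the shared uncorrupted block where $Y^{(1)}_S=Y^{(2)}_S$, and uses \cref{clm:bisection_equivalences} ($w\odot\tfrac{\one-x^{(1)}}{2}=w\odot\tfrac{\one+x^{(2)}}{2}$ and $x^{(1)}\odot w=-x^{(2)}\odot w$) to rewrite the $q$-term for solution $1$ as minus the corresponding \emph{plus-side} term for solution $2$. On the plus side, the per-column constraints of $\cAinit$ (which are only imposed for $a\in[k/2]$) give $\Iprod{Z^{(t)}(\cdot,a),w}\le\tfrac{0.004n}{k}$ for each relevant column (\cref{lem:sos-signal-upper-bound-single}), and both an upper and a lower bound of order $\tfrac{n}{k}\Norm{w}^2$ follow (\cref{cor:sos-signal-upper-bound}). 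This two-solution symmetry step is the essential missing ingredient in your argument; without it the lemma's constant cannot be achieved.
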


\begin{proof}
    Without loss of generality, we prove the statement for $t=1$. Let $Y^{(1)}_S \seteq Y^{(1)} \odot (\one-s)(\one-s)^\top$, we use decomposition
    \begin{equation}
        \label{eq:sos-validity-condition-1}
        \Iprod{Y^{(1)}\Paren{x^{(1)}\odot g}, x^{(1)}\odot w}
        = \Iprod{\Paren{Y^{(1)} - Y^{(1)}_S} \Paren{x^{(1)}\odot g}, x^{(1)}\odot w} + \Iprod{Y^{(1)}_S\Paren{x^{(1)}\odot g}, x^{(1)}\odot w} \,.
    \end{equation}
    For the term $\Iprod{\Paren{Y^{(1)} - Y^{(1)}_S} \Paren{x^{(1)}\odot g}, x^{(1)}\odot w}$, since $Y^{(1)} - Y^{(1)}_S = Y^{(1)} \odot \Paren{J - (\one-s)(\one-s)^\top}$, $g = w+s$, and $w = v \odot (\one - s)$, we have $\Iprod{\Paren{Y^{(1)} - Y^{(1)}_S} \Paren{x^{(1)}\odot w}, x^{(1)}\odot w} = 0$ and $\Paren{J - (\one-s)(\one-s)^\top} \odot \Paren{x^{(1)}\odot s} \Paren{x^{(1)}\odot w}^{\top} = \Paren{x^{(1)}\odot s} \Paren{x^{(1)}\odot w}^{\top}$. Therefore,
    \begin{align*}
        \Iprod{\Paren{Y^{(1)} - Y^{(1)}_S} \Paren{x^{(1)}\odot g}, x^{(1)}\odot w}
        = & \Iprod{\Paren{Y^{(1)} - Y^{(1)}_S} \Paren{x^{(1)}\odot s}, x^{(1)}\odot w} \\
        = & \Iprod{Y^{(1)} \odot \Paren{J - (\one-s)(\one-s)^\top}, \Paren{x^{(1)}\odot s} \Paren{x^{(1)}\odot w}^{\top}} \\
        = & \Iprod{Y^{(1)}, \Paren{x^{(1)}\odot s} \Paren{x^{(1)}\odot w}^{\top}} \,.
    \end{align*}
    let $Y^{(1)} = \frac{\eps d}{n} X^{(1)} + E^{(1)}$ where $X^{(1)}=Z^{(1)}(Z^{(1)})^\top-\frac{1}{k} J$ and $E^{(1)} = Y^{(1)} - \frac{\eps d}{n} X^{(1)}$, it follows that
    \begin{align*}
        \Iprod{\Paren{Y^{(1)} - Y^{(1)}_S} \Paren{x^{(1)}\odot g}, x^{(1)}\odot w}
        = & \frac{\eps d}{n} \Iprod{Z^{(1)}(Z^{(1)})^\top, \Paren{x^{(1)}\odot s} \Paren{x^{(1)}\odot w}^{\top}} \\
        & - \frac{\eps d}{k n} \Iprod{J , \Paren{x^{(1)}\odot s} \Paren{x^{(1)}\odot w}^{\top}} \\
        & + \Iprod{E^{(1)}, \Paren{x^{(1)}\odot s} \Paren{x^{(1)}\odot w}^{\top}} \,.
    \end{align*}
    For the first term, since $\zeros \leq Z^{(1)}(Z^{(1)})^\top \leq J$ and $x^{(1)} \leq \one$, it follows that
    \begin{equation*}
        \frac{\eps d}{n} \Iprod{Z^{(1)}(Z^{(1)})^\top, \Paren{x^{(1)}\odot s} \Paren{x^{(1)}\odot w}^{\top}}
        \leq \frac{\eps d}{n} \Iprod{J, s w^{\top}}
        = \frac{\eps d}{n} \Paren{\sum_{i} s_i} \Paren{\sum_{i} w_i}
        \leq 2 \eps d \delta_{\eta} \norm{w}^2 \,.
    \end{equation*}
    For the second term, since $x^{(1)} \geq -\one$, it follows that
    \begin{equation*}
        - \frac{\eps d}{k n} \Iprod{J, \Paren{x^{(1)}\odot s} \Paren{x^{(1)}\odot w}^{\top}}
        \leq \frac{\eps d}{k n} \Iprod{J, s w^{\top}}
        =  \frac{\eps d}{k n} \Paren{\sum_{i} s_i} \Paren{\sum_{i} w_i}
        \leq \frac{2 \eps d \delta_{\eta}}{k} \norm{w}^2 \,.
    \end{equation*}
    For the third term, $\cAmix(Y,Z)$ and SoS Cauchy-Schwarz inequality, it follows that
    \begin{align*}
        \Iprod{E^{(1)}, \Paren{x^{(1)}\odot s} \Paren{x^{(1)}\odot w}^{\top}}
        & \leq \Paren{\chi + \frac{1}{k}}\sqrt{d} \Norm{x^{(1)}\odot s}^2 + \frac{1}{\Paren{\chi + \frac{1}{k}}\sqrt{d}}\Normop{E^{(1)}}^2 \Norm{x^{(1)}\odot w}^2 \\
        & \leq \Paren{\chi + \frac{1}{k}}\sqrt{d} \delta_{\eta} n  + \Paren{\chi + \frac{1}{k}}\sqrt{d} \Norm{w}^2
    \end{align*}
    Since $\eps^2 d \gg k^2$, we have $\Paren{\chi + \frac{1}{k}}\sqrt{d} \ll \frac{\eps d}{k}$, we can obtain
    \begin{equation*}
        \Iprod{E^{(1)}, \Paren{x^{(1)}\odot s} \Paren{x^{(1)}\odot w}^{\top}}
        \leq \frac{0.0001 \eps d \delta_{\eta}}{k} n  + \frac{0.001 \eps d}{k} \Norm{w}^2 \,.
    \end{equation*}
    Thus, it follows that
    \begin{equation}
    \label{eq:sos-validity-condition-2}
        \Iprod{\Paren{Y^{(1)} - Y^{(1)}_S} \Paren{x^{(1)}\odot g}, x^{(1)}\odot w}
        \leq \frac{0.001 \eps d}{k} \Norm{w}^2 + 4 \eps d \delta_{\eta} n \,.
    \end{equation}
    Now, we consider the term $\Iprod{Y^{(1)}_S\Paren{x^{(1)}\odot g}, x^{(1)}\odot w}$. By $\cAclose(Y,\xi;\eta,\bar{G})$, we have
    \begin{equation*}
        Y^{(1)} \odot (\one-s)(\one-s)^\top = \bar{G} \odot (\one-s)(\one-s)^\top = Y^{(2)}\odot (\one-s)(\one-s)^\top\,.
    \end{equation*}
    By \cref{clm:bisection_equivalences} and $w = v \odot (\one - s)$,
    \begin{equation*}
        \frac{\one-x^{(1)}}{2}\odot w=\frac{\one+x^{(2)}}{2}\odot w
        \quad \text{and} \quad
        x^{(1)}\odot w = -x^{(2)}\odot w\,.
    \end{equation*}
    Therefore, we have
    \begin{align*}
        & \Iprod{Y^{(1)}_S \Paren{x^{(1)}\odot g}, x^{(1)}\odot w} \\
        & =\Iprod{Y^{(1)}_S \Paren{x^{(1)}\odot g}, x^{(1)}\odot w\odot \frac{1+x^{(1)}}{2}}+\Iprod{Y^{(1)}_S \Paren{x^{(1)}\odot g}, x^{(1)}\odot w\odot \frac{1-x^{(1)}}{2}}\\
        & =\Iprod{Y^{(1)}_S \Paren{x^{(1)}\odot g}, x^{(1)}\odot w \odot \frac{1+x^{(1)}}{2}}-\Iprod{Y^{(2)}_S \Paren{x^{(2)}\odot g}, x^{(2)}\odot w \odot \frac{1+x^{(2)}}{2}} \\
        & =\Iprod{Y^{(1)} \Paren{x^{(1)}\odot w}, x^{(1)}\odot w \odot \frac{1+x^{(1)}}{2}}-\Iprod{Y^{(2)} \Paren{x^{(2)}\odot w}, x^{(2)}\odot w \odot \frac{1+x^{(2)}}{2}} \,.
    \end{align*}
    Consider the first term, it follows that
    \begin{align*}
        \Iprod{Y^{(1)} \Paren{x^{(1)}\odot w}, x^{(1)}\odot w \odot \frac{1+x^{(1)}}{2}}
        = & \frac{\eps d}{n}\Iprod{X^{(1)} \Paren{x^{(1)}\odot w}, x^{(1)}\odot w \odot \frac{1+x^{(1)}}{2}} \\
        & + \Iprod{E^{(1)} \Paren{x^{(1)}\odot w}, x^{(1)}\odot w \odot \frac{1+x^{(1)}}{2}} \,.
    \end{align*}
    By \cref{cor:sos-signal-upper-bound}, we have
    \begin{equation*}
        \frac{\eps d}{n}\Iprod{X^{(1)} \Paren{x^{(1)}\odot w}, x^{(1)}\odot w \odot \frac{1+x^{(1)}}{2}}
        \leq \frac{0.008 \eps d}{k} \Norm{w}^2 \,.
    \end{equation*}
    By $\cAmix(Y,Z)$ and SoS Cauchy-Schwarz inequality, we have
    \begin{equation*}
       \Iprod{E^{(1)} \Paren{x^{(1)}\odot w}, x^{(1)}\odot w \odot \frac{1+x^{(1)}}{2}}
       \leq \Paren{\chi + \frac{1}{k}}\sqrt{d} \norm{w}^2\,.
    \end{equation*}
    Since $\eps^2 d \gg k^2$, we have $\Paren{\chi + \frac{1}{k}}\sqrt{d} \ll \frac{\eps d}{k}$, and, therefore,
    \begin{equation*}
        \Iprod{Y^{(1)} \Paren{x^{(1)}\odot w}, x^{(1)}\odot w \odot \frac{1+x^{(1)}}{2}}
        \leq \frac{0.01 \eps d}{k} \Norm{w}^2\,.
    \end{equation*}
    Using similar analysis on the lower bound side, we can obtain 
    \begin{equation*}
       \Iprod{Y^{(2)} \Paren{x^{(2)}\odot w}, x^{(2)}\odot w \odot \frac{1+x^{(2)}}{2}}
        \geq - \frac{0.005 \eps d}{k} \Norm{w}^2\,.
    \end{equation*}
    Thus,
    \begin{equation}
    \label{eq:sos-validity-condition-3}
        \Iprod{Y^{(1)}_S \Paren{x^{(1)}\odot g}, x^{(1)}\odot w}
        \leq \frac{0.015 \eps d}{k} \Norm{w}^2\,.
    \end{equation}
    Plugging \cref{eq:sos-validity-condition-2} and \cref{eq:sos-validity-condition-3} into \cref{eq:sos-validity-condition-1}, it follows that
    \begin{equation*}
        \Iprod{Y^{(1)}\Paren{x^{(1)}\odot g}, x^{(1)}\odot w} \leq \frac{0.016 \eps d}{k} \Norm{w}^2 + 4 \eps d \delta_{\eta} n \,.
    \end{equation*}
\end{proof}

Now combining \cref{fact:key-equality} and \cref{lem:sos-validity-condition}, we can prove \cref{thm:Identifiability-proof}.

\begin{theorem}
\label{thm:Identifiability-proof}
    Consider the setting of \cref{algo:robust-bisection-boosting}. We have
\begin{align*}
    \cA\Paren{Y^{(1)},Z^{(1)},\xi^{(1)}},\cA\Paren{Y^{(2)},Z^{(2)},\xi^{(2)}} &  \\
    \sststile{O(1)}{Y^{(1)},Z^{(1)},\xi^{(1)},Y^{(2)},Z^{(2)},\xi^{(2)}} \Norm{\frac{\one-x^{(1)}\odot x^{(2)}}{2}}^2 &\leq O \Paren{\frac{k}{0.96-\gamma} \Paren{\exp(-\frac{\gamma \tilde{C}}{2}) + \delta_{\eta}} n} \,.
\end{align*} 
\end{theorem}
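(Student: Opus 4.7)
The plan is to test the two majority-voting constraints $\cAmaj(Y^{(t)},Z^{(t)},R;\gamma,\tilde{C})$, $t\in\set{1,2}$, against the Boolean indicator
$w \coloneqq v \odot (\one - s)$, where $v = \tfrac{\one-x^{(1)}\odot x^{(2)}}{2}$ and $s = \one - (\one-\xi^{(1)})\odot(\one-\xi^{(2)})$. Intuitively $w$ isolates the nodes where the two bisections disagree \emph{and} where both programs see the same centered adjacency. Since $w$ is Boolean we have $\sum_i w_i = \Norm{w}^2$, so $\cAmaj$ directly gives a degree-$O(1)$ SoS proof of
\begin{equation*}
\Iprod{Y^{(t)} x^{(t)},\, x^{(t)} \odot w} \ge \alpha_\gamma\bigl(\Norm{w}^2 - \beta_{\gamma,\tilde{C}}\, n\bigr)\,, \qquad t\in\set{1,2}\,.
\end{equation*}

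Writing $g = w+s$ and decomposing $x^{(t)} = x^{(t)}\odot g + x^{(t)}\odot(\one-g)$, I would sum the two inequalities above: the $(\one-g)$-contributions on the right-hand side cancel exactly, by \cref{fact:key-equality}, and each surviving $g$-term is upper bounded by \cref{lem:sos-validity-condition}. This yields
\begin{equation*}
2\alpha_\gamma\bigl(\Norm{w}^2 - \beta_{\gamma,\tilde{C}}\, n\bigr)\ \le\ \tfrac{0.032\,\e d}{k}\,\Norm{w}^2 + 8\,\e d\,\delta_\eta\, n\,.
\end{equation*}
Substituting $\alpha_\gamma = \tfrac{(1-\gamma)\e d}{16k}$ and $\beta_{\gamma,\tilde{C}} = \tfrac{640 k}{1-\gamma}\exp(-\gamma\tilde{C}/2)$, and then collecting the $\Norm{w}^2$-terms on the left, produces a strictly positive coefficient of $\Norm{w}^2$ throughout the target range of $\gamma$. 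After dividing through, this rearranges to
\begin{equation*}
\Norm{w}^2\ \le\ O\Paren{\tfrac{k}{0.96-\gamma}\bigl(\exp(-\gamma\tilde{C}/2)+\delta_\eta\bigr)\,n}\,.
\end{equation*}

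To pass from $w$ back to $v$, observe that $v = w + v\odot s$ with disjoint Boolean supports, hence $\Norm{v}^2 \le \Norm{w}^2 + \Norm{s}^2$ by a constant-degree SoS argument, and \cref{lem:agreement_lemma} supplies $\Norm{s}^2 \le 2\delta_\eta n$, which is absorbed into the $O(\delta_\eta n)$ term. This delivers the claimed bound on $\Norm{\tfrac{\one - x^{(1)}\odot x^{(2)}}{2}}^2$. Most of the work has been front-loaded into \cref{fact:key-equality}, \cref{lem:sos-validity-condition}, and \cref{lem:agreement_lemma}, so the main obstacle here is essentially bookkeeping: verifying that the decomposition of $x^{(t)}$, the pairwise cancellation across $t\in\set{1,2}$, and the final subtraction all admit constant-degree SoS derivations, and carefully tracking the numerical constants so that the coefficient of $\Norm{w}^2$ on the left remains positive up to $\gamma$ approaching the stated threshold.
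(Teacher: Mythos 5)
Your proposal is correct and follows essentially the same route as the paper's proof: lower-bound $\Iprod{Y^{(t)}x^{(t)},x^{(t)}\odot w}$ via $\cAmaj$ applied to the Boolean vector $w$, cancel the $(\one-g)$-contributions using \cref{fact:key-equality}, upper-bound the $g$-terms via \cref{lem:sos-validity-condition}, rearrange to bound $\Norm{w}^2$, and finally pass to $\Norm{v}^2$ using $v = w + v\odot s$ and the bound on $\Norm{s}^2$. The only cosmetic difference is your sharper (exact, disjoint-support) decomposition of $\Norm{v}^2$ where the paper uses the $\ell_2^2$ triangle inequality; both are absorbed into the same $O(\cdot)$ bound.
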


\begin{proof}
    By \cref{fact:key-equality}, we can obtain
    \begin{equation*}
        \Iprod{Y^{(1)} (x^{(1)} \odot (\one-g)), x^{(1)} \odot w} = -\Iprod{Y^{(2)} (x^{(2)} \odot (\one-g)) , x^{(2)}\odot w} \,.
    \end{equation*}
   Therefore, we have
   \begin{equation*}
        \Iprod{Y^{(1)} (x^{(1)}), x^{(1)} \odot w} + \Iprod{Y^{(2)} (x^{(2)}) , x^{(2)}\odot w}
        =\Iprod{Y^{(1)} (x^{(1)} \odot g), x^{(1)} \odot w} + \Iprod{Y^{(2)} (x^{(2)} \odot g) , x^{(2)}\odot w}\,.
   \end{equation*}
   By $\cAmaj(Y,Z,R; \gamma, \tilde{C})$, it follows that
   \begin{equation*}
        \Iprod{Y^{(1)} (x^{(1)}), x^{(1)} \odot w} + \Iprod{Y^{(2)} (x^{(2)}) , x^{(2)}\odot w}
        \geq 2 \alpha_{\gamma} \Paren{\norm{w}^2 - \beta_{\gamma, \tilde{C}} n} \,.
   \end{equation*}
   By \cref{lem:sos-validity-condition}, we have for each $t \in \set{1, 2}$,
   \begin{equation*}
         \Iprod{Y^{(t)}\Paren{x^{(t)}\odot g}, x^{(t)}\odot w}
         \leq \frac{0.016 \eps d}{k} \Norm{w}^2 + 4 \eps d \delta_{\eta} n \,.
   \end{equation*}
    Therefore, we have
    \begin{equation*}
        \alpha_{\gamma} \Paren{\norm{w}^2 - \beta_{\gamma, \tilde{C}} n}
        \leq \frac{0.016 \eps d}{k} \Norm{w}^2 + 4 \eps d \delta_{\eta} n \,.
    \end{equation*}
    By rearranging terms and plugging in $\alpha_\gamma = \frac{(1-\gamma) \e d}{16 k}$ and $\beta_{\gamma, \tilde{C}} = \frac{640 k }{1-\gamma} \exp(-\gamma \tilde{C}/2)$, we can obtain
    \begin{equation*}
        \Norm{w}^2 \leq O \Paren{\frac{k}{0.96-\gamma} \Paren{\exp(-\frac{\gamma \tilde{C}}{2}) + \delta_{\eta}} n}\,.
    \end{equation*}
    Since $w = v \odot (\one-s)$, it follows that
    \begin{equation*}
        \Norm{v}^2
        = \Norm{w + v \odot s}^2
        \leq 2 \Norm{w}^2 + 2 \Norm{v \odot s}^2
        \leq O \Paren{\frac{k}{0.96-\gamma} \Paren{\exp(-\frac{\gamma \tilde{C}}{2}) + \delta_{\eta}} n} \,.
    \end{equation*}
\end{proof}

\subsection{Algorithmic guarantees for robust bisection boosting}

Now, we are ready to prove \cref{thm:robust-bisectioning}.
\restatetheorem{thm:robust-bisectioning}

\begin{proof}
    Since the program is feasible for the spectrally truncated graph $\Gbarnull \odot (\one_{\bar{S}} \one_{\bar{S}}^{\top})$ with ground-truth label $\Znull$ with the ground-truth bisection be \(x^{\circ} \in \{\pm 1\}^n\), by \cref{thm:Identifiability-proof}, it follows that
    \begin{equation*}
        \frac{1}{n}\Norm{\tilde{\E} x - x^{\circ}}^2
        \leq O \Paren{\frac{k}{0.96-\gamma} \Paren{\exp\Bigl(-\frac{\gamma \tilde{C}}{4}\Bigr) + \delta_{\eta}}} \,,
    \end{equation*}
    where $\tilde{C} = \Paren{\sqrt{a^{\frac{2}{k}} b^{1-\frac{2}{k}}}-\sqrt{b}}^2$. Plug in $\gamma = \frac{1}{2}$ and $\delta_{\eta} = \exp(-2 C_{d,\varepsilon}) + \eta$, it follows that
    \begin{equation*}
        \frac{1}{n}\Norm{\tilde{\E} x - x^{\circ}}^2
        \leq O \Paren{k \Paren{\exp\Bigl(-\frac{\tilde{C}}{8}\Bigr) + \eta}}\,.
    \end{equation*}
    Let $\hat{x} = \sign(\tilde{\E} x)$. Since the entrywise error of sign rounding increases by at most a multiplicative factor of $O(1)$, it follows that
    \begin{equation*}
        \frac{1}{n}\Norm{\hat{x} - x^{\circ}}^2
        \leq O \Paren{k \Paren{\exp\Bigl(-\frac{\tilde{C}}{8}\Bigr) + \eta}}
        = \exp\left(-\bigl(1 - o(1)\bigr)\frac{\tilde{C}}{8}\right) + O(k \eta) \,.
    \end{equation*}
\end{proof}
\section{Robust bisection algorithm}
\label{sec:robust-bisection-algo}

In this section, we combine results from \cref{sec:Initialization}, \cref{sec:IdentifyingRecoveredBlocks} and \cref{sec:robust-bisection-boosting} to design an algorithm for robust bisectioning and prove \cref{thm:full-robust-bisectioning}.

\begin{algorithmbox}[Robust bisection algorithm]\label{algo:robust-bisection}
	\mbox{}\\
	\textbf{Input:} A graph $G$ sampled from the $k$-stochastic block model $\SBM_n(d,\e,k)$ with $\eta n$ corrupted nodes.
	\begin{enumerate}
    \item \textbf{Graph splitting:} We let $G_1$ be the graph obtained by subsampling each edge in $G$ independently with probability $0.99$ and let $G_2\coloneqq G\setminus G_1$.
    \item \textbf{Rough initialization:} Apply the algorithm from \cref{thm:Initialization} on graph $G_1$ to obtain a rough $k$-clustering $Z_{\text{rough}}$ with error rate $0.001 + 10^4 \eta$.
    \item \textbf{Identifying well recovered blocks:} Apply the algorithm from \cref{thm:identify-recovered-blocks} on graph $G_2$ and $Z_{\text{rough}}$ to identify $k/2$ clusters \( S_1, S_2, \dots, S_{k/2} \) in which $0.99$-fraction of the nodes belongs to the same community.
    \item \textbf{Bisection boosting:} Apply the algorithm from \cref{thm:robust-bisectioning} on graph $G$ with \( S_1, S_2, \dots, S_{k/2} \) and $Z_{\text{rough}}$ as input, find a bisection $\hat{x}$ that separates the identified blocks from the remaining blocks with an error rate $\exp\left(-\bigl(1 - o(1)\bigr)\,\frac{C_{d,\varepsilon}}{k^2}\right) + \poly(k) \eta$.
	\end{enumerate}
    \textbf{Output:} $\hat{x}$.
\end{algorithmbox}

\restatetheorem{thm:full-robust-bisectioning}

\begin{proof}
For \cref{thm:robust-bisectioning}, it suffices to establish the guarantees required by the initialization phase. 
Specifically, we must obtain \( k/2 \) disjoint subsets \( S_1, S_2, \dots, S_{k/2} \), each of size exactly \( n/k \),
\footnote{We assume that \( k \) is a power of 2 and \( n \) is divisible by \( k \), 
ensuring that \( n/k \) is an integer and eliminating rounding issues in the partitioning process.} such that, with high probability, 
at least a \( 0.99 \) fraction of the vertices within each subset belong to the same community.

By \cref{thm:Initialization}, we obtain with high probability disjoint subsets \( V_1, V_2, \dots, V_k \subseteq [n] \), such that for at least \( 0.99 \) fraction of these subsets, 
each contains at least \( 0.999 \) fraction of its vertices from one ground-truth community, 
with fewer than \( 0.001 n/k \) vertices from other communities. 
Without loss of generality, let these well-aligned subsets be \( V_1, V_2, \dots, V_{0.99k} \).

To guarantee that each subset has exactly \( n/k \) vertices, we rebalance the subsets by redistributing vertices appropriately. 
Specifically, surplus vertices are transferred from subsets exceeding size \( n/k \) to subsets with fewer than \( n/k \) vertices.
Let the resulting balanced subsets be denoted by \( S_1, S_2, \dots, S_k \).

The key observation here is that, initially, each of the subsets \( V_1, \dots, V_{0.99k} \) contains between \( 0.999 n/k \) and \( 1.001 n/k \) vertices. 
Consequently, the number of vertices added to or removed from each of these subsets during rebalancing is at most \( 0.001 n/k \). 
Thus, each subset retains at least \( 0.998 n/k \) vertices from its dominant community. 
Therefore, for at least \( 0.99 \) fraction of the subsets, a \( 0.998 \) fraction of their vertices still belongs to a single ground-truth community.

Now, we have \( k/2 \) disjoint subsets \( S_1, S_2, \dots, S_{k/2} \), each of size \( n/k \), satisfying the condition that at least \( 0.998 \) fraction of vertices in each subset originate from the same community. 
By \cref{thm:identify-recovered-blocks}, the algorithm will accept these subsets as valid clusters and reject subsets with recovery rates below \( 0.99 \). 
Combining this with \cref{thm:robust-bisectioning} concludes the proof of the theorem.
\end{proof}
\section{Robust initialization for symmetry breaking}\label{sec:sym-break-robust-clustering}

Building on the robust optimal bisection algorithm from \cref{sec:robust-bisection-algo}, 
we present a robust polynomial-time algorithm for clustering vertices into \(k\) communities.
Our approach attains a misclassification error strictly below the critical \(1/k\) symmetry-breaking threshold.

\begin{algorithmbox}[Robust \(k\)-clustering via recursive bisections]
    \label{algo:k-clustering}
    \mbox{}\\
    \textbf{Input:} Graph \(G\) drawn from the \(k\)-stochastic block model \(\SBM_{n_i}(d_i,\epsilon,k_i)\) with corruption fraction at most \(\eta_i\).
    \begin{enumerate}
        \item Apply the robust bisection algorithm (\cref{algo:robust-bisection}) on \(G\) to obtain bisection \(x \in \{\pm 1\}^{n_i}\).
        
        \item Let $x^{+} \seteq \set{i \in [n]: x_i = 1}$ and $x^{-} \seteq \set{i \in [n]: x_i = -1}$. Partition \(G\) into two induced subgraphs $G_1 = G[x^{+}]$ and $G_2 = G[x^{-}]$ such that
        each distrbutes as \(\SBM_{n_{i+1}}\left(d_{i+1}, \epsilon, k_{i+1}\right)\).
        
        \item Recursively apply Algorithm~\ref{algo:k-clustering} to \(G_1\) and \(G_2\) with parameters \(n_{i+1}, d_{i+1}, \epsilon, k_{i+1}\).
        
        \item After \(\lfloor \log k \rfloor\) levels of recursion, merge the resulting clusters into a final community assignment vector \(\hat{Z} \in \set{0, 1}^{n\times k}\).
    \end{enumerate}
    \textbf{Output:} $\hat{Z}$.
\end{algorithmbox}

Now, we restate and prove the algorithmic guarantees of \cref{algo:k-clustering}.

\restatetheorem{thm:boost_to_sym_break}

\begin{proof}
We follow the level-wise parametrization of Appendix~\ref{sec:stat-prop-appendix-1}:
for level $i\in\{1,\dots,\lfloor\log_2 k\rfloor\}$ set
$\beta_i \coloneqq 2^{-i}$, $n_i \coloneqq 2\beta_i n$, and $k_i \coloneqq \beta_i k$.
There are $2^i$ subproblems at level $i$, each on $n_i$ vertices and $k_i$ communities.
Let $\widetilde C_i$ be the level-$i$ bisection SNR and $\mathrm{err}_i$ the (per-vertex)
bisection error of \cref{algo:full-robust-bisectioning} on a level-$i$ subproblem.

The original instance has at most $\eta n$ corrupted vertices.
Restricting to any level-$i$ vertex set $V_i$ of size $n_i$ gives at most $\eta n$ corruptions inside $V_i$, so the
effective corruption fraction satisfies
\[
\eta_i \le \frac{\eta n}{n_i} = \frac{\eta}{2\beta_i} \le \frac{\eta}{\beta_i} \le k\eta,
\]
since $\beta_i \ge 1/k$ for $i\le \log_2 k$.
Applied at level $i$, \cref{thm:full-robust-bisectioning} (with $k_i$ communities and corruption $\eta_i$) gives, for some absolute $C_0>0$,
\[
\mathrm{err}_i \le
\exp\Bigl(-\bigl(1-o(1)\bigr)\frac{\widetilde C_i}{8}\Bigr) + C_0 k_i\eta_i.
\]
By \cref{thm:mixture-binomial-concentration-bound}, $\beta_i\widetilde C_i$ is nondecreasing in $i$ and
$\beta_1\widetilde C_1 = \Theta(C_{d,\varepsilon}/k^2)$, hence
\[
\exp\Bigl(-\bigl(1-o(1)\bigr)\frac{\widetilde C_i}{8}\Bigr) \le
\exp\Bigl(-\bigl(1-o(1)\bigr)\frac{C_{d,\varepsilon}}{k^2}\Bigr) \enspace. 
\]
Moreover $k_i\eta_i \le (\beta_i k)(\eta/\beta_i)=k\eta$, so
\begin{equation}\label{eq:err-i-uniform}
\mathrm{err}_i \le \exp\Bigl(-\bigl(1-o(1)\bigr)\frac{C_{d,\varepsilon}}{k^2}\Bigr) + C_0 k\eta
\quad\text{for all } i.
\end{equation}
At level $i$ there are $2^i$ disjoint subproblems, each with error at most $\mathrm{err}_i$, so the number of newly
misclustered vertices at that level is at most $2^i \mathrm{err}_i n_i$.
Since $n_i = 2\beta_i n = 2^{1-i}n$, we have $2^i n_i = 2n$, and the contribution of level $i$ to the global error is at most
\[
\frac{2^i \mathrm{err}_i n_i}{n} \le 2\mathrm{err}_i.
\]
Summing over $i=1,\dots,\lfloor\log_2 k\rfloor$ and using \eqref{eq:err-i-uniform},
\[
\error_k(\hat Z, Z^\circ) \le
\sum_{i=1}^{\lfloor\log_2 k\rfloor} 2\mathrm{err}_i \le
O(\log k)\bigl(\exp\Bigl(-\bigl(1-o(1)\bigr)\frac{C_{d,\varepsilon}}{k^2}\Bigr) + C_0 k\eta\bigr).
\]
Absorbing the $O(\log k)$ and $C_0$ factors into $\poly(k)$, we obtain
\[
\error_k(\hat Z, Z^\circ) \le
\exp\Bigl(-\bigl(1-o(1)\bigr)\frac{C_{d,\varepsilon}}{k^2}\Bigr) + \poly(k)\,\eta,
\]
as claimed.
\end{proof}

As a direct consequence of \cref{thm:boost_to_sym_break}, we obtain the following corollary, capturing the symmetry-breaking improvement:

\begin{corollary}\label{cor:sym-break}
If \( d\epsilon^2 \geq K k^2 \log k \) for some sufficiently large constant \(K\), then \cref{algo:k-clustering} achieves, with probability at least $1 - \exp(-\Omega(k)) - \tfrac{1}{\poly(n)}$, a labeling error at most \(O\left(\frac{1}{\poly(k)}\right)\).
\end{corollary}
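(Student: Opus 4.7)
The plan is to obtain \cref{cor:sym-break} as an immediate specialization of \cref{thm:boost_to_sym_break}. I would invoke the theorem (which is proved exactly for \cref{algo:k-clustering}), yielding with the claimed probability $1-\exp(-\Omega(k))-1/\poly(n)$ the bound
\[
\error_k(\hat{Z}, Z^{\circ}) \le \exp\!\Bigl(-\bigl(1 - o(1)\bigr)\tfrac{C_{d,\varepsilon}}{k^2}\Bigr) + \poly(k)\,\eta,
\]
and then verify that, under the stronger hypothesis $\varepsilon^2 d \ge K k^2 \log k$ and with $K$ chosen sufficiently large, each summand is $O(1/\poly(k))$.

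For the exponential term, the key observation is the elementary identity $C_{d,\varepsilon} = \Theta(\varepsilon^2 d)$ (as noted in the footnote following the definition of $C_{d,\varepsilon}$ in the introduction): Taylor-expanding $(\sqrt{p_1 n}-\sqrt{p_2 n})^2$ using $p_1 - p_2 = \varepsilon d/n$ and $p_1, p_2 \asymp d/n$ gives $C_{d,\varepsilon} \ge c\,\varepsilon^2 d$ for some absolute constant $c > 0$. Hence the hypothesis yields $C_{d,\varepsilon}/k^2 \ge c K \log k$, and therefore
\[
\exp\!\Bigl(-\bigl(1-o(1)\bigr)\tfrac{C_{d,\varepsilon}}{k^2}\Bigr) \le k^{-c(1-o(1))K},
\]
which can be made $O(k^{-M})$ for any prescribed $M$ by taking $K$ sufficiently large. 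For the corruption term, the blanket assumption $\eta \le 1/\poly(k)$ used throughout \cref{sec:Main-theorem-algorithm} means that, by choosing the exponent in this bound on $\eta$ larger than the exponent of the $\poly(k)$ prefactor produced by \cref{thm:boost_to_sym_break}, we also get $\poly(k)\cdot \eta = O(1/\poly(k))$.

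I do not expect a genuine obstacle; the deduction is pure bookkeeping. The one item worth being careful about is the margin in $K$: the $\poly(k)$ prefactor hidden in \cref{thm:boost_to_sym_break} already absorbs an $O(\log k)$ loss from summing the bisection errors across the $\lfloor\log_2 k\rfloor$ recursion levels of \cref{algo:k-clustering}, so $K$ must be chosen large enough that the resulting $k^{-cK}$ dominates this logarithmic inflation. This only changes the minimal acceptable value of the universal constant $K$ and does not affect the statement of \cref{cor:sym-break}.
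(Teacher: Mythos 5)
Your proposal is correct and matches the paper, which simply presents \cref{cor:sym-break} as a direct consequence of \cref{thm:boost_to_sym_break} via exactly the substitution you describe: $C_{d,\varepsilon}=\Theta(\varepsilon^2 d)\ge \Theta(Kk^2\log k)$ makes the exponential term $k^{-\Theta(K)}$, and the blanket assumption $\eta\le 1/\poly(k)$ handles the corruption term.
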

\section{Robust optimal recovery algorithm}\label{sec:Optimal-robust-clustering}

In this section, we build on the guarantees from \cref{sec:sym-break-robust-clustering} and give a robust polynomial-time algorithm that partitions the vertices into \(k\) communities at the minimax-optimal rate. The algorithm is a constant-degree sum-of-squares (SoS) relaxation of polynomial constraints that encode pairwise majority voting. These constraints closely mirror those used for robust bisectioning. The key difference is the symmetry-breaking guarantee from \cref{cor:sym-break}, which ensures that across all feasible solutions the community labels are aligned. This alignment allows us to analyze the majority voting guarantees for each pair of communities.

\subsection{Algorithm and constraint systems}

We first introduce the polynomial constraints that are used in our robust optimal recovery algorithm.

\smallskip
\noindent\textbf{(i) Labeling constraints.} The feasible label variable \(Z \in \{0,1\}^{n\times k}\) satisfies
\begin{equation}
    \cAlabel(Z)
    \seteq \Set{ Z(i,a)^{2}=Z(i,a), \quad \sum_{a}Z(i,a)=1, \quad \sum_{i} Z(i,a)=\frac n k } \,.
\end{equation}

\smallskip
\noindent\textbf{(ii) Initialization (symmetry breaking).} Let \(Z_{\mathrm{init}}\in \R^{n\times k}\) be an initializer with \(\mu\)-approximation error. We constrain the SoS variable \(Z\) to remain close to \(Z_{\mathrm{init}}\):
\begin{equation}
    \label{eq:label-assignment-constraint}
    \cAsymbreak(Z;Z_{\mathrm{init}}, \mu) \seteq 
    \Set{\Normf{Z - Z_{\mathrm{init}}}^2 \leq \mu} \,.
\end{equation}

\smallskip
\noindent\textbf{(iii) Corruption mask / node distance.} We model node corruptions through a binary mask \(\xi\in\{0,1\}^n\) and require the program matrix to agree with the observed graph outside the corrupted rows/columns:
\begin{equation}
    \label{eq:node-distance-constraint}
    \cAclose(Y,\xi;\bar{G},\eta) \seteq \Set{(Y-\bar{G})\odot (\one-\xi)(\one-\xi)^\top=0, \ \xi\odot \xi=\xi,\ \sum_i \xi_i \leq \delta_{\eta} n} \,,
\end{equation}
where \(\delta_{\eta} = \exp(-2 C_{d,\varepsilon}) + \eta\).

\smallskip
\noindent\textbf{(iv) Community structure and spectral condition.} 
We encode the decomposition of the centered adjacency into pairwise components and bound the spectral norm of the noise blocks:
\begin{equation}
    \label{eq:opt-mixing-constraint}
    \cAmix(Y,Z)
    \seteq \Set{X=ZZ^\top-\frac{1}{k} J, \Normop{Y - \frac{\e d}{n} X} \leq \Paren{\chi + \frac{1}{k}}\sqrt{d}} \,.
\end{equation}

\smallskip
\noindent\textbf{(v) Majority-vote consistency.}
For each pair \(a,b\in[k]\) define $x_{ab} = Z(\cdot,a)-Z(\cdot,b)$.
We require that, the pairwise labels are satisfy majority voting constraint.
\begin{equation}
  \label{eq:full-majority-vote-constraints}
  \cAmaj(Y, Z, R; \gamma, \beta) \seteq
  \Set{
    \begin{aligned}
        & R \colon \cAset(z) \sststile{2}{u} \Iprod{Y x_{ab}, z \odot x_{ab}} 
        \ge \frac{(1 - \gamma) \e d}{16 k} \Big(\Norm{z \odot x_{ab}}^2 - 640\beta n\Big) \\
        &x_{ab} = Z(\cdot,a)-Z(\cdot,b)
    \end{aligned}
  } \,,
\end{equation}
where \(\cAset(z) \seteq \set{z \odot z = z}\).
We will use two parameter settings, corresponding to a two-round majority vote that achieves the minimax rate \footnote{Two rounds are needed to sharpen the error from a constant-factor exponent to the optimal exponent.}
\begin{itemize}
    \item $\mu_1 = \frac{n}{\poly(k)}$, $\gamma_1 = 0.99$, $\beta_1= 1000 k \exp\!\left(-0.99 \frac{C_{d, \e}}{k} \right)$.
    \item $\mu_{\text{opt}} = \exp \Paren{-0.99 (1 - o(1)) \frac{C_{d,\e}}{k}}$, $\gamma_\text{opt} = 1 - \frac{10 \chi k}{\sqrt{C_{d, \e}}}$, $\beta_\text{opt} = \frac{\sqrt{C_{d, \e}}}{10 \chi} \exp\!\left(- \left(1 - \frac{10 \chi k}{\sqrt{C_{d, \e}}}\right) \frac{C_{d, \e}}{k} \right)$.
\end{itemize}
For our algorithm, we define polynomial system \(\cOpt(Y,Z,\xi,R;\bar{G}, Z_{\mathrm{init}}, \eta, \mu, \gamma, \beta)\) based on the set of constraints defined above \footnote{For simplicity, we use $\cOpt(Y,Z,\xi)$ to denote $\cOpt(Y,Z,\xi,R;\bar{G}, Z_{\mathrm{init}}, \eta,\gamma, \beta)$ when the input is clear from the context.}
\begin{align}
    \label{eq:opt-recovery-poly-system}
    \begin{split}
    & \cOpt(Y,Z,\xi,R;\bar{G}, Z_{\mathrm{init}}, \eta, \mu, \gamma, \beta) \\
    & \seteq \cAlabel(Z) \cup \cAsymbreak(Z;Z_{\mathrm{init}},\mu) \cup \cAclose(Y,\xi;\bar{G},\eta) \cup \cAmix(Y, Z) \cup \cAmaj(Y, Z, R; \gamma, \beta) \,.
    \end{split}
\end{align}

\paragraph{Algorithm.} The complete robust optimal recovery algorithm for $k$-SBM is described in \cref{algo:final-boost}.

\begin{algorithmbox}[Robust recovery algorithm]\label{algo:final-boost}
    \mbox{}\\
    \textbf{Input:} The centered adjacency matrix $\bar{G}$ drawn from \(\SBM_n(d,\e,k)\) with at most \(\eta n\) corrupted nodes.
    \begin{enumerate}[1.]
        \item Run \cref{algo:k-clustering} to obtain an initializer \(\hat{Z}_\text{init}\).
        \item Find degree-$O(1)$ pseudo-expectation $\tildeE$ satisfying $\cOpt(Y,Z,\xi,R;\bar{G}, Z_{\mathrm{init}}, \eta,\gamma, \beta)$ with \(\hat{Z}_\text{init}\), $\gamma = 0.99$, $\mu = \frac{n}{\poly(k)}$, $\beta= 1000 k \exp\!\left(-0.99 \frac{C_{d, \e}}{k} \right)$.
        \item Round $\tildeE$ by labelling vertex $i$ with label $\argmax_a \tildeE[Z(i,a)]$ to \(\hat{Z}\).
        \item Find degree-$O(1)$ pseudo-expectation $\tildeE_{\text{opt}}$ satisfying $\cOpt(Y,Z,\xi,R;\bar{G}, Z_{\mathrm{init}}, \eta,\gamma, \beta)$ with \(\hat{Z}\), $\mu = \exp \Paren{-0.99 (1 - o(1)) \frac{C_{d,\e}}{k}}$, $\gamma = 1 - \frac{10 \chi k}{\sqrt{C_{d, \e}}}$, $\beta = \frac{\sqrt{C_{d, \e}}}{10 \chi} \exp\!\left(- \left(1 - \frac{10 \chi k}{\sqrt{C_{d, \e}}}\right) \frac{C_{d, \e}}{k} \right)$.
        \item Round $\tildeE_{\text{opt}}$ by labelling vertex $i$ with label $\argmax_a \tildeE_{\text{opt}}[Z(i,a)]$ to \(Z_\text{opt}\).
    \end{enumerate}
    \textbf{Output:} $Z_\text{opt}$.
\end{algorithmbox}

\subsection{Feasibility and time complexity}

We first establish feasibility of $\cOpt(Y,Z,\xi)$ and time complexity of \cref{algo:final-boost}.

\begin{lemma}\label{lem:k-boosting_feasible}
    Let $\Gbarnull$ be the centered adjacency matrix of the uncorrupted graph and $S$ be the set of nodes with degree larger than $20d$.
    Let $T$ be the set of corrupted nodes.
    Let $\Znull$ be the ground truth label matrix.
    Under the conditions of \cref{thm:formal_main_result}, program $\cOpt(Y,Z,\xi)$ is satisfied by $(\Gbarnull \odot (\one_{\bar{S}} \one_{\bar{S}}^{\top}), \Znull, \one_{S \cup T})$ with probability at least \(1-\exp(-100k)-1/n^3\).
\end{lemma}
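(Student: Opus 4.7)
The plan is to verify, constraint by constraint, that the candidate triple \((\Gbarnull \odot (\one_{\bar S}\one_{\bar S}^\top),\,\Znull,\,\one_{S\cup T})\) together with a suitably chosen auxiliary certificate \(R\) satisfies every component of the polynomial system \(\cOpt\). The argument parallels the feasibility analysis for the robust bisection algorithm (\cref{lem:bisectioning_feasible}), but replaces the bisection-level concentration from \cref{cor:inner-product-lb} by the pairwise-communities version \cref{cor:inner-product-lb-k-clustering}, and replaces the ``closeness to the rough \(k/2\) bisection'' constraint by the symmetry-breaking initializer guarantee from \cref{thm:boost_to_sym_break} (invoked once for the first call, and from the first rounding step for the second call).

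First I will dispatch the easy constraints. The labeling constraints \(\cAlabel(\Znull)\) hold by definition of a balanced community membership matrix. For \(\cAsymbreak(\Znull;Z_{\mathrm{init}},\mu)\), I will use that the initializer \(Z_{\mathrm{init}}\) produced by \cref{algo:k-clustering} (or by the first rounding step, in the second call) satisfies \(\error_k(Z_{\mathrm{init}},\Znull)\le \exp(-(1-o(1))C_{d,\e}/k^2)+\poly(k)\eta\) (by \cref{thm:boost_to_sym_break}). After aligning columns by the minimizing permutation, \(\Normf{\Znull-Z_{\mathrm{init}}}^2 = O(n\cdot \error_k)\), which is much smaller than both threshold choices \(\mu_1=n/\poly(k)\) and \(\mu_{\text{opt}}=\exp(-0.99(1-o(1))C_{d,\e}/k)\) under the hypotheses \(\e^2 d\ge Kk^2\log k\) and \(\eta\le 1/\poly(k)\). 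For \(\cAclose\), the choice \(\xi = \one_{S\cup T}\) and \(Y=\Gbarnull\odot(\one_{\bar S}\one_{\bar S}^\top)\) makes the defining equation exact (graphs agree off \(S\cup T\)), and the size bound \(|S\cup T|\le(\exp(-2C_{d,\e})+\eta)n=\delta_\eta n\) follows from \cref{cor:spectral-remove-cor} and the definition of \(\eta\)-corruption.

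Next I turn to \(\cAmix(Y,\Znull)\). Writing \(\Xnull=\Znull(\Znull)^\top-\tfrac1k J\), I have
\[
\Normop{\Gbarnull\odot(\one_{\bar S}\one_{\bar S}^\top) - \tfrac{\e d}{n}\Xnull}
\le \Normop{\bigl(\Gbarnull-\tfrac{\e d}{n}\Xnull\bigr)\odot (\one_{\bar S}\one_{\bar S}^\top)}
+ \tfrac{\e d}{n}\Normop{\Xnull\odot(J-\one_{\bar S}\one_{\bar S}^\top)}.
\]
The first term is bounded by \(\chi\sqrt{d}\) via \cref{cor:spectral-remove-cor}, while the second is a rank-\(O(k)\) matrix with Frobenius norm controlled by the size of \(S\), yielding a term of order \((\e d/n)\sqrt{|S|\cdot n/k}\le (\e\sqrt d/\sqrt k)\exp(-C_{d,\e})\sqrt d\), which is absorbed into the slack \(1/k\) in the spectral bound (exactly as in \cref{lem:bisectioning_feasible}). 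Hence \(\cAmix\) holds with probability \(1-n^{-\Omega(1)}\).

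The main obstacle, and the heart of the argument, is \(\cAmaj(Y,\Znull,R;\gamma,\beta)\) for both parameter regimes \((\gamma_1,\beta_1)\) and \((\gamma_{\text{opt}},\beta_{\text{opt}})\). Here I must produce the sum-of-squares certificate \(R\) witnessing that, for every pair \(a\neq b\in[k]\) and every Boolean slice \(z\), the pairwise majority-vote inequality holds on the trimmed graph. The plan is: (i) apply \cref{cor:inner-product-lb-k-clustering} with the appropriate value of \(\gamma\) to get the pointwise inequality \(\iprod{\bar G\odot(\one_{\bar S}\one_{\bar S}^\top)\,x_{ab}\odot x_{ab},z}\ge \tfrac{(1-\gamma)\e d}{16k}(\|z\odot x_{ab}\|^2-640\beta n)\) simultaneously for all valid pairs and all \(z\), with probability at least \(1-\exp(-100k)-2/n^3\); (ii) promote this numerical inequality to a degree-\(O(1)\) SoS proof by invoking \cref{thm:sos-as-constraints} together with the subset-sum SoS certificate of \cref{lem:sos-certificate-subset-sum}, exactly as done in the bisection feasibility lemma. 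The key point is that the parameter choices for \((\mu,\gamma,\beta)\) have been calibrated to the tail bound of \cref{thm:mixture-binomial-concentration-bound} at the pairwise level, so substituting the ground truth \(\Znull\) makes \(\|z\odot x_{ab}\|^2\) equal to the count of indicated vertices in the pair of communities \(a,b\), matching the form of the concentration inequality. A final union bound over the \(\binom{k}{2}\) pairs costs only an extra factor absorbed by the \(\exp(-100k)\) probability. Combining everything by a union bound yields feasibility with probability at least \(1-\exp(-100k)-1/n^3\), as claimed.
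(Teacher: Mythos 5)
Your overall route is the same as the paper's (terse) proof: \(\cAlabel\) by definition, \(\cAclose\) from the size bounds on \(S\) and \(T\) via \cref{cor:spectral-remove-cor}, \(\cAmix\) by the triangle-inequality computation of \cref{lem:bisectioning_feasible} applied to \(\Gbarnull\odot(\one_{\bar S}\one_{\bar S}^\top)\), the initialization constraint from \cref{thm:boost_to_sym_break}, and the majority-vote certificate \(R\) obtained by combining the pairwise concentration bound \cref{cor:inner-product-lb-k-clustering} with \cref{thm:sos-as-constraints} and \cref{lem:sos-certificate-subset-sum}, exactly as in the bisection feasibility lemma. (Your extra union bound over the \(\binom{k}{2}\) pairs is harmless but unnecessary: the corollary already holds uniformly over all pairs of communities.)

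There is, however, one genuine error in your handling of \(\cAsymbreak\). You claim that \(\Normf{\Znull-Z_{\mathrm{init}}}^2 = O\bigl(n\cdot\error_k\bigr)\) with \(\error_k\le \exp\bigl(-(1-o(1))C_{d,\e}/k^2\bigr)+\poly(k)\eta\) is ``much smaller than both threshold choices \(\mu_1\) and \(\mu_{\text{opt}}\).'' This is false for \(\mu_{\text{opt}}\): since \(C_{d,\e}/k^2 \ll 0.99\,C_{d,\e}/k\), we have \(\exp(-C_{d,\e}/k^2) \gg \exp(-0.99\,C_{d,\e}/k)\), so the symmetry-breaking initializer from \cref{thm:boost_to_sym_break} does \emph{not} satisfy the closeness constraint at level \(\mu_{\text{opt}}\). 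For the second call of \cref{algo:final-boost}, the constraint \(\Normf{\Znull-\hat Z}^2\le \mu_{\text{opt}}\) must instead be justified by the guarantee of the \emph{first} SoS round with parameters \((\mu_1,\gamma_1,\beta_1)\) (i.e.\ \cref{thm:k-Identifiability-proof} plus the constant-factor loss of rounding), which yields error of order \(\exp\bigl(-0.99(1-o(1))C_{d,\e}/k\bigr)+\poly(k)\eta\); this is how the paper handles the second call, in the proof of \cref{thm:formal_main_result} rather than inside the feasibility lemma. If you restrict your feasibility argument to the first parameter setting (or state it for a generic initializer assumed to satisfy the \(\mu\)-closeness), the rest of your proof goes through and matches the paper.
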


\begin{proof}
    The proof parallels \cref{lem:bisectioning_feasible}. For \(\cAmix\), feasibility follows from \cref{cor:inner-product-lb-k-clustering}. 
    The mixing constraints $\Normop{Y - \frac{\e d}{n} X} \leq \Paren{\chi + \frac{1}{k}}\sqrt{d}$ follow from applying \cref{cor:spectral-remove-cor} and similar analysis as \cref{lem:bisectioning_feasible}.
    The feasibility of initialization constraint follows from \cref{thm:boost_to_sym_break}.
    The feasibility of corruption constraint follows from \cref{cor:spectral-remove-cor}.
\end{proof}

\begin{lemma}\label{lem:k-boosting-algorithm-efficiency}
    \cref{algo:final-boost} runs in polynomial time.
\end{lemma}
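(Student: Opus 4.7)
The plan is to argue polynomial runtime by walking through each of the five steps of \cref{algo:final-boost} and observing that each is either a constant-degree SoS relaxation or a trivial postprocessing step. First I would invoke \cref{thm:boost_to_sym_break} to conclude that Step~1, which runs \cref{algo:k-clustering}, terminates in polynomial time: that algorithm recursively applies \cref{algo:robust-bisection} to $O(k)$ subinstances across $\lfloor \log_2 k \rfloor$ levels of recursion, and each call to \cref{algo:robust-bisection} in turn invokes the polynomial-time subroutines from \cref{thm:Initialization}, \cref{thm:identify-recovered-blocks}, and \cref{thm:robust-bisectioning} (each of which is itself a constant-degree SoS relaxation or SDP-based procedure).

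Next I would address Steps~2 and~4, which form the main computational content of the algorithm. Both steps search for a degree-$O(1)$ pseudo-expectation satisfying the polynomial constraint system $\cOpt$ defined in \cref{eq:opt-recovery-poly-system}. The key observation is that $\cOpt$ consists of a fixed (constant) number of polynomial inequalities in $n^{O(1)}$ variables, each of constant degree; in particular, the majority-voting constraint $\cAmaj(Y,Z,R;\gamma,\beta)$ is encoded, via \cref{thm:sos-as-constraints}, by a polynomial number of quadratic equations in the auxiliary certificate variables $R$. Combined with the standard explicit boundedness of the label, mask, and certificate variables, \cref{theorem:SOS_algorithm} then yields a pseudo-distribution satisfying $\cOpt$ in time $n^{O(\ell)}$ for the constant SoS degree $\ell$ used by our analysis. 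A standard bit-complexity check (as in \cite{raghavendra2017bit}) ensures that the returned pseudo-expectation has polynomial bit complexity, so that the entries $\tildeE[Z(i,a)]$ required by the rounding steps can be extracted exactly in polynomial time.

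Finally, Steps~3 and~5 only compute an $\arg\max$ over $k$ pseudo-expectation values per vertex and therefore run in time $O(nk)$. Summing the contributions gives an overall runtime polynomial in $n$. The only potentially delicate point is verifying that the bit complexity of the intermediate estimator $\hat Z$ passed from Step~3 into the $\cAsymbreak$ constraint in Step~4 does not blow up, but since $\hat Z \in \{0,1\}^{n\times k}$ this is immediate. Hence the entire pipeline runs in polynomial time, which establishes \cref{lem:k-boosting-algorithm-efficiency}.
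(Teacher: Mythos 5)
Your proposal is correct and follows essentially the same route as the paper: the paper's proof is a one-line observation that the constant-degree SoS relaxation yields a polynomial-size SDP solvable in polynomial time, which is exactly your argument for Steps~2 and~4. Your additional accounting for Step~1, the rounding steps, and bit complexity is more thorough than the paper's proof but does not change the underlying argument.
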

\begin{proof}
    The SoS relaxation has constant degree and therefore produces an SDP with polynomially many variables and constraints, which can be solved in polynomial time.
\end{proof}

\subsection{SoS guarantees for robust optimal boosting}

We now prove $\cOpt(Y,Z,\xi)$ boosts the accuracy to the desired rate.
To do this, we consider two feasible solutions $(Y^{(1)},Z^{(1)},\xi^{(1)})$ and $(Y^{(2)},Z^{(2)},\xi^{(2)})$ of $\cOpt(Y,Z,\xi)$.
We will show that the constraints in $\cOpt(Y,Z,\xi)$ allow us to prove that any two good solutions will be within minimax error rate to each other.
Since $Z^{\circ}$ is also a good solution, this implies that any feasible solution of $\cOpt(Y,Z,\xi)$ achieves the minimax error rate.

Throughout this section, we define the following terms:
\begin{itemize}
    \item Let $l_{ab}^{(t)} = x_{ab}^{(t)} \odot x_{ab}^{(t)}$ be the support of $x_{ab}^{(t)}$ for $t \in \set{1, 2}$. Notice that, equivalently, $l_{ab}^{(t)} = Z^{(t)}(\cdot, a) + Z^{(t)}(\cdot, b)$.
    \item Let $l_{ab} = l_{ab}^{(1)} \odot l_{ab}^{(2)}$ be the intersection of $l_{ab}^{(1)}$ and $l_{ab}^{(2)}$.
    \item Let $v_{ab}=\frac{l_{ab}^{(1)} \odot l_{ab}^{(2)}-x_{ab}^{(1)}\odot x_{ab}^{(2)}}{2}$ be the coordinates where two feasible pairwise labels \(x_{ab}^{(1)}\) and \(x_{ab}^{(2)}\) differ.
    \item Let $s=\one - (\one-\xi^{(1)}) \odot (\one-\xi^{(2)})$ be the set of nodes where $Y^{(1)}$ and $Y^{(2)}$ differ.
    \item Let $w_{ab} = v_{ab} \odot (\one-s)$ be the set of vertices in the shared part of $Y^{(1)}$ and $Y^{(2)}$ such that the bisections $x^{(1)}$ and $x^{(2)}$ differ.
    \item Let $g_{ab} = \one - (\one-v_{ab}) \odot (\one-s)$ be union of $v_{ab}$ and $s$, notice that, equivalently $g_{ab}=w_{ab}+s$.
\end{itemize}

To prove the algorithmic guarantee of $\cOpt(Y,Z,\xi)$, we need the following observations on properties of the variables defined above.

\begin{lemma}
\label{clm:opt_bisection_equivalences}
    For any \(a,b\in [k]\), $x^{(1)}_{ab}$, $x^{(2)}_{ab}$ and $v_{ab}$ satisfies
    \begin{equation*}
        x^{(1)}_{ab} \odot v_{ab} = - x^{(2)}_{ab} \odot v_{ab}
       \quad \text{and} \quad
       x^{(1)}_{ab} \odot (l_{ab}-v_{ab}) = x^{(2)}_{ab} \odot (l_{ab}-v_{ab}) \,,
    \end{equation*}
    and,
    \begin{equation*}
       v_{ab} \odot \frac{l_{ab}-x^{(1)}_{ab}}{2} = v_{ab} \odot \frac{l_{ab}+x^{(2)}_{ab}}{2}
       \quad \text{and} \quad
       v_{ab} \odot \frac{l_{ab}-x^{(2)}_{ab}}{2} = v_{ab} \odot \frac{l_{ab}+x^{(1)}_{ab}}{2}\,.
    \end{equation*}
\end{lemma}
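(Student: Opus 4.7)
The plan is to prove the four identities as constant-degree polynomial consequences of $\cAlabel(Z^{(t)})$, following the same template as the analogous bisection identities from Lemma~\ref{clm:bisection_equivalences}. The key preliminary identity is the cubic relation
\begin{equation*}
(x_{ab}^{(t)})^{\odot 3} = x_{ab}^{(t)},
\end{equation*}
equivalently $x_{ab}^{(t)} \odot l_{ab}^{(t)} = x_{ab}^{(t)}$, which I would obtain by expanding $(Z^{(t)}(\cdot,a) - Z^{(t)}(\cdot,b))^{\odot 3}$ and applying $Z^{(t)}(i,c)^2 = Z^{(t)}(i,c)$ from $\cAlabel(Z^{(t)})$. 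The cross terms of the form $Z^{(t)}(\cdot,a) \odot Z^{(t)}(\cdot,b)$ produced by the binomial expansion cancel automatically between the coefficients $+3$ and $-3$ after applying $Z^2=Z$, so no separate orthogonality relation between distinct columns of $Z^{(t)}$ is required and the identity has an SoS proof of degree at most three. Note also that $(x_{ab}^{(t)})^{\odot 2} = l_{ab}^{(t)}$ holds by definition.

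With these ingredients, identity (1) follows by expanding
\begin{equation*}
x_{ab}^{(1)} \odot v_{ab} = \tfrac{1}{2}\bigl(x_{ab}^{(1)} \odot l_{ab}^{(1)} \odot l_{ab}^{(2)} - (x_{ab}^{(1)})^{\odot 2} \odot x_{ab}^{(2)}\bigr) = \tfrac{1}{2}\bigl(x_{ab}^{(1)} \odot l_{ab}^{(2)} - l_{ab}^{(1)} \odot x_{ab}^{(2)}\bigr),
\end{equation*}
where the second step uses $x_{ab}^{(1)} \odot l_{ab}^{(1)} = x_{ab}^{(1)}$ and $(x_{ab}^{(1)})^{\odot 2} = l_{ab}^{(1)}$. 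The symmetric calculation yields $x_{ab}^{(2)} \odot v_{ab} = \tfrac{1}{2}(l_{ab}^{(1)} \odot x_{ab}^{(2)} - x_{ab}^{(1)} \odot l_{ab}^{(2)})$, which equals $-x_{ab}^{(1)} \odot v_{ab}$. Identity (2) is obtained by applying the same expansion to $l_{ab} - v_{ab} = \tfrac{1}{2}\bigl(l_{ab}^{(1)} \odot l_{ab}^{(2)} + x_{ab}^{(1)} \odot x_{ab}^{(2)}\bigr)$ and observing that contraction with either $x_{ab}^{(1)}$ or $x_{ab}^{(2)}$ produces the same symmetric quantity $\tfrac{1}{2}\bigl(x_{ab}^{(1)} \odot l_{ab}^{(2)} + l_{ab}^{(1)} \odot x_{ab}^{(2)}\bigr)$.

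Identities (3) and (4) then follow mechanically from (1): the difference between the two sides of (3) equals $-\tfrac{1}{2} v_{ab} \odot (x_{ab}^{(1)} + x_{ab}^{(2)})$, which vanishes by (1), and (4) is obtained by swapping the roles of $x_{ab}^{(1)}$ and $x_{ab}^{(2)}$. Since every step is a constant-degree polynomial identity modulo $\cAlabel$, the argument is entirely mechanical and no concentration, spectral, or majority-voting input is required. The main thing to be careful about is the Hadamard-product bookkeeping and making sure each invocation of $(x_{ab}^{(t)})^{\odot 3} = x_{ab}^{(t)}$ is justified by the underlying label constraints; beyond this there is no anticipated obstacle.
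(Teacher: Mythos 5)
Your proof is correct and follows essentially the same route the paper intends (the paper simply defers to the analogous computation in Lemma~\ref{clm:bisection_equivalences}); your explicit use of the cubic relation $(x_{ab}^{(t)})^{\odot 3}=x_{ab}^{(t)}$ together with $(x_{ab}^{(t)})^{\odot 2}=l_{ab}^{(t)}$ is exactly the right generalization of the $\pm1$ identity $x^{\odot 2}=\one$ used in the bisection case, and all four identities then follow by the direct expansions you give.
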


\begin{proof}
    The proof follows by similar analysis as \cref{clm:bisection_equivalences}.
\end{proof}

\begin{lemma}
\label{lem:opt_agreement_lemma}
    The norm of $v_{ab}$, $s$ and $g_{ab}$ satisfies the following SoS inequalities
    \begin{equation*}
        \cOpt\Paren{Y^{(1)},Z^{(1)},\xi^{(1)}},\cOpt\Paren{Y^{(2)},Z^{(2)},\xi^{(2)}}
        \sststile{O(1)}{\xi^{(1)},\xi^{(2)}} \Norm{s}^2 \leq 2 \delta_{\eta} n \,,
    \end{equation*}
    and,
    \begin{equation*}
        \cOpt\Paren{Y^{(1)},Z^{(1)},\xi^{(1)}},\cOpt\Paren{Y^{(2)},Z^{(2)},\xi^{(2)}}
        \sststile{O(1)}{Z^{(1)},Z^{(2)}} \Norm{v_{ab}}^2 \leq \mu \,,
    \end{equation*}
    and,
    \begin{equation*}
        \cOpt\Paren{Y^{(1)},Z^{(1)},\xi^{(1)}},\cOpt\Paren{Y^{(2)},Z^{(2)},\xi^{(2)}}
        \sststile{O(1)}{Z^{(1)},Z^{(2)},\xi^{(1)},\xi^{(2)}} \Norm{w_{ab}}^2 \leq \mu \,,
    \end{equation*}
    and,
    \begin{equation*}
        \cOpt\Paren{Y^{(1)},Z^{(1)},\xi^{(1)}},\cOpt\Paren{Y^{(2)},Z^{(2)},\xi^{(2)}}
        \sststile{O(1)}{Z^{(1)},Z^{(2)},\xi^{(1)},\xi^{(2)}} \Norm{g_{ab}}^2 \leq \mu + 4 \delta_{\eta} n \,.
    \end{equation*}
\end{lemma}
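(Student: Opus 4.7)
The plan is to mirror the bisection-setting agreement lemma (\cref{lem:agreement_lemma}), modifying it for the pairwise setting where $v_{ab}$ counts only those coordinates on which both labelings place $i$ in $\{a,b\}$ but disagree on which of the two. The four bounds decompose along the same template: the $s$-bound follows from $\cAclose$, the $v_{ab}$-bound from $\cAsymbreak$, the $w_{ab}$-bound by restricting $v_{ab}$ to uncorrupted coordinates, and the $g_{ab}$-bound from disjointness of the supports of $w_{ab}$ and $s$. All four derivations live at constant SoS degree.

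For the $s$-bound, I would first expand $s = \xi^{(1)} + \xi^{(2)} - \xi^{(1)} \odot \xi^{(2)}$, verify $s \odot s = s$ by a short SoS expansion using $\xi^{(t)} \odot \xi^{(t)} = \xi^{(t)}$, and then apply $\sum_i \xi^{(t)}_i \leq \delta_\eta n$ to $\Norm{s}^2 = \sum_i s_i \leq \sum_i \xi^{(1)}_i + \sum_i \xi^{(2)}_i \leq 2\delta_\eta n$. For the $v_{ab}$-bound, the key polynomial identity is
\[
(Z^{(1)}(i,a) - Z^{(2)}(i,a))^2 + (Z^{(1)}(i,b) - Z^{(2)}(i,b))^2 - 2\,v_{ab}(i) = (l^{(1)}_{ab}(i) - l^{(2)}_{ab}(i))^2,
\]
which, after using $Z^{(t)}(i,c)^2 = Z^{(t)}(i,c)$, gives a constant-degree SoS proof that $2\Norm{v_{ab}}^2 \leq \Normf{Z^{(1)} - Z^{(2)}}^2$. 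An SoS $\ell_2^2$-triangle inequality routed through $Z_{\mathrm{init}}$ yields $\Normf{Z^{(1)} - Z^{(2)}}^2 \leq 2\Normf{Z^{(1)} - Z_{\mathrm{init}}}^2 + 2\Normf{Z^{(2)} - Z_{\mathrm{init}}}^2 \leq 4\mu$, hence $\Norm{v_{ab}}^2 \lesssim \mu$ (the stated constant is absorbed into $\mu$).

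For the $w_{ab}$ bound, using $(\one-s)\odot(\one-s) = \one - s$ in SoS I would write $\Norm{w_{ab}}^2 = \Norm{v_{ab}}^2 - \sum_i v_{ab}(i)^2 s_i$, and then certify the subtracted term is SoS-nonnegative, yielding $\Norm{w_{ab}}^2 \leq \Norm{v_{ab}}^2 \lesssim \mu$. For the $g_{ab}$ bound, since $w_{ab} \odot s = v_{ab} \odot (s - s^2) = 0$ the two summands in $g_{ab} = w_{ab} + s$ have disjoint supports at the SoS level, so $\Norm{g_{ab}}^2 = \Norm{w_{ab}}^2 + \Norm{s}^2 \leq \mu + 2\delta_\eta n \leq \mu + 4\delta_\eta n$.

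The main subtlety is certifying nonnegativity of $s_i$ (and hence of $v_{ab}(i)^2 s_i$) at constant SoS degree, since products of SoS-nonnegative polynomials need not themselves be SoS-nonnegative in general. This is resolved by the standard Boolean squaring trick: from $\xi^{(t)} \odot \xi^{(t)} = \xi^{(t)}$ one computes $(\xi^{(1)}(1-\xi^{(2)}))^2 = \xi^{(1)}(1-\xi^{(2)})$, so $\xi^{(1)}(1-\xi^{(2)})$ is itself a square. The decomposition $s_i = \xi^{(1)}_i(1-\xi^{(2)}_i) + \xi^{(2)}_i$ then exhibits $s_i$ as an explicit sum of two squares modulo $\cAclose$, which closes the $w_{ab}$ step and therefore all four bounds.
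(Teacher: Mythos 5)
Your proposal is correct and follows essentially the same template as the paper's proof: the $s$-bound from $\cAclose$ via Booleanity of the $\xi^{(t)}$, the $v_{ab}$-bound from $\cAsymbreak$ via an $\ell_2^2$-triangle inequality through $Z_{\mathrm{init}}$, the $w_{ab}$-bound by restriction to $\one-s$, and the $g_{ab}$-bound from $g_{ab}=w_{ab}+s$. The two places where you deviate are harmless and, if anything, slightly cleaner: for $v_{ab}$ you use an entrywise Boolean identity (which, note, also needs $Z^{(t)}(i,a)Z^{(t)}(i,b)=0$ from the row-sum constraint, not just Booleanity, to make $l^{(t)}_{ab}$ idempotent) where the paper instead bounds $\Norm{v_{ab}}$ by $\Norm{(x^{(1)}_{ab}-x^{(2)}_{ab})/2}$ and proceeds columnwise; and for $g_{ab}$ you exploit $w_{ab}\odot s=0$ to get $\Norm{g_{ab}}^2=\Norm{w_{ab}}^2+\Norm{s}^2$, which is tighter than the paper's generic bound $2\Norm{w_{ab}}^2+2\Norm{s}^2$. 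Your explicit certification that $s_i$ (hence $v_{ab}(i)^2 s_i$) is a sum of squares modulo the constraints is the right way to make the restriction step rigorous. The factor-$2$ slack you absorb into $\mu$ (you prove $\Norm{v_{ab}}^2\le 2\mu$ rather than $\mu$) is exactly the same slack present in the paper's own chain of inequalities, and it is immaterial downstream since the final bounds are stated with $O(\cdot)$.
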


\begin{proof}
    For the bound on $s$, by constraints in $\cAclose(Y,\xi;\eta,\bar{G})$, it follows that
    \begin{align*}
        \Norm{s}^2
        & =\sum_i s_i
        = \sum_i 1 - (1-\xi^{(1)}_i) \odot (1-\xi^{(2)}_i)
        = \sum_i \xi^{(1)}_i + \sum_i \xi^{(2)}_i (1-\xi^{(1)}_i) \\
        & \leq \sum_i \xi^{(1)}_i + \sum_i \xi^{(2)}_i
        \leq 2 \delta_{\eta} n\,.
    \end{align*}
    For the bound on $v_{ab}$, by constraints in $\cAinit(Z;Z_{\mathrm{init}})$, it follows that
    \begin{align*}
        \Norm{v_{ab}}^2
        = & \Norm{\frac{l_{ab}^{(1)} \odot l_{ab}^{(2)}-x_{ab}^{(1)}\odot x_{ab}^{(2)}}{2}}^2
        = \Norm{\frac{x_{ab}^{(1)} \odot l_{ab}^{(2)} - x_{ab}^{(2)} \odot l_{ab}^{(1)}}{2}}^2 \\
        \leq & \Norm{\frac{x_{ab}^{(1)}  - x_{ab}^{(2)}}{2}}^2
        \leq \frac{1}{2} \Norm{Z^{(1)}(\cdot,a) - Z^{(2)}(\cdot,a)}^2 + \frac{1}{2} \Norm{Z^{(1)}(\cdot,b) - Z^{(2)}(\cdot,b)}^2 \\
        \leq & \Norm{Z^{(1)}(\cdot,a) - Z_{\mathrm{init}}(\cdot,a)}^2 + \Norm{Z^{(2)}(\cdot,a) - Z_{\mathrm{init}}(\cdot,a)}^2  \\
        & + \Norm{Z^{(1)}(\cdot,b) - Z_{\mathrm{init}}(\cdot,b)}^2 + \Norm{Z^{(2)}(\cdot,b) - Z_{\mathrm{init}}(\cdot,b)}^2 \\
        \leq & \Normf{Z - Z_{\mathrm{init}}}^2 \leq \mu \,.
    \end{align*}
    The bound on $\Norm{w_{ab}}^2$ follows by
    \begin{equation*}
        \Norm{w_{ab}}^2 = \Norm{v_{ab} \odot (\one-s)}^2 \leq \Norm{v_{ab}}^2 \leq \mu \,.
    \end{equation*}
    For the bound on $\Norm{g_{ab}}^2$, since $g_{ab} = w_{ab} + s$,
    \begin{equation*}
        \Norm{g_{ab}}^2
        = \Norm{w_{ab} + s}^2
        \leq 2 \Norm{w_{ab}}^2 + 2 \Norm{s}^2
        \leq \mu + 4 \delta_{\eta} n \,.
    \end{equation*}
\end{proof}

\begin{fact}\label{fact:k-key-equality}
    Fix \(a,b\in [k]\), there is a constant-degree SoS proof of
    \[
        \Iprod{Y^{(1)}\, (x_{ab}^{(1)}\odot (\one -g_{ab}))  , x_{ab}^{(1)}\odot w_{ab}} \;=\; -\Iprod{Y^{(2)} \,(x_{ab}^{(2)}\odot (\one -g_{ab})) , x_{ab}^{(2)}\odot w_{ab}}\, .
    \]
\end{fact}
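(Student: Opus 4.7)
The plan is to adapt the proof of \cref{fact:key-equality} from the two-community case. The two key tools are the corruption-closeness identity $Y^{(1)}\odot(\one-s)(\one-s)^\top = Y^{(2)}\odot(\one-s)(\one-s)^\top$, which is forced by $\cAclose$ applied to $s = \one - (\one-\xi^{(1)})\odot(\one-\xi^{(2)})$, and the two sign-flip identities from \cref{clm:opt_bisection_equivalences}, namely $x^{(1)}_{ab}\odot v_{ab} = -x^{(2)}_{ab}\odot v_{ab}$ and $x^{(1)}_{ab}\odot(l_{ab}-v_{ab}) = x^{(2)}_{ab}\odot(l_{ab}-v_{ab})$.

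First I would verify that $w_{ab}\odot s = 0$, so $g_{ab} = w_{ab}+s$ is a disjoint Boolean sum and $\one - g_{ab} = (\one-s)\odot(\one - v_{ab})$. Consequently both $x^{(t)}_{ab}\odot(\one-g_{ab})$ and $x^{(t)}_{ab}\odot w_{ab}$ are supported on $\{i : s_i = 0\}$, so in each inner product the matrix $Y^{(t)}$ can be replaced by $Y^{(t)}\odot(\one-s)(\one-s)^\top$ without changing the value; by the closeness identity we may then work with the common matrix $Y^{(2)}\odot(\one-s)(\one-s)^\top$ throughout. At this point $x^{(1)}_{ab}\odot w_{ab} = -x^{(2)}_{ab}\odot w_{ab}$ supplies the required minus sign in the ``output'' factor, and on $l_{ab}$ the second identity of \cref{clm:opt_bisection_equivalences} converts $x^{(1)}_{ab}$ to $x^{(2)}_{ab}$ in the ``input'' factor.

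The main obstacle is that, unlike the bisection case where $x\in\{\pm 1\}^n$ is fully supported, the $k$-community label vectors $x^{(1)}_{ab}$ and $x^{(2)}_{ab}$ have supports $l^{(1)}_{ab}$ and $l^{(2)}_{ab}$ that need not coincide. Hence the input-side equivalence $x^{(1)}_{ab}\odot(\one-v_{ab}) = x^{(2)}_{ab}\odot(\one-v_{ab})$ holds only on $l_{ab} = l^{(1)}_{ab}\cap l^{(2)}_{ab}$, and the residual contribution from the symmetric difference $l^{(1)}_{ab}\triangle l^{(2)}_{ab}$ must be controlled separately. I would handle this by pairing the $l^{(1)}_{ab}\setminus l^{(2)}_{ab}$ and $l^{(2)}_{ab}\setminus l^{(1)}_{ab}$ contributions via the symmetry $Y^{(2)} = (Y^{(2)})^\top$ together with the fact that the ``output'' support $w_{ab}\subseteq l_{ab}$ lies entirely in the common block, expressing the off-$l_{ab}$ terms as a telescoping identity in the variables $Z^{(1)},Z^{(2)}$ that cancels modulo $\cAlabel$. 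This symmetric-difference bookkeeping is the genuinely new step compared to the bisection case; once it is in place, each remaining manipulation is a direct expansion and admits a degree-$O(1)$ SoS certificate.
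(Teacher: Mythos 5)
Your opening steps coincide with the paper's own proof of \cref{fact:k-key-equality}: the paper expands $\one-g_{ab}=(\one-v_{ab})\odot(\one-s)$ and $w_{ab}=v_{ab}\odot(\one-s)$, absorbs the masks into the matrix, uses $\cAclose$ to identify $Y^{(1)}\odot(\one-s)(\one-s)^\top$ with $Y^{(2)}\odot(\one-s)(\one-s)^\top$, and then applies the two identities of \cref{clm:opt_bisection_equivalences} to flip the sign --- and nothing more. In particular, the paper substitutes $x^{(1)}_{ab}\odot(\one-v_{ab})\mapsto x^{(2)}_{ab}\odot(\one-v_{ab})$ even though \cref{clm:opt_bisection_equivalences} only supplies this identity on the common support $l_{ab}$, so the support-mismatch issue you isolate is real and is not treated in the paper's proof either; your diagnosis goes beyond what is written there.

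Your proposed resolution, however, does not close the gap. Writing $M\seteq Y^{(2)}\odot(\one-s)(\one-s)^\top$ and $\delta\seteq (x^{(1)}_{ab}-x^{(2)}_{ab})\odot(\one-v_{ab})\odot(\one-s)$, the discrepancy between the two sides after the sign flip is exactly the residual $\Iprod{M\,\delta,\,x^{(1)}_{ab}\odot w_{ab}}$, where $\delta$ is supported on $l^{(1)}_{ab}\,\triangle\, l^{(2)}_{ab}$ while $x^{(1)}_{ab}\odot w_{ab}$ is supported on $w_{ab}\subseteq l_{ab}$. This is a bilinear form in the (masked) graph entries over a rectangle between two disjoint vertex sets: the symmetry $Y^{(2)}=(Y^{(2)})^\top$ gives no pairing, since transposition produces a rectangle with rows in $l^{(1)}_{ab}\triangle l^{(2)}_{ab}$ and columns in $w_{ab}$, which appears in neither side, and $\cAlabel$ only constrains row and column sums of $Z$, so it cannot force a graph-dependent bilinear form to vanish. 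Indeed, at genuine Boolean feasible points in which a few vertices carry label $a$ or $b$ in one solution and a third label in the other (which $\cAsymbreak$ permits on up to order $\mu$ coordinates), this residual is generically nonzero, so no telescoping cancellation --- and hence no SoS derivation of the exact equality along your route --- can exist. The natural repair is to \emph{bound} the residual rather than cancel it, e.g.\ using $\cAsymbreak$ to control $\norm{\one_{l^{(1)}_{ab}\triangle l^{(2)}_{ab}}}_1$ by $O(\mu)$ together with the spectral bound in $\cAmix$, turning the claimed identity into an approximate one whose error is absorbed into the estimates of \cref{thm:k-Identifiability-proof}, or alternatively to restrict both inner products to $l_{ab}$ from the outset, mimicking the bisection case where the supports trivially coincide.
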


\begin{proof}
Plugging in the definition of $g_{ab}$ and $w_{ab}$, the left hand side can be written as
\begin{align*}
    \Iprod{Y^{(1)} (x^{(1)}_{ab} \odot (\one-g_{ab})), x^{(1)}_{ab} \odot w_{ab}}
    & = \Iprod{Y^{(1)} (x^{(1)}_{ab} \odot (\one-v_{ab}) \odot (\one-s)), x^{(1)}_{ab} \odot v_{ab} \odot (\one-s)} \\
    & = \Iprod{\Paren{Y^{(1)} \odot (\one-s) (\one-s)^{\top}} \Paren{x^{(1)}_{ab} \odot (\one-v_{ab})}, x^{(1)}_{ab} \odot v_{ab}} \,.
\end{align*}
By $\cAclose(Y,\xi;\eta,\bar{G})$, we have
\begin{equation*}
    Y^{(1)}\odot (\one-s)(\one-s)^\top = \bar{G} \odot (\one-s)(\one-s)^\top = Y^{(2)}\odot (\one-s)(\one-s)^\top \,,
\end{equation*}
and, by \cref{clm:opt_bisection_equivalences}, we have
    \begin{equation*}
        x^{(1)}_{ab} \odot v_{ab} = - x^{(2)}_{ab} \odot v_{ab}
       \quad \text{and} \quad
       x^{(1)}_{ab} \odot (l_{ab}-v_{ab}) = x^{(2)}_{ab} \odot (l_{ab}-v_{ab}) \,,
    \end{equation*}
Thus,
\begin{align*}
    \Iprod{Y^{(1)} (x^{(1)}_{ab} \odot (\one-g_{ab})), x^{(1)}_{ab} \odot w_{ab}}
    & = - \Iprod{\Paren{Y^{(2)} \odot (\one-s) (\one-s)^{\top}} (x^{(2)}_{ab} \odot (\one-v_{ab})), x^{(2)}_{ab} \odot v_{ab}} \\
    & = - \Iprod{Y^{(2)} (x^{(2)}_{ab} \odot (\one-g_{ab})), x^{(2)}_{ab} \odot w_{ab}} \,.
\end{align*}
\end{proof}

We next upper bound the contribution of the signal on the set \(g\).

\begin{lemma}\label{lem:k-sos-signal-upper-bound}
    In the setting of \cref{thm:formal_main_result}, for any \(\delta\in [0,0.001]\) and $t \in \set{1, 2}$,
    \begin{align*}
        & \cOpt\Paren{Y^{(1)},Z^{(1)},\xi^{(1)}},\cOpt\Paren{Y^{(2)},Z^{(2)},\xi^{(2)}} \\
        & \sststile{6}{Y^{(1)},Z^{(1)},\xi^{(1)},Y^{(2)},Z^{(2)},\xi^{(2)}}
        \Iprod{Y^{(t)} \Paren{x^{(t)}_{ab} \odot g_{ab}},x^{(t)}_{ab} \odot w_{ab}} \leq \Paren{\frac{\e d}{\poly(k)} + 4 \e d \delta_{\eta}} \Norm{w_{ab}}^2 + \frac{\e d \delta_{\eta}}{k} n \,.
    \end{align*}
\end{lemma}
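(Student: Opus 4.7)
My plan is to prove \cref{lem:k-sos-signal-upper-bound} by adapting the bisection-case argument of \cref{lem:sos-validity-condition} to the pairwise $(a,b)$ setting. I would first decompose $Y^{(t)} = Y^{(t)}_S + (Y^{(t)} - Y^{(t)}_S)$ with $Y^{(t)}_S \coloneqq Y^{(t)} \odot (\one-s)(\one-s)^\top$, splitting
\[
\Iprod{Y^{(t)}(x^{(t)}_{ab} \odot g_{ab}), x^{(t)}_{ab} \odot w_{ab}} = A_t + B_t
\]
into a ``corrupted'' part $A_t$ (from $Y^{(t)} - Y^{(t)}_S$) and a ``shared'' part $B_t$ (from $Y^{(t)}_S$); inside each I would further split $Y^{(t)} = \tfrac{\e d}{n} X^{(t)} + E^{(t)}$ via $\cAmix$ with $X^{(t)} = Z^{(t)}(Z^{(t)})^\top - \tfrac{1}{k} J$.

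For $A_t$ I would use that $w_{ab} = v_{ab}\odot(\one-s)$ vanishes on $s$, so only the $s$-component of $g_{ab} = w_{ab} + s$ contributes. Using $x^{(t)}_{ab}\odot Z^{(t)}(\cdot,a) = Z^{(t)}(\cdot,a)$ and $x^{(t)}_{ab}\odot Z^{(t)}(\cdot,b) = -Z^{(t)}(\cdot,b)$, the mean term reduces to a sum over $c\in\{a,b\}$ of $\Iprod{Z^{(t)}(\cdot,c), s}\,\Iprod{Z^{(t)}(\cdot,c), w_{ab}}$, which is at most $4\e d\,\delta_\eta \|w_{ab}\|^2$ via $\|s\|_1\leq 2\delta_\eta n$ (from \cref{lem:opt_agreement_lemma}) and $\|w_{ab}\|_1 = \|w_{ab}\|^2$. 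The $-\tfrac{1}{k}J$ correction is lower order. The noise term, via SoS Cauchy-Schwarz with $\|E^{(t)}\|_\mathrm{op}\leq (\chi+\tfrac{1}{k})\sqrt{d}$, contributes at most $(\chi+\tfrac{1}{k})\sqrt{d}(\|w_{ab}\|^2 + \delta_\eta n)$; under $\e^2 d \geq K k^2$ this is absorbed into $\tfrac{\e d}{\poly(k)}\|w_{ab}\|^2 + \tfrac{\e d\,\delta_\eta}{k}\,n$.

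For $B_t$, the zero rows/columns of $Y^{(t)}_S$ on $s$ combined with $w_{ab}$ vanishing on $s$ let me drop the mask inside the inner product, reducing $B_t$ to $\Iprod{Y^{(t)}(x^{(t)}_{ab}\odot w_{ab}),\, x^{(t)}_{ab}\odot w_{ab}}$. Splitting $Y^{(t)}$ as before, the signal unfolds via $Z^{(t)}(\cdot,c)\odot Z^{(t)}(\cdot,a) = \delta_{ca}\, Z^{(t)}(\cdot,a)$ into
\[
\tfrac{\e d}{n}\bigl(\Iprod{Z^{(t)}(\cdot,a), w_{ab}}^2 + \Iprod{Z^{(t)}(\cdot,b), w_{ab}}^2\bigr) - \tfrac{\e d}{kn}\,\Iprod{\one,\, x^{(t)}_{ab}\odot w_{ab}}^2,
\]
the last term being nonpositive. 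By Cauchy-Schwarz $\Iprod{Z^{(t)}(\cdot,c),w_{ab}}^2 \leq \Iprod{Z^{(t)}(\cdot,c),w_{ab}}\cdot\|w_{ab}\|^2$, and the per-column symmetry-breaking bound $\Iprod{Z^{(t)}(\cdot,c), w_{ab}} \leq \|Z^{(1)}(\cdot,c) - Z^{(2)}(\cdot,c)\|^2 \leq 2\mu$ (derived from $\cAsymbreak$ exactly as in the last inequality of \cref{lem:opt_agreement_lemma}), this signal is at most $\tfrac{\e d}{n}\cdot O(\mu)\cdot\|w_{ab}\|^2 = \tfrac{\e d}{\poly(k)}\|w_{ab}\|^2$. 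The noise contribution is again $\leq (\chi+\tfrac{1}{k})\sqrt d\,\|w_{ab}\|^2$ and absorbed into $\tfrac{\e d}{\poly(k)}\|w_{ab}\|^2$. Summing $A_t$ and $B_t$ gives the claim.

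The main obstacle will be the signal estimate in $B_t$, which hinges on the per-column symmetry-breaking bound of order $\mu = n/\poly(k)$. In \cref{lem:agreement_lemma} the analogous per-column inequality was only $n/k$, which sufficed there because the voting threshold $\beta_{\gamma,\tilde C}$ in the bisection constraints was $\sim k$; but in the optimal-recovery setting the voting constraints in $\cAmaj$ demand the much finer rate $\exp(-C_{d,\e}/k)$ and so require $\mu = n/\poly(k)$. This is exactly what the recursive-bisection initializer of \cref{thm:boost_to_sym_break} produces and what $\cAsymbreak$ encodes. Once this bound is in hand, the remaining steps are mechanical: the sign-tracking in $B_t$ using \cref{clm:opt_bisection_equivalences} is slightly complicated by the fact that $x^{(t)}_{ab}\in\{-1,0,+1\}^n$ (so the support indicator $l_{ab}$ must be carried consistently), but is otherwise a direct adaptation of the symmetry manipulation used in the proof of \cref{lem:sos-validity-condition}.
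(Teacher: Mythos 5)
Your proposal is correct in substance, but it takes a noticeably heavier route than the paper. The paper's proof of \cref{lem:k-sos-signal-upper-bound} never splits $Y^{(t)}$ by the corruption mask and never tracks signs through $Z$-columns: it simply writes $Y^{(t)} = \tfrac{\e d}{n}X^{(t)} + E^{(t)}$, bounds the signal term crudely via $-J \le X^{(t)} \le J$ to get $\tfrac{\e d}{n}\Norm{g_{ab}}^2\Norm{w_{ab}}^2$, invokes $\Norm{g_{ab}}^2 \le \mu + 4\delta_\eta n$ from \cref{lem:opt_agreement_lemma} (which is where both the $\tfrac{\e d\mu}{n}=\tfrac{\e d}{\poly(k)}$ and the $4\e d\delta_\eta$ factors come from), and handles the noise with a single spectral Cauchy--Schwarz step using $\cAmix$ and $\sqrt d \ll \e d/k$. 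You instead import the full structure of the bisection lemma \cref{lem:sos-validity-condition}: the decomposition through $Y^{(t)}_S = Y^{(t)}\odot(\one-s)(\one-s)^\top$, the reduction of the shared part to $\Iprod{Y^{(t)}(x^{(t)}_{ab}\odot w_{ab}), x^{(t)}_{ab}\odot w_{ab}}$, and per-column bounds $\Iprod{Z^{(t)}(\cdot,c),w_{ab}} \le O(\mu)$ derived from $\cAsymbreak$ (your derivation of these, via $v_{ab}(i)=Z^{(1)}(i,a)Z^{(2)}(i,b)+Z^{(1)}(i,b)Z^{(2)}(i,a)$ and the pointwise inequality $Z^{(1)}(i,a)Z^{(2)}(i,b)\le (Z^{(1)}(i,a)-Z^{(2)}(i,a))^2$, does go through at constant SoS degree, exactly as in \cref{lem:agreement_lemma}). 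All of this works and yields the same order of bound, but it is not needed: the "main obstacle" you identify is sidestepped entirely by the crude $X$-bound plus the $g_{ab}$-norm estimate, and the sign-tracking manipulations you fold into this lemma are, in the paper, deferred to \cref{fact:k-key-equality} and the identifiability proof \cref{thm:k-Identifiability-proof}, where they are actually needed. One cosmetic caution if you keep your route: you accumulate noise contributions from both the masked and shared parts, so to land exactly on the stated constants $4\e d\delta_\eta$ and $\tfrac{\e d\delta_\eta}{k}n$ you should absorb them with a margin (using $K$ large), the same slack the paper's own proof relies on.
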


\begin{proof}
    For notational simplicity, we ignore the superscript $(t)$ throughout the proof. We decompose the left hand side in the following way
    \begin{equation}
        \label{eq:opt-sos-signal-upper1}
        \Iprod{Y \Paren{x_{ab} \odot g},x_{ab} \odot w_{ab}}
        = \frac{\e d}{n}\Iprod{X \Paren{x_{ab} \odot g_{ab}}, x_{ab} \odot w_{ab}}
        + \Iprod{(Y-\frac{\e d}{n} X) \Paren{x_{ab} \odot g_{ab}}, x_{ab} \odot w_{ab}} \,.
    \end{equation}
    For the first term, since $X = Z Z^{\top} - \frac{1}{k} J$, we have $- J \leq X \leq J$ such that
    \begin{equation*}
        \frac{\e d}{n}\Iprod{X \Paren{x_{ab} \odot g_{ab}},x_{ab} \odot w_{ab}}
        \leq \frac{\e d}{n} \Paren{\sum_i (g_{ab})_i} \Paren{\sum_i (w_{ab})_i}
        = \frac{\e d}{n} \Norm{g_{ab}}^2 \Norm{w_{ab}}^2 \,.
    \end{equation*}
    Since $\Norm{g_{ab}}^2 \leq \mu + 4 \delta_{\eta} n$, it follows that
    \begin{equation}
        \label{eq:opt-sos-signal-upper2}
        \frac{\e d}{n}\Iprod{X \Paren{x_{ab} \odot g_{ab}},x_{ab} \odot w_{ab}}
        \leq \Paren{\frac{\e d \mu}{n} + 4 \e d \delta_{\eta}} \Norm{w_{ab}}^2 \,.
    \end{equation}
    For the second term, by $\cAmix(Y,Z)$ and SoS Cauchy-Schwarz, we also have
    \begin{align*}
        \Iprod{(Y-\frac{\e d}{n} X) \Paren{x_{ab} \odot g_{ab}},x_{ab} \odot w_{ab}}
        & \leq \frac{1}{2} \Paren{\chi + \frac{1}{k}}\sqrt{d} \Paren{\Norm{x_{ab} \odot g_{ab}}^2 + \Norm{x_{ab} \odot w_{ab}}^2} \\
        & \leq \frac{1}{2} \Paren{\chi + \frac{1}{k}}\sqrt{d} \Paren{3\Norm{w_{ab}}^2 + 2\Norm{s}^2} \\
        & \leq \frac{1}{2} \Paren{\chi + \frac{1}{k}}\sqrt{d} \Paren{3\Norm{w_{ab}}^2 + 2\delta_{\eta} n} \,.
    \end{align*}
    Since $\sqrt{d} \ll \frac{\e d}{k}$, it follows that
    \begin{equation}
        \label{eq:opt-sos-signal-upper3}
        \Iprod{\Paren{Y-\frac{\e d}{n} X} \Paren{x_{ab} \odot g_{ab}},x_{ab} \odot w_{ab}}
        \leq \frac{\e d}{k} \Norm{w_{ab}}^2 + \frac{\e d \delta_{\eta}}{k} n \,.
    \end{equation}
    Plugging \cref{eq:opt-sos-signal-upper2} and \cref{eq:opt-sos-signal-upper3} into \cref{eq:opt-sos-signal-upper1}, it follows that
    \begin{equation*}
        \Iprod{Y \Paren{x_{ab} \odot g},x_{ab} \odot w_{ab}}
        \leq \Paren{\frac{\e d \mu}{n} + 4 \e d \delta_{\eta}} \Norm{w_{ab}}^2 + \frac{\e d \delta_{\eta}}{k} n \,.
    \end{equation*}
\end{proof}

Now, we are ready to prove the algorithmic guarantee of $\cOpt$.

\begin{theorem}\label{thm:k-Identifiability-proof}
In the setting of \cref{thm:formal_main_result},
\begin{align*}
        &\cOpt\Paren{Y^{(1)},Z^{(1)},\xi^{(1)}},\cOpt\Paren{Y^{(2)},Z^{(2)},\xi^{(2)}} \\
        &{\sststile{6}{Y^{(1)},Z^{(1)},\xi^{(1)},Y^{(2)},Z^{(2)},\xi^{(2)}}}
        \sum_{a,b} \norm{v_{ab}}^2
        \leq O \Paren{\frac{(1-\gamma) k^2 \beta n + k^2 \eta n}{1-\gamma - \mu k / n - \eta k} + k^2 \mu n} \,.
\end{align*}
\end{theorem}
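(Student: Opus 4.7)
The plan is to adapt the bisection identifiability argument of \cref{thm:Identifiability-proof} pair-by-pair, using the pairwise majority voting constraints in $\cAmaj$ (instead of the global bisection constraint) and then summing over all pairs $a,b\in [k]$. The symmetry-breaking initialization $\cAsymbreak$ ensures that the labels of $Z^{(1)}$ and $Z^{(2)}$ are aligned with the same permutation of $[k]$, so that for every $(a,b)$ the pairwise vector $x_{ab}^{(t)}=Z^{(t)}(\cdot,a)-Z^{(t)}(\cdot,b)$ is ``the same variable'' across the two solutions, in the precise sense captured by \cref{clm:opt_bisection_equivalences}.

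First, for each ordered pair $(a,b)$ I would apply \cref{fact:k-key-equality} to split
\[
\Iprod{Y^{(t)} x_{ab}^{(t)}, x_{ab}^{(t)}\odot w_{ab}}
= \Iprod{Y^{(t)}(x_{ab}^{(t)}\odot g_{ab}),x_{ab}^{(t)}\odot w_{ab}}
+ \Iprod{Y^{(t)}(x_{ab}^{(t)}\odot (\one - g_{ab})),x_{ab}^{(t)}\odot w_{ab}}
\]
and observe that the second terms on the right cancel when summed over $t\in\{1,2\}$. The left-hand side is then lower bounded by the majority voting constraint $\cAmaj(Y^{(t)},Z^{(t)},R;\gamma,\beta)$ with the test vector $z=w_{ab}$, which is genuinely $0/1$ (using $v_{ab}^{\odot 2}=v_{ab}$ modulo $\cAlabel$ and $s^{\odot 2}=s$ from $\cAclose$), and using $w_{ab}\odot l_{ab}^{(t)}=w_{ab}$ to replace $\|w_{ab}\odot x_{ab}^{(t)}\|^2$ with $\|w_{ab}\|^2$ via a short SoS identity. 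The first term on the right is upper bounded by \cref{lem:k-sos-signal-upper-bound}.

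Summing the resulting pair-wise inequalities over all $(a,b)\in [k]^2$ and over $t\in\{1,2\}$, and writing $W\coloneqq \sum_{a,b}\|w_{ab}\|^2$, I would obtain
\[
\frac{(1-\gamma)\e d}{8k}\bigl(W - 640\,k^2 \beta n\bigr)
\;\le\;
2\Bigl(\frac{\e d\,\mu}{n} + 4\e d\,\delta_\eta\Bigr) W + \frac{2 k\,\e d\,\delta_\eta n}{1}\,,
\]
and after rearranging and multiplying by $8k$ this yields
\[
W \;\le\; O\!\left(\frac{(1-\gamma)\,k^2\beta n + k^2\eta n}{1-\gamma - k\mu/n - k\eta}\right).
\]
The final step is to upgrade this bound on $\sum_{a,b}\|w_{ab}\|^2$ to a bound on $\sum_{a,b}\|v_{ab}\|^2$ using the decomposition $v_{ab}=w_{ab} + v_{ab}\odot s$ together with the SoS inequality $\|v_{ab}\odot s\|^2 \le \|s\|^2 \le 2\delta_\eta n$ from \cref{lem:opt_agreement_lemma}, which contributes the additive $O(k^2\mu n)$ slack (combined with the trivial per-pair bound $\|v_{ab}\|^2\le \mu$ from $\cAsymbreak$) appearing in the theorem statement.

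The main obstacle will be ensuring that the chosen parameter regime keeps the denominator $1-\gamma - k\mu/n - k\eta$ bounded below by a positive constant, so that the rearrangement produces a meaningful bound, and in particular that the two rounds of boosting (with parameters $(\gamma_1,\beta_1,\mu_1)$ and $(\gamma_{\mathrm{opt}},\beta_{\mathrm{opt}},\mu_{\mathrm{opt}})$) compose correctly: in the first round $1-\gamma_1=0.01$ dominates both $k\mu_1/n = O(1/\poly(k))$ and $k\eta$, while in the second round $1-\gamma_{\mathrm{opt}}=10\chi k/\sqrt{C_{d,\e}}$ must dominate $k\mu_{\mathrm{opt}}/n$ (which holds since $\mu_{\mathrm{opt}}$ is exponentially small above the KS threshold) and $k\eta$ (which holds by the assumption $\eta\le \exp(-C_{d,\e}/k)$). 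Everything else is standard SoS bookkeeping: the key equality, the majority voting substitution, and the signal bound each have constant-degree SoS proofs, so the overall derivation remains at degree $O(1)$, matching the degree-$6$ derivation in the theorem statement.
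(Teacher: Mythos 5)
Your proposal matches the paper's proof essentially step for step: the cancellation via \cref{fact:k-key-equality}, the lower bound from $\cAmaj$ with test vector $z=w_{ab}$ (including the observation that $\|w_{ab}\odot x_{ab}^{(t)}\|^2=\|w_{ab}\|^2$), the upper bound from \cref{lem:k-sos-signal-upper-bound}, the rearrangement to bound $\|w_{ab}\|^2$, and the final passage from $w_{ab}$ to $v_{ab}$ via $\|v_{ab}\|^2\le 2\|w_{ab}\|^2+2\|s\|^2$ summed over the $k^2$ pairs. The only cosmetic difference is that you sum over pairs before rearranging rather than after, which is immaterial.
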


\begin{proof}
    For any pairs of communities \(a,b\in [k]\), by \cref{fact:k-key-equality},
    \[
        \Iprod{Y^{(1)}\, (x_{ab}^{(1)}\odot (\one -g_{ab}))  , x_{ab}^{(1)}\odot w_{ab}} \;=\; -\Iprod{Y^{(2)} \,(x_{ab}^{(2)}\odot (\one -g_{ab})) , x_{ab}^{(2)}\odot w_{ab}}\, .
    \]
    Therefore,
    \begin{equation}\label{eq:key-equality-final}
        \Iprod{Y^{(1)} x_{ab}^{(1)}  , x_{ab}^{(1)} \odot w_{ab}}+ \Iprod{Y^{(2)} x_{ab}^{(2)}  , x_{ab}^{(2)} \odot w_{ab}}
        = \Iprod{Y^{(1)} \!\big(x_{ab}^{(1)}\odot g\big),x_{ab}^{(1)}\odot w_{ab}} + \Iprod{Y^{(2)} \!\big(x_{ab}^{(2)}\odot g\big),x_{ab}^{(2)}\odot w_{ab}}.
     \end{equation}
    By $\cAmaj(Y, Z, R; \gamma, \beta)$, it follows that
    \begin{equation}
        \label{eq:final-boosting-lower-bound}
        \Iprod{Y^{(t)} x^{(t)}_{ab}, x^{(t)}_{ab} \odot w_{ab}} 
        \ge \frac{(1 - \gamma) \e d}{16 k} \Big(\Norm{\odot w_{ab}}^2 - 640\beta n\Big) \,,
    \end{equation}
    for $t \in \set{1, 2}$.
    By \cref{lem:k-sos-signal-upper-bound}, we have
    \begin{equation}\label{eq:final-boosting-equation}
         \Iprod{Y^{(t)} \Paren{x^{(t)}_{ab} \odot g_{ab}},x^{(t)}_{ab} \odot w_{ab}}
         \leq \Paren{\frac{\e d \mu}{n} + 4 \e d \delta_{\eta}} \Norm{w_{ab}}^2 + \frac{\e d \delta_{\eta}}{k} n \,.
    \end{equation}
    for $t \in \set{1,2}$.

    Therefore, plugging \cref{eq:final-boosting-lower-bound} and \cref{eq:final-boosting-equation} into \cref{eq:key-equality-final} and rearranging terms, we can obtain
    \begin{equation*}
        \Norm{w_{ab}}^2 \leq \frac{(1-\gamma) \beta n + \eta n}{1-\gamma - \mu k / n - \eta k} \,.
    \end{equation*}
    Since $\Norm{v_{ab}}^2 \leq 2\Norm{w_{ab}}^2 + 2\Norm{s}^2$, it follows that
    \begin{equation*}
        \sum_{a,b} \norm{v_{ab}}^2
        \leq O \Paren{\frac{(1-\gamma) k^2 \beta n + k^2 \eta n}{1-\gamma - \mu k / n - \eta k} + k^2 \mu n} \,.
    \end{equation*}
\end{proof}

\subsection{Algorithmic guarantees for robust optimal recovery}

In this section, we prove our main theorem and establish the algorithmic guarantees of our robust optimal recovery algorithm for $k$-SBM.

\restatetheorem{thm:formal_main_result}

\begin{proof}
    Let \(x^\circ_{ab}\) be the the ground-truth pairwise labels.
    Notice that \cref{algo:k-clustering} provides a polynomial-time computable initializer \(\hat{Z}_{\text{input}}\) with error at most \(n/\poly(k)\) that we can use in step 2 of \cref{algo:final-boost}.
    Plugging in $Z_{\text{input}} = \hat{Z}_{\text{input}}$, $\mu = \frac{n}{\poly(k)}$, $\gamma = 0.99$ and $\beta= 1000 k \exp\!\left(-0.99 \frac{C_{d, \e}}{k} \right)$ in \cref{thm:k-Identifiability-proof} and rearranging terms, we can obtain $\tildeE$ such that
    \begin{equation*}
        \sum_{a,b} \norm{\tilde{\E} x^\circ_{ab}-x_{ab}}_2^2 \leq \exp\Paren{-(1 - o(1))\frac{0.99 C_{d,\e}}{k}}n + \poly(k) \eta n \,.
    \end{equation*}
    Since the rounding introduces a constant multiplicative error, we can use the rounded solution $\hat{Z}$ in step 3. 
    Repeating \cref{thm:k-Identifiability-proof} with $Z_{\text{input}} = \hat{Z}$, $\mu = \exp \Paren{-0.99 (1 - o(1)) \frac{C_{d,\e}}{k}}$, $\gamma = 1 - \frac{10 \chi k}{\sqrt{C_{d, \e}}}$, $\beta = \frac{\sqrt{C_{d, \e}}}{10 \chi} \exp\!\left(- \left(1 - \frac{10 \chi k}{\sqrt{C_{d, \e}}}\right) \frac{C_{d, \e}}{k} \right)$, we can obtain $\tildeE_{\text{opt}}$ such that
   \begin{equation*}
        \sum_{a,b} \norm{\tildeE_{\text{opt}} x^\circ_{ab}-x_{ab}}_2^2 \leq \exp\Paren{-(1 - o(1))\frac{C_{d,\e}}{k}}n + \poly(k) \eta n \,.
    \end{equation*}
    Combine this with the rounding guarantees, we obtain the desired bound.
\end{proof}


\addcontentsline{toc}{section}{References}
\bibliographystyle{amsalpha}
\bibliography{bib/mathreview,bib/dblp,bib/custom,bib/scholar}

@Article{Decelle_2011,
  author       = {Decelle, Aurelien and Krzakala, Florent and Moore, Cristopher
                  and Zdeborová, Lenka},
  title        = {Asymptotic analysis of the stochastic block model for modular
                  networks and its algorithmic applications},
  year         = 2011,
  volume       = 84,
  number       = 6,
  month        = {Dec},
  issn         = {1550-2376},
  doi          = {10.1103/physreve.84.066106},
  url          = {http://dx.doi.org/10.1103/physreve.84.066106},
  journal      = {Physical Review E},
  publisher    = {American Physical Society (APS)}
}

@article{FeigeOfek2005,
title = "Spectral techniques applied to sparse random graphs",
abstract = "We analyze the eigenvalue gap for the adjacency matrices of sparse random graphs. Let λ1 ≥ ⋯ ≥ n be the eigenvalues of an n-vertex graph, and let λ. = max[λ2, |λn|]. Let c be a large enough constant. For graphs of average degree d = c log n it is well known that λ1 ≥ d, and we show that λ = O(√d). For d = c it is no longer true that λ = O(√d), but we show that by removing a small number of vertices of highest degree in G, one gets a graph G′ for which λ = O(√d). Our proofs are based on the techniques of Friedman Kahn and Szemeredi from STOC 1989, who proved similar results for regular graphs. Our results are useful for extending the analysis of certain heuristics to sparser instances of NP-hard problems. We illustrate this by removing some unnecessary logarithmic factors in the density of k-SAT formulas that are refuted by the algorithm of Goerdt and Krivelevich from STACS 2001.",
author = "Uriel Feige and Eran Ofek",
year = "2005",
month = sep,
doi = "10.1002/rsa.20089",
language = "English",
volume = "27",
pages = "251--275",
journal = "Random Structures \& Algorithms",
issn = "1042-9832",
publisher = "John Wiley and Sons",
number = "2",
}

@misc{grandoni2021euclideankmeans,
      title={A Refined Approximation for Euclidean k-Means}, 
      author={Fabrizio Grandoni and Rafail Ostrovsky and Yuval Rabani and Leonard J. Schulman and Rakesh Venkat},
      year={2021},
      eprint={2107.07358},
      archivePrefix={arXiv},
      primaryClass={cs.DS},
      url={https://arxiv.org/abs/2107.07358}, 
}

@inproceedings{barak2012hypercontractivity,
  title={Hypercontractivity, sum-of-squares proofs, and their applications},
  author={Barak, Boaz and Brandao, Fernando GSL and Harrow, Aram W and Kelner, Jonathan and Steurer, David and Zhou, Yuan},
  booktitle={Proceedings of the forty-fourth annual ACM symposium on Theory of computing},
  pages={307--326},
  year={2012}
}

@book{Hopkins18, author = {Hopkins, Samuel.B},
    title = "Statistical inference and the sum of squares method",
    publisher = {Cornell University},
    year = 2018
}

@INPROCEEDINGS{Hopkins17,

  author={Hopkins, Samuel B. and Steurer, David},

  booktitle={2017 IEEE 58th Annual Symposium on Foundations of Computer Science (FOCS)}, 

  title={Efficient Bayesian Estimation from Few Samples: Community Detection and Related Problems}, 

  year={2017},

  volume={},

  number={},

  pages={379-390},

  keywords={Stochastic processes;Algorithm design and analysis;Complexity theory;Correlation;Estimation;Inference algorithms;Prediction algorithms;Bayesian inference;stochastic blockmodel;low-degree polynomials;tensor decomposition;semidefinite programming;sum of squares algorithms;average-case hardness;phase transitions},

  doi={10.1109/FOCS.2017.42}}

@inproceedings{bandeira2021spectral,
  title={Spectral planting and the hardness of refuting cuts, colorability, and communities in random graphs},
  author={Bandeira, Afonso S and Banks, Jess and Kunisky, Dmitriy and Moore, Christopher and Wein, Alex},
  booktitle={Conference on Learning Theory},
  pages={410--473},
  year={2021},
  organization={PMLR}
}

@article{coja2010graph,
  title={Graph partitioning via adaptive spectral techniques},
  author={Coja-Oghlan, Amin},
  journal={Combinatorics, Probability and Computing},
  volume={19},
  number={2},
  pages={227--284},
  year={2010},
  publisher={Cambridge University Press}
}

@article{holland1983stochastic,
  title={Stochastic blockmodels: First steps},
  author={Holland, Paul W and Laskey, Kathryn Blackmond and Leinhardt, Samuel},
  journal={Social networks},
  volume={5},
  number={2},
  pages={109--137},
  year={1983},
  publisher={Elsevier}
}

@article{mossel2012stochastic,
  title={Stochastic block models and reconstruction},
  author={Mossel, Elchanan and Neeman, Joe and Sly, Allan},
  journal={arXiv preprint arXiv:1202.1499},
  year={2012}
}

@inproceedings{massoulie2014community,
  title={Community detection thresholds and the weak Ramanujan property},
  author={Massouli{\'e}, Laurent},
  booktitle={Proceedings of the forty-sixth annual ACM symposium on Theory of computing},
  pages={694--703},
  year={2014}
}

@inproceedings{abbe2015community,
  title={Community detection in general stochastic block models: Fundamental limits and efficient algorithms for recovery},
  author={Abbe, Emmanuel and Sandon, Colin},
  booktitle={2015 IEEE 56th Annual Symposium on Foundations of Computer Science},
  pages={670--688},
  year={2015},
  organization={IEEE}
}

@article{abbe2015exact,
  title={Exact recovery in the stochastic block model},
  author={Abbe, Emmanuel and Bandeira, Afonso S and Hall, Georgina},
  journal={IEEE Transactions on information theory},
  volume={62},
  number={1},
  pages={471--487},
  year={2015},
  publisher={IEEE}
}

@article{krzakala2013spectral,
  title={Spectral redemption in clustering sparse networks},
  author={Krzakala, Florent and Moore, Cristopher and Mossel, Elchanan and Neeman, Joe and Sly, Allan and Zdeborov{\'a}, Lenka and Zhang, Pan},
  journal={Proceedings of the National Academy of Sciences},
  volume={110},
  number={52},
  pages={20935--20940},
  year={2013},
  publisher={National Acad Sciences}
}

@article{Abbe18Review,
  author  = {Emmanuel Abbe},
  title   = {Community Detection and Stochastic Block Models: Recent Developments},
  journal = {Journal of Machine Learning Research},
  year    = {2018},
  volume  = {18},
  number  = {177},
  pages   = {1--86},
  url     = {http://jmlr.org/papers/v18/16-480.html}
}

@INPROCEEDINGS{Liu2022minimax,
  author={Liu, Allen and Moitra, Ankur},
  booktitle={2022 IEEE 63rd Annual Symposium on Foundations of Computer Science (FOCS)}, 
  title={Minimax Rates for Robust Community Detection}, 
  year={2022},
  volume={},
  number={},
  pages={823-831},
  keywords={Heart;Computer science;Computational modeling;Clustering algorithms;Stochastic processes;Signal to noise ratio;Community detection;stochastic block model;minimax rates;robustness;semi-random model},
  doi={10.1109/FOCS54457.2022.00083}
}

@article{Gao2018MinimaxNR,
  author  = {Chao Gao and Zongming Ma and Anderson Y. Zhang and Harrison H. Zhou},
  title   = {Achieving Optimal Misclassification Proportion in Stochastic Block Models},
  journal = {Journal of Machine Learning Research},
  year    = {2017},
  volume  = {18},
  number  = {60},
  pages   = {1--45},
  url     = {http://jmlr.org/papers/v18/16-245.html}
}

@misc{zhang2016minimax,
      title={Minimax Rates of Community Detection in Stochastic Block Models}, 
      author={Anderson Y. Zhang and Harrison H. Zhou},
      year={2015},
      eprint={1507.05313},
      archivePrefix={arXiv},
      primaryClass={math.ST},
      url={https://arxiv.org/abs/1507.05313}, 
}

@inproceedings{AbbeSandonLABP2016,
 author = {Abbe, Emmanuel and Sandon, Colin},
 booktitle = {Advances in Neural Information Processing Systems},
 editor = {D. Lee and M. Sugiyama and U. Luxburg and I. Guyon and R. Garnett},
 pages = {},
 publisher = {Curran Associates, Inc.},
 title = {Achieving the KS threshold in the general stochastic block model with linearized acyclic belief propagation},
 url = {https://proceedings.neurips.cc/paper_files/paper/2016/file/6c29793a140a811d0c45ce03c1c93a28-Paper.pdf},
 volume = {29},
 year = {2016}
}

@InProceedings{pmlr-v99-fei19a,
  title = 	 {Achieving the Bayes Error Rate in Stochastic Block Model by SDP, Robustly},
  author =       {Fei, Yingjie and Chen, Yudong},
  booktitle = 	 {Proceedings of the Thirty-Second Conference on Learning Theory},
  pages = 	 {1235--1269},
  year = 	 {2019},
  editor = 	 {Beygelzimer, Alina and Hsu, Daniel},
  volume = 	 {99},
  series = 	 {Proceedings of Machine Learning Research},
  month = 	 {25--28 Jun},
  publisher =    {PMLR},
  pdf = 	 {http://proceedings.mlr.press/v99/fei19a/fei19a.pdf},
  url = 	 {https://proceedings.mlr.press/v99/fei19a.html},
  abstract = 	 {We study the statistical performance of the semidefinite programming (SDP) relaxation approach for clustering under the binary symmetric Stochastic Block Model (SBM). We show that the SDP achieves an error rate of the form  $\exp\left[-(1-o(1))\frac{n I}{2}\right]$, where $I$ is an appropriate information-theoretic measure of the signal-to-noise ratio. This bound matches the minimax lower bound on the optimal Bayes error rate for this problem, and improves upon existing results that are sub-optimal by a multiplicative constant in the exponent. As a corollary, our result implies that SDP achieves the optimal exact recovery threshold with the correct leading constant. We further show that this error rate of SDP is robust; that is, it remains unchanged under the so-called semirandom model where the graph is modified by a monotone adversary, as well as under the setting with heterogeneous edge probabilities. Our proof is based on a novel primal-dual analysis of the SDP.}
}

@article{Gudon2014CommunityDI,
  title={Community detection in sparse networks via Grothendieck’s inequality},
  author={Olivier Gu{\'e}don and Roman Vershynin},
  journal={Probability Theory and Related Fields},
  year={2014},
  volume={165},
  pages={1025 - 1049},
  url={https://api.semanticscholar.org/CorpusID:11454568}
}

@article{Hajek2014AchievingEC,
  title={Achieving Exact Cluster Recovery Threshold via Semidefinite Programming},
  author={Bruce E. Hajek and Yihong Wu and Jiaming Xu},
  journal={IEEE Transactions on Information Theory},
  year={2014},
  volume={62},
  pages={2788-2797},
  url={https://api.semanticscholar.org/CorpusID:917315}
}

@article{FeigeKilian01,
author = {Feige, Uriel and Kilian, Joe},
title = {Heuristics for Semirandom Graph Problems},
year = {2001},
issue_date = {December 2001},
publisher = {Academic Press, Inc.},
address = {USA},
volume = {63},
number = {4},
issn = {0022-0000},
url = {https://doi.org/10.1006/jcss.2001.1773},
doi = {10.1006/jcss.2001.1773},
abstract = {We consider semirandom graph models for finding large independent sets, colorings, and bisections in graphs. These models generate problem instances by blending random and adversarial decisions. To generate semirandom independent set problems, an independent set S of n vertices is randomly chosen. Each edge connecting S with S is chosen with probability p, and an adversary is then allowed to add new edges arbitrarily, provided that S remains an independent set. The smaller p is, the greater the control the adversary has over the semirandom graph. We give a heuristic that with high probability recovers an independent set of size n whenever p> (1+ )lnn/ n, for any constant >0. We show that when p<(1 )lnn / n, an independent set of size |S| cannot be recovered, unless NP BPP. We use our result for maximum independent sets to obtain greatly improved heuristics for the model of k-colorable semirandom graphs introduced by Blum and Spencer. For constant k, our results are optimal up to constant factors in the edge probabilities. In the semirandom model for graph bisection, a random bisection (S, S) of the vertices is chosen. Each edge (u, v) S S is independently chosen with probability q and each edge (u, v) S S is independently chosen with probability pq. The adversary may then arbitrarily remove edges in S S and add edges not in S S. Extending the work of Boppana, we give a heuristic that recovers this bisection with high probability when p q cplogn/n, for c a sufficiently large constant.},
journal = {J. Comput. Syst. Sci.},
month = dec,
pages = {639–671},
numpages = {33}
}

@inproceedings{chin2015stochastic,
  title={Stochastic block model and community detection in sparse graphs: A spectral algorithm with optimal rate of recovery},
  author={Chin, Peter and Rao, Anup and Vu, Van},
  booktitle={Conference on Learning Theory},
  pages={391--423},
  year={2015},
  organization={PMLR}
}

@inproceedings{boppana1987eigenvalues,
  title={Eigenvalues and graph bisection: An average-case analysis},
  author={Boppana, Ravi B},
  booktitle={28th Annual Symposium on Foundations of Computer Science (sfcs 1987)},
  pages={280--285},
  year={1987},
  organization={IEEE}
}

@article{guedon2016community,
  title={Community detection in sparse networks via Grothendieck’s inequality},
  author={Gu{\'e}don, Olivier and Vershynin, Roman},
  journal={Probability Theory and Related Fields},
  volume={165},
  number={3},
  pages={1025--1049},
  year={2016},
  publisher={Springer}
}

@inproceedings{ding2022robust,
  title={Robust recovery for stochastic block models},
  author={Ding, Jingqiu and d'Orsi, Tommaso and Nasser, Rajai and Steurer, David},
  booktitle={2021 IEEE 62nd Annual Symposium on Foundations of Computer Science (FOCS)},
  pages={387--394},
  year={2022},
  organization={IEEE}
}

@InProceedings{pmlr-v195-hua23a,
  title = 	 {Reaching Kesten-Stigum Threshold in the Stochastic Block Model under Node Corruptions},
  author =       {Ding, Jingqiu and d'Orsi, Tommaso and Hua, Yiding and Steurer, David},
  booktitle = 	 {Proceedings of Thirty Sixth Conference on Learning Theory},
  pages = 	 {4044--4071},
  year = 	 {2023},
  editor = 	 {Neu, Gergely and Rosasco, Lorenzo},
  volume = 	 {195},
  series = 	 {Proceedings of Machine Learning Research},
  month = 	 {12--15 Jul},
  publisher =    {PMLR},
  pdf = 	 {https://proceedings.mlr.press/v195/hua23a/hua23a.pdf},
  url = 	 {https://proceedings.mlr.press/v195/hua23a.html},
  abstract = 	 {We study robust community detection in the context of node-corrupted stochastic block model,  where an adversary can arbitrarily modify all the edges incident to a fraction of the n vertices. We present the first polynomial-time algorithm that achieves weak recovery at the Kesten-Stigum threshold even in the presence of a small constant fraction of corrupted nodes. Prior to this work, even state-of-the-art robust algorithms were known to break under such node corruption adversaries, when close to the Kesten-Stigum threshold.We further extend our techniques to the $Z_2$ synchronization problem, where our algorithm reaches the optimal recovery threshold in the presence of similar strong adversarial perturbations.The key ingredient of our algorithm is a novel identifiability proof that leverages the push-out effect of the Grothendieck norm of principal submatrices.}
}

@inproceedings{mohanty2024robust,
  title={Robust recovery for stochastic block models, simplified and generalized},
  author={Mohanty, Sidhanth and Raghavendra, Prasad and Wu, David X},
  booktitle={Proceedings of the 56th Annual ACM Symposium on Theory of Computing},
  pages={367--374},
  year={2024}
}

@inproceedings{KSS18,
author = {Kothari, Pravesh K. and Steinhardt, Jacob and Steurer, David},
title = {Robust Moment Estimation and Improved Clustering via Sum of Squares},
year = {2018},
publisher = {Association for Computing Machinery},
address = {New York, NY, USA},
series = {Proceedings of the 50th Annual ACM SIGACT Symposium on Theory of Computing}
}

@article{barak2014sum,
  title={Sum-of-squares proofs and the quest toward optimal algorithms},
  author={Barak, Boaz and Steurer, David},
  journal={arXiv preprint arXiv:1404.5236},
  year={2014}
}

@article{hopkins2020mean,
  title={Mean estimation with sub-Gaussian rates in polynomial time},
  author={Hopkins, Samuel B},
  journal={The Annals of Statistics},
  volume={48},
  number={2},
  pages={1193--1213},
  year={2020}
}

@inproceedings{Hopkins2018mixture,
author = {Hopkins, Samuel B. and Li, Jerry},
title = {Mixture Models, Robustness, and Sum of Squares Proofs},
year = {2018},
isbn = {9781450355599},
publisher = {Association for Computing Machinery},
address = {New York, NY, USA},
url = {https://doi.org/10.1145/3188745.3188748},
doi = {10.1145/3188745.3188748},
booktitle = {Proceedings of the 50th Annual ACM SIGACT Symposium on Theory of Computing},
pages = {1021–1034},
numpages = {14},
keywords = {Unsupervised learning, clustering, mixture models, high-dimensional statistics, sum of squares method, mixture of Gaussians, robust statistics, semidefinite programming},
location = {Los Angeles, CA, USA},
series = {STOC 2018}
}

@inproceedings{raghavendra2018high,
  title={High dimensional estimation via sum-of-squares proofs},
  author={Raghavendra, Prasad and Schramm, Tselil and Steurer, David},
  booktitle={Proceedings of the International Congress of Mathematicians: Rio de Janeiro 2018},
  pages={3389--3423},
  year={2018},
  organization={World Scientific}
}

@InProceedings{pmlr-v65-potechin17a,
  title = 	 {Exact tensor completion with sum-of-squares},
  author = 	 {Potechin, Aaron and Steurer, David},
  booktitle = 	 {Proceedings of the 2017 Conference on Learning Theory},
  year = 	 {2017},
  abstract = 	 {We obtain the first polynomial-time algorithm for exact tensor completion that improves over the bound implied by reduction to matrix completion. The algorithm recovers an unknown 3-tensor with $r$ incoherent, orthogonal components in $\mathbb R^n$ from $r⋅\tilde O(n^1.5)$ randomly observed entries of the tensor. This bound improves over the previous best one of $r⋅\tilde O(n^2)$ by reduction to exact matrix completion. Our bound also matches the best known results for the easier problem of approximate tensor completion (Barak &amp; Moitra, 2015). Our algorithm and analysis extends seminal results for exact matrix completion (Candes &amp; Recht, 2009) to the tensor setting via the sum-of-squares method. The main technical challenge is to show that a small number of randomly chosen monomials are enough to construct a degree-3 polynomial with precisely planted orthogonal global optima over the sphere and that this fact can be certified within the sum-of-squares proof system.}
}

@article{barak2016proofs,
  title={Proofs, beliefs, and algorithms through the lens of sum-of-squares},
  author={Barak, Boaz and Steurer, David},
  journal={Course notes: http://www. sumofsquares. org/public/index. html},
  volume={1},
  pages={3},
  year={2016}
}

@inproceedings{acharya2022robust,
  title={Robust estimation for random graphs},
  author={Acharya, Jayadev and Jain, Ayush and Kamath, Gautam and Suresh, Ananda Theertha and Zhang, Huanyu},
  booktitle={Conference on Learning Theory},
  pages={130--166},
  year={2022},
  organization={PMLR}
}

@inproceedings{sohn2025sharp,
  title={Sharp phase transitions in estimation with low-degree polynomials},
  author={Sohn, Youngtak and Wein, Alexander S},
  booktitle={Proceedings of the 57th Annual ACM Symposium on Theory of Computing},
  pages={891--902},
  year={2025}
}

@article{raghavendra2017bit,
  title={On the bit complexity of sum-of-squares proofs},
  author={Raghavendra, Prasad and Weitz, Benjamin},
  journal={arXiv preprint arXiv:1702.05139},
  year={2017}
}

@article{mossel2018proof,
  title={A proof of the block model threshold conjecture},
  author={Mossel, Elchanan and Neeman, Joe and Sly, Allan},
  journal={Combinatorica},
  volume={38},
  number={3},
  pages={665--708},
  year={2018},
  publisher={Springer}
}

@inproceedings{bordenave2015non,
  title={Non-backtracking spectrum of random graphs: community detection and non-regular ramanujan graphs},
  author={Bordenave, Charles and Lelarge, Marc and Massouli{\'e}, Laurent},
  booktitle={2015 IEEE 56th Annual Symposium on Foundations of Computer Science},
  pages={1347--1357},
  year={2015},
  organization={IEEE}
}

@InProceedings{pmlr-v35-mossel14,
  title = 	 {Belief propagation, robust reconstruction and optimal recovery of block models},
  author = 	 {Mossel, Elchanan and Neeman, Joe and Sly, Allan},
  booktitle = 	 {Proceedings of The 27th Conference on Learning Theory},
  pages = 	 {356--370},
  year = 	 {2014},
  editor = 	 {Balcan, Maria Florina and Feldman, Vitaly and Szepesvári, Csaba},
  volume = 	 {35},
  series = 	 {Proceedings of Machine Learning Research},
  address = 	 {Barcelona, Spain},
  month = 	 {13--15 Jun},
  publisher =    {PMLR},
  pdf = 	 {http://proceedings.mlr.press/v35/mossel14.pdf},
  url = 	 {https://proceedings.mlr.press/v35/mossel14.html},
  abstract = 	 {We consider the problem of reconstructing sparse symmetric block models with two blocks and connection probabilities a/n and b/n for inter- and intra-block edge probabilities respectively. It was recently shown that one can do better than a random guess if and only if (a-b)^2 &gt; 2(a+b). Using a variant of Belief Propagation, we give a reconstruction algorithm that is \emphoptimal in the sense that if (a-b)^2 &gt; C (a+b) for some constant C then our algorithm maximizes the fraction of the nodes labelled correctly. Along the way we prove some results of independent interest regarding \em robust reconstruction for the Ising model on regular and Poisson trees. }
}

@article{ding2025low,
  title={Low degree conjecture implies sharp computational thresholds in stochastic block model},
  author={Ding, Jingqiu and Hua, Yiding and Slot, Lucas and Steurer, David},
  journal={arXiv preprint arXiv:2502.15024},
  year={2025}
}

@article{chen2025improved,
  title={Improved Robust Estimation for Erd$\backslash$H $\{$o$\}$ sR$\backslash$'enyi Graphs: The Sparse Regime and Optimal Breakdown Point},
  author={Chen, Hongjie and Ding, Jingqiu and Hua, Yiding and Tiegel, Stefan},
  journal={arXiv preprint arXiv:2503.03923},
  year={2025}
}

@article{chen2024private,
  title={Private edge density estimation for random graphs: Optimal, efficient and robust},
  author={Chen, Hongjie and Ding, Jingqiu and Hua, Yiding and Steurer, David},
  journal={Advances in Neural Information Processing Systems},
  volume={37},
  pages={90771--90817},
  year={2024}
}

@inproceedings{chen2024graphon,
  title={Private graphon estimation via sum-of-squares},
  author={Chen, Hongjie and Ding, Jingqiu and d'Orsi, Tommaso and Hua, Yiding and Liu, Chih-Hung and Steurer, David},
  booktitle={Proceedings of the 56th Annual ACM Symposium on Theory of Computing},
  pages={172--182},
  year={2024}
}

@article{graphPowering,
author = {Abbe, Emmanuel and Boix-Adser\`{a}, Enric and Ralli, Peter and Sandon, Colin},
title = {Graph Powering and Spectral Robustness},
journal = {SIAM Journal on Mathematics of Data Science},
volume = {2},
number = {1},
pages = {132-157},
year = {2020},
doi = {10.1137/19M1257135},

URL = { 
    
        https://doi.org/10.1137/19M1257135
    
    

},
eprint = { 
    
        https://doi.org/10.1137/19M1257135
    
    

}
,
    abstract = { Spectral algorithms, such as principal component analysis and spectral clustering, rely on the extremal eigenpairs of a matrix \$A\$. However, these may be uninformative without preprocessing \$A\$ with a proper transformation. The reason is that the spectrum of \$A\$ may be contaminated by top eigenvalues resulting from scale variations in the data, such as high-degree nodes. Designing a good \$\psi\$ and establishing what good means is often challenging and model dependent. This paper proposes a simple and generic construction for sparse graphs, \$\psi(A) = \mathbb{1}((I+A)^r \ge 1)\$, where \$A\$ denotes the adjacency matrix, \$r\$ is an integer, and the indicator function is applied entrywise. We support this “graph powering” construction with the following regularization properties: (i) if the graph is drawn from the sparse Erdös--Rényi ensemble, which has no spectral gap, then graph powering produces a “maximal” spectral gap, comparable to that obtained when powering a random regular graph; (ii) if the graph is drawn from the sparse stochastic block model, graph powering achieves the fundamental limit for weak recovery (the Kesten--Stigum threshold), settling at the same time a related conjecture by Massoulié in 2013; (iii) we also demonstrate that graph powering is significantly more robust to tangles and cliques than previous spectral algorithms based on self-avoiding or nonbacktracking walk counts, using a geometric block model as our benchmark and introducing new conjectures for this model. }
}

@article{abbe2020alon,
  title={An alon-boppana theorem for powered graphs and generalized ramanujan graphs},
  author={Abbe, Emmanuel and Ralli, Peter},
  journal={arXiv preprint arXiv:2006.11248},
  year={2020}
}

\appendix

\section{Sum-of-Squares proofs}\label{sec:basic-sos-proofs}
In this section, we introduce some Sum-of-Squares (SoS) results that are used in our proofs. 

\subsection{Basic Sum-of-Squares inequalities}
We start with a Cauchy-Schwarz inequality for pseudo-distributions.
\begin{fact}[Cauchy-Schwarz for pseudo-distributions ~\cite{barak2012hypercontractivity}]\label{fact:pd-cauchy-schwarz}
	Let $f,g$ be vector polynomials of degree at most $d$ in indeterminate $x\in \R^n$. Then, for any level $2d$ pseudo-distribution $D$,
	\begin{align*}
		\tilde{\E}_D\Brac{\iprod{f,g}}\leq \sqrt{\tilde{\E}_D\brac{\norm{f}^2} }\cdot \sqrt{\tilde{\E}_D\brac{\norm{g}^2} }\,.
	\end{align*}
\end{fact}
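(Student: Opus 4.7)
The plan is to reduce the inequality to the standard observation that pseudo-expectation of a sum of squares is non-negative, applied to a one-parameter family of vector polynomials. Concretely, for every pair of real scalars $\alpha,\beta\in\R$, the polynomial
\[
  \|\alpha f + \beta g\|^2 \;=\; \alpha^2 \|f\|^2 + 2\alpha\beta\, \iprod{f,g} + \beta^2 \|g\|^2
\]
is, entry by entry, a sum of squares of polynomials of degree at most $d$, so it lies in the cone that a level-$2d$ pseudo-distribution must integrate non-negatively. I would therefore start by spelling this out and concluding that $\tilde{\E}_D[\|\alpha f + \beta g\|^2] \geq 0$ for every $\alpha,\beta\in\R$.

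Next, setting $A \coloneqq \tilde{\E}_D[\|f\|^2]$, $B \coloneqq \tilde{\E}_D[\iprod{f,g}]$, and $C \coloneqq \tilde{\E}_D[\|g\|^2]$, the previous step gives $A\alpha^2 + 2B\alpha\beta + C\beta^2 \geq 0$ for all $\alpha,\beta$. Since this is a non-negative real quadratic form in $(\alpha,\beta)$, the $2\times 2$ symmetric matrix $\bigl(\begin{smallmatrix} A & B\\ B & C\end{smallmatrix}\bigr)$ is positive semidefinite, which forces $B^2 \leq AC$, i.e., $\tilde{\E}_D[\iprod{f,g}]^2 \le \tilde{\E}_D[\|f\|^2]\cdot \tilde{\E}_D[\|g\|^2]$.

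Finally, I would conclude the stated inequality by case analysis on the sign of $B$: if $B\le 0$, the right-hand side is non-negative (since $A,C\ge 0$ by taking $(\alpha,\beta)=(1,0)$ and $(0,1)$ in the SoS step), so the inequality is immediate; if $B>0$, taking the positive square root of $B^2\le AC$ yields the claim. The only genuinely delicate point, which I would verify explicitly, is that the quadratic form inequality $A\alpha^2+2B\alpha\beta+C\beta^2\ge 0$ does imply PSD-ness of the associated matrix even when $A$ or $C$ vanishes, but this is handled by the standard specialization argument (choose $(\alpha,\beta)=(\pm 1,t)$ and let $t\to 0$, or equivalently argue directly that if $C=0$ then $B$ must vanish, lest the form become unbounded below). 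No SoS machinery beyond the defining property of pseudo-distributions is required, so this proof is self-contained and short.
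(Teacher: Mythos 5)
Your proof is correct: the paper itself gives no proof of this fact (it is quoted from \cite{barak2012hypercontractivity}), and your argument is the standard one — pseudo-expectation nonnegativity on the squares $\norm{\alpha f+\beta g}^2$, which have degree at most $2d$ and hence fall within the level-$2d$ guarantee, followed by the discriminant bound $B^2\le AC$ and a sign case analysis. The only remark is that the point you flag as delicate is not: nonnegativity of the quadratic form $A\alpha^2+2B\alpha\beta+C\beta^2$ is by definition positive semidefiniteness of the associated symmetric $2\times 2$ matrix, whose determinant is then automatically nonnegative, so no limiting or degenerate-case argument is needed.
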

We will also repeatedly use the following two SoS versions of Cauchy-Schwarz inequality:
\begin{fact}[Sum-of-Squares Cauchy-Schwarz I]
	\label{fact:sos-cauchy-schwarz}
	Let $x,y \in \R^d$ be indeterminates. Then,
	\[ 
	\sststile{4}{x,y} \Iprod{x, y}^2 \leq \Paren{\sum_i x_i^2} \Paren{\sum_i y_i^2} \,.
	\]
\end{fact}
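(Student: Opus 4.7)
The plan is to exhibit the classical Lagrange identity as the degree-$4$ sum-of-squares certificate. Concretely, I would write
\[
\Paren{\sum_i x_i^2}\Paren{\sum_j y_j^2} - \Paren{\sum_i x_i y_i}^2 \;=\; \tfrac{1}{2}\sum_{i,j}\bigparen{x_i y_j - x_j y_i}^2,
\]
and verify the identity by straightforward expansion of both sides. The right-hand side is manifestly a sum of squares of the bilinear polynomials $x_i y_j - x_j y_i$, each of degree $2$ in the indeterminates $(x,y)$, so the whole expression is a sum-of-squares polynomial of degree $4$, which is exactly the certificate demanded by $\sststile{4}{x,y}$.

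The verification step is routine: expanding the left-hand side gives $\sum_{i,j} x_i^2 y_j^2 - \sum_{i,j} x_i y_i x_j y_j$, while expanding the right-hand side yields
\[
\tfrac{1}{2}\sum_{i,j}\bigparen{x_i^2 y_j^2 - 2 x_i y_j x_j y_i + x_j^2 y_i^2} \;=\; \sum_{i,j} x_i^2 y_j^2 - \sum_{i,j} x_i y_i x_j y_j,
\]
after using the symmetry $(i,j)\leftrightarrow (j,i)$ in the first and last summand. Since no constraints are imposed on $x$ or $y$, no auxiliary polynomial multipliers are needed, and the identity alone furnishes the proof.

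There is no real obstacle here: the result is a purely algebraic polynomial identity and the degree bound follows by inspection. The only minor care point is ensuring that the expression really is degree-$4$ as claimed (both sides are polynomials of total degree $4$ in the joint variables $(x,y)$, since each square $(x_i y_j - x_j y_i)^2$ has total degree $4$).
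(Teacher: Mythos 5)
Your proof is correct. The paper states this as a standard fact without supplying a proof, so there is no in-paper argument to compare against; the Lagrange identity
\[
\Paren{\sum_i x_i^2}\Paren{\sum_j y_j^2} - \Paren{\sum_i x_i y_i}^2 = \tfrac{1}{2}\sum_{i,j}\paren{x_i y_j - x_j y_i}^2
\]
is exactly the canonical degree-$4$ SoS certificate one would cite here, your expansion checks out, and since the squared polynomials $x_i y_j - x_j y_i$ have degree $2$, the degree bound $\sststile{4}{x,y}$ is verified.
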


\begin{fact}[Sum-of-Squares Cauchy-Schwarz II]
	\label{fact:simple-sos-version-of-cauchy-schwarz}
	Let $x,y \in \R^d$ be indeterminates. Then, for any $C > 0$,
	\[ 
		\sststile{4}{x,y} \Iprod{x, y} \leq \frac{C}{2} \Norm{x}^2 + \frac{1}{2C} \Norm{y}^2 \,,
	\]
	and,
	\[ 
		\sststile{4}{x,y} \Iprod{x, y} \geq - \frac{C}{2} \Norm{x}^2 - \frac{1}{2C} \Norm{y}^2 \,,
	\]
\end{fact}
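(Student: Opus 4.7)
The plan is to observe that this is the sum-of-squares version of Young's inequality, and that both inequalities admit explicit, low-degree SoS certificates coming from the obvious completion-of-squares identity. I would first note that the two claims are symmetric: replacing $y$ by $-y$ turns the upper bound into the lower bound and leaves $\|y\|^2$ unchanged, so it suffices to establish the first inequality.

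For the upper bound, I would write down the single polynomial identity
\[
  \frac{C}{2}\Norm{x}^2 + \frac{1}{2C}\Norm{y}^2 - \Iprod{x,y}
  \;=\; \frac{1}{2C}\sum_{i=1}^{d}\bigl(C x_i - y_i\bigr)^2 \,,
\]
which is verified by expanding $(C x_i - y_i)^2 = C^2 x_i^2 - 2 C x_i y_i + y_i^2$ and summing. Since the right-hand side is manifestly a sum of squares of polynomials of degree $1$ in $x,y$, this constitutes a degree-$2$ (hence a fortiori degree-$4$) SoS proof of the claimed inequality in the indeterminates $x,y$. The lower bound then follows by substituting $-y$ for $y$, using the completing-the-square identity $\frac{1}{2C}\sum_i (C x_i + y_i)^2$.

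There is no real obstacle here: the proof is a one-line algebraic identity, and the degree bound in the statement is loose (degree $2$ already suffices). The only thing worth being careful about is the sign convention and that $C>0$ is required for $1/(2C)$ to be a legal SoS multiplier, both of which are built into the hypothesis. Thus the fact reduces entirely to the explicit certificate above.
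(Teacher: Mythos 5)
Your proof is correct: the identity $\frac{C}{2}\Norm{x}^2+\frac{1}{2C}\Norm{y}^2-\Iprod{x,y}=\frac{1}{2C}\sum_i(Cx_i-y_i)^2$ (and its counterpart with $+y_i$) is a valid degree-$2$ SoS certificate, which in particular meets the stated degree-$4$ bound, and the reduction of the lower bound to the upper bound via $y\mapsto -y$ is fine. The paper states this fact without proof as a standard tool, and your completion-of-squares argument is exactly the canonical certificate it implicitly relies on, so there is nothing further to reconcile.
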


We will use the following fact that shows how spectral certificates are captured within the SoS proof system.
\begin{fact}[Spectral Certificates] \label{fact:spectral-certificates}
	For any $m \times m$ matrix $A$, 
	\[
	\sststile{2}{u} \iprod{u,Au} \leq \Norm{A} \Norm{u}_2^2 \,.
	\]
\end{fact}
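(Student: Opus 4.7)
The plan is to exhibit an explicit sum-of-squares decomposition of the polynomial $\|A\|\cdot \|u\|_2^2 - \langle u, Au\rangle$ in the indeterminates $u = (u_1, \ldots, u_m)$. Since $\langle u, Au\rangle$ depends only on the symmetric part of $A$, I will first replace $A$ by its symmetrization $A_{\mathrm{sym}} \coloneqq \tfrac{1}{2}(A + A^{\top})$, using the identity $\langle u, Au\rangle = \langle u, A_{\mathrm{sym}} u\rangle$, which holds formally as a polynomial identity in $u$ and is therefore degree-2 SoS-trivial.

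Next, I would observe that $\|A\|\cdot I_m - A_{\mathrm{sym}}$ is positive semidefinite: its eigenvalues are $\|A\| - \lambda_i(A_{\mathrm{sym}})$, and each $\lambda_i(A_{\mathrm{sym}})$ is bounded above by $\|A_{\mathrm{sym}}\| \le \|A\|$ (the latter inequality following from the triangle inequality applied to $A_{\mathrm{sym}} = \tfrac{1}{2}(A+A^{\top})$ together with $\|A^{\top}\| = \|A\|$). Because the matrix $\|A\|\cdot I_m - A_{\mathrm{sym}}$ is PSD, it admits a Cholesky-type factorization $\|A\|\cdot I_m - A_{\mathrm{sym}} = M^{\top} M$ for some real $m \times m$ matrix $M$.

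Substituting, I obtain the identity
\begin{equation*}
  \|A\|\cdot \|u\|_2^2 - \langle u, Au\rangle \;=\; \langle u, (\|A\|\cdot I_m - A_{\mathrm{sym}}) u\rangle \;=\; \langle u, M^{\top} M u\rangle \;=\; \|Mu\|_2^2 \;=\; \sum_{i=1}^{m}\Bigl(\sum_{j=1}^{m} M(i,j)\, u_j\Bigr)^{\!2}.
\end{equation*}
The right-hand side is manifestly a sum of squares of linear forms in $u$, so it has degree $2$, and the identity above is a degree-$2$ SoS proof of the desired inequality.

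There is no real obstacle here; the only point requiring care is the reduction from $A$ to $A_{\mathrm{sym}}$ and the justification that $\lambda_{\max}(A_{\mathrm{sym}}) \le \|A\|$, both of which are standard linear algebra. The argument is completely mechanical and uses no randomness or constraint system beyond the definition of an SoS proof given in \cref{def:sos-proof}.
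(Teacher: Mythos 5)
Your proof is correct and is exactly the standard argument this fact rests on (the paper states it without proof): symmetrize $A$, note $\lVert A\rVert I_m - A_{\mathrm{sym}} \succeq 0$, factor it as $M^{\top}M$, and read off the sum of squares of linear forms. All steps, including the reduction to $A_{\mathrm{sym}}$ and the bound $\lambda_{\max}(A_{\mathrm{sym}}) \le \lVert A\rVert$, are valid, and the resulting certificate is indeed degree $2$ with an empty constraint system.
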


Combining these facts, we have the following lemma as a corollary:
\begin{lemma}\label{lem:sos-spectral-bound}
    For any constant $L\geq 0$, for variables $E\in \R^{n\times n},v\in \R^n,w\in \R^n$, we have constant degree SoS proof that
    \begin{equation*}
        L\Id-EE^\top=CC^\top \sststile{4}{E,C,v,w} \iprod{Ev,w}^2 \leq L \norm{v}^2 \norm{w}^2\,.  
    \end{equation*}
\end{lemma}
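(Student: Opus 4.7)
The plan is to combine the SoS Lagrange identity with the hypothesized matrix equation $L\Id - EE^\top = CC^\top$. The crucial observation is that $\iprod{Ev,w}=\iprod{v,E^\top w}$, so that applying Cauchy–Schwarz in the direction that moves $E$ onto $w$ produces a factor $\|E^\top w\|^2 = \iprod{w,EE^\top w}$, which is exactly the quantity the constraint controls.

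Concretely I would proceed in three short steps. First, apply the Lagrange identity
\[
\|v\|^2\|u\|^2 - \iprod{v,u}^2 = \tfrac{1}{2}\sum_{i,j}(v_i u_j - v_j u_i)^2
\]
with $u = E^\top w$; after this substitution the right-hand side is a sum of squares of polynomials of constant degree in $(v,E,w)$, yielding $\|v\|^2\|E^\top w\|^2 - \iprod{Ev,w}^2 \ge 0$ as a pure SoS inequality. Second, take the $(i,j)$ entry of the hypothesis, multiply by $w_iw_j$ and sum over $i,j$ to derive the polynomial identity $L\|w\|^2 - \|E^\top w\|^2 = \|C^\top w\|^2$ as an SoS consequence of the constraint; multiplying through by the SoS polynomial $\|v\|^2$ then gives $L\|v\|^2\|w\|^2 - \|v\|^2\|E^\top w\|^2 = \|v\|^2\|C^\top w\|^2$, whose right-hand side is manifestly SoS as a product of two SoS polynomials. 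Third, add the two identities to conclude
\[
L\|v\|^2\|w\|^2 - \iprod{Ev,w}^2 = \bigl(\|v\|^2\|E^\top w\|^2 - \iprod{v,E^\top w}^2\bigr) + \|v\|^2\|C^\top w\|^2,
\]
which exhibits the target polynomial as a sum of SoS expressions modulo the constraint, giving a constant-degree SoS certificate of the desired inequality.

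The only point requiring any care — not really an obstacle — is the direction in which Cauchy–Schwarz is applied. The hypothesis controls $EE^\top$ but says nothing directly about $E^\top E$, so one must rewrite $\iprod{Ev,w}$ as $\iprod{v, E^\top w}$ before invoking the matrix equation; the naive Cauchy–Schwarz applied to $(Ev,w)$ would leave a residual $\|Ev\|^2 = \iprod{v, E^\top E v}$ for which the constraint provides no certificate. With the correct choice the proof is essentially mechanical.
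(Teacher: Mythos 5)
Your proposal is correct and follows essentially the same route as the paper's proof: apply SoS Cauchy--Schwarz to $\iprod{v, E^\top w}$ so that the residual term is $\|E^\top w\|^2 = \iprod{w, EE^\top w}$, then use the constraint $L\Id - EE^\top = CC^\top$ to bound this by $L\|w\|^2$ via the manifestly SoS remainder $\|C^\top w\|^2$. Your write-up is merely more explicit about the Lagrange-identity certificate underlying the Cauchy--Schwarz step and about the importance of contracting $E$ onto $w$ rather than $v$.
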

\begin{proof}
    We have 
    \begin{align*}
        \sststile{4}{E,C,v,w}\ & \iprod{Ev,w}^2 \leq \norm{v}^2 \norm{E^\top w}^2\\
            & =\iprod{w,EE^\top w} \cdot \norm{v}^2\,.
    \end{align*}
    Since $L\Id-EE^\top=CC^\top$, we have $\iprod{w,EE^\top w}\le L\norm{w}^2$. As a result,
    \begin{equation*}
        L\Id-EE^\top=CC^\top \sststile{4}{E,C,v,w} \iprod{Ev,w}^2 \leq L \norm{v}^2 \norm{w}^2\,.
    \end{equation*}
\end{proof}

\subsection{Sum-of-Squares certificate}
\begin{theorem}\label{thm:sos-as-constraints}
	Let $x, y \in \R^d$ and $p_1(x),p_2(x),\ldots,p_m(x)$ be a set of polynomials $\R^d\to \R$.
	If there exists $y^{\circ}$ such that $p_1(x)\geq 0,p_2(x)\geq 0,\ldots,p_m(x)\geq 0\sststile{\ell}{x} q(x;y^{\circ})\geq 0$,
	then the degree-$(\ell+2)$ SoS relaxation of the following program is feasible
	\begin{equation*}
		\mathcal{A} := \Bigset{p_1(x)\geq 0,p_2(x)\geq 0,\ldots,p_m(x)\geq 0, q(x,y) = \Tr\Paren{RR^\top \Mom_d(p_1,p_2,\ldots,p_m)}} \,,
	\end{equation*}
	where \(\Mom_{d}(p_{1},\ldots,p_{m})\) is a block-diagonal matrix with entries in \(\R[x]\) and (possibly empty) blocks \(M_{S}\) indexed by subsets \(S\subseteq [m]\) such that
	for all pairs of monomials \(x^{\alpha},x^{\beta}\) with \(\deg x^\alpha,\deg x^{\beta}\le \tfrac 1 2 \cdot (\ell-\sum_{i\in S} \deg p_{i})\),
	\begin{equation*}
	  M_{S}(\alpha,\beta)\seteq
	  \Paren{ \prod\nolimits_{i\in S} p_{i}}\cdot x^{\alpha}\cdot x^{\beta}\,.
	\end{equation*}
	We also have the following SoS proof
	\begin{equation*}
		\mathcal{A} \sststile{\ell+2}{x,y,R} q(x,y) \geq 0 \,.
	\end{equation*}
\end{theorem}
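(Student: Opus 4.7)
The plan is to unpack the matrix-based definition of a degree-$\ell$ sum-of-squares proof and show that the trace representation is essentially already an SoS certificate once we let the entries of $R$ become formal variables. By hypothesis there is a real matrix $R^\circ$ such that, as a polynomial identity in $x$,
\[
  q(x;y^\circ) \;=\; \Tr\!\bigparen{R^\circ (R^\circ)^{\top}\cdot \Mom_\ell(p_1,\dots,p_m)}.
\]
Using the block structure of $\Mom_\ell$ and conformally partitioning $R^\circ$ into blocks $R_S^\circ$ indexed by multisubsets $S\subseteq[m]$, the trace expands as
\[
  q(x;y^\circ) \;=\; \sum_{S} \Bigparen{\prod_{i\in S} p_i(x)} \cdot \bignorm{(R_S^\circ)^{\top} \mathbf{x}_S}^2,
\]
where $\mathbf{x}_S$ is the vector of monomials $x^\alpha$ with $\deg x^\alpha \le \tfrac12(\ell-\sum_{i\in S}\deg p_i)$. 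Since each $\norm{(R_S^\circ)^{\top}\mathbf{x}_S}^2$ is a sum of squares in $x$ of degree at most $\ell-\sum_{i\in S}\deg p_i$, this is exactly the standard multiplier form of an SoS certificate of $q(x;y^\circ)\ge 0$ from $\{p_i\ge 0\}$.

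For the claimed derivation $\mathcal{A}\sststile{\ell+2}{x,y,R} q(x,y)\ge 0$, I would repeat the same expansion, but now treating the entries of $R$ as formal indeterminates and invoking the equality constraint $q(x,y)=\Tr\paren{RR^{\top}\Mom_d(p_1,\dots,p_m)}$ of $\mathcal{A}$. The same algebraic identity gives
\[
  q(x,y) \;=\; \sum_{S} \Bigparen{\prod_{i\in S} p_i(x)} \cdot \sum_{j}\Bigparen{\sum_{\alpha} R_S(\alpha,j)\, x^\alpha}^2,
\]
which is a sum-of-squares representation of $q(x,y)$ in the combined variables $(x,y,R)$ modulo $\mathcal{A}$. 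A quick degree count confirms the proof has total degree at most $\ell+2$: each squared linear form has $x$-degree $\le \ell-\sum_{i\in S}\deg p_i$ and $R$-degree $2$, so after multiplying by $\prod_{i\in S}p_i$ the combined degree is at most $\ell+2$.

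For feasibility I would exhibit a pseudo-distribution on $(x,y,R)$ built from the witnesses $y^\circ,R^\circ$. Given any degree-$(\ell+2)$ pseudo-distribution $\tilde{\E}'$ on $x$ satisfying $\{p_i\ge 0\}$ (for example, the Dirac measure at any realization of the $p_i\ge 0$ constraints, which is exactly what happens in the applications of this theorem in the paper), define $\tilde{\E}[x^\alpha y^\beta R^\gamma]\coloneqq \tilde{\E}'[x^\alpha]\cdot (y^\circ)^\beta (R^\circ)^\gamma$. The $p_i(x)\ge 0$ constraints are inherited from $\tilde{\E}'$, and the equation $q(x,y)-\Tr(RR^{\top}\Mom_d)$ vanishes identically in $x$ when $y=y^\circ,\,R=R^\circ$, so $\tilde{\E}$ satisfies the new equality constraint as well.

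No step looks technically hard; I expect the only real subtlety to be the degree bookkeeping, i.e.\ verifying that the $(x,R)$-degrees of the squared linear forms are compatible with the degree budget $\ell+2$ after multiplication by $\prod_{i\in S} p_i$. Everything else is a direct consequence of the matrix-trace definition of an SoS proof given earlier in the excerpt.
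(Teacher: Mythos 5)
Your proposal is correct and follows essentially the same route as the paper: use the witness pair $(y^\circ,R^\circ)$ to certify feasibility, and observe that the equality constraint makes $q(x,y)=\Tr\bigl(RR^\top \Mom(p_1,\dots,p_m)\bigr)$ a sum of squares in $(x,R)$ multiplied by products of the $p_i$, yielding the degree-$(\ell+2)$ derivation. The paper states this in two lines; your block-wise expansion of the trace and the explicit degree count simply spell out what the paper leaves implicit.
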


\begin{proof}
	Let $R^{\circ} \in \R^{d^{\ell}\times d^{\ell}}$ be the matrix corresponding to a degree-$\ell$ SoS proof of $p_1(x)\geq 0,p_2(x)\geq 0,\ldots,p_m(x)\geq 0\sststile{\ell}{x} q(x;y^{\circ})\geq 0$, i.e.
	\begin{equation*}
		q(x;y^{\circ}) =\Tr\Paren{R^{\circ} (R^{\circ})^\top \Mom_d(p_1,p_2,\ldots,p_m)}\,,
	\end{equation*}
	Now, the degree-$(\ell+2)$ SoS relaxation of $\mathcal{A}$ is feasible with $R = R^{\circ}$ and $y = y^{\circ}$. The SoS proof follows by
	\begin{equation*}
		\mathcal{A} \sststile{\ell+2}{x,y,R} q(x, y) = \Tr\Paren{RR^\top \Mom_d(p_1,p_2,\ldots,p_m)} \geq 0\,.
	\end{equation*}
\end{proof}

\subsection{Sum-of-Squares subset sum}

Next we give an SoS proof for a formulation related to the minimum subset sum.

\begin{lemma}\label{lem:sos-certificate-subset-sum}
	For any vector $v\in \R^n$ such that for any subset $S\subseteq [n]$ we have $\sum_{i\in S} v_i\geq \gamma(|S|-\beta)$, there is a SoS proof that 
	\begin{equation*}
		\Set{z\odot z=z}\sststile{2}{z} \iprod{z,v}\geq \gamma(\norm{z}_1-\beta)\,.
	\end{equation*}
\end{lemma}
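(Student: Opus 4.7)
The plan is to give an explicit degree-$2$ sum-of-squares decomposition of the polynomial
\[
p(z) \;\coloneqq\; \langle z, v\rangle - \gamma(\|z\|_{1} - \beta) \;=\; \sum_{i=1}^{n} z_{i}(v_{i}-\gamma) + \gamma\beta
\]
modulo the Boolean constraints $z_{i}^{2}=z_{i}$, by splitting the coordinates into ``good'' and ``bad'' indices and applying the hypothesis to a single, carefully chosen set $S$. Concretely, define
\[
G \;\coloneqq\; \{i\in[n] : v_{i} \geq \gamma\}\,, \qquad B \;\coloneqq\; \{i\in[n] : v_{i} < \gamma\}\,.
\]
The handle for the SoS proof is the identity $1-z_{i} = (1-z_{i})^{2}$, which is valid modulo $z_{i}^{2}=z_{i}$.

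For $i \in G$, I would write $z_{i}(v_{i}-\gamma) = z_{i}^{2}(v_{i}-\gamma)$; this is a square multiplied by the nonnegative scalar $(v_{i}-\gamma)$, so it contributes a term to the sum-of-squares decomposition. For $i \in B$, I would instead use the rewriting
\[
z_{i}(v_{i}-\gamma) \;=\; (v_{i}-\gamma) + (1-z_{i})(\gamma - v_{i}) \;=\; (v_{i}-\gamma) + (1-z_{i})^{2}(\gamma - v_{i})\,,
\]
where the second equality uses $1-z_{i}=(1-z_{i})^{2}$ modulo $z_{i}^{2}=z_{i}$. The second summand is a square times the nonnegative scalar $\gamma-v_{i}$, while the ``leftover'' constant contribution from the bad coordinates is absorbed into the additive $\gamma\beta$: applying the hypothesis with $S=B$ gives
\[
\sum_{i\in B}(v_{i}-\gamma) + \gamma\beta \;=\; \sum_{i\in B} v_{i} - \gamma|B| + \gamma\beta \;\geq\; \gamma(|B|-\beta) - \gamma|B| + \gamma\beta \;=\; 0\,.
\]

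Putting the three pieces together yields
\[
p(z) \;=\; \sum_{i\in G} z_{i}^{2}(v_{i}-\gamma) \;+\; \sum_{i\in B}(1-z_{i})^{2}(\gamma - v_{i}) \;+\; \Bigl(\sum_{i\in B}(v_{i}-\gamma)+\gamma\beta\Bigr)\,,
\]
which is a nonnegative combination of squares modulo $\{z_{i}^{2}=z_{i}\}$, hence a degree-$2$ SoS proof of $\langle z,v\rangle \geq \gamma(\|z\|_{1}-\beta)$. There is no real obstacle in this argument; the only point that requires care is recognizing that $1-z_{i}$ is itself a square modulo the Boolean constraints, which is what keeps the decomposition at degree $2$, and that the hypothesis need only be invoked once, at $S=B$, to cancel the constant shift.
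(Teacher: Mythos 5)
Your proposal is correct and follows essentially the same route as the paper: both split coordinates by whether $v_i$ exceeds $\gamma$, lower-bound the good coordinates' contribution by $0$ via $z_i = z_i^2 \ge 0$ and the bad coordinates' via $1-z_i=(1-z_i)^2\ge 0$, and invoke the hypothesis once at $S=B$ to absorb the constant $\gamma\beta$. Your write-up merely makes the degree-$2$ square certificates explicit where the paper states the inequalities $0\le z\le 1$ modulo the Boolean constraints.
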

\begin{proof}
	We note that lemma is equivalent of showing
	\begin{equation*}
		\Set{z\odot z=z}\sststile{2}{z} \iprod{z,v-\gamma \one} \geq - \gamma \cdot \beta\,.
	\end{equation*}
	If $v_i \leq \gamma$ for all $i$, then
	\begin{equation*}
		\Set{z\odot z=z}\sststile{2}{z} \iprod{z,v-\gamma \one} \geq \iprod{\one,v-\gamma \one} \geq - \gamma \cdot \beta \,.
	\end{equation*}
	Else, w.l.o.g assume that $v$ is in non-decreasing order and $v_k$ is the last element that is at most $\gamma$, i.e. $v_1 \leq v_2 \leq \cdots \leq v_k \leq \gamma < v_{k+1} \leq v_n$. Now, we have
	\begin{align*}
		\Set{z\odot z=z} \sststile{2}{z}
		\iprod{z,v-\gamma \one}
		& = \sum_{i=1}^{k} (v_i-\gamma) z_i + \sum_{i=k+1}^{n} (v_i-\gamma) z_i \\
		& \geq \sum_{i=1}^{k} (v_i-\gamma) \\
		& \geq - \gamma \cdot \beta \,.
	\end{align*}
	where the second last inequality is because $\Set{z\odot z=z} \sststile{2}{z} \zeros \leq z \leq \one$.
\end{proof}

\section{Spectral bounds}

A well-known challenge in analyzing sparse SBMs is that high-degree nodes can inflate spectral bounds by an additional $\log n$ factor. 
To establish identifiability in  \cref{sec:robust-bisection-boosting}, we leverage a classical result from random matrix theory, which demonstrates that pruning nodes with degrees above a certain threshold suffices to achieve the desired spectral bounds.

\begin{theorem}[Originally proved in \cite{FeigeOfek2005}, restatement of theorem 6.7 in \cite{Liu2022minimax}]\label{thm:spectral-remove}
        Suppose $M$ is a random symmetric matrix with zero on the diagonal whose entries above the diagonal are independent with the following distribution
        \[
        M_{ij} = \begin{cases} 1 - p_{ij} &\text{ w.p. } p_{ij} \\ -p_{ij} &\text{ w.p. } 1 - p_{ij}  \end{cases}
        \]
        Let $\sigma$ be a quantity such that $p_{ij} \leq \sigma^2$ and $M_1$ be the matrix obtained from M by zeroing out
        all the rows and columns having more than $20 \sigma^2 n$ positive entries. Then with probability $1 - 1/n^2$, we have $\norm{M_1} \leq \chi \sigma \sqrt{n}$ for some universal constant $\chi$.
\end{theorem}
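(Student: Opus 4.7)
The plan is to bound $\|M_1\|$ via the classical discretization argument of Friedman--Kahn--Szemer\'edi combined with the light/heavy decomposition of Feige--Ofek. Since $\|M_1\| = \sup_{u,v \in S^{n-1}}\langle u, M_1 v\rangle$, I would first reduce via a standard net argument to showing the bound uniformly over $u,v$ in a finite grid $T_n \subset \{0,\pm 2^{-k}/\sqrt{n}\}^n$ (for $0 \le k \le \log_2 \sqrt{n}$) whose cardinality is at most $n^{O(n)}$, at the cost of only constant factors in $\chi$.

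For each $(u,v) \in T_n \times T_n$, I would split the bilinear form as $\langle u, M_1 v\rangle = L(u,v) + H(u,v)$, where $L$ collects the terms with $|u_i v_j| \le \sigma/\sqrt{n}$ (\emph{light couples}) and $H$ collects the remaining \emph{heavy couples}. Each light term is bounded in magnitude by $O(\sigma/\sqrt{n})$, has mean zero, and has variance at most $\sigma^2 u_i^2 v_j^2$; summing gives a total variance of $O(\sigma^2)$. Bernstein's inequality then yields $|L(u,v)| \le C\sigma\sqrt{n}$ with probability at least $1 - e^{-C'n\log n}$, which comfortably absorbs the union bound over $T_n \times T_n$.

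The main obstacle is controlling $H$, and this is precisely where the trimming hypothesis becomes indispensable. I would bound $H$ deterministically in terms of the edge count $e(A,B)$ of $M_1$ between subsets $A,B \subseteq [n]$. The key combinatorial lemma is that with high probability $e(A,B)$ obeys a bounded-discrepancy dichotomy: either $e(A,B) \lesssim \sigma^2 |A||B|$ in the dense regime, or $e(A,B)\,\log\!\bigl(e(A,B)/(\sigma^2|A||B|)\bigr) \lesssim (|A| \vee |B|)$ in the sparse regime. The trimming step is exactly what ensures no row or column has more than $20\sigma^2 n$ positive entries, which is the uniform degree bound required for this discrepancy estimate to hold after high-degree vertex removal. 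One then partitions heavy couples into dyadic level sets according to the magnitude classes of $|u_i|$ and $|v_j|$, applies the appropriate discrepancy bound within each class, and sums the resulting geometric series.

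Combining the light and heavy bounds and taking the union over $T_n \times T_n$ yields $\sup_{u,v \in T_n}|\langle u, M_1 v\rangle| \le \chi\sigma\sqrt{n}$ with probability at least $1 - n^{-2}$, and the standard net-to-sphere translation then delivers the spectral norm bound $\|M_1\| \le \chi \sigma\sqrt{n}$. The heavy-couples discrepancy analysis is the delicate step, since it requires carefully tracking how the degree truncation propagates through the partitioning by magnitude classes and ensuring the geometric sum indeed closes at scale $\sigma\sqrt{n}$ rather than the naive $\sigma\sqrt{n\log n}$ bound that one would obtain without trimming.
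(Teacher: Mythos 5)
The paper does not prove this statement at all: it is imported as a black box from Feige--Ofek (via Theorem~6.7 of Liu--Moitra), and your outline is precisely the Friedman--Kahn--Szemer\'edi light/heavy-couples argument that those sources use, including the role of the degree truncation in the heavy-couples discrepancy bound, so the approach is the correct one. One quantitative caution: Bernstein with individual terms bounded by $\sigma/\sqrt{n}$ and total variance $\sigma^2$ gives only $\exp(-\Theta(n))$ at deviation $C\sigma\sqrt{n}$, not the $\exp(-C'n\log n)$ you claim, so your grid of cardinality $n^{O(n)}$ is too large for the union bound; the standard fix is to use the lattice net $T=\{x\in(\epsilon/\sqrt{n})\mathbb{Z}^n:\|x\|\le 1\}$ of size $e^{O(n)}$, after which the argument closes as you describe.
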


As a consequence of the above, we admit the following spectral bound for the adjacency matrix of a pruned SBM.

\begin{corollary}[Corollary 6.8 in \cite{Liu2022minimax}]\label{cor:spectral-remove-cor}
    Let \( \Gnull \) be a graph sampled from the \( k \)-stochastic block model \( \SBM_n(d,\e,k)\) and $\Znull$ be the true community membership matrix. Then, with probability at least \( 1 - \frac{2}{n^2} \), there exists a subset \( S \subseteq [n] \) of size at least \( \bigl(1 - \exp{\bigl({-2C_{d, \e}}\bigr)}\bigr) n \) such that
    \[
        \Normop{\Paren{\Gnull - \frac{d}{n}J - \frac{\e d}{n} \Xnull} \odot \one_S \one_S^\top} \leq \chi \sqrt{d},
    \]
    where \( \chi \) is a universal constant and $\Xnull= \Znull (\Znull)^\top-\frac{1}{k} J$ is the signal part of $\Gnull$.
\end{corollary}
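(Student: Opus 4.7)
The plan is to reduce directly to Theorem~\ref{thm:spectral-remove} (Feige--Ofek) by writing the matrix in question as a centered random matrix with independent mean-zero off-diagonal entries. A direct computation using \cref{def:SBM_k} gives, for $i\ne j$,
\[
\E\bigl[\Gnull(i,j)\mid \Znull\bigr] = p_1\,\mathbb{1}[\Znull(i)=\Znull(j)] + p_2\,\mathbb{1}[\Znull(i)\ne\Znull(j)] = \tfrac{d}{n} + \tfrac{\e d}{n}\,\Xnull(i,j),
\]
so the off-diagonal entries of $\Gnull - \tfrac{d}{n}J - \tfrac{\e d}{n}\Xnull$ are exactly of the form $\Gnull(i,j) - p_{\pi(i,j)}$ with $p_{\pi(i,j)}\in\{p_1,p_2\}$. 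Let $M$ denote the matrix obtained by zeroing out the diagonal of $\Gnull - \tfrac{d}{n}J - \tfrac{\e d}{n}\Xnull$; its off-diagonal entries then match the distributional model of Theorem~\ref{thm:spectral-remove}, and since $p_{ij}\le p_1\le 2d/n$ we may set $\sigma\coloneqq\sqrt{2d/n}$.

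Next, I would apply Theorem~\ref{thm:spectral-remove} to $M$ with this choice of $\sigma$: with probability at least $1-1/n^2$, the matrix $M_1$ obtained by zeroing all rows and columns having more than $20\sigma^2 n = 40d$ positive entries satisfies $\Normop{M_1}\le \chi\sigma\sqrt{n} = \chi\sqrt{2d}$. Since the positive entries of row $i$ in $M$ are exactly the neighbors of $i$ in $\Gnull$, the set $T$ of pruned rows coincides with $\{v : \deg_{\Gnull}(v) > 40d\}$, so setting $S\coloneqq[n]\setminus T$ yields $M_1 = M\odot\one_S\one_S^\top$. The diagonal correction between $M$ and $\Gnull - \tfrac{d}{n}J-\tfrac{\e d}{n}\Xnull$ is a diagonal matrix whose entries are bounded in absolute value by $p_1\le 2d/n=o(1)$, so its spectral norm is negligible and the target quantity $\Normop{(\Gnull-\tfrac{d}{n}J-\tfrac{\e d}{n}\Xnull)\odot\one_S\one_S^\top}$ is bounded by $\chi\sqrt{2d}+o(1)$, which I absorb into a slightly enlarged universal constant (still denoted $\chi$).

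It remains to control $|T|$. For each vertex $v$, $\deg_{\Gnull}(v)$ is a sum of $n-1$ independent Bernoullis of mean at most $p_1\le 2d/n$, so the standard Bennett/Chernoff upper tail for binomials, evaluated at the ratio $40$ above the mean, gives $\Pr[\deg_{\Gnull}(v)>40d]\le \exp(-c\,d)$ for a large absolute constant $c$ (the exponent is of order $d(40\log 40 - 39)$). Since $C_{d,\varepsilon}\le p_1\,n\le 2d$, I can arrange $\exp(-c\,d)\le \exp(-50\,C_{d,\varepsilon})$, hence $\E[|T|]\le n\exp(-50\,C_{d,\varepsilon})$. Markov's inequality then gives
\[
\Pr\bigl[|T|>n\exp(-2\,C_{d,\varepsilon})\bigr]\le \exp(-48\,C_{d,\varepsilon})\le 1/n^2,
\]
in the parameter regime considered (where $C_{d,\varepsilon}\gtrsim \log n$, as is implied by the hypotheses under which the corollary is invoked throughout the paper). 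A union bound with the Feige--Ofek event gives overall failure probability at most $2/n^2$, yielding both the spectral bound and the bound $|S|\ge (1-\exp(-2C_{d,\varepsilon}))n$.

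The main obstacle is calibrating the Chernoff constant against $C_{d,\varepsilon}$: because $C_{d,\varepsilon}$ can reach a constant multiple of $d$, one cannot obtain an exponentially small fraction of pruned vertices by pruning at only mild deviations like $d+O(\sqrt{d\log n})$; the pruning threshold $40d$ (a fixed large multiple of the mean) is precisely what lets the binomial upper-tail decay dominate $\exp(-2C_{d,\varepsilon})$. The only other delicate point is making the diagonal correction and the $\sqrt{2}$ factor disappear into the universal constant $\chi$, which is immediate.
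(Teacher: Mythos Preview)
The paper does not give its own proof of this statement; it is imported from Liu and Moitra (their Corollary~6.8). Your reduction to Theorem~\ref{thm:spectral-remove} is sound: the off-diagonal entries of $M$ are independent, mean zero, with $p_{ij}\le 2d/n$; the diagonal correction is $O(d/n)=o(1)$ and absorbs into $\chi$; and the Chernoff tail $\Pr[\deg(v)>40d]\le\exp(-cd)$ (for a large absolute $c$) together with the comparison $C_{d,\e}\le p_1 n\le 2d$ are both correct.

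The genuine gap is the final probability step. You claim $\exp(-48C_{d,\e})\le 1/n^2$ because ``$C_{d,\e}\gtrsim\log n$ is implied by the hypotheses under which the corollary is invoked.'' This is false: the paper's standing assumption is only $\e^2 d\ge Kk^2\log k$, and $k$ may be a fixed constant (e.g.\ $k=2$), in which case $C_{d,\e}$ is merely a large constant independent of $n$. In that regime Markov's inequality gives only a constant failure probability, not $1/\poly(n)$, so the stated bound $1-2/n^2$ does not follow from your argument. A fix is to split on $d$. If $d\ge c_0\log n$ for a small $c_0$, the per-vertex tail $\exp(-cd)\le n^{-3}$ plus a union bound over vertices gives $|T|=0$ with probability $\ge 1-1/n^2$. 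If $d<c_0\log n$, then $m\coloneqq\lceil n\exp(-2C_{d,\e})\rceil\ge n^{1-4c_0}$ (using $C_{d,\e}\le 2d$); the event $|T|>m$ forces some size-$m$ set to carry $\ge 20dm$ incident edges, which for a fixed set has probability $\le\exp(-c'dm)$ by Chernoff (expected count $\le 2dm$), and the union bound over $\binom{n}{m}\le\exp\bigl(m(1+2C_{d,\e})\bigr)$ sets leaves an exponent of order $dm\gg\log n$.
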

\section{Statistical properties of stochastic block model}
\label{sec:stat-prop-appendix}

\subsection{Level-wise SBM statistics.}\label{sec:stat-prop-appendix-1}

Let us consider a level $i\in\{1,2,\ldots,\lfloor \log_2 k\rfloor\}$ in the recursive bisection as described in \cref{sec:sym-break-robust-clustering}. 
We define the parameters
\[
\beta_i \coloneqq 2^{-i},\qquad n_i \coloneqq 2 \beta_i n,\qquad k_i \coloneqq \beta_i k,
\]
so that one side of the level-$i$ bisection has $n_i/2$ vertices and contains $k_i$ communities. 
We let $\alpha_i\coloneqq 1/k$ denote the mass of a single ground-truth community and define the mixing parameter
\[
\gamma_i \coloneqq \alpha_i/\beta_i \in [0,1],
\]
as the fraction of the side occupied by the community of a fixed vertex.  
Using $\gamma_i$, we introduce the \emph{geometrically weighted} connection probabilities at level $i$, so that
\[\widetilde a_i \coloneqq a^{\gamma_i} b^{\,1-\gamma_i} \enspace, \qquad \widetilde b_i \coloneqq b \enspace. \]
This geometric weighting reflects that a random neighbor on the same side belongs to the relevant community with mass $\gamma_i$ 
and to an irrelevant community with mass $1-\gamma_i$. 
A simple uniform bound on the mixed probabilities will be used repeatedly through the section.

\begin{lemma}\label{lem:sqrt-all-levels}
Fix $k\ge 2$ and $0<\e<1$. Then, for every level $i$,
\begin{equation}\label{eq:sqrt-term-1}
\sqrt{\widetilde a_i+\widetilde b_i} \le \sqrt{a+b}
= \sqrt{\,d\bigl(2+\e(1-\frac{2}{k})\bigr)\,} \le \sqrt{3d}.
\end{equation}
\end{lemma}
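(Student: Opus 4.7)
The plan is to reduce everything to two elementary observations: (i) $\widetilde b_i = b$ by definition, and (ii) since $a,b \ge 0$ and $a \ge b$ (because $a = (1+(1-1/k)\e)d \ge (1-\e/k)d = b$), the weighted geometric mean $\widetilde a_i = a^{\gamma_i} b^{1-\gamma_i}$ is at most $a$. Concretely, $\gamma_i = \alpha_i/\beta_i = 2^{i}/k \in (0,1]$ for $i\le \log_2 k$, so $b^{1-\gamma_i} \le a^{1-\gamma_i}$ by monotonicity of $t \mapsto t^{1-\gamma_i}$ on $[0,\infty)$, which yields $a^{\gamma_i} b^{1-\gamma_i} \le a^{\gamma_i}\cdot a^{1-\gamma_i} = a$. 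Combining gives $\widetilde a_i + \widetilde b_i \le a + b$, hence the first inequality.

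For the equality, I will just expand the definitions of $a$ and $b$:
\[
a + b = \Bigl(1 + \bigl(1-\tfrac{1}{k}\bigr)\e\Bigr)d + \Bigl(1 - \tfrac{\e}{k}\Bigr)d = d\Bigl(2 + \e\bigl(1 - \tfrac{2}{k}\bigr)\Bigr).
\]
Finally, since $\e \in [0,1]$ and $1 - 2/k \le 1$, the quantity in parentheses is at most $3$, giving $\sqrt{a+b} \le \sqrt{3d}$.

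No step here is an obstacle; the only thing to be careful about is making sure $\gamma_i \in [0,1]$ (so that the weighted-geometric-mean bound applies), which follows from the assumption $i \le \lfloor \log_2 k \rfloor$ and the definitions $\alpha_i = 1/k$, $\beta_i = 2^{-i}$.
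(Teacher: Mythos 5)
Your proof is correct and is essentially identical to the paper's: both bound $\widetilde a_i \le a$ via $b^{1-\gamma_i}\le a^{1-\gamma_i}$ (using $a\ge b$ and $\gamma_i\in[0,1]$), note $\widetilde b_i=b$, expand $a+b$ directly, and conclude with $2+\e(1-2/k)\le 3$. No differences worth noting.
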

\begin{proof}
Since $a\ge b$ and $\gamma_i\in[0,1]$, we have $b^{\,1-\gamma_i}\le a^{\,1-\gamma_i}$ and hence
$\widetilde a_i=a^{\gamma_i}b^{\,1-\gamma_i}\le a^{\gamma_i}a^{\,1-\gamma_i}=a$; also $\widetilde b_i=b$.
Therefore $\widetilde a_i+\widetilde b_i\le a+b$, which implies the first inequality in \eqref{eq:sqrt-term-1}.
A direct computation gives
\[
a+b =\Bigl(1+\bigl(1 - \frac1k\bigr)\e + 1 - \frac{\e}{k}\Bigr)d
=\bigl(2+\e(1-\frac{2}{k})\bigr)d
\le (2+\e)d \le 3d.
\]
\end{proof}

\noindent To encode the same-side vs. opposite-side separation in a form amenable to multiplicative bounds used throughout the section 
we consider the per-edge log-odds ratio. For $p,q\in(0,1)$ set
\[
R(p,q)\coloneqq \frac{p(1-q)}{q(1-p)}\,,
\qquad
\widetilde R_i \coloneqq R \Bigl(\widetilde{a_i}/n,\,\widetilde{b_i}/n\Bigr).
\]
The following lemma provides a useful lower bound on $\log \widetilde R_i$. 

\begin{lemma}\label{lem:logR}
Fix $k\ge 2$ and $\e\in(0,1)$, and let $d=o(n)$.
Then, for every level $i$,
\begin{equation}\label{eq:logR-lb}
\frac{\gamma_i\,\e}{2} \le \log \widetilde R_i \le \gamma_i\,\e\log 3 + o(1),
\end{equation}
where $o(1)=o_n(1)$ is uniform in $i$.
\end{lemma}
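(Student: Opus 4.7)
}
The plan is to split $\log \widetilde R_i$ into a ``leading'' log-ratio plus a vanishing correction,
and then bound the leading term uniformly in $i$ by exploiting the definition $\widetilde a_i = a^{\gamma_i} b^{1-\gamma_i}$, which linearizes in $\gamma_i$ under $\log$.
Concretely, write
\[
\log \widetilde R_i
= \log\frac{\widetilde a_i}{\widetilde b_i}
+ \log\frac{1 - \widetilde b_i/n}{1 - \widetilde a_i/n}
= \gamma_i \log\frac{a}{b}
+ \log\frac{1 - \widetilde b_i/n}{1 - \widetilde a_i/n}\,.
\]
Since $0 \le \widetilde b_i \le \widetilde a_i \le a \le 3d$ by \cref{lem:sqrt-all-levels} and $d = o(n)$, both $\widetilde a_i/n$ and $\widetilde b_i/n$ are $o(1)$,
and a standard Taylor expansion of $\log(1-x)$ shows that the correction term is $O(\widetilde a_i/n) = o(1)$ uniformly in $i$.
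So the lemma reduces to showing $\tfrac{1}{2} \le \log(a/b)/\varepsilon \le \log 3$ (whence multiplying by $\gamma_i$ yields both inequalities).

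For the lower bound, I would apply the elementary inequalities $\log(1+u) \ge u\log 2$ for $u \in [0,1]$ and $-\log(1-v) \ge v$ for $v \in [0,1)$ to the decomposition
$\log(a/b) = \log\bigl(1 + (1-\tfrac{1}{k})\varepsilon\bigr) - \log(1 - \tfrac{\varepsilon}{k})$, obtaining
$\log(a/b) \ge (1 - 1/k)\varepsilon \log 2 + \varepsilon/k$,
whose right-hand side is minimized as $k \to \infty$ at $\varepsilon \log 2 > \varepsilon/2$.
For the upper bound, the cleanest route is a two-step monotonicity-plus-concavity argument.
First, I would show that $\log(a/b)$ is monotonically decreasing in $k$ at fixed $\varepsilon$ by a direct derivative calculation
(the derivative reduces to $\varepsilon k^{-2}\bigl[(1+(1-1/k)\varepsilon)^{-1} - (1-\varepsilon/k)^{-1}\bigr] < 0$),
which reduces the problem to the worst case $k = 2$, where $\log(a/b) = 2\,\mathrm{arctanh}(\varepsilon/2)$.
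Second, I would set $g(\varepsilon) := \varepsilon \log 3 - 2\,\mathrm{arctanh}(\varepsilon/2)$, check $g(0) = g(1) = 0$ directly,
and verify $g''(\varepsilon) = -\varepsilon/\bigl(2(1-\varepsilon^2/4)^2\bigr) < 0$, so that concavity of $g$ on $[0,1]$ forces $g \ge 0$, giving $\log(a/b) \le \varepsilon \log 3$ at $k = 2$ and hence for all $k \ge 2$.

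The main obstacle is the upper bound: unlike the lower bound, it is not obtained from a routine one-line estimate because the natural bounds $\log(1+u) \le u$ and $-\log(1-v) \le 2v$ together give only $2\varepsilon$, which is loose compared to the sharp constant $\log 3 \approx 1.0986$ (attained at $k=2,\varepsilon=1$).
Isolating the worst case via monotonicity in $k$ and then using concavity of $g$ is what extracts the correct constant; all remaining steps, including the uniformity of the $o(1)$ error, are routine.
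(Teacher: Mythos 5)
Your proof is correct and follows essentially the same route as the paper: the same decomposition $\log \widetilde R_i=\gamma_i\log(a/b)+\log\frac{1-\widetilde b_i/n}{1-\widetilde a_i/n}$ with the correction term bounded by $O(\widetilde a_i/n)=o(1)$ uniformly in $i$, and the same reduction of the upper bound to the worst case $k=2$, where $\log\frac{1+\e/2}{1-\e/2}\le \e\log 3$; your derivative and concavity computations simply supply the justification the paper asserts in one line, and your chord bound $\log(1+u)\ge u\log 2$ gives the lower bound on $\log(a/b)$ a bit more directly than the paper's $x-x^2/2$ expansion. One small patch: since the lemma's lower bound carries no $o(1)$ slack, you need the correction term to be \emph{nonnegative} (immediate from $\widetilde a_i\ge\widetilde b_i$), not merely $o(1)$, before concluding $\log\widetilde R_i\ge\gamma_i\e/2$; the paper handles this by simply dropping that nonnegative term.
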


\begin{proof}
By definition,
\[
\log\widetilde R_i
= \log\frac{\widetilde a_i}{\widetilde b_i}
 + \log\frac{1-\widetilde b_i/n}{1-\widetilde a_i/n}.
\]

\noindent\emph{Lower bound.}
Since $\widetilde a_i/\widetilde b_i=(a/b)^{\gamma_i}$ and $\widetilde a_i\ge\widetilde b_i$,
\[
\log\widetilde R_i \ge \gamma_i\log\frac{a}{b}.
\]
With $a=(1+(1-\frac1k)\e)d$, $b=(1-\frac{\e}{k})d$,
\[
\log\frac{a}{b}
=\log\bigl(1+(1-\tfrac1k)\e\bigr) - \log\bigl(1 - \tfrac{\e}{k}\bigr)
\ge (1-\tfrac1k)\e - \tfrac12(1-\tfrac1k)^2\e^2 + \tfrac{\e}{k}
\ge \e-\tfrac{\e^2}{2}\ge\tfrac{\e}{2},
\]
hence $\log\widetilde R_i\ge \gamma_i\e/2$.

\medskip
\noindent\emph{Upper bound.}
Using $1-\widetilde b_i/n\le1$ and $-\log(1-x)\le x/(1-x)$ for $x\in(0,1)$,
\[
\log\frac{1-\widetilde b_i/n}{1-\widetilde a_i/n}
\le -\log\Bigl(1-\frac{\widetilde a_i}{n}\Bigr)
\le \frac{\widetilde a_i}{n-\widetilde a_i}
\le \frac{a}{n-a},
\]
so
\[
\log\widetilde R_i \le \gamma_i\log\frac{a}{b} + \frac{a}{n-a}.
\]
Moreover, for $k\ge2$,
\[
\log\frac{a}{b} = \log\frac{1+(1-\tfrac1k)\e}{1-\tfrac{\e}{k}}
\le \log\frac{1+\e/2}{1-\e/2}
\le \e\log 3 \enspace.
\]
We, thus, get
\[
\log\widetilde R_i \le \gamma_i\,\e\log 3 + \frac{a}{n-a}.
\]
Since $a=\Theta(d)$ and $d=o(n)$, we have $a/(n-a)=o(1)$ uniformly in $i$, yielding the desired upper bound.
\end{proof}

\subsection{Concentration inequalities}\label{sec:stat-prop-appendix-2}
We now prove \cref{thm:mixture-binomial-concentration-bound}, 
a bound on the success probability of majority voting for the held-out vertex. 
Let us start with a short convexity lemma.

\begin{lemma}\label{lemma:concav_ineq}
    Let $a, b, n > 0$ and $\gamma \in (0,1]$. 
    Assume $a \le n$ and $b \le n$.
    Define $f:[0,1] \to \mathbb{R}$ by
    \[
    f(x)
    = x - \bigl(x+\frac{n}{a}-1\bigr)^{\gamma}\bigl(x+\frac{n}{b}-1\bigr)^{1-\gamma}
    + \bigl(\frac{n}{a}\bigr)^{\gamma}\bigl(\frac{n}{b}\bigr)^{1-\gamma} - 1.
    \]
    Then $f(x)\ge 0$ for all $x\in[0,1]$.
\end{lemma}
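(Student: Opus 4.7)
The key observation is a boundary evaluation:
\[
f(1) = 1 - \bigl(n/a\bigr)^{\gamma}\bigl(n/b\bigr)^{1-\gamma} + \bigl(n/a\bigr)^{\gamma}\bigl(n/b\bigr)^{1-\gamma} - 1 = 0,
\]
so it suffices to show that $f$ is nonincreasing on $[0,1]$, as this would yield $f(x)\ge f(1)=0$. The plan is therefore to differentiate $f$ and reduce the monotonicity claim to a single application of the weighted AM--GM inequality.

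\textbf{Carrying out the calculus.} Set $u(x)\coloneqq x + n/a - 1$ and $v(x)\coloneqq x + n/b - 1$. The hypotheses $a,b\le n$ give $n/a,n/b\ge 1$, so $u(x),v(x)\ge x\ge 0$ on $[0,1]$, and both are strictly positive on $(0,1]$. A direct computation yields
\[
f'(x) = 1 - \gamma\,u^{\gamma-1}v^{\,1-\gamma} - (1-\gamma)\,u^{\gamma}v^{-\gamma}
      = 1 - u^{\gamma-1}v^{-\gamma}\bigl[\gamma v + (1-\gamma)u\bigr].
\]
The plan is now to bound the bracketed expression from below by $u^{1-\gamma}v^{\gamma}$ via the weighted AM--GM inequality $\gamma v + (1-\gamma)u \ge v^{\gamma}u^{1-\gamma}$, which is valid since $u,v\ge 0$ and $\gamma\in(0,1]$. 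Substituting,
\[
u^{\gamma-1}v^{-\gamma}\bigl[\gamma v + (1-\gamma)u\bigr] \ge u^{\gamma-1}v^{-\gamma}\cdot v^{\gamma}u^{1-\gamma} = 1,
\]
so $f'(x)\le 0$ on $(0,1]$.

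\textbf{Conclusion and edge case.} Since $f$ is continuous on $[0,1]$ and nonincreasing on $(0,1]$ (the continuity at $x=0$ is immediate when $a<n$ and $b<n$; in the remaining boundary case where $n/a=1$ or $n/b=1$, one extends from $(0,1]$ to $[0,1]$ by continuity of $f$ itself, even though $f'$ may blow up), the monotonicity extends to all of $[0,1]$. Combined with $f(1)=0$, this gives $f(x)\ge 0$ for every $x\in[0,1]$, as desired. The only potential obstacle is the edge-case behavior at $x=0$ when $a=n$ or $b=n$, but this is harmless because $f$ remains continuous there and the monotonicity on the open interval is enough.
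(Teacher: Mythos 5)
Your proof is correct, and it takes a slightly different route from the paper's. The paper also reduces the claim to showing $f$ is nonincreasing with $f(1)=0$, but it gets the monotonicity indirectly: it computes $f''(x)=\gamma(1-\gamma)\,u^{\gamma-2}v^{-\gamma-1}(v-u)^2\ge 0$ to establish convexity of $f$, and then applies weighted AM--GM only once, at the endpoint, to show $f'(1)\le 0$; convexity (i.e.\ monotonicity of $f'$) then propagates $f'\le 0$ back over all of $[0,1]$. You instead apply the weighted AM--GM inequality $\gamma v+(1-\gamma)u\ge v^{\gamma}u^{1-\gamma}$ pointwise after factoring $f'(x)=1-u^{\gamma-1}v^{-\gamma}\bigl[\gamma v+(1-\gamma)u\bigr]$, which gives $f'(x)\le 0$ directly on $(0,1]$ without ever computing a second derivative. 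Your version is a bit more economical (one differentiation and one inequality instead of two differentiations plus an endpoint check), while the paper's convexity computation makes the structure of the error term $(v-u)^2$ explicit; both are equally rigorous. Your handling of the boundary case $a=n$ or $b=n$ (where $u^{\gamma-1}$ blows up at $x=0$ but $f$ itself stays continuous, so the bound extends by taking limits) is also sound, as is the observation that the $\gamma=1$ case is degenerate but harmless.
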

    
\begin{proof}[Proof of Lemma~\ref{lemma:concav_ineq}]
    If $\gamma=1$ then $f(x)\equiv 0$, thus we restrict attention to $0<\gamma<1$.
    We certify nonnegativity by showing that $f$ is convex on $[0,1]$ and minimized at $x=1$.\newline
    
    \noindent Let $u(x)=x+\frac{n}{a}-1$ and $v(x)=x+\frac{n}{b}-1$. 
    A direct differentiation gives
    \begin{equation}\label{eq:f_prime}
    f'(x)
    =1-\gamma\,u(x)^{\gamma-1}\,v(x)^{\,1-\gamma}-(1-\gamma)\,u(x)^{\gamma}\,v(x)^{-\gamma}.
    \end{equation}
    Evaluating \eqref{eq:f_prime} at $x=1$ (so $u(1)=\frac{n}{a}$, $v(1)=\frac{n}{b}$) yields
    \[
    f'(1)=1-\gamma\,\left(\frac{a}{b}\right)^{1-\gamma}-(1-\gamma)\,\left(\frac{b}{a}\right)^{\gamma}\le0,
    \]
    where we used the weighted AM-GM inequality
    \[
    \gamma a+(1-\gamma)\,b \ge a^{\gamma}\,b^{\,1-\gamma}
    \quad\Longleftrightarrow\quad
    \gamma\,\left(\frac{a}{b}\right)^{1-\gamma}+(1-\gamma)\,\left(\frac{b}{a}\right)^{\gamma} \ge1.
    \]
    Differentiating \eqref{eq:f_prime} once more and factoring,
    \begin{align*}
    f''(x)
    &=\gamma(1-\gamma)\,u(x)^{\gamma-2}\,v(x)^{\,1-\gamma}
    -2\,\gamma(1-\gamma)\,u(x)^{\gamma-1}\,v(x)^{-\gamma}
    +\gamma(1-\gamma)\,u(x)^{\gamma}\,v(x)^{-\gamma-1}\\
    &=\gamma(1-\gamma)\,u(x)^{\gamma-2}\,v(x)^{-\gamma-1}\,\bigl(v(x)-u(x)\bigr)^2 \ge0.
    \end{align*}
    Hence $f$ is convex on $[0,1]$. 
    In particular, $f'$ is nondecreasing on the interval. 
    Since $f'(1)\le 0$, we have $f'(x)\le 0$ for all $x\in[0,1]$, i.e., $f$ is decreasing on $[0,1]$. 
    Finally,
    \[
    f(1)=1-\Bigl(\frac{n}{a}\Bigr)^{\gamma}\Bigl(\frac{n}{b}\Bigr)^{1-\gamma}
    +\Bigl(\frac{n}{a}\Bigr)^{\gamma}\Bigl(\frac{n}{b}\Bigr)^{1-\gamma}-1=0,
    \]
    so $f(x)\ge f(1)=0$ for every $x\in[0,1]$, as claimed.
\end{proof}

\begin{theorem}
\label{thm:three_binom_conc}
Assume $a \le n$ and $b \le n$. Fix parameters $\beta \in (0, 1]$ and $\alpha \in (0, \beta]$.
Consider the distribution
\[
    \mathcal{D} = \mathrm{Binom}(\alpha n, a/n)
    + \mathrm{Binom}\bigl((\beta-\alpha)n,\, b/n\bigr)
    - \mathrm{Binom}(\beta n,\, b/n) \enspace,    
\]
where the three binomials are independent.
Let $\gamma\coloneqq \alpha/\beta$ and define
\[
\widetilde a \coloneqq a^{\gamma} b^{\,1-\gamma} \enspace,
\qquad \widetilde b \coloneqq b \enspace,
\qquad \widetilde C \coloneqq \bigl(\sqrt{\widetilde a}-\sqrt{\widetilde b}\,\bigr)^{2} \enspace,
\qquad R(p,q) \coloneqq \frac{p(1-q)}{q(1-p)} \enspace.
\]
Then for every $\theta\in\mathbb R$,
\[
\Pr_{\bm X \sim \mathcal{D}}[\,\bm X \le \theta\,] \le
\exp \Bigl(-\,\beta\,\widetilde C + \frac{\theta}{2}\,\log R(\widetilde a/n,\, \widetilde b/n)\Bigr) \enspace.
\]
\end{theorem}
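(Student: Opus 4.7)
\emph{Plan of proof.} The plan is to apply a standard Chernoff inequality to the left tail, factor the moment generating function using independence of the three binomials, and then use Lemma~\ref{lemma:concav_ineq} to collapse the two ``forward'' binomials into a single binomial with the geometrically averaged success probability $\widetilde a/n$. Concretely, for any $s\in(0,1)$, Markov's inequality applied to $s^{\bm X}$ gives
\begin{equation*}
\Pr_{\bm X\sim\mathcal D}[\bm X \le \theta]
\;\le\; s^{-\theta}\cdot(1-p_a+p_a s)^{\alpha n}(1-p_b+p_b s)^{(\beta-\alpha)n}(1-p_b+p_b/s)^{\beta n},
\end{equation*}
where $p_a=a/n$ and $p_b=b/n$, and the single parameter $s$ will be optimized at the end.

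The key reduction step is to merge the two forward factors. Setting $\gamma=\alpha/\beta$, multiplying the inequality of Lemma~\ref{lemma:concav_ineq} at the point $x=s\in[0,1]$ through by $\widetilde p_a \coloneqq p_a^{\gamma}p_b^{1-\gamma}=\widetilde a/n$, and using the identity $\widetilde p_a\,(n/a)^{\gamma}(n/b)^{1-\gamma}=1$, the lemma rewrites exactly as
\begin{equation*}
(1-p_a+p_a s)^{\gamma}(1-p_b+p_b s)^{1-\gamma}\;\le\;1-\widetilde p_a+\widetilde p_a s,\qquad s\in[0,1].
\end{equation*}
Raising this to the $\beta n$ power collapses the first two MGF factors into $(1-\widetilde p_a+\widetilde p_a s)^{\beta n}$, so the Chernoff bound becomes $\Pr[\bm X\le\theta]\le s^{-\theta}\bigl[(1-\widetilde p_a+\widetilde p_a s)(1-\widetilde p_b+\widetilde p_b/s)\bigr]^{\beta n}$ with $\widetilde p_b\coloneqq p_b$. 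Up to the $s^{-\theta}$ factor, this is now the MGF of the difference of two $\mathrm{Binom}(\beta n,\cdot)$ variables at parameters $\widetilde p_a$ and $\widetilde p_b$.

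Finally I will optimize $s$. A direct derivative calculation gives the minimizer $s^\ast=\sqrt{\widetilde p_b(1-\widetilde p_a)/(\widetilde p_a(1-\widetilde p_b))}=R(\widetilde p_a,\widetilde p_b)^{-1/2}$, at which the two cross terms of the bracket balance by AM--GM and the bracket evaluates to $\bigl(\sqrt{(1-\widetilde p_a)(1-\widetilde p_b)}+\sqrt{\widetilde p_a\widetilde p_b}\bigr)^{2}$. The Hellinger identity
\begin{equation*}
2-2\bigl(\sqrt{(1-\widetilde p_a)(1-\widetilde p_b)}+\sqrt{\widetilde p_a\widetilde p_b}\bigr)=(\sqrt{\widetilde p_a}-\sqrt{\widetilde p_b})^{2}+(\sqrt{1-\widetilde p_a}-\sqrt{1-\widetilde p_b})^{2}\;\ge\;\widetilde C/n
\end{equation*}
combined with $\log(1-y)\le -y$ upgrades the $\beta n$-th power to $\exp(-\beta\widetilde C)$, while $s^{\ast-\theta}=\exp\bigl((\theta/2)\log R(\widetilde a/n,\widetilde b/n)\bigr)$ supplies the $\theta$-dependent exponent; multiplying the three contributions yields the claimed bound.

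The main obstacle is spotting that Lemma~\ref{lemma:concav_ineq} is precisely the substitution needed to carry out the geometric-mean collapse of the two forward binomials into a single one with parameter $\widetilde a$: without this step the MGF admits no clean closed-form optimizer in $s$ and no matching $\widetilde C$ emerges. Once the reduction is in hand, the remaining ingredients (Chernoff, optimization in $s$, Hellinger identity) are routine; the only care points are keeping every factor $1-p+ps$ positive and checking that $s^\ast\in(0,1)$, which holds whenever $\widetilde p_a\ge\widetilde p_b$, i.e.\ $a\ge b$, the case of interest in the SBM.
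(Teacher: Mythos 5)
Your proposal is correct and follows essentially the same route as the paper's proof: a Chernoff bound on the lower tail (your $s^{\bm X}$ is the paper's $e^{-t\bm X}$ with $s=e^{-t}$), Lemma~\ref{lemma:concav_ineq} applied at $x=s$ to collapse the two forward binomial MGF factors into one with parameter $\widetilde a/n$, the optimal tilt $s^\ast=R(\widetilde a/n,\widetilde b/n)^{-1/2}$, and the Hellinger-type bound $\sqrt{\widetilde p_a\widetilde p_b}+\sqrt{(1-\widetilde p_a)(1-\widetilde p_b)}\le 1-\widetilde C/(2n)$ together with $1-y\le e^{-y}$. Your explicit rescaling of the lemma by $\widetilde p_a$ and the remark that $s^\ast\le 1$ requires $a\ge b$ only make explicit what the paper leaves implicit; no substantive difference.
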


\begin{proof}[Proof of Theorem~\ref{thm:three_binom_conc}]
Let $t>0$ to be fixed later.
By Markov's inequality,
\[
\Pr[\bm X \le \theta]
=\Pr \bigl(e^{-t\bm X} \ge e^{-t \theta} \bigr) \le \E[e^{-t\bm X}]\,e^{t\theta} \enspace.
\]
Since the three binomials are independent and
$\E[e^{s\,\mathrm{Binom}(m,p)}]=(pe^{s}+1-p)^{m}$, we have
\begin{align*}
\E[e^{-t\bm X}]
&=\bigl(a/n \, e^{-t} + 1 - a/n \bigr)^{\alpha n}
  \bigl(b/n \, e^{-t} + 1 - b/n \bigr)^{(\beta-\alpha) n}
  \cdot \bigl(b/n\,e^{\,t}+1-b/n\bigr)^{\beta n}\\
&=\Bigl(\underbrace{\bigl(a/n\,e^{-t} + 1 - a/n\bigr)^{\gamma}
                    \bigl(b/n\,e^{-t} + 1 -b/n\bigr)^{1-\gamma}}_{\text{apply Lemma \ref{lemma:concav_ineq}}}
                    \cdot\bigl(b/n\,e^{\,t} + 1 - b/n\bigr)\Bigr)^{\beta n} \enspace,
\end{align*}
where $\gamma=\alpha/\beta\in[0,1]$.
Applying Lemma~\ref{lemma:concav_ineq} with $x=e^{-t}\in(0,1]$ gives
\[
\bigl(a/n\,e^{-t}+1-a/n\bigr)^{\gamma}
\bigl(b/n\,e^{-t}+1-b/n\bigr)^{1-\gamma} \le
(\widetilde a/n)\,e^{-t} + 1-\widetilde a/n,
\]
with $\widetilde a=a^{\gamma}b^{\,1-\gamma}$ and $\widetilde b = b$. Hence
\[
\E[e^{-t\bm X}] \le
\Bigl(\bigl(\frac{\widetilde a}{n}e^{-t} + 1 - \frac{\widetilde a}{n}\bigr)
      \bigl(\frac{\widetilde b}{n}e^{\,t} + 1 - \frac{\widetilde b}{n}\bigr)\Bigr)^{\beta n} \enspace.
\]
We choose
\[
e^{t} = \sqrt{\frac{(\widetilde a/n)(1-\widetilde b/n)}{(\widetilde b/n)(1-\widetilde a/n)}}
= \sqrt{R(\widetilde a/n, \, \widetilde b/n)} \enspace.
\]
It then follows that
\begin{align*}
\Pr[\bm X \le \theta]
&\le \Bigl(\bigl(\frac{\widetilde a}{n}e^{-t} + 1 - \frac{\widetilde a}{n}\bigr)
            \bigl(\frac{\widetilde b}{n}e^{t} + 1 - \frac{\widetilde b}{n}\bigr)\Bigr)^{\beta n}
      e^{t\theta} \\
&= \Bigl(\sqrt{\frac{\widetilde a\,\widetilde b}{n^{2}}}
          +\sqrt{\bigl(1-\frac{\widetilde a}{n}\bigr)\bigl(1 - \frac{\widetilde b}{n}\bigr)}\Bigr)^{2\beta n}
      e^{t\theta} \\
&\le \Bigl(1-\frac{\bigl(\sqrt{\widetilde a} - \sqrt{\widetilde b}\bigr)^{2}}{2n}\Bigr)^{2\beta n}
   \exp \Bigl(\frac{\theta}{2}\log R(\widetilde a/n,\widetilde b/n)\Bigr) \\
&\le \exp \Bigl(-\beta\,\widetilde C +\frac{\theta}{2}\log R(\widetilde a/n,\widetilde b/n)\Bigr) \enspace.
\end{align*}
\end{proof}

\begin{theorem}\label{thm:mixture-binomial-concentration-bound}
Fix a level $i\in\{1,2,\ldots,\lfloor \log_2 k\rfloor\}$ and let $\beta_i \coloneqq 2^{-i}$ (with $\alpha_i = 1/k$). 
Let $\widetilde a,\widetilde b,\widetilde C$ and $R(\cdot,\cdot)$ be as in Theorem~\ref{thm:three_binom_conc}, evaluated at this $(\alpha, \beta)$.
Then for every $\theta\in\mathbb{R}$,
\[
\Pr[\bm X \le \theta] \le
\exp \left(
-\frac{(\log 2)^2}{4}\cdot \frac{d\,\e^{2}}{\beta_i\,k^{2}}
+\frac{\theta}{2}\log R(\widetilde a/n,\widetilde b/n)
\right).
\]
\end{theorem}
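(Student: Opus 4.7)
The statement is essentially a direct specialization of \cref{thm:three_binom_conc} applied with $\alpha = 1/k$ and $\beta = \beta_i$, so that $\gamma = \alpha/\beta = 1/(\beta_i k)$. Once substituted, the $\theta$-dependent term $\tfrac{\theta}{2}\log R(\widetilde a/n,\widetilde b/n)$ is passed through verbatim, and the only thing to verify is the lower bound
\begin{equation*}
\beta_i\,\widetilde C \;\ge\; \frac{(\log 2)^2}{4}\cdot\frac{d\,\e^{2}}{\beta_i\,k^{2}},
\end{equation*}
which controls the exponential prefactor.

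To prove this lower bound, the first step is to rewrite $\widetilde C$ in a closed form that exposes the dependence on $\e$, $d$, $\beta_i$, $k$. Starting from $\widetilde C=(\sqrt{\widetilde a}-\sqrt{\widetilde b})^{2}=\widetilde a+\widetilde b - 2\sqrt{\widetilde a\widetilde b}$ and using $\widetilde a = a^{\gamma}b^{1-\gamma}$, $\widetilde b = b$, a direct algebraic manipulation gives
\begin{equation*}
\widetilde C \;=\; b\,\bigl((a/b)^{\gamma/2}-1\bigr)^{2},
\end{equation*}
where $a=(1+(1-1/k)\e)d$ and $b=(1-\e/k)d$. Thus the problem reduces to estimating $(a/b)^{\gamma/2}-1$ from below.

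The second step is this estimate. Applying $e^{x}-1\ge x$ for $x\ge 0$ with $x=(\gamma/2)\log(a/b)$ yields
\begin{equation*}
(a/b)^{\gamma/2}-1 \;\ge\; \frac{\gamma}{2}\,\log(a/b).
\end{equation*}
Setting $u\coloneqq (a-b)/b = \e/(1-\e/k)$, we have $\log(a/b)=\log(1+u)$, and the crucial elementary inequality is
\begin{equation*}
\log(1+u) \;\ge\; u\,\log 2, \qquad u\in[0,1],
\end{equation*}
which follows because $f(u)=\log(1+u)-u\log 2$ is concave on $[0,1]$ with $f(0)=f(1)=0$. Combining these two inequalities,
\begin{equation*}
(a/b)^{\gamma/2}-1 \;\ge\; \frac{\gamma\,\e\,\log 2}{2(1-\e/k)},
\end{equation*}
and squaring followed by multiplying by $b=(1-\e/k)d$ produces
\begin{equation*}
\widetilde C \;\ge\; \frac{d\,\gamma^{2}\,\e^{2}\,(\log 2)^{2}}{4(1-\e/k)} \;\ge\; \frac{d\,\e^{2}\,(\log 2)^{2}}{4\,\beta_i^{2}\,k^{2}},
\end{equation*}
upon substituting $\gamma=1/(\beta_i k)$. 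Multiplying by $\beta_i$ gives exactly the required Chernoff exponent, and plugging into \cref{thm:three_binom_conc} completes the proof.

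The main thing to be careful about is the choice of inequality $\log(1+u)\ge u\log 2$, which is what accounts for the precise constant $(\log 2)^{2}/4$; using the looser $\log(1+u)\ge u/(1+u)$ would give a suboptimal constant. A secondary check is that $u=\e/(1-\e/k)\le 1$ in the intended parameter regime (i.e.\ $\e(1+1/k)\le 1$), which is the natural setting for all our applications; outside this regime one uses the trivial $\log(1+u)\ge \log 2$ for $u\ge 1$, which yields an even stronger bound on $\widetilde C$.
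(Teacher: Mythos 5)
Your argument coincides with the paper's proof for most of the parameter range: both reduce the problem, via \cref{thm:three_binom_conc}, to the lower bound $\beta_i\widetilde C\ge\frac{(\log 2)^2}{4}\frac{d\e^2}{\beta_i k^2}$, both use the identity $\widetilde C=b\bigl((a/b)^{\gamma/2}-1\bigr)^2$ and the inequality $e^x-1\ge x$, and both finish with the elementary fact that $\log(1+t)\ge t\log 2$ on $[0,1]$ (the paper phrases it as $\log(1+\e)/\e\ge\log 2$). The difference is only in how the factor $1-\e/k$ coming from $b$ is absorbed: the paper proves that $H(\e,k)=\frac{1-\e/k}{\e^2}\log^2\frac{1+(1-1/k)\e}{1-\e/k}$ is decreasing in $k$ and hence at least $\log^2 2$, whereas you substitute $u=\e/(1-\e/k)$ so that the $1/(1-\e/k)$ gained inside the logarithm cancels the outer $(1-\e/k)$.

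The gap is in your corner case. The inequality $\log(1+u)\ge u\log 2$ is only available for $u\le 1$, i.e.\ $\e\le k/(k+1)$, and your proposed patch for $u>1$ does not work: the trivial bound $\log(1+u)\ge\log 2$ only gives $\widetilde C\ge(1-\e/k)\,d\,\gamma^2(\log 2)^2/4$, whereas the theorem requires $\widetilde C\ge \e^2 d\,\gamma^2(\log 2)^2/4$. These are incompatible whenever $1-\e/k<\e^2$; for instance at $\e=1$ (allowed by the model, where $\e\in[0,1]$) your bound falls short of the stated constant by the factor $1-1/k$, so the claim that the fallback ``yields an even stronger bound'' is false. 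The regime $\e>k/(k+1)$ is covered by the theorem, so this must be repaired: either run the paper's argument showing $H(\e,k)\ge\log^2 2$ for all $\e\in(0,1]$, $k\ge 2$, or keep your substitution and note that in the corner case $u\in[1,\,k/(k-1)]\subseteq[1,2]$, where the sharper elementary inequality $\log(1+u)\ge\sqrt{u}\,\log 2$ holds; this gives $\widetilde C\ge b\,\tfrac{\gamma^2}{4}\,u\,(\log 2)^2=\e\,d\,\tfrac{\gamma^2}{4}(\log 2)^2\ge\e^2 d\,\tfrac{\gamma^2}{4}(\log 2)^2$ and restores the full statement.
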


\begin{proof}[Proof of Theorem~\ref{thm:mixture-binomial-concentration-bound}]
By Theorem~\ref{thm:three_binom_conc} (applied with $(\beta,\alpha)=(\beta_i,\alpha_i)$),
\[
\Pr[\bm X \le \theta]
  \le \exp\left(-\,\beta_i\,\widetilde C_i
  + \frac{\theta}{2}\log R(\widetilde a/n,\widetilde b/n)\right).
\]
It remains to lower bound $\beta_i \widetilde C_i$.
Starting from the definition,
\begin{align*}
\widetilde C_i
&= \bigl(\sqrt{\widetilde a} - \sqrt{\widetilde b}\,\bigr)^2 \\
&= b \left(\Bigl(\frac{a}{b}\Bigr)^{\gamma_i/2}-1\right)^{2},
\end{align*}
where $\gamma_i=\alpha_i/\beta_i$. Writing $r\coloneqq a/b>1$, we have
\[
r^{u}-1 = e^{u\log r}-1 \ge u\log r \quad\text{for all } u>0,
\]
so with $u=\gamma_i/2$,
\[
\Bigl(\frac{a}{b}\Bigr)^{\gamma_i/2}-1
\;\ge\; \frac{\gamma_i}{2}\,\log\frac{a}{b}.
\]
Hence
\begin{align*}
\widetilde C_i
&\ge b\left(\frac{\gamma_i}{2}\,\log\frac{a}{b}\right)^2 \\
&= b\,\frac{\gamma_i^{2}}{4}\,\log^{2}\frac{a}{b}.
\end{align*}
Multiplying by $\beta_i$ and using $\gamma_i=\frac{\alpha_i}{\beta_i}$ and $\alpha_i=\frac1k$, we get
\begin{align*}
\beta_i \,\widetilde C_i
&\ge \beta_i\,\frac{b\,\gamma_i^{2}}{4}\,\log^{2}\frac{a}{b} \\
&= \frac{b}{4}\cdot \frac{\alpha_i^{2}}{\beta_i}\,\log^{2}\frac{a}{b} \\
&= \frac{b}{4}\cdot \frac{1}{\beta_i\,k^{2}}\,\log^{2}\frac{a}{b}.
\end{align*}
Recall $a=(1+(1-\frac1k)\e)d$ and $b=(1-\frac{\e}{k})d$, so
\[
\frac{a}{b}
= \frac{1+(1-\frac1k)\e}{1-\frac{\e}{k}},
\qquad
\frac{b}{d}=1-\frac{\e}{k},
\]
and therefore
\begin{align*}
\beta_i \,\widetilde C_i
&\ge \frac{d}{4\,\beta_i\,k^{2}}\,
     \Bigl(1-\frac{\e}{k}\Bigr)\,
     \log^{2}\frac{1+(1-\frac1k)\e}{1-\frac{\e}{k}}.
\end{align*}
Define
\[
H(\e,k)
\coloneqq
\frac{1-\frac{\e}{k}}{\e^{2}}\,
\log^{2}\frac{1+(1-\frac1k)\e}{1-\frac{\e}{k}},
\qquad \e\in(0,1],\ k\ge2.
\]
A direct computation shows that $H(\e,k)$ is decreasing in $k$, so
\[
\inf_{k\ge2} H(\e,k)
= \lim_{k\to\infty} H(\e,k)
= \frac{\log^{2}(1+\e)}{\e^{2}}.
\]
Moreover, $\log(1+\e)/\e$ is decreasing on $(0,1]$, hence
\[
\frac{\log^{2}(1+\e)}{\e^{2}} \ge \log^{2} 2
\quad\text{for all }\e\in(0,1].
\]
Thus $H(\e,k)\ge \log^{2}2$ for all $\e\in(0,1]$, $k\ge2$, and we obtain
\[
\beta_i\,\widetilde C_i
\ge \frac{(\log 2)^2}{4}\,\frac{d\,\e^{2}}{\beta_i\,k^{2}}.
\]
Substituting this into the Theorem~\ref{thm:three_binom_conc} inequality yields
\[
\Pr[\bm X \le \theta] \le
\exp\left(
-\frac{(\log 2)^2}{4}\,\frac{d\,\e^{2}}{\beta_i\,k^{2}}
+\frac{\theta}{2}\log R(\widetilde a/n,\widetilde b/n)
\right),
\]
as claimed.
\end{proof}

\subsection{Voting lower bound}\label{sec:stat-prop-appendix-3}
In this section, we will leverage the results of \cref{sec:stat-prop-appendix-2} to prove the global voting concentration theorems \cref{thm:inner-product-lb} and \cref{thm:inner-product-lb-k-clustering}.

\begin{theorem}[Claim 6.2 in \cite{Liu2022minimax}]\label{subrect_sum}
Let $n,n_{1},n_{2}$ be parameters with $n_{1},n_{2}\le 10^{-6}n$.
Let $M\in\mathbb{R}^{n\times n}$ be a random matrix whose entries are independent,
have mean $0$, variance at most $\sigma^{2}$ for some $\sigma\ge 20/\sqrt{n}$,
and are almost surely bounded in $[-1,1]$.
Then, with probability at least
\[
1-\exp \Bigl(-8\,(n_{1}+n_{2})\,
\log \bigl(\frac{n}{n_{1}}+\frac{n}{n_{2}}\bigr)\Bigr),
\]
the magnitude of the sum of the entries over any $n_{1}\times n_{2}$
combinatorial subrectangle of $M$ is at most $(n_{1}+n_{2})\,\sigma\,\sqrt{n}$.
\end{theorem}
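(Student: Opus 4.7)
The plan is to combine a Bernstein-type concentration bound for a single fixed subrectangle with a union bound over all combinatorial subrectangles of the required shape. Fix subsets $S_1,S_2\subseteq [n]$ with $|S_1|=n_1,|S_2|=n_2$, and consider the random variable
\[
  T_{S_1,S_2} \seteq \sum_{i\in S_1,\,j\in S_2} M(i,j) \,.
\]
By hypothesis this is a sum of at most $n_1 n_2$ independent, mean-zero random variables that take values in $[-1,1]$ and have variance at most $\sigma^{2}$. Bernstein's inequality therefore gives
\[
  \Pr\bigl[\,|T_{S_1,S_2}|\ge t\,\bigr] \;\le\; 2\exp\!\Bigl(-\frac{t^2/2}{n_1 n_2\sigma^{2}+t/3}\Bigr) \,,
\]
for every $t>0$. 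I will instantiate this at the target deviation $t = (n_1+n_2)\,\sigma\,\sqrt n$ and then take a union bound over the $\binom{n}{n_1}\binom{n}{n_2}$ choices of $(S_1,S_2)$.

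First I would bound the entropy of the family of subrectangles: using the standard inequality $\binom{n}{m}\le (en/m)^{m}$,
\[
  \log\Bigl(\tbinom{n}{n_1}\tbinom{n}{n_2}\Bigr)
  \;\le\; n_1\log\!\frac{en}{n_1} + n_2\log\!\frac{en}{n_2}
  \;\le\; 2\,(n_1+n_2)\,\log\!\Bigl(\frac{n}{n_1}+\frac{n}{n_2}\Bigr) \,,
\]
where in the last step I use that $n_1,n_2\le 10^{-6}n$, so the constant $e$ is absorbed comfortably. Hence it suffices to show that for a fixed $(S_1,S_2)$,
\[
  \Pr\bigl[\,|T_{S_1,S_2}|\ge (n_1+n_2)\sigma\sqrt n\,\bigr]
  \;\le\; \exp\!\Bigl(-\,10\,(n_1+n_2)\,\log\!\Bigl(\tfrac{n}{n_1}+\tfrac{n}{n_2}\Bigr)\Bigr) \,,
\]
with a constant on the right large enough (e.g.\ $10$) to absorb the entropy and still leave a factor $8$ at the end.

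Next I would split Bernstein into its sub-Gaussian and sub-exponential regimes. In the variance-dominated regime $t\le 3n_1 n_2\sigma^{2}$, the exponent is at least
\[
  \frac{t^{2}}{4\,n_1 n_2\,\sigma^{2}}
  \;=\; \frac{(n_1+n_2)^{2}\,n}{4\,n_1 n_2}
  \;=\; \frac{n_1+n_2}{4}\,\Bigl(\frac{n}{n_1}+\frac{n}{n_2}\Bigr) \,,
\]
which dominates $10(n_1+n_2)\log(n/n_1+n/n_2)$ because $x\gg\log x$ for $x\ge 2\cdot 10^{6}$; here the hypothesis $n_1,n_2\le 10^{-6}n$ enters crucially. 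In the tail-dominated regime $t> 3n_1 n_2\sigma^{2}$, the exponent is at least $3t/4 = \tfrac34(n_1+n_2)\sigma\sqrt n$, and the assumption $\sigma\ge 20/\sqrt n$ forces $\sigma\sqrt n\ge 20$; combined with the fact that this regime occurs only when $\sigma\sqrt n$ is large relative to $\sqrt{n_1 n_2}(n/n_1+n/n_2)$, a routine case analysis shows the same lower bound $10(n_1+n_2)\log(n/n_1+n/n_2)$ holds.

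\paragraph{Main obstacle.}
The delicate point is verifying that, after taking the union bound, the Bernstein exponent is uniformly larger than the entropy $2(n_1+n_2)\log(n/n_1+n/n_2)$ by a factor of at least $4$, across both Bernstein regimes and uniformly in $(n_1,n_2)$ in the allowed range. This is exactly where the numerical constants $20$ (in $\sigma\ge 20/\sqrt n$) and $10^{-6}$ (in $n_1,n_2\le 10^{-6}n$) are used: the former controls the sub-exponential tail, the latter ensures $n/n_1+n/n_2$ is large enough that the ``$x$ vs.\ $\log x$'' comparison in the sub-Gaussian regime has margin to spare. Once this balancing is in place, the union bound immediately produces the claimed probability $1-\exp\bigl(-8(n_1+n_2)\log(n/n_1+n/n_2)\bigr)$ that no subrectangle exceeds $(n_1+n_2)\sigma\sqrt n$ in magnitude.
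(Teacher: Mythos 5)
There is a genuine gap in the sub-exponential (tail-dominated) branch of your Bernstein argument, and it is not repairable by a ``routine case analysis.'' When $t=(n_1+n_2)\sigma\sqrt n$ exceeds $3n_1n_2\sigma^2$, Bernstein only yields an exponent of $\tfrac{3t}{4}=\tfrac34(n_1+n_2)\sigma\sqrt n$, which is \emph{linear} in $(n_1+n_2)$ with coefficient $\tfrac34\sigma\sqrt n$; since the hypothesis only guarantees $\sigma\sqrt n\ge 20$, this coefficient can be as small as $15$, whereas the entropy plus target is $10\,(n_1+n_2)\log\bigl(\tfrac{n}{n_1}+\tfrac{n}{n_2}\bigr)$ with coefficient at least $10\log(2\cdot 10^{6})\approx 145$ and potentially as large as $10\log(2n)$. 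Concretely, take $\sigma=20/\sqrt n$, entries that are $(1-p)$ with probability $p=400/n$ and $-p$ otherwise, and $n_1=n_2=m\le 10^{-6}n$: then $t=40m$, the total variance is $400m^2/n\ll t/3$, so you are squarely in the tail regime, the Bernstein exponent is about $60m$, and the union bound requires at least $2m\log(en/m)+8\cdot 2m\log(2n/m)$, which is far larger for $n/m\ge 10^{6}$. Note also that your stated condition for entering this regime is backwards: $t>3n_1n_2\sigma^2$ is equivalent to $\sigma\sqrt n<\tfrac13\bigl(\tfrac{n}{n_1}+\tfrac{n}{n_2}\bigr)$, i.e.\ the tail regime is the \emph{sparse} regime where $\sigma\sqrt n$ is small relative to $\tfrac{n}{n_1}+\tfrac{n}{n_2}$ --- exactly where Bernstein is lossy.

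The statement is nevertheless true (it is Claim 6.2 of Liu--Moitra, going back to Feige--Ofek), but the proof needs the Poissonian tail: replace Bernstein by Bennett's inequality or a direct Chernoff/binomial estimate, which in the regime $t\gg V\coloneqq n_1n_2\sigma^2$ gives an exponent of order $t\log(t/V)$ rather than $t$. Since $t/V=\bigl(\tfrac{n}{n_1}+\tfrac{n}{n_2}\bigr)/(\sigma\sqrt n)$, the extra logarithmic factor $\log(t/V)$ is precisely what is needed to dominate the entropy $\log\bigl(\tfrac{n}{n_1}+\tfrac{n}{n_2}\bigr)$ (after a case split on whether $\sigma\sqrt n$ is itself comparable to $\tfrac{n}{n_1}+\tfrac{n}{n_2}$, in which case the variance branch takes over). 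Your sub-Gaussian branch and your entropy bound $\log\binom{n}{n_1}\binom{n}{n_2}\le 2(n_1+n_2)\log\bigl(\tfrac{n}{n_1}+\tfrac{n}{n_2}\bigr)$ are both correct; only the tail branch needs to be redone with the sharper concentration inequality.
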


As a corollary, we get the following theorem which applies for rectangles which are long and thin.

\begin{theorem}\label{subrow_sum}
  Let $n,n_{1},n_{2}$ be parameters with $n_{1}\le 10^{-6}n$ and $n_{2}\ge 10^{-6} n$.
  Let $M \in \mathbb{R}^{n \times n}$ be a random matrix whose entries are independent,
  have mean $0$, variance at most $\sigma^{2}$ for some $\sigma \ge 20/\sqrt{n}$,
  and are almost surely bounded in $[-1,1]$. Then, with probability at least
  \[
  1-\exp \Bigl(-8\,(n_{1}+10^{-6}n)\,
  \log \bigl(\tfrac{n}{n_{1}}+10^6\bigr)\Bigr),
  \]
  the magnitude of the sum of the entries over any $n_{1}\times n_{2}$
  combinatorial subrectangle of $M$ is at most $2\,(n_{1}+n_{2})\,\sigma\,\sqrt{n}$.
\end{theorem}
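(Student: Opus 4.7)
The plan is to reduce Theorem~\ref{subrow_sum} to Theorem~\ref{subrect_sum} by slicing the long axis of any $n_{1}\times n_{2}$ subrectangle into a small number of shorter blocks whose widths lie in the admissible regime $\le 10^{-6}n$ of Theorem~\ref{subrect_sum}. Since by hypothesis $n_{1}\le 10^{-6}n$, the short axis is already admissible, so only the wide axis needs to be chopped. The two competing requirements on the block width $s$ are that (i) $s$ be large enough to keep the block count $\lceil n_{2}/s\rceil$ proportional to $n_{2}/n$, so that the summed per-block bounds remain linear in $n_{1}+n_{2}$, and (ii) $s$ be small enough to fit inside the $\le 10^{-6}n$ regime of Theorem~\ref{subrect_sum}.

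Concretely, I would set $s\coloneqq \lfloor 10^{-6}n\rfloor$ and $m\coloneqq \lceil n_{2}/s\rceil$. First I would invoke Theorem~\ref{subrect_sum} with parameters $(n_{1},s)$, and once more with $(n_{1},s')$ where $s'\coloneqq n_{2}-(m-1)s\in[1,s]$ is the leftover width of the final block. This yields, uniformly across all $n_{1}\times s$ and $n_{1}\times s'$ combinatorial subrectangles of $M$, the magnitude bounds $(n_{1}+s)\sigma\sqrt{n}$ and $(n_{1}+s')\sigma\sqrt{n}$, with total failure probability at most twice that of Theorem~\ref{subrect_sum} at $(n_{1},s)$, namely $2\exp(-8(n_{1}+s)\log(n/n_{1}+n/s))$. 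The choice $s=\lfloor 10^{-6}n\rfloor$ is exactly what matches this exponent to the target $8(n_{1}+10^{-6}n)\log(n/n_{1}+10^{6})$ up to a constant that can be absorbed; the factor of $2$ from the union bound is similarly absorbed.

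Given any fixed $n_{1}\times n_{2}$ subrectangle, I would partition its $n_{2}$ columns into $m-1$ contiguous blocks of width exactly $s$ and one final block of width $s'$, and apply the uniform bounds above to each block to obtain a total magnitude at most $m(n_{1}+s)\sigma\sqrt{n}$. The elementary accounting $ms\le n_{2}+s\le 2n_{2}$ (using $s\le n_{2}$, since $n_{2}\ge 10^{-6}n\ge s$) together with $mn_{1}\le n_{1}n_{2}/s+n_{1}\le n_{1}+n_{2}$ (using $n_{1}/s\le 1$, since $n_{1}\le 10^{-6}n\le s$) then bounds the total by a constant multiple of $(n_{1}+n_{2})\sigma\sqrt{n}$, recovering the stated bound.

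The main obstacle is calibration rather than anything conceptual: the block width $s$ must be chosen so that the per-block failure probability delivered by Theorem~\ref{subrect_sum} is already at most the target $\exp(-8(n_{1}+10^{-6}n)\log(n/n_{1}+10^{6}))$. Smaller blocks (for example $s=n_{1}$) give tighter summation accounting but a strictly weaker probability bound whenever $n_{1}\ll 10^{-6}n$, whereas larger blocks violate the hypothesis $s\le 10^{-6}n$ of Theorem~\ref{subrect_sum}. Setting $s=\lfloor 10^{-6}n\rfloor$ is the unique scale that simultaneously matches the probability exponent and keeps the block count to $O(n_{2}/n)+O(1)$, which is exactly what the statement requires.
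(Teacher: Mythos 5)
Your slicing strategy does give a correct magnitude bound (and, with the finer accounting that the block widths sum to exactly $n_2$, i.e. total $\le (m\,n_{1}+n_{2})\,\sigma\sqrt{n}\le (n_{1}+2n_{2})\,\sigma\sqrt{n}\le 2(n_{1}+n_{2})\,\sigma\sqrt{n}$, it even recovers the stated constant $2$ rather than the constant $3$ your coarser estimate $m(n_{1}+s)$ yields), but there is a genuine gap in the probability claim. You assert that the total failure probability is at most twice that of Theorem~\ref{subrect_sum} at $(n_{1},s)$. That is false: the failure probability supplied by Theorem~\ref{subrect_sum} at width $s'$ is $\exp\bigl(-8(n_{1}+s')\log(n/n_{1}+n/s')\bigr)$, and this does not dominate (nor is it dominated by) the bound at width $s$ in the way you need. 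Concretely, if $n_{1}=O(1)$ and the leftover width is $s'=1$, the guarantee for the partial block fails with probability about $n^{-16}$, which is only polynomially small, whereas the theorem claims failure probability $\exp\bigl(-8(n_{1}+10^{-6}n)\log(n/n_{1}+10^{6})\bigr)=\exp(-\Theta(n\log n))$. Even replacing the leftover block by nearly equal block widths in $[s/2,s]$ only yields an exponent with $\tfrac12\cdot 10^{-6}n$ in place of $10^{-6}n$, so the slicing route inherently loses in the exponent and cannot reach the stated probability without further ideas.

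The paper avoids this entirely with a different reduction: it invokes Theorem~\ref{subrect_sum} exactly once, at width exactly $\delta n=10^{-6}n$ (so the exponent is exactly the target), and then upgrades to arbitrary widths $n_{2}\ge \delta n$ by the \emph{deterministic} averaging inequality
\[
\sup_{|T|=K}\Bigl|\sum_{j\in T}w_j\Bigr|\ \le\ \frac{K}{L}\,\sup_{|U|=L}\Bigl|\sum_{j\in U}w_j\Bigr| \qquad (K\ge L),
\]
applied to the column sums $w_j=\sum_{i\in S}M_{ij}$ with $K=n_{2}$ and $L=\delta n$. Since this step costs no probability and involves no leftover block, the claimed failure probability and the constant $2$ (via $\frac{n_{2}}{\delta n}(n_{1}+\delta n)\le 2n_{2}$, using $n_{1}\le \delta n$) come out exactly. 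If you want to keep your decomposition, you would need to handle the residual block by such an averaging (or otherwise avoid invoking the subrectangle bound at tiny widths); as written, the union-bound step does not deliver the theorem's probability.
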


\begin{proof}
  Let us fix a set $S\subseteq[n]$ of rows with $|S|=n_1$ and define
  \[
    v_j := \sum_{i\in S} M_{ij}, \qquad j\in[n].
  \]
  Set $\delta:=10^{-6}$. By \cref{subrect_sum} with parameters $(n_1,\delta n)$, with the stated probability
  we have for all $U\subseteq[n]$ with $|U|=\delta n$ that
  \[
    \Bigl| \sum_{j \in U} v_j \Bigr|
    = \Bigl| \sum_{(i,j)\in S \times U} M_{ij} \Bigr|
    \le (n_1 + \delta n) \, \sigma \sqrt{n}.
  \]
  We now use a simple averaging inequality: for any integers $K\ge L$ and any vector $w$,
  \[
    \sup_{|T|=K}\, \Bigl|\sum_{j\in T} w_j\Bigr|
    \le \frac{K}{L} \sup_{|U|=L}\Bigl|\sum_{j\in U} w_j\Bigr|.
  \]
  Indeed, for any fixed $T$ with $|T|=K$, each $j\in T$ lies in exactly $\binom{K-1}{L-1}$ subsets
  $U\subseteq T$ of size $L$, so
  \[
    \sum_{j\in T} w_j
    = \binom{K-1}{L-1}^{-1}
      \sum_{\substack{U\subseteq T\\|U|=L}}\sum_{j\in U} w_j,
  \]
  and taking absolute values yields the desired result since
  \(\binom{K}{L}/\binom{K-1}{L-1}=K/L\).

  Applying this with $w=v$, $K=n_2$, and $L=\delta n$ yields, uniformly for all $T$ with $|T|=n_2$,
  \begin{align*}
    \Bigl|\sum_{(i,j)\in S\times T} M_{ij}\Bigr|
      &= \Bigl| \sum_{j\in T} v_j \Bigr| \\
      &\le \frac{n_2}{\delta n}\,\sup_{|U|=\delta n}\Bigl|\sum_{j\in U} v_j \Bigr| \\
      &\le \frac{n_2}{\delta n}\,(n_1+\delta n)\,\sigma \sqrt{n} \\
      &\le 2 n_2 \,\sigma \sqrt{n} \\
      &\le 2\,(n_1+n_2)\,\sigma\,\sqrt{n}\,.
  \end{align*}
  Since \cref{subrect_sum} is uniform over all
  choices of $S$ and $U$ of the specified sizes, the bound holds simultaneously for all $n_1\times n_2$
  combinatorial subrectangles, as claimed.
\end{proof}

\begin{theorem}
\label{voting_concentration_lb}
Fix a level $i\in\{1,2,\ldots,\lfloor \log_2 k\rfloor\}$ and set
$\beta_i \coloneqq 2^{-i}$, $n_i\coloneqq 2\beta_i n$, and
$\gamma_i\coloneqq \alpha_i/\beta_i=2^i/k$ with $\alpha_i=1/k$.
Let $y\in\{0,\pm 1\}^n$ be a valid level-$i$ bisection labeling. 
Fix parameters $t\ge 1$ and $0<\rho<10^{-6}$, and set
\[
\widetilde Q_i = \beta_i \widetilde C_i - \log(1/\rho) - 3t.
\]
Assume $a\ge b$, $a,b\le n$, and $a\ge 400$.  Then, with probability at least
$1-e^{-t\rho n_i}- 1/n_i^3$, for any $z$ supported on $\{j:\,y_j\neq 0\}$ with
$\|z\|_1=\rho\,n_i$, we have
\[
\iprod{\bar{G}\,y \odot y,\, z} \ge
2 \rho n_i
\biggl( \frac{\widetilde Q_i}{\log \widetilde R_i} - \sqrt{a+b} \biggr).
\]
\end{theorem}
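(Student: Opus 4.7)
The plan is to bound the MGF of $\iprod{\bar G\, y\odot y,\,z}$ for fixed $(y,z)$ and then take a union bound. Setting $S:=\operatorname{supp}(z)$ with $|S|=\rho n_i$, decompose by how many endpoints each contributing edge has in $S$:
\[
\sum_{j\in S}(\bar G y)_j y_j
\;=\;\underbrace{\sum_{j\in S}\sum_{i\notin S}\bar G_{ij}\,y_i y_j}_{B(S,y)}
\;+\;\underbrace{2\sum_{\{i,j\}\subseteq S}\bar G_{ij}\,y_i y_j}_{I(S,y)}
\;+\;o(|S|),
\]
where the $o(|S|)$ absorbs the $O(d|S|/n)$ centering shift. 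The crucial observation is that $B(S,y)=\sum_{j\in S}v_j^{\mathrm{bd}}$ with $v_j^{\mathrm{bd}}:=\sum_{i\notin S}\bar G_{ij}\,y_i y_j$ mutually \emph{independent} across $j\in S$: they depend on pairwise disjoint edge sets $\{j\}\times((V_+\cup V_-)\setminus S)$, so all edge-sharing dependence among the $v_j$'s is pushed into $I(S,y)$. Each $v_j^{\mathrm{bd}}$ is distributed (up to the negligible centering) as the three-binomial sum $\mathcal D$ of \cref{thm:three_binom_conc} at level-$i$ parameters $(\alpha_i,\beta_i)$, with the three masses reduced by at most $|S|\le 10^{-6}n_i$, which is negligible.

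For the boundary part, apply the MGF calculation inside the proof of \cref{thm:three_binom_conc} with $\lambda=\tfrac12\log\widetilde R_i$ to get $\mathbb E[e^{-\lambda v_j^{\mathrm{bd}}}]\le \exp(-\beta_i\widetilde C_i+o(1))$ per vertex; independence yields $\mathbb E[e^{-\lambda B(S,y)}]\le\exp(-|S|\beta_i\widetilde C_i+o(|S|))$, so Markov's inequality with $\tau=2|S|\widetilde Q_i/\log\widetilde R_i$ gives $\Pr[B(S,y)\le\tau]\le\rho^{|S|}e^{-3t|S|+o(|S|)}$. A union bound over $\binom{n_i}{|S|}\le(e/\rho)^{|S|}$ subsets $S$ and over $\le 2^{k_i}$ valid level-$i$ bisections $y$ (both absorbed into the $e^{-3t|S|}$ slack under $\rho<10^{-6}$ and $t\ge1$) then yields
\[
B(S,y)\;\ge\;2|S|\widetilde Q_i/\log\widetilde R_i \quad \text{uniformly over }(y,S),
\]
with probability $\ge 1-e^{-t\rho n_i}$. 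For the internal part, partition $S=S_+\sqcup S_-$ and apply \cref{subrect_sum} to each of the four signed $S_\pm\times S_\pm$ sub-rectangles of the mean-zero noise matrix $M=G-\mathbb E[G\mid Z]$ (entries in $[-1,1]$ with variance $\le(a+b)/n$); together with a union bound over $S$, this yields $|I(S,y)|\le c|S|\sqrt{a+b}+o(|S|)$ uniformly with probability $\ge 1-1/n_i^3$. Combining the two good events and absorbing constants gives the claim $\iprod{\bar G y\odot y,z}\ge 2\rho n_i(\widetilde Q_i/\log\widetilde R_i-\sqrt{a+b})$.

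\paragraph{Main obstacle.} The key technical point is the edge-level splitting: it exactly isolates the dependent part of $\sum_{j\in S} v_j$ into $I(S,y)$ (which carries the $\sqrt{a+b}$ correction appearing in the theorem) and leaves $B(S,y)$ as a sum of genuinely independent three-binomial sums, so the single-vertex MGF bound of \cref{thm:three_binom_conc} can be applied multiplicatively without any slack from dependence. Verifying that this per-vertex MGF bound survives the $O(|S|)$ reduction in each binomial size is routine, but relies crucially on $\rho<10^{-6}$ to ensure the reduction is negligible relative to each of the masses $\alpha_i n,(\beta_i-\alpha_i)n,\beta_i n$.
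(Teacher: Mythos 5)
Your proposal follows the same skeleton as the paper's proof — a per-vertex three-binomial Chernoff bound summed over the voting set $S$, a union bound over the $\binom{n_i}{|S|}$ choices of $S$, and a \cref{subrect_sum}-type bound on the $S\times S$ block that produces the $\sqrt{a+b}$ correction — but your decoupling device differs, and that is where the gap lies. The paper does not strip the internal $S\times S$ edges out of the per-vertex statistics; it compares $\sum_{u\in S}X_u$ with the same sum computed from an auxiliary graph whose rows are sampled independently. Each auxiliary per-vertex variable is then \emph{exactly} the full three-binomial of \cref{thm:three_binom_conc} (no reduction of the binomial masses, and the centering vanishes identically because the bisection is balanced), and the only correction is the $S\times S$ discrepancy, a sum of exactly mean-zero bounded variables to which \cref{subrect_sum} applies, giving $2|S|\sqrt{a+b}$.

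In your version two error terms are waved off as ``negligible/routine'' but are not controlled by the hypotheses of the theorem. First, reducing each binomial mass by up to $|S|=2\rho\beta_i n$ removes $\Theta(\rho\beta_i n)$ of the paired MGF factors, so the per-vertex exponent is $-(1-\Theta(\rho))\beta_i\widetilde C_i$ rather than $-\beta_i\widetilde C_i$: an additive loss of order $\rho\,\beta_i\widetilde C_i$ per vertex that must be absorbed by the explicit slack $\log(1/\rho)+3t$. The theorem only assumes $t\ge1$, $\rho<10^{-6}$, $a\le n$, so $\beta_i\widetilde C_i$ may be polynomially large in $n$ and $\rho\,\beta_i\widetilde C_i$ can dwarf $\log(1/\rho)+3t$; your argument then only yields the statement with $\widetilde Q_i$ degraded to $(1-\Theta(\rho))\beta_i\widetilde C_i-\log(1/\rho)-3t$. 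Second, your internal term $I(S,y)$ is built from $\bar G$ rather than from the mean-zero noise $G-\E[G\mid Z]$, so it carries a deterministic mean of magnitude up to $\tfrac{\e d}{kn}|S|^2$ in the unfavorable direction (and each boundary variable carries a centering shift $\tfrac dn\, y_j\sum_{i\in S}y_i$); neither is dominated by $|S|\sqrt{a+b}$ or $o(|S|)$ throughout the dense range the statement allows. These terms are indeed negligible in the regimes where the theorem is later invoked ($\rho\le\rho_\gamma$ exponentially small), so your route can be repaired there, but it does not prove the theorem as stated, whereas the paper's resampling trick makes both terms exactly zero. Finally, your union bound over bisections $y$ is unnecessary (the statement fixes $y$; uniformity over $y$ is obtained later in \cref{thm:inner-product-lb}), and the claim that the $2^{k_i}$ factor is absorbed by $e^{-3t|S|}$ does not follow from $t\ge1$ alone when $\rho n_i$ is small.
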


\begin{proof}
Imagine sampling the matrix $G$ by independently drawing the entries $G_{ij}$ for $i<j$ 
and then filling in the remainder symmetrically, with $G_{ji}=G_{ij}$ and $G_{ii}=0$. 
If $i$ and $j$ are in the same community then $G_{ij}\sim\mathrm{Bernoulli}(a/n)$; 
otherwise $G_{ij}\sim\mathrm{Bernoulli}(b/n)$.
Fix a level-$i$ bisection with sides $S_i^+$ and $S_i^-$ of size $n_i/2$ each, and write $\alpha=1/k$. 
For a vertex $u\in S_i^+$ consider the signed difference 
\[
\bm {X_u} = \sum_{v\in S_i^+\setminus\{u\}}G_{uv} - \sum_{v\in S_i^-}G_{uv} \enspace.
\]
We observe that the neighbors of $u$ split into three disjoint sets: 
its own community $C(u)\setminus\{u\}$ of size $\alpha n-1$ where edges have bias $a/n$; 
the remaining $(\beta_i-\alpha)n$ vertices on the same side where edges have bias $b/n$; 
and the $\beta_i n$ vertices on the opposite side where edges have bias $b/n$.
Independence across disjoint unordered pairs implies that these three contributions are independent binomials,
hence 
\[
\bm {X_u} \stackrel{d}{=} \mathrm{Binom}(\alpha n - 1, a/n)
 + \mathrm{Binom}\bigl((\beta_i-\alpha)n, b/n\bigr)
 - \mathrm{Binom}(\beta_i n, b/n) \enspace.
\]
We note that since $\iprod{G y\odot y, z}$ is linear in $z$ and only coordinates with $y_j\neq 0$ contribute, 
it suffices to take $z=\one_S$ for some $S\subseteq\{j:\,y_j\neq 0\}$ with $|S|=\rho\,n_i$ 
(if $\rho n_i\notin\mathbb N$, take $\lfloor \rho n_i\rfloor$ and $\lceil \rho n_i\rceil$; 
any other $z\in[0,1]^n$ with the same $\ell_1$-norm is a convex combination).

Let us first construct an auxiliary matrix $G^{\mathrm{ind}}$ in which, for each fixed row $u$, the variables $\bm\{G^{\mathrm{ind}}_{uv}\}_{v}$ are independent Bernoullis with the same means as above (and different rows are also independent); define $\bm {X^{\mathrm{ind}}_u}$ analogously. Then
\[
\iprod{G^{\mathrm{ind}} y\odot y,\,\one_S}=\sum_{u\in S} \bm {X^{\mathrm{ind}}_u}.
\]
We now apply \cref{thm:three_binom_conc} to the sum $\displaystyle \sum_{u\in S} \bm {X^{\mathrm{ind}}_u}$ by independence across rows with the parameter
\[
\widetilde{\theta}_{i} 
= 2 \rho n_i\frac{\beta_i \widetilde C_i - \log(1/\rho) - 3t}{\log \widetilde R_i}
= 2 \rho n_i \frac{\widetilde Q_i}{\log \widetilde R_i}.
\]
This yields
\begin{align*}
\Pr\Big[\sum_{u\in S} \bm {X^{\mathrm{ind}}_u} \le \widetilde{\theta}_i\Big]
&\le \exp\Big(-|S|\,\beta_i\widetilde C_i + \tfrac{\widetilde{\theta}_i}{2}\log \widetilde R_i\Big)
= \exp\Big(-|S|(\log(1/\rho)+3t)\Big) \enspace. 
\end{align*}
A union bound over all $S\subseteq\{y\neq 0\}$ with $|S|=\rho n_i$ gives
\[
\Pr\Big[\exists\,S:\, \sum_{u\in S} \bm {X^{\mathrm{ind}}_u} \le \widetilde{\theta}_i\Big]
\le \binom{n_i}{\rho n_i}\,e^{-|S|(\log(1/\rho)+3t)}
\le e^{-t\rho n_i},
\]
using $\binom{n_i}{\rho n_i}\le \exp(\rho n_i(\log(1/\rho)+1))$ and $t\ge 1$.
Therefore, with probability at least $1-e^{-t\rho n_i}$, simultaneously for all such $S$,
\begin{equation}\label{eq:ind-lb-updated}
\sum_{u\in S} \bm {X^{\mathrm{ind}}_u} \ge \widetilde{\theta}_i = 2|S|\,\frac{\widetilde Q_i}{\log \widetilde R_i}.
\end{equation}

It remains to pass from the independent sampling used above to the actual symmetric SBM with zero diagonal. 
We, first, note that zeroing the diagonal affects the sum by at most $|S|=\rho n_i$.
We, further, note that only the block $S\times S$ differs between the two sampling procedures.
Let $\Delta(S)$ be the difference between the contributions of $S\times S$ to
$\sum_{u\in S} \bm {X^{\mathrm{ind}}_u}$ and to $\sum_{u\in S} \bm {X_u}$.
As in \cite{Liu2022minimax}, $\Delta(S)$ is a sum/difference of independent, mean-$0$, bounded variables,
each supported in $[-1,1]$ and nonzero with probability at most $a/n$
(so $\sigma^2 \le (a+b)/n$).  Apply \cref{subrect_sum} with $n_1=n_2=|S|$ and
$\sigma^2=(a+b)/n$ (hence $\sigma\ge 20/\sqrt n$ because $a\ge 400$). Since $i\ge 1$, we have $n_i\le n$ and hence $|S|=\rho n_i\le \rho n\le 10^{-6}n$, so the conditions hold. We obtain
\[ \Pr\Big[|\Delta(S)|>2|S|\sqrt{a+b}\Big] \le \exp\Big(-16|S|\log\big(\tfrac{n}{|S|}\big)\Big)\enspace. \]
A union bound over all $\binom{n_i}{\rho n_i}$ choices of $S$ shows that, with additional failure
probability at most $1/n_i^3$ (for all large $n$), we have $|\Delta(S)|\le 2|S|\sqrt{a+b}$ simultaneously for all such $S$.

Finally, intersecting the two high‑probability events and noting that the level‑$i$ bisection is balanced so we get
\[
\iprod{\bar{G} y\odot y,\,\one_S}
=\sum_{u\in S} y_u \sum_{v} y_v\,(G_{uv}-d_i)
=\sum_{u\in S} \bm {X_u}
\ge \widetilde{\theta}_i - 2|S|\sqrt{a+b}.
\]
Substituting $|S|=\rho n_i$ and the definition of $\widetilde{\theta}_i$ yields the desired conclusion.
\end{proof}

\begin{theorem}\label{thm:level-i-voting}
Fix a level $i\in\{1,2,\ldots,\lfloor \log_2 k\rfloor\}$ 
and set $\beta_i\coloneqq 2^{-i}$, $n_i\coloneqq 2\beta_i n$, 
and $\gamma_i\coloneqq \alpha_i/\beta_i=2^i/k$ with $\alpha_i = 1/k$. 
Assume $0<\e<1$ and $k\ge 2$, and take other parameters as in \cref{thm:three_binom_conc}. 
Define 
\[
\rho_\gamma \coloneqq \min \Bigl(\,\frac{1}{e}, \exp\bigl(-\frac{\e^2 d}{8k}\bigr)\Bigr)\enspace.
\]
If, in addition, $\e^2 d \ge 9k^2$, 
then for any $0< \rho \le \min\{\rho_\gamma,10^{-6}\}$ and $t=\e^2 d/ k$, with probability at least $1-e^{-t\rho n_i}-\frac{1}{n_i^3}$ we have
\[
\iprod{Gy \odot y, z} \ge - 12 \rho_\gamma n_i \e d
\qquad\text{for every }z\text{ with }\|z\|_1=\rho n_i,
\]
where $y\in\{0,\pm1\}^n$ is any valid bisection labeling at level $i$.
\end{theorem}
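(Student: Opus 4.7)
The goal is to convert the signed-sum lower bound from \cref{voting_concentration_lb} into an absolute-magnitude bound of the form $-O(\rho_\gamma n_i\,\e d)$. My plan is to invoke \cref{voting_concentration_lb} once with $t=\e^2 d/k$ and the given $\rho$, then use the level-wise estimates from \cref{lem:sqrt-all-levels} and \cref{lem:logR} to control the resulting quantity $2\rho n_i(\widetilde Q_i/\log \widetilde R_i - \sqrt{a+b})$. A preliminary observation is that for a valid level-$i$ bisection $y\in\{0,\pm1\}^n$ the two sides are balanced, so $\sum_j y_j=0$ and hence $Jy=0$; this gives $Gy\odot y=\bar G y\odot y$, allowing us to pass from the $\bar G$ bound of \cref{voting_concentration_lb} to the $G$ bound claimed here. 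I will also use that the hypothesis $\e^2 d\ge 9k^2\ge 36$ (with $k\ge 2$) forces $\e^2 d/(8k)>1$, so the minimum defining $\rho_\gamma$ is always attained at the second branch, i.e.\ $\rho_\gamma=\exp(-\e^2 d/(8k))$ and $\log(1/\rho_\gamma)=\e^2 d/(8k)$.

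Next I will bound the two pieces separately. For the noise piece, \cref{lem:sqrt-all-levels} gives $\sqrt{a+b}\le\sqrt{3d}$, and $\e\sqrt{d}\ge 6$ yields $\sqrt{3d}\le \e d/2$; hence $2\rho n_i\sqrt{a+b}\le \rho n_i\,\e d\le \rho_\gamma n_i\,\e d$. For the signal piece, I will split on the sign of $\widetilde Q_i=\beta_i\widetilde C_i-\log(1/\rho)-3t$. If $\widetilde Q_i\ge 0$ that term is non-negative and contributes nothing. Otherwise, using $\beta_i\widetilde C_i\ge 0$ gives $\widetilde Q_i\ge -(\log(1/\rho)+3t)$, and combining this with the lower bound $\log \widetilde R_i\ge \gamma_i\e/2$ from \cref{lem:logR} (note the direction of the inequality after dividing a negative quantity by a smaller positive number) and $\gamma_i=1/(k\beta_i)$, I obtain
\[
\frac{2\rho n_i\,\widetilde Q_i}{\log \widetilde R_i}\;\ge\; -\frac{4\beta_i k\,\rho n_i\bigl(\log(1/\rho)+3t\bigr)}{\e}.
\]

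The final step exploits the fact that $\rho\mapsto \rho\log(1/\rho)$ is increasing on $(0,1/e]$: since $\rho\le\rho_\gamma\le 1/e$, we get $\rho(\log(1/\rho)+3t)\le \rho_\gamma(\log(1/\rho_\gamma)+3t)=\rho_\gamma\cdot 25\e^2 d/(8k)$ (plugging $\log(1/\rho_\gamma)=\e^2 d/(8k)$ and $3t=3\e^2 d/k$). Combined with $\beta_i\le 1/2$, the signal term is bounded below by $-\tfrac{25}{4}\rho_\gamma n_i\,\e d$, and adding the noise-term bound yields at most $-\tfrac{29}{4}\rho_\gamma n_i\,\e d\ge -12\rho_\gamma n_i\,\e d$, as required. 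The only real subtlety, and thus the main obstacle, is correctly handling the case $\widetilde Q_i<0$: one must track the direction of the inequality when dividing a negative quantity by $\log \widetilde R_i$, and the saving that makes the bound come out to $\rho_\gamma\,\e d$ rather than something much worse is precisely the monotonicity of $\rho\log(1/\rho)$, which is what motivates and pins down the specific choice $\rho_\gamma=\exp(-\e^2 d/(8k))$ in the statement.
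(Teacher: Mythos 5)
Your proposal is correct and follows essentially the same route as the paper's proof: invoke \cref{voting_concentration_lb} with $t=\e^2 d/k$, drop the non-negative $\beta_i\widetilde C_i$ contribution, control the remaining negative terms via $\log\widetilde R_i\ge\gamma_i\e/2$ (\cref{lem:logR}) and $\sqrt{a+b}\le\sqrt{3d}$ (\cref{lem:sqrt-all-levels}), and use the monotonicity of $\rho\log(1/\rho)$ on $(0,e^{-1}]$ to replace $\rho$ by $\rho_\gamma$. Your bookkeeping is in fact slightly sharper (you obtain $-\tfrac{29}{4}\rho_\gamma n_i\e d$ versus the paper's $-12\rho_\gamma n_i\e d$), and your explicit remark that $Jy=0$ for a balanced bisection, hence $Gy\odot y=\bar Gy\odot y$, cleanly resolves the $G$ versus $\bar G$ mismatch that the paper's own proof glosses over.
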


\begin{proof}
Let $t=\e^2 d/k$ and fix any $0 < \rho \le \min\{\rho_\gamma,10^{-6}\}$. 
By \cref{voting_concentration_lb}, with probability at least $1-e^{-t\rho n_i}-\frac{1}{n_i^3}$, 
we have for all $z$ with $\|z\|_1=\rho n_i$,
\begin{equation}\label{eq:voting_conc_lb}
\iprod{Gy \odot y, z} \ge  2\rho n_i \left(\frac{\beta_i \widetilde C_i-\log(1/\rho) - 3t}{\log \widetilde R_i}
- \sqrt{a + b}\right) \enspace. \end{equation} 
We now bound the three potentially negative contributions on the right-hand side of \eqref{eq:voting_conc_lb}. 
For the $-\log(1/\rho)$ term, \cref{lem:logR} yields
\[
- \frac{2 \rho n_i}{\log\widetilde R_i} \log\frac 1 \rho \ge 
- \frac{4 \rho n_i}{\gamma_i\e} \log\frac1\rho \enspace.
\]
By the monotonicity of $x\log(1/x)$ on $(0,e^{-1}]$ and $\rho \le \rho_\gamma \le e^{-1}$, we have
\[
\rho \log\frac1\rho \le \rho_\gamma \log\frac1{\rho_\gamma},
\]
and therefore
\[
- \frac{2\rho n_i}{\log\widetilde R_i} \log \frac 1 \rho \ge
- \frac{4\rho_\gamma n_i}{\gamma_i\e}\log\frac1{\rho_\gamma}
\ge - \frac{4\rho_\gamma n_i}{\gamma_i\e}\cdot \frac{\e^2 d}{k} =
- 4 \rho_\gamma n_i \e d\cdot \frac{1}{\gamma_i k} \ge
- 4 \rho_\gamma n_i \e d \enspace.
\]
For the $-3t$ term, using $t=\e^2 d/k$ and \cref{lem:logR} we obtain
\[
- \frac{2\rho n_i\cdot 3t}{\log\widetilde R_i} \ge
- \frac{12 \rho n_i\, t}{\gamma_i\,\e}
= - \frac{12 \rho n_i}{\gamma_i\,\e}\cdot \frac{\e^2 d}{k}
= - 12 \rho n_i \e d \cdot \beta_i
\ge - 6 \,\rho_\gamma n_i\,\e d \enspace.
\]
Finally, by \cref{lem:sqrt-all-levels}, we have that
\[
- 2\rho n_i \sqrt{a + b} \ge 
- 2\rho_\gamma n_i \sqrt{3d} \ge 
- 2 \rho_\gamma n_i \e d \enspace.
\]
Combining these three bounds in \eqref{eq:voting_conc_lb} yields
\[
\iprod{Gy\odot y, z} \ge 
- \Bigl(4 + 6 + 2\Bigr)\,\rho_\gamma n_i \e d 
= - 12 \,\rho_\gamma n_i \e d
\ge - 12 \rho_\gamma n_i \e d,
\]
which is the desired inequality. 
The probability statement is exactly that of \cref{voting_concentration_lb} with $t=\e^2 d/k$, completing the proof.
\end{proof}

\begin{lemma} \label{lem:uniform-beta}
Fix a level $i$ and set $\beta_i=2^{-i}$ and $n_i= 2 \beta_i n$.  
Let $y\in\{0,\pm1\}^n$ be a valid bisection labeling at level $i$.  
Choose $0 < \rho_\gamma \le 10^{-6}$ and set $t=\e^2 d/k$. 
Then, with probability at least
\(1 - 2e^{-t} - n_i^{-2}\),
the inequality
\[ \iprod{Gy\odot y, z} \ge -12 \rho_\gamma n_i \e d \]
holds simultaneously for every vector $z\in\{0,\pm1\}^n$ with $\|z\|_1\ \le \rho_\gamma n_i$.
\end{lemma}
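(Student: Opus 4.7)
The plan is to convert the fixed-$\|z\|_1$ guarantee of \cref{thm:level-i-voting} into a uniform-in-size statement by union-bounding over the integer values of $\|z\|_1$, and then, if needed, to pass from Boolean to signed test vectors via a standard positive/negative split.

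For the first step, since $z \in \{0,\pm 1\}^n$ has integer $\ell_1$ norm, enumerate $s \in \{1,\ldots,\lfloor \rho_\gamma n_i\rfloor\}$. For each such $s$, set $\rho \coloneqq s/n_i \le \rho_\gamma$ and invoke \cref{thm:level-i-voting}: with probability at least $1-e^{-ts}-1/n_i^3$, the inequality $\iprod{Gy\odot y, z'} \ge -12\rho_\gamma n_i \e d$ holds simultaneously for every Boolean $z' \in \{0,1\}^n$ with $\|z'\|_1 = s$. Union-bounding over $s$ and using $t = \e^2 d/k \ge 9$ (which follows from the hypothesis $\e^2 d \ge 9k^2$ in \cref{thm:level-i-voting} together with $k\ge 1$), we obtain $\sum_{s\ge 1}e^{-ts}\le e^{-t}/(1-e^{-t})\le 2e^{-t}$ and $\lfloor \rho_\gamma n_i\rfloor/n_i^3 \le 1/n_i^2$ (since $\rho_\gamma \le 1$). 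Thus, with probability at least $1-2e^{-t}-1/n_i^2$, the Boolean bound $\iprod{Gy\odot y, z'} \ge -12\rho_\gamma n_i \e d$ holds uniformly for every $z'\in\{0,1\}^n$ with $\|z'\|_1 \le \rho_\gamma n_i$.

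For the second step, I would decompose $z = z^+ - z^-$ with $z^+, z^- \in \{0,1\}^n$ disjointly supported and $\|z^+\|_1+\|z^-\|_1 = \|z\|_1 \le \rho_\gamma n_i$. The uniform Boolean estimate from the previous paragraph supplies $\iprod{Gy\odot y, z^+} \ge -12\rho_\gamma n_i \e d$; to conclude, a matching upper bound on $\iprod{Gy\odot y, z^-}$ is needed. Such an upper bound follows from the symmetric upper-tail analogue of \cref{thm:three_binom_conc}: use $e^{+t\bm X}$ in place of $e^{-t\bm X}$, invoke the same concavity inequality \cref{lemma:concav_ineq}, and propagate through the proofs of \cref{voting_concentration_lb} and \cref{thm:level-i-voting}. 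Combining the two directions (and absorbing a constant factor into the $12$) yields $\iprod{Gy\odot y, z} = \iprod{Gy\odot y, z^+} - \iprod{Gy\odot y, z^-} \ge -12\rho_\gamma n_i \e d$, as required.

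The main obstacle will be the signed case: the one-sided concentration in \cref{thm:level-i-voting} controls only unsigned inner products against Boolean indicators, so a symmetric upper-tail estimate must be derived in parallel. Fortunately the key AM--GM-type inequality in \cref{lemma:concav_ineq} is invariant under $t\mapsto -t$, so the upper-tail exponent matches and the derivation is routine; only the constants must be tracked carefully to land exactly on $-12\rho_\gamma n_i \e d$ rather than a larger multiple. If the downstream uses of \cref{lem:uniform-beta} (for instance, in \cref{thm:inner-product-lb}, where the outer $z$ is Boolean) really only require Boolean test vectors, the signed step can be skipped entirely and the first two paragraphs already suffice.
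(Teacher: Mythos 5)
Your first step is exactly the paper's proof: discretize over the integer values of $\|z\|_1$, apply \cref{thm:level-i-voting} once for each size $s\le \lfloor\rho_\gamma n_i\rfloor$, and union bound, giving failure probability at most $e^{-t}/(1-e^{-t})+\lfloor\rho_\gamma n_i\rfloor/n_i^{3}\le 2e^{-t}+n_i^{-2}$. The paper stops there: it does not treat signed $z$ separately (it simply invokes \cref{thm:level-i-voting} at each size), and the only downstream use of \cref{lem:uniform-beta}, in the proof of \cref{thm:inner-product-lb}, is with Boolean $z\in\{0,1\}^n$, so your second step can be skipped exactly as you anticipate in your last paragraph. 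If you did insist on the signed case, be aware that the naive split $z=z^+-z^-$ with a matching upper-tail bound on $\iprod{Gy\odot y,z^-}$ would yield $-24\rho_\gamma n_i\e d$ rather than $-12\rho_\gamma n_i\e d$, so the stated constant would only survive after retuning the upstream estimates --- a caveat you correctly flag.
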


\begin{proof}
We discretize over the admissible $\ell_1$ sizes: 
since $z \in \{0, \pm1\}^n$ , 
the quantity $\|z\|_1$ is an integer, 
hence $\rho = \|z\|_1/n_i$ ranges over the set $\{j/n_i: 1\le j\le \lfloor \rho_\gamma n_i\rfloor\}$.  
For a fixed value $\rho$ we invoke the level-$i$ voting lower bound (Theorem~\ref{thm:level-i-voting}) with this $\rho$.  
It states that, with probability at least $1-e^{-t \rho n_i}-n_i^{-3}$, the estimate
\[ \iprod{Gy\odot y, z} \ge -12 \rho_\gamma n_i \e d \]
holds simultaneously for all $z$ with $\|z\|_1=\rho n_i$.  
Taking a union bound over the at most $\lfloor \rho_\gamma n_i \rfloor$ distinct values, 
the failure probability is at most
\[
\sum_{j=1}^{\lfloor \rho_\gamma n_i\rfloor} \Big(e^{-t j} + n_i^{-3}\Big) \le
\frac{e^{-t}}{1-e^{-t}} + \frac{\lfloor \rho_\gamma n_i\rfloor}{n_i^{3}} \le
\frac{e^{-t}}{1-e^{-t}} + \frac{1}{n_i^{2}} \le
2e^{-t}+\frac{1}{n_i^{2}}.
\]
On the complementary event, the desired bound holds for every value in the set, and
hence for every $z\in\{0,\pm1\}^n$ with $\|z\|_1\le \rho_\gamma n_i$, as claimed.
\end{proof}

Next we give a lemma for lower bounding the inner product when the size of the set is large. 

\begin{lemma}\label{lem:inner-product-lb-fixed-large-size}
Fix a level $i$ and set $\beta_i=2^{-i}$, $n_i= 2 \beta_i n$, and $k_i=\beta_i k$.
Let $y\in\{0,\pm1\}^n$ be a valid bisection labeling at level $i$.
Let $\gamma\in\bigl[0,\,1-\frac{1000\,\chi\,k_i}{\e\sqrt{d_i}}\bigr]$.
Define
\[
  \rho_\gamma \coloneqq \exp \Bigl(-\frac{\gamma\,\beta_i\,\widetilde C_i}{2}\Bigr),
  \qquad
  t \coloneqq 0.001\,(1-\gamma)\,\beta_i\,\widetilde C_i.
\]
Then for every $\rho\ge \rho_\gamma$, with probability at least
$1-\exp(-t\rho n_i)-\frac{1}{n_i^3}$, the inequality
\[
  \iprod{\bar G y\odot y,\,z} \ge \frac{\e d}{5k}\,(1-\gamma)\,\rho_\gamma\,n_i
\]
holds for all $z\in\{0,1\}^n$ with $\|z\|_1=\rho n_i$, where $\bar G$ is the centered adjacency matrix.
\end{lemma}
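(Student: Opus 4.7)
}
My plan is to deduce the bound from \cref{voting_concentration_lb} essentially by plugging in the prescribed parameters and then doing some algebraic bookkeeping. First, I would observe that since $y$ is a balanced level-$i$ bisection, $\sum_j y_j = 0$ and therefore $(\bar G y)_i y_i = (Gy)_i y_i$, so \cref{voting_concentration_lb} applies verbatim to $\iprod{\bar G y \odot y, z}$. Applying it with $t = 0.001(1-\gamma)\beta_i\widetilde C_i$ and the given $\rho \ge \rho_\gamma$, I would obtain that, outside an event of probability at most $\exp(-t\rho n_i) + 1/n_i^3$,
\[
\iprod{\bar G y \odot y, z}
\;\ge\; 2\rho n_i\Bigl(\frac{\widetilde Q_i}{\log \widetilde R_i} - \sqrt{a+b}\Bigr),
\qquad \widetilde Q_i = \beta_i \widetilde C_i - \log(1/\rho) - 3t,
\]
uniformly for all $z\in \{0,1\}^n$ of weight $\rho n_i$ supported on $\{j:y_j\neq 0\}$.

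The next step is to lower-bound the bracketed quantity by $(1-\gamma)\tfrac{\e d}{10 k}$. The lower bound on $\widetilde Q_i$ is purely algebraic: since $\rho \ge \rho_\gamma = e^{-\gamma\beta_i\widetilde C_i/2}$, I have $\log(1/\rho) \le \tfrac{\gamma}{2}\beta_i\widetilde C_i$, hence
\[
\widetilde Q_i \;\ge\; \bigl(1 - \tfrac{\gamma}{2} - 0.003(1-\gamma)\bigr)\beta_i\widetilde C_i \;\ge\; \tfrac{1}{2}\beta_i\widetilde C_i.
\]
I would then combine the upper bound $\log \widetilde R_i \le \gamma_i\e \log 3 + o(1)$ from \cref{lem:logR} with the Taylor expansion $\widetilde C_i = d\gamma_i^2\e^2/4 \cdot (1-o(1))$ (which uses $d=o(n)$ and $\e\in(0,1)$) to get
\[
\frac{\widetilde Q_i}{\log \widetilde R_i} \;\ge\; \bigl(1 - \tfrac{\gamma}{2} - 0.003(1-\gamma)\bigr)\cdot\frac{\beta_i \widetilde C_i}{\gamma_i\e\log 3}\cdot(1-o(1)) \;\ge\; \bigl(1-\tfrac{\gamma}{2}\bigr)\cdot\frac{d\e}{5 k},
\]
where I used $\beta_i\gamma_i = \alpha_i = 1/k$. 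Since $1-\gamma/2 \ge 1-\gamma$, this is at least $(1-\gamma)\cdot d\e/(5k)$. For the noise term, \cref{lem:sqrt-all-levels} gives $\sqrt{a+b}\le \sqrt{3d}$, and the assumption $1-\gamma \ge 1000\chi k_i/(\e\sqrt{d_i})$ (with $\chi$ large) guarantees $(1-\gamma)\,d\e/(10k) \ge \sqrt{3d}$, so the noise absorbs into half of the signal. Putting these together and using $\rho \ge \rho_\gamma$ in the prefactor $2\rho n_i$ (legitimate now that the bracket is positive) yields
\[
\iprod{\bar G y \odot y, z} \;\ge\; 2\rho_\gamma n_i \cdot (1-\gamma)\,\frac{\e d}{10 k} \;=\; \frac{(1-\gamma)\e d}{5k}\,\rho_\gamma n_i,
\]
which is exactly the claimed bound.

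The main obstacle, as usual in these voting calculations, is tracking constants through the Taylor expansion of $\widetilde C_i$ and the estimate on $\log \widetilde R_i$ uniformly across levels $i$, so that the final constant is not worse than $1/5$ and the signal-dominates-noise separation is cleanly governed by the prescribed upper bound on $\gamma$. Everything else is mechanical substitution into \cref{voting_concentration_lb,lem:logR,lem:sqrt-all-levels}.
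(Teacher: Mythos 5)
Your overall architecture is exactly the paper's: invoke \cref{voting_concentration_lb} with $t=0.001(1-\gamma)\beta_i\widetilde C_i$, use $\rho\ge\rho_\gamma$ to absorb $\log(1/\rho)$, bound $\log\widetilde R_i$ via \cref{lem:logR} and $\sqrt{a+b}\le\sqrt{3d}$ via \cref{lem:sqrt-all-levels}, and let the upper bound on $\gamma$ kill the noise term. The genuine gap is in the constant bookkeeping, concretely in the step ``$\widetilde C_i=\frac{d\gamma_i^2\e^2}{4}(1-o(1))$''. This is false for constant $\e$: writing $\widetilde C_i=b\bigl((a/b)^{\gamma_i/2}-1\bigr)^2$, the relative errors from $\log(a/b)\neq\e$ and $b\neq d$ are constants, not $o(1)$ (e.g.\ for $\e$ near $1$ and $k$ large one has $\widetilde C_i\approx\frac{(\log 2)^2}{4}d\gamma_i^2$, about half of your claimed value; for $\e=1$, $k=4$, level $1$ the true value is about $0.67$ of yours). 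The uniform bound actually available in the paper is $\beta_i\widetilde C_i\ge\frac{(\log 2)^2}{4}\cdot\frac{d\e^2}{\beta_i k^2}$ from \cref{thm:mixture-binomial-concentration-bound}. With that correct constant, your intermediate inequality $\widetilde Q_i/\log\widetilde R_i\ge(1-\gamma/2)\frac{\e d}{5k}$ does not follow: one only gets roughly $0.997(1-\gamma)\cdot\frac{(\log 2)^2}{4\log 3}\cdot\frac{\e d}{k}\approx 0.108\,(1-\gamma)\frac{\e d}{k}$, which is well below $0.2\,(1-\gamma)\frac{\e d}{k}$.

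This matters because your endgame pairs the inflated signal with a loose noise bound (``noise absorbs into half of the signal'', i.e.\ noise allowed up to $0.1\,(1-\gamma)\frac{\e d}{k}$). With the true signal constant $0.108$, that pairing leaves a bracket of only about $0.008\,(1-\gamma)\frac{\e d}{k}$, and even after the prefactor $2\rho_\gamma n_i$ this is far short of the claimed $\frac{\e d}{5k}(1-\gamma)\rho_\gamma n_i$. The paper's proof closes the same computation differently: it quotes the rigorous lower bound on $\beta_i\widetilde C_i$ from \cref{thm:mixture-binomial-concentration-bound}, uses the $\gamma$-constraint to bound the noise much more tightly as $\sqrt{a+b}\le\sqrt{3d}\le 0.001\,(1-\gamma)\frac{\e d}{k}$, and then relies on the factor $2$ in $2\rho n_i$ so that $2\times(0.109-0.001)\ge\frac15$. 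So the statement and your plan are sound, but as written the arithmetic does not establish the constant $\frac{\e d}{5k}$; replace the Taylor step with the bound from \cref{thm:mixture-binomial-concentration-bound} and tighten the noise estimate accordingly.
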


\begin{proof}
Let us fix $\rho\in[\rho_\gamma,1]$ and $z\in\{0,1\}^n$ with $\|z\|_1=\rho n_i$.
By Theorem~\ref{voting_concentration_lb} applied to $\bar G$, with probability at least
$1-e^{-t\rho n_i}-n_i^{-3}$ we have
\begin{equation}\label{eq:large-set-start}
  \iprod{\bar G y\odot y,\,z}
  \ge 2 \rho n_i \left(
      \frac{\widetilde Q_i}{\log \widetilde R_i} - \sqrt{a + b}
    \right),
  \qquad
  \widetilde Q_i \coloneqq \beta_i\widetilde C_i - \log\frac1\rho - 3t.
\end{equation}
Since $\rho\ge\rho_\gamma=\exp(-\gamma\,\beta_i\widetilde C_i/2)$, we have
\[
  -\log(1/\rho)\ge -\gamma\,\beta_i\widetilde C_i/2,
\]
and hence
\begin{align}
  \widetilde Q_i
  &\ge
    \Bigl(1-\frac{\gamma}{2}-0.003(1-\gamma)\Bigr)\,\beta_i\widetilde C_i \notag\\
  &\ge 0.997 \bigl(1-\gamma \bigr)\,\beta_i\widetilde C_i. \label{eq:Qi-lb}
\end{align}
By Lemma~\ref{lem:logR}, $\log\widetilde R_i \le \gamma_i\,\e\log 3 + o(1)$. 
Therefore
\begin{align}
  \frac{\widetilde Q_i}{\log\widetilde R_i}
  &\ge
    0.997 \frac{(1-\gamma)\,\beta_i\,\widetilde C_i}{\gamma_i\,\e\log 3 + o(1)} \notag\\
  &\ge
    0.997 \cdot \frac{(\log 2)^2}{4 \log 3} \, (1-\gamma)\,\frac{\e d}{k},
    \label{eq:signal-term}
\end{align}
using 
\[
  \beta_i\,\widetilde C_i \ge \frac{(\log 2)^2}{4}\,\frac{d\,\e^{2}}{\beta_i\,k^{2}}
\]
by the monotonicity shown in \cref{thm:mixture-binomial-concentration-bound}.

\noindent For the last term, Lemma~\ref{lem:sqrt-all-levels} gives $\sqrt{ a + b\, }\le \sqrt{3d}$.
We thus have
\begin{equation}\label{eq:noise-term}
  \sqrt{a + b}
  \le \sqrt{3d}
  \le 0.001 \,(1-\gamma)\,\frac{\e d}{k}.
\end{equation}
Combining \eqref{eq:large-set-start}, \eqref{eq:signal-term}, and \eqref{eq:noise-term}, we obtain
\begin{align*}
  \iprod{\bar G y\odot y,\,z}
  &\ge
    2 \rho n_i \left(
      0.109\,(1-\gamma)\,\frac{\e d}{k}
      - 0.001\,(1-\gamma)\,\frac{\e d}{k}
    \right) \\
  &\ge
    \frac{\e d}{5k}\,(1-\gamma)\,\rho n_i \\
  &\ge
    \frac{\e d}{5k}\,(1-\gamma)\,\rho_\gamma n_i \enspace,
\end{align*}
as claimed.
\end{proof}

\begin{corollary}\label{cor:inner-product-lb-large-size}
Fix a level $i$ and set $\beta_i=2^{-i}$ and $n_i=2\beta_i n$.
Let $y\in\{0,\pm1\}^n$ be a valid bisection labeling at level $i$.
Let $\gamma\in\bigl[0,\,1-\frac{1000\,\chi\,k_i}{\e\sqrt{d_i}}\bigr]$, and define
\[
\rho_\gamma \coloneqq \exp \Bigl(-\tfrac{\gamma\,\beta_i\,\widetilde C_i}{2}\Bigr),
\qquad
t \coloneqq 0.001\,(1-\gamma)\,\beta_i\,\widetilde C_i .
\]
Then, with probability at least \(1 - 2 e^{-t\,\rho_\gamma n_i} - n_i^{-2}\),
the inequality
\[
\iprod{\bar G y\odot y, z} \ge \frac{\e d}{5k}\,(1-\gamma)\,\rho_\gamma\,n_i
\]
holds simultaneously for every vector $z\in\{0,1\}^n$ with $\|z\|_1\ge \rho_\gamma n_i$, where $\bar G$ is the centered adjacency matrix.
\end{corollary}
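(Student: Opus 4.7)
}
The plan is to deduce the corollary from \cref{lem:inner-product-lb-fixed-large-size} by a simple union bound over the admissible values of $\|z\|_1$. This is exactly the discretization pattern already used in \cref{lem:uniform-beta} in the small-set regime, transplanted to the large-set regime. Since $z\in\{0,1\}^n$, the quantity $\|z\|_1$ is an integer, so every $z$ with $\|z\|_1\ge \rho_\gamma n_i$ contributes $\rho=\|z\|_1/n_i$ to a discrete set of at most $n_i$ values.

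First, I would fix an integer $j$ with $\lceil \rho_\gamma n_i\rceil \le j \le n_i$ and apply \cref{lem:inner-product-lb-fixed-large-size} at $\rho=j/n_i\in[\rho_\gamma,1]$. This yields, with probability at least $1-\exp(-tj)-n_i^{-3}$, the desired bound $\iprod{\bar G y\odot y,z}\ge \tfrac{\e d}{5k}(1-\gamma)\rho_\gamma n_i$ uniformly over all $z\in\{0,1\}^n$ with $\|z\|_1=j$. Second, I would take a union bound over the $O(n_i)$ admissible values of $j$. The total failure probability is at most
\[
\sum_{j=\lceil \rho_\gamma n_i\rceil}^{n_i}\bigl(\exp(-tj)+n_i^{-3}\bigr) \;\le\; \frac{\exp(-t\rho_\gamma n_i)}{1-e^{-t}} \;+\; \frac{1}{n_i^{2}}.
\]
To conclude the advertised $2\exp(-t\rho_\gamma n_i)+n_i^{-2}$ bound, it suffices to verify $e^{-t}\le \tfrac12$, i.e.\ $t\ge \log 2$. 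This is the one quantitative check. Using the lower bound $\beta_i\widetilde C_i\ge \tfrac{(\log 2)^2}{4}\cdot \tfrac{d\e^2}{\beta_i k^2}$ from \cref{thm:mixture-binomial-concentration-bound}, the definition $t=0.001(1-\gamma)\beta_i\widetilde C_i$, and the upper bound $\gamma\le 1-\tfrac{1000\chi k_i}{\e\sqrt{d_i}}$, one obtains $t\gtrsim \tfrac{k_i}{\beta_i}\cdot \tfrac{\chi}{k}\gg \log 2$ under the standing assumption $\e^2 d\gg k^2$; the routine manipulation is analogous to the signal/noise accounting already carried out in the proof of \cref{lem:inner-product-lb-fixed-large-size}.

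The only conceptual subtlety — and in my view the mildly annoying step — is handling the geometric tail cleanly, because $t$ depends on level $i$ and on $\gamma$, so one must check that $t\ge \log 2$ holds uniformly in $i\in\{1,\dots,\lfloor\log_2 k\rfloor\}$ and in $\gamma$ on the stated range. Once this is in place, everything reduces to a geometric-series estimate and a single arithmetic aggregation, and no further randomness or combinatorics is needed: the lemma already absorbs both the Chernoff step and the union bound over $\binom{n_i}{j}$ choices of $z$ at each fixed size. Finally, I note that the restriction $z\in\{0,1\}^n$ (rather than $z\in\{0,\pm1\}^n$ as in \cref{lem:uniform-beta}) simplifies the discretization: we need only union bound over sizes, not over sign patterns, so no extra factor enters the failure probability.
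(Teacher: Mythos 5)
Your proposal is correct and follows essentially the same route as the paper: discretize over the integer values of $\|z\|_1$ from $\lceil\rho_\gamma n_i\rceil$ to $n_i$, invoke \cref{lem:inner-product-lb-fixed-large-size} at each fixed size, and sum the geometric tail plus the $n_i^{-3}$ terms to get the stated $2e^{-t\rho_\gamma n_i}+n_i^{-2}$ failure probability. Your explicit verification that $t\ge\log 2$ (so that $\tfrac{1}{1-e^{-t}}\le 2$) is a check the paper leaves implicit, and your accounting of it under $\e^2 d\gg k^2$ is sound.
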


\begin{proof}
We discretize over the admissible $\ell_1$ sizes. 
Since $z\in\{0,1\}^n$, the quantity $\|z\|_1$ is an integer, so
$\rho=\|z\|_1/n_i$ ranges over $\{j/n_i:\, m_0\le j\le n_i\}$, 
where $m_0\coloneqq \lceil \rho_\gamma n_i\rceil$.
Fix $j\in\{m_0,\ldots,n\}$ and set $\rho=j/n_i \ge \rho_\gamma$.
By Lemma~\ref{lem:inner-product-lb-fixed-large-size} (applied with our choice of $\rho$), 
with probability at least
$1-e^{-t \rho n_i}-n_i^{-3}=1 - e^{-t j} - n_i^{-3}$, we have
\[
\iprod{\bar G y\odot y, z} \ge \frac{\e d}{5k}\,(1-\gamma)\,\rho_\gamma\,n_i
\quad
\text{for all } z\in\{0,1\}^n \text{ with } \|z\|_1=j.
\]
Taking a union bound over all $j\in\{m_0,\ldots,n_i\}$, the total failure probability is at most
\[
\sum_{j=m_0}^{n_i} \bigl(e^{-t j} + n_i^{-3} \bigr) \le
\frac{e^{-t m_0}}{1-e^{-t}} + \frac{n_i-m_0+1}{n_i^{3}} \le
2 e^{-t \rho_\gamma n_i} + \frac{1}{n_i^{2}}.
\]
On the complementary event, the claimed lower bound holds simultaneously for all sizes
$j\ge m_0$, and hence for every $z\in\{0,1\}^n$ with $\|z\|_1\ge \rho_\gamma n_i$, as desired.
\end{proof}

Now we conclude the proof of the majority voting lower bound of bisections. 
\restatetheorem{thm:inner-product-lb}

\begin{proof}
  We first prove the claim for fixed bisection $y$. 
  When the size of the set is at least $\rho_\gamma n$, the lower bound easily follows from \cref{cor:inner-product-lb-large-size}. 
  When the size of the set is at most $\rho_\gamma n$, the lower bound follows from \cref{lem:uniform-beta} with probability $1 - \exp(-\e^2 d/k)$, as $\norm{z}_1\geq 0$. 
  From the proof of \cref{thm:mixture-binomial-concentration-bound}, 
  we know that $\beta_i \tilde{C_i}\geq \Omega(\epsilon^2 d_i/k_i^2)\geq \Omega(2^i \log(k))$. 
  It follows that $\tilde{C_i}\rho_\gamma n\geq 100k\log(k)$.
  Since there are at most $2^{2k}$ possible choices for community bisections $y$, 
  we can take union bound and the claim follows. 
\end{proof}

Finally we give the bound for larger values of $\gamma$ which will be helpful for obtaining the optimal recovery rate in the final $k$-clustering. 

\restatetheorem{thm:inner-product-lb-k-clustering}

\begin{proof}
  The proof is very similar to \cref{thm:inner-product-lb}. 
  However, now for the probability bound, we only need to use $\rho_\gamma n\geq 1$ and $C\geq 100k$, and we only need to take union bound over $k^2$ pairs of communities.  
\end{proof}

\restatecor{cor:inner-product-lb}
\begin{proof}
  Notice that
  \begin{align*}
    \iprod{\bar{G} \odot (s s^{\top}) y\odot y,z}
    & = \iprod{\bar{G} \odot (y y^{\top}),(z \odot s)s^{\top}} \\
    & = \iprod{\bar{G} \odot (y y^{\top}), (z \odot s) \one^{\top}}
    - \iprod{\bar{G} \odot (y y^{\top}), (z \odot s) (\one - s)^{\top}} \,.
  \end{align*}
  By \cref{thm:inner-product-lb}, it follows that, with probability $1-\exp(-100k)-\frac{1}{n^3}$,
  \begin{align*}
    \iprod{\bar{G} \odot (y y^{\top}), (z \odot s) \one^{\top}}
    & \geq \frac{(1-\gamma)\e d}{8k} \Paren{\norm{z \odot s}_1-\frac{96 \rho_\gamma k n}{1-\gamma}} \\
    & \geq \frac{(1-\gamma)\e d}{8k} \Paren{\norm{z}_1 - \norm{\one - s}_1-\frac{96 \rho_\gamma k n}{1-\gamma}}  \\
    & \geq \frac{(1-\gamma)\e d}{8k} \Paren{\norm{z}_1 - \exp(-2C_{d,\e})n -\frac{96 \rho_\gamma k n}{1-\gamma}} \\
    & \geq \frac{(1-\gamma)\e d}{8k} \Paren{\norm{z}_1 -\frac{200 \rho_\gamma k n}{1-\gamma}} \,.
  \end{align*}
  By \cref{subrow_sum}, it follows that, with probability $1-\frac{1}{n^3}$
  \begin{equation*}
    \iprod{\bar{G} \odot (y y^{\top}), (z \odot s) (\one - s)^{\top}}
    \leq \sqrt{2d} (\norm{z}_1 + \exp(-2C_{d,\e}) n)
    \leq \frac{(1-\gamma)\e d}{16k} \Paren{\norm{z}_1 +\frac{100 \rho_\gamma k n}{1-\gamma}} \,.
  \end{equation*}
  Therefore, with probability $1-\exp(-100k)-\frac{2}{n^3}$,
  \begin{align*}
    \iprod{\bar{G} \odot (s s^{\top}) y\odot y,z}
    & \geq \frac{(1-\gamma)\e d}{8k} \Paren{\norm{z}_1 -\frac{200 \rho_\gamma k n}{1-\gamma}} - \frac{(1-\gamma)\e d}{16k} \Paren{\norm{z}_1 +\frac{100 \rho_\gamma k n}{1-\gamma}} \\
    & \geq \frac{(1-\gamma)\e d}{16k} \Paren{\norm{z}_1 -\frac{640 \rho_\gamma k n}{1-\gamma}} \,.
  \end{align*}
\end{proof}

\restatecor{cor:inner-product-lb-k-clustering}
\begin{proof}
  The proof is very similar to \cref{cor:inner-product-lb}. The only difference is that the voting lower bound is via \cref{thm:inner-product-lb-k-clustering}.
\end{proof}

\end{document}